\documentclass[10pt]{article} % For LaTeX2e
%\usepackage{tmlr}
% If accepted, instead use the following line for the camera-ready submission:
\usepackage[accepted]{tmlr}
% To de-anonymize and remove mentions to TMLR (for example for posting to preprint servers), instead use the following:
%\usepackage[preprint]{tmlr}

% Optional math commands from https://github.com/goodfeli/dlbook_notation.
%%%%% NEW MATH DEFINITIONS %%%%%

\usepackage{amsmath,amsfonts,bm}

% Mark sections of captions for referring to divisions of figures

% Highlight a newly defined term

% Figure reference, lower-case.

% Figure reference, capital. For start of sentence

% Section reference, lower-case.

% Section reference, capital.

% Reference to two sections.

% Reference to three sections.

% Reference to an equation, lower-case.
\def\eqref#1{equation~\ref{#1}}
% Reference to an equation, upper case

% A raw reference to an equation---avoid using if possible

% Reference to a chapter, lower-case.

% Reference to an equation, upper case.

% Reference to a range of chapters

% Reference to an algorithm, lower-case.

% Reference to an algorithm, upper case.

% Reference to a part, lower case

% Reference to a part, upper case

\def\ceil#1{\lceil #1 \rceil}

\def\1{\bm{1}}

% Random variables

% rm is already a command, just don't name any random variables m

% Random vectors

% Elements of random vectors

% Random matrices

% Elements of random matrices

% Vectors
\def\vzero{{\bm{0}}}

\def\va{{\bm{a}}}

\def\vg{{\bm{g}}}
\def\vh{{\bm{h}}}

\def\vv{{\bm{v}}}
\def\vw{{\bm{w}}}
\def\vx{{\bm{x}}}

% Elements of vectors

% Matrix

\def\mE{{\bm{E}}}

\def\mI{{\bm{I}}}

\def\mX{{\bm{X}}}

% Tensor
\DeclareMathAlphabet{\mathsfit}{\encodingdefault}{\sfdefault}{m}{sl}
\SetMathAlphabet{\mathsfit}{bold}{\encodingdefault}{\sfdefault}{bx}{n}

% Graph

% Sets

% Don't use a set called E, because this would be the same as our symbol
% for expectation.

% Entries of a matrix

% entries of a tensor
% Same font as tensor, without \bm wrapper

% The true underlying data generating distribution

% The empirical distribution defined by the training set

% The model distribution

% Stochastic autoencoder distributions

 % Laplace distribution

\newcommand{\E}{\mathbb{E}}

% Wolfram Mathworld says $L^2$ is for function spaces and $\ell^2$ is for vectors
% But then they seem to use $L^2$ for vectors throughout the site, and so does
% wikipedia.

 % See usage in notation.tex. Chosen to match Daphne's book.

\usepackage{Notations}

\usepackage{hyperref}
\usepackage{url}
\usepackage{booktabs}       % professional-quality tables
\usepackage{amsfonts}       % blackboard math symbols
\usepackage{nicefrac}       % compact symbols for 1/2, etc.
\usepackage{microtype}      % microtypography
\usepackage{xcolor}         % colors
\usepackage{graphicx,subcaption}
\usepackage{bbm}
\usepackage{fancyvrb}

\usepackage{mathtools}
\usepackage{amsthm}
\usepackage{amsmath}
\usepackage{amssymb}
\usepackage{array}
\usepackage{etoc}
\usepackage[titles]{tocloft}
\usepackage{enumitem}
\usepackage{multirow}
\usepackage{wrapfig}
\usepackage{mathtools}

\newtheorem{theorem}{Theorem}[section]
\newtheorem{corollary}{Corollary}[theorem]
\newtheorem{lemma}[theorem]{Lemma}
\newtheorem{prop}{Proposition}

\theoremstyle{remark}
\newtheorem*{remark}{Remark}

\theoremstyle{definition}
\newtheorem{definition}{Definition}[section]

\usepackage[skins]{tcolorbox}
\newtcolorbox{myframe}[2][]{%
  enhanced,colback=white,colframe=black,coltitle=black,
  sharp corners,boxrule=0.4pt,
  fonttitle=\itshape,
  attach boxed title to top center={yshift=-\tcboxedtitleheight/2},
  boxed title style={tile,size=minimal,left=0.5mm,right=0.5mm,
    colback=white,before upper=\strut},
  title=#2,#1
}

\title{Why Fine-grained Labels in Pretraining Benefit Generalization?}

\author{\name Guan Zhe Hong \email hong288@purdue.edu \\
      Purdue University
      \AND
      \name Yin Cui\thanks{Work done at Google Research.} \email yinc@nvidia.com \\
      NVIDIA
      \AND
      \name Ariel Fuxman \email afuxman@google.com \\
      Google Research
      \AND
      \name Stanley H. Chan \email stanchan@purdue.edu \\
      Purdue University
      \AND
      \name Enming Luo \email enming@google.com \\
      Google Research}

% The \author macro works with any number of authors. Use \AND 
% to separate the names and addresses of multiple authors.

  % Insert correct month for camera-ready version
 % Insert correct year for camera-ready version
 % Insert correct link to OpenReview for camera-ready version

\begin{document}

\maketitle

\begin{abstract}
Recent studies show that pretraining a deep neural network with fine-grained labeled data, followed by fine-tuning on coarse-labeled data for downstream tasks, often yields better generalization than pretraining with coarse-labeled data.
While there is ample empirical evidence supporting this, the theoretical justification remains an open problem.
This paper addresses this gap by introducing a ``hierarchical multi-view'' structure to confine the input data distribution.
Under this framework, we prove that: 1) coarse-grained pretraining only allows a neural network to learn the common features well, while 2) fine-grained pretraining helps the network learn the rare features in addition to the common ones, leading to improved accuracy on hard downstream test samples.
\end{abstract}

\section{Introduction}
\label{sec: intro}

We consider the theory of label granularity in deep learning. By label granularity, we mean a hierarchy of training labels specifying how detailed each label subclass needs to be (See Figure~\ref{fig: untitled 01}).
\begin{figure}[htpb]
\centering
\includegraphics[width=0.65\linewidth]{./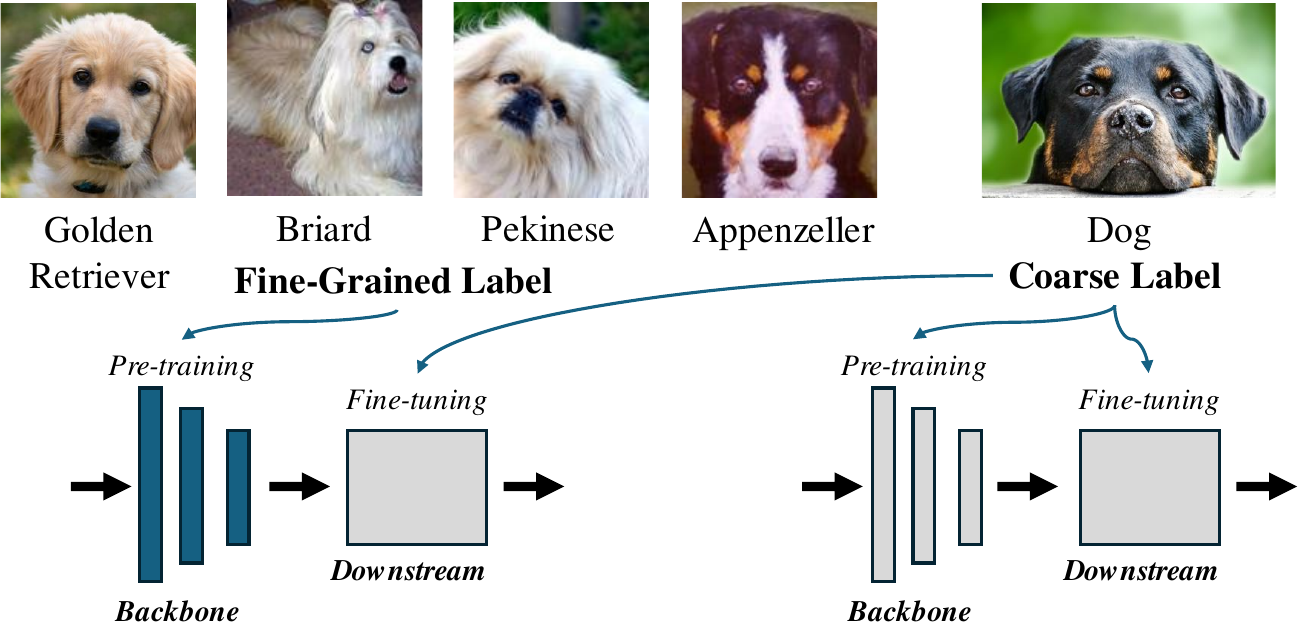}
\caption{The goal of this paper is to provide a theoretical justification of why fine-grained labels in pre-training benefit generalization.}
\label{fig: untitled 01}
\vspace{-2ex}
\end{figure}

Having access to different granularity of labels offers us the freedom of training a classifier using a different level of precision. For example, instead of differentiating between \texttt{dogs} and \texttt{cats}, we can train a classifier to differentiate a \texttt{Poodle} dog and a \texttt{Persian} cat. The latter classification task is undoubtedly harder. However, recent studies found that if one uses fine-grained labels to \emph{pre-train} a backbone, the pre-trained backbone will help the downstream neural networks generalize better \citep{chen2018}. Vision transformers, for example, are well-known to require pretraining on large datasets with thousands of classes for effective downstream generalization \citep{vit2021,kaiming2016,alex2012}.

To convince readers who are less familiar with this particular training strategy, we conduct an experiment on ImageNet with details described in Appendix \ref{appendix: im21k cross-dataset} (we also include experiments on iNaturalist 2021 in Appendix \ref{section: appendix A}). Our experiment is limited in scale due to its high demand on the computing resources. Figure \ref{fig: im21k->im1k, linear probe} shows an experiment of pre-training on ImageNet21k and fine-tuning the pre-trained network using ImageNet1k. The labels used in the ImageNet21k is based on WordNet Hierarchy. The downstream task is ImageNet1k classification. The $x$-axis of this plot indicates the number of pre-training classes whereas the $y$-axis shows the validation accuracy for the ImageNet1k classification task. It is evident from the plot that as we increase the number of classes (hence a finer label granularity in pre-training), the downstream classification task's performance is improved.

\begin{figure}[htpb!]
\centering
\includegraphics[width=0.65\linewidth]{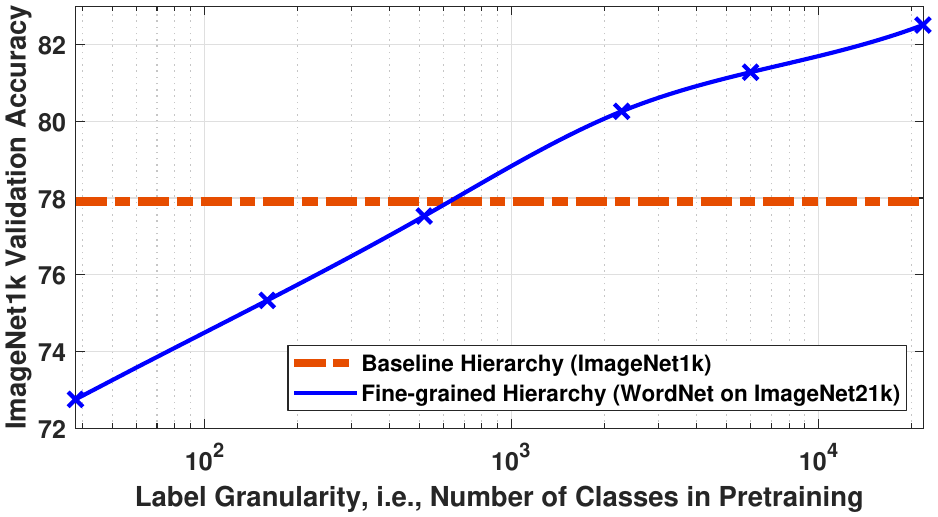}
\caption{ImageNet21k$\to$ImageNet1k transfer using a ViT-B/16 model. [Blue]: pretrained on the WordNet hierarchy of ImageNet21k, \textit{finetuned} on ImageNet1k. [Red]: baseline, trained and evaluated on ImageNet1k.}
\label{fig: im21k->im1k, linear probe}
\vspace{-2ex}
\end{figure}

\subsection{Goal of this paper}
\label{main text: paper goals}
The above experimental finding may sound familiar to practitioners who frequently train large models. In fact, experimental evidence on this subject is abundant \citep{mahajan2018, singh2022, yan2020, shnarch2022, juan2020, yang2021, chen2018, ridnik2021, son2023, ngiam2018, cui2018, cui2019measuring}. However, the theoretical explanation remains an open problem. Our goal in this paper is to provide a \emph{theoretical} justification. The core question we ask is:

\begin{myframe}{Theoretical Question}
\begin{center}
\textbf{\emph{Why}} does pretraining at a high label granularity benefit generalization?
\end{center}
\end{myframe}

Certainly, this grand challenge can be impossible to answer in full because of the uncontrollable complexity of the practical situations. To say something concrete, we focus on a tractable (sub-)problem under a controlled setting:
\begin{itemize}[leftmargin=*]
    \item \textit{Simple} scheme: We pretrain a backbone on a classification task and then finetune it for a target problem;
    \item Assume \textit{negligible distribution shift} between the input distributions of the source and target datasets;
    \item The label functions for both datasets \textit{align} well in terms of the features which they consider discriminative;
    \item The labels are error-free.
\end{itemize}

\subsection{Main results and theoretical contributions}
Our main result is based on analyzing a two-layer convolutional neural network with ReLU activation. We assume that the data distribution satisfies a certain \textit{hierarchical multi-view} condition (to be discussed in Section \ref{sec:data model}). The optimization algorithm is stochastic gradient descent. Such problem settings are consistent with published works on this subject \citep{zhu2020_kd,zhu2022_adv,shen2022,jelassi22a}. Our conclusions are as follows.

\begin{myframe}{Theoretical results}
1. \textit{Coarse-grained} pretraining \textit{only} allows the neural network to learn the \underline{common} features well. Therefore, when testing, the test error on easy samples is $o(1)$ (i.e., small) whereas the error on hard samples is $\Omega(1)$ (i.e., large).

2. \textit{Fine-grained} pretraining helps the network learn the \underline{rare} features \emph{in addition} to the \underline{common} ones, thus improving its test error on \textit{hard} samples. In particular, the test error rate on both easy and hard test samples are $o(1)$ (i.e., small).
\end{myframe}

To our knowledge, a precise characterization of the test error presented in this paper has never been reported in the literature. The key enablers of our theoretical finding are the concepts of \emph{hierarchical multi-view} and \emph{representation-label correspondence}. We summarize these two concepts below:

\begin{enumerate}[nosep,leftmargin=*]
    \item \textit{Hierarchical multi-view}. To understand the label granularity problem, we argue that it is necessary for coarse and fine-grained classes to be distinguished by their corresponding input features. This is consistent with the \textit{multi-view} data property pioneered by \cite{zhu2020_kd}. We call this a \textit{hierarchical multi-view} structure. The hierarchical multi-view structure on the data makes us different from many other deep learning theory works that assume simple or no structure in the input data \citep{ kawaguchi_2016, allen-zhu23a, ba2022_oneStepGrad, ba2024, damian22a, kumar2023grokking, ju2021a}.
    \item \textit{Representation-label correspondence}. Representation learning aims to recognize features in the input data. As will be shown later in the paper, under the hierarchical multi-view data assumption, label complexity (i.e., how complex the labels are) during training influences the representation complexity (i.e., how many and what types of features are learnt), which further influences the model's generalization performance. Studying label granularity through understanding the neural network's feature-learning process is a departure from the literature which focuses on \emph{feature selection} \citep{jacot2018, ju2021a, ju2022, pezeshki2021,arora2019_exactCompNTK}, i.e., selecting a subset of pre-determined features. 
\end{enumerate}

%%%%%%%%%%%%%%%%%%%%%%%%%%%
\section{Related Work}
\subsection{Our theoretical setting compared to the literature}
The subject of label granularity is immensely related to how to make a deep neural network (DNN) generalize better. In the existing literature, this is mostly explained through the lens of implicit regularization and bias towards simpler solutions to prevent overfitting even when DNNs are highly overparameterized \citep{lyu2021,nakkiran2019,ji2019,palma2019,huh2021}. An alternative approach is the concept of shortcut learning which argues that deep networks can learn overly simple solutions. As such, deep networks achieve high training and testing accuracy on in-distribution data but generalize poorly to challenging downstream tasks \citep{geirhos2020, shah2020, pezeshki2021}.

By examining these papers, we believe that \cite{shah2020, pezeshki2021} are the closest to ours because they demonstrate that DNNs perform shortcut learning and respond weakly to features that have a weak presence in the training data. However, our work departs from \cite{shah2020, pezeshki2021} in several key ways.

\begin{enumerate}[nosep,leftmargin=*]
\item We focus on how the pretraining label space affects classification generalization, while \cite{shah2020, pezeshki2021} primarily focus on demonstrating that simplicity bias can be harmful to generalization.
\item The core theoretical tool used by \cite{pezeshki2021} is the neural tangent kernel (NTK) model, which is unsuitable for analyzing the label granularity problem because the feature extractor of an NTK model barely changes after pretraining.
\item The theoretical setting in \cite{shah2020} is limited because they use the hinge loss while we use a more standard exponential-tailed cross-entropy loss.
\item Our data distribution assumptions are more realistic, as they capture feature hierarchies in natural images, which has direct impact on the downstream generalization power of the pretrained model.
\end{enumerate}

\subsection{Our analytic tool compared to literature}
Our theoretical analysis is inspired by a recent line of work by \citet{zhu2022_adv, zhu2020_kd, shen2022}. These papers analyze the feature learning dynamics of neural networks by tracking how the hidden neurons of shallow nonlinear neural networks evolve to solve dictionary-learning-like problems.
We adopt a \textit{multi-view} approach to the data distribution which was first proposed in \cite{zhu2020_kd}. However, the learning problems we analyze and the results we aim to show are fundamentally different. As such, we derive the gradient descent dynamics of the neural network from scratch.

\subsection{Consistency with existing empirical results}
We stress that our theoretical findings are consistent with the reported empirical results in the literature, especially those that aim to improve classification accuracy by manipulating the pre-training label space \citep{mahajan2018, singh2022, yan2020, shnarch2022, juan2020, yang2021, chen2018, ridnik2021, son2023, ngiam2018, cui2018, cui2019measuring}. For example, \cite{mahajan2018, singh2022} use hashtags from Instagram as pretraining labels, \cite{yan2020, shnarch2022} apply clustering on the data first and then treat the cluster IDs as pretraining labels, \cite{juan2020} use the queries from image search results, \cite{yang2021} apply image transformations such as rotation to augment the label space, and \cite{chen2018,ridnik2021} include fine-grained manual hierarchies in their pretraining processes. Our results corroborate the utility of pretraining on fine-grained label space.

On the empirical end, there is also work focusing on exploiting the hierarchical structures present in (human-generated) label space to improve classification accuracy \citep{yan2015, zhu2017, goyal2020, sun2017, zelikman2022, silla2011, shkodrani2021, bilal2017, goo2016}. For example, \cite{yan2015} adapt the network architecture to learn super-classes at each hierarchical level, \cite{zhu2017} add hierarchical losses in the hierarchical classification task, \cite{goyal2020} propose a hierarchical curriculum loss for curriculum learning. Our results do not directly validate these practices because we are more interested in understanding the influence of label granularity on model generalization.

\section{Notations and Intuitions}
\label{sec:problem_formulation}

\subsection{Notations and training schemes}
For a DNN-based classifier, given input image $\mX$, we can write its (pre-logit) output for class $c$ as
\begin{equation}
\underset{\substack{\text{pre-logit}\\ \text{output for class $c$}}}{\underbrace{F_{c}(\mX)}} = \Big\langle
\underset{\substack{\text{linear}  \\ \text{classifier}}}{\underbrace{\va_c}},
\underset{\substack{\text{backbone}\\ \text{network}}}{\underbrace{\vh}} (\underset{\substack{\text{network}\\ \text{parameter}}}{\underbrace{\mTheta}};
\mX) \Big\rangle,
\end{equation}
where $\va_c$ is the linear classifier for class $c$, $\vh(\mTheta; \cdot)$ is the network backbone with parameter $\mTheta$.

Referring to Figure~\ref{fig: untitled 01}, label granularity concerns about two datasets: $\calX^{\text{src}}$ for the source (typically fine-grained) and $\calX^{\text{tgt}}$ for the target (typically coarse-grained). The corresponding labels are $\calY^{\text{src}}$ and $\calY^{\text{tgt}}$, respectively. A dataset can be represented as $\calD = \left(\calX, \calY \right)$. For instance, the source training dataset is $\calD_{\text{train}}^{\text{src}} = \left(\calX^{\text{src}}_{\text{train}}, \calY^{\text{src}}_{\text{train}} \right)$. The relevant training and testing datasets are denoted as $\calD_{\text{train}}^{\text{src}}, \calD_{\text{train}}^{\text{tgt}}, \calD_{\text{test}}^{\text{tgt}}$. Finally, the granularity of a label set is denoted as $\calG(\calY)$, which represents the total number of classes.

The two learning methodologies of interest are as follows.
\begin{enumerate}[nosep, leftmargin=*]
\item Baseline: Train $F_c(\cdot)$ using $\calD_{\text{train}}^{\text{tgt}}$. Test $F_c(\cdot)$ using $\calD_{\text{test}}^{\text{tgt}}$.
\item Fine-to-coarse: Train $F_c(\cdot)$ using $\calD_{\text{train}}^{\text{src}}$. This gives us the pretrained feature extractor $\vh(\mTheta_{\text{train}}^{\text{src}}; \cdot)$. Then finetune $F_c(\cdot)$ using $\calD_{\text{train}}^{\text{tgt}}$. Test the resulting $F_c(\cdot)$ using $\calD_{\text{test}}^{\text{tgt}}$.
\end{enumerate}

\subsection{Intuition: why higher granularity improves generalization}
\label{subsec: intuition}

\begin{figure}[t]
    \centering
    \includegraphics[width=1.0\linewidth]{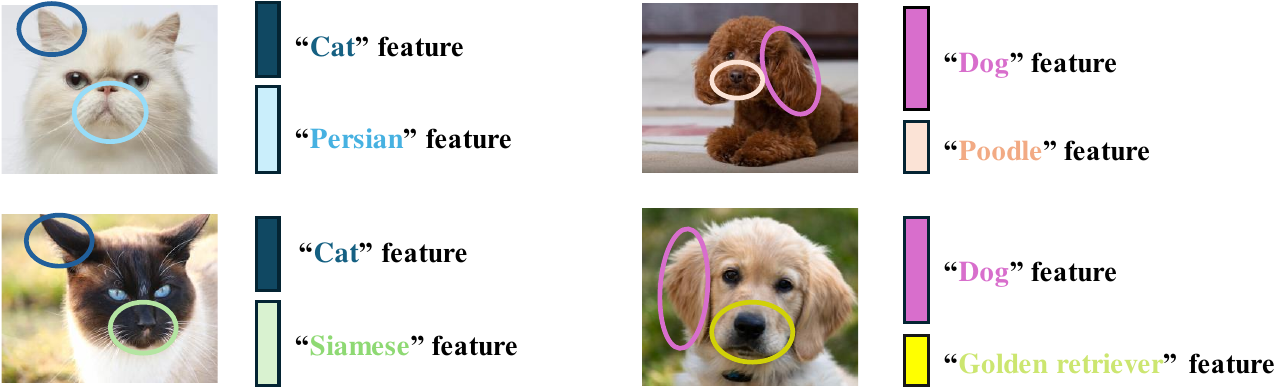}
    %\vspace{-2ex}
    \caption{A simplified symbolic representation of the cat versus dog problem. }
    \label{fig:cat vs dog illustration}
    \vspace{-2ex}
\end{figure}

Consider the following toy example. There are two classes: \texttt{cat} and \texttt{dog}. Our goal is to build a binary classifier. Let's discuss how the two training schemes would work, with an illustration shown in Figure \ref{fig:cat vs dog illustration}.

\begin{enumerate}[nosep, leftmargin=*]
\item \textbf{Baseline}. The baseline method tries to identify the common features that can distinguish most of the cats from dogs, for instance, the shape of the animal's ear as shown in Figure \ref{fig:cat vs dog illustration}. These features are often the most noticeable ones because they appear the most frequently. Of course, there are hard samples, e.g., a close-up shot of a cat's fur. They pose limited influence during training because they are relatively rare in natural images.
\item \textbf{Fine-to-coarse}. With fine-grained labels, each subclass has its own unique visual features that are only dominant within that subclass. However, fine-grained features are not as common in the dataset, hence making them more difficult to be noticed. Therefore, if we only present the coarse labels in the pre-training stage, the learner is allowed to take shortcuts by learning only the common features to achieve low training loss. One strategy to force the learner to learn the rarer features is to explicitly label the fine-grained classes. This means that within each fine-grained class, the fine-grained features become as easy to notice as the common features. As a result, even if common features are weakly present or missing in a hard test sample, the network can still be reasonably robust to distracting irrelevant patterns due to its ability to recognize (some of) the finer-grained features.
\end{enumerate}

\section{Problem Formulation}

Our first theoretical contribution is a new data model, the hierarchical multi-view model. This model consists of four definitions. Compared to existing theories studying feature learning of neural networks in the literature \citep{zhu2020_kd,zhu2022_adv,shen2022,jelassi22a}, these four definitions are better formulated to the label granularity problem. For the sake of brevity, we present the core concepts of our data model here, and delay its full specification to Appendix \ref{section: theory, problem setup}. Following data model specifications, we also discuss characteristics of the learner, a two-layer nonlinear convolutional neural network.

\subsection{New data model: hierarchical multi-view}
\label{sec:data model}

We consider the setting where an input sample $\mX\in\mathbb{R}^{dP}$ consists of $P$ patches $\vx_1, \vx_2, ..., \vx_P$ with $\vx_p\in\mathbb{R}^d$, where $d$ is sufficiently large, and all our asymptotic statements are made with respect to $d$.

For analytic tractability, we consider two levels of label hierarchy. The root of this hierarchy has two superclasses $+1$ and $-1$. The superclass $+1$ has $k_+$ subclasses. We denote these $k_+$ subclasses as $(+1,c_1),\ldots,(+1,c_{k_+})$. We can do the same for the superclass $-1$ which has $k_-$ subclasses. Each subclass has two types of features: the common features and the fine-grained features. The two types of features are sufficiently different in the sense they have zero correlation and equal magnitude. This leads to the following definition.

\begin{definition}[Features]
\label{def: main text, feat hierarchy}
    We define \textbf{features} as elements of a fixed orthonormal dictionary $\calV = \{\vv_i\}_{i=1}^d \subset\mathbb{R}^d$. The common and fine-grained features are
    \begin{itemize}[nosep, leftmargin=*]
    \item Common feature: $\vv_+\in\calV$ and $\vv_-\in\calV$
    \item Fine-grained feature of subclass $c$: $\vv_{+,c}$ and $\vv_{-,c}\in\calV$
    \end{itemize}
\end{definition}
The usage of an orthonormal dictionary is again a choice of our model. We choose so because it is more tractable. With features defined, we can now specify patches in an input sample.

\begin{definition}[Input patches]
\label{def: main text, sample gen}
We define three types of patches for $y \in \{+, -\}$:
    \begin{itemize}[nosep, leftmargin=*]
        \item (Common-feature patches) are defined as $\vx_{p} = \alpha_{p}\vv_{y} + \vzeta_{p}$, where $\alpha_p \approx 1$, and $\vzeta_{p}\sim\calN(\vzero, \sigma_{\zeta}^2\mI_d)$.
        \item (Subclass-feature patches) are defined as $\vx_{p} = \alpha_{p}\vv_{y,c} + \vzeta_{p}$, where $\alpha_p \approx 1$, and $\vzeta_{p}\sim\calN(\vzero, \sigma_{\zeta}^2\mI_d)$.
        \item (Noise patches) are defined as $\vx_p = \vzeta_p$.
    \end{itemize}

Within an input sample $\mX = (\vx_1, \vx_2, ..., \vx_p)$, there are approximately $s^*$ common-feature patches and $s^*$ subclass-feature patches, the rest are all noise patches. Moreover, within a sample, the choice of $y$ has to be consistent across the feature patches. Lastly, the positions of the features patches are random.

These definitions of the input patches are illustrated in Figure~\ref{fig: symbols}.
\end{definition}

\begin{figure}[h]
  \begin{center}
    \includegraphics[width=0.9\linewidth]{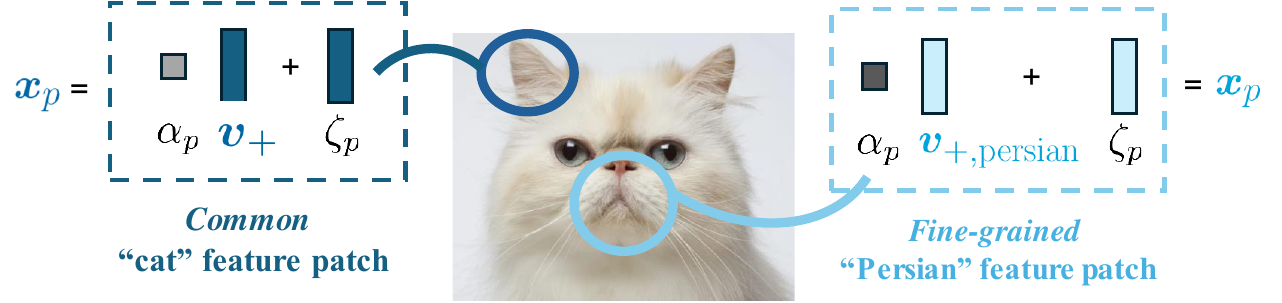}
  \end{center}
  \caption{Illustration of features and patches.}
  \label{fig: symbols}
\end{figure}

Some comments: An \textbf{easy sample} is generated according to Definition~\ref{def: main text, sample gen}. A \textbf{hard sample} is generated in the same way as easy samples, except the common-feature patches are replaced by noise patches, and we replace a small number of noise patches by ``feature-noise'' patches, which are of the form $\vx_{p} = \alpha_{p}^{\dagger}\vv_{-} + \vzeta_{p}$, where $\alpha_{p}^{\dagger}\in o(1)$, and set one of the noise patches to $\vzeta^* \sim \calN(\vzero, \sigma_{\zeta^*}^2\mI_d)$ with $\sigma_{\zeta^*}\gg \sigma_{\zeta}$; these patches serve the role of ``distracting patterns''.

\begin{definition}[Source dataset's label mapping]
\label{def: main test, label map}
    We say a sample $\mX$ belongs to the $+1$ superclass if any one of its common- or subclass-feature patches contains $\vv_+$ or $\vv_{+,c}$ for any $c\in[k_+]$. It belongs to the $(+,c)$ subclass if any one of its subclass-feature patches contains $\vv_{+,c}$.
\end{definition}

\begin{definition}[Source training set]
    We assume the input samples of the source training set as $\calX_{\text{train}}^{\text{src}}$ are generated as in Definition \ref{def: main text, sample gen}; the corresponding labels are generated following Definition \ref{def: main test, label map}. Overall, we denote the source training dataset $\calD_{\text{train}}^{\text{src}}$.
\end{definition}

\textbf{Relation to multi-view}. Our data model is inspired by the multi-view concept first proposed in \cite{zhu2020_kd}, as we (1) use an orthonormal dictionary to define the features, (2) define an input consisting of many disjoint high-dimensional patches, and (3) assume the existence of \textit{multiple} discriminative features per class. The reason why the original multi-view property is insufficient for our problem is that it does not consider any label hierarchy nor its link to the input structure. We resolve this issue by following our intuition that classes at different hierarchy levels should be distinguished by their corresponding features: this naturally defines a feature hierarchy, with an exact correspondence with the label hierarchy.

\textbf{Target dataset}. To ensure that baseline and fine-grained training have no unfair advantage over each other, we post a set of new characterizations on the \textit{target} dataset:
\begin{enumerate}[nosep, leftmargin=*]
\item The input samples in the target dataset is generated according to Definition \ref{def: main text, sample gen}.
\item The true label function is identical across the source and target datasets.
\item Since we are studying the ``fine-to-coarse'' transfer direction, the target problem's label space is the \textit{root} of the hierarchy, meaning that any element of $\calY^{\text{tgt}}_{\text{train}}$ or $\calY^{\text{tgt}}_{\text{test}}$ must belong to the label space $\{+1, -1\}$.
\end{enumerate}
Therefore, in our setting, only $\calY^{\text{src}}$ and $\calY^{\text{tgt}}$ can differ (in distribution) due to different choices in the label granularity level. In this idealized setting, we have essentially made baseline training and coarse-grained pretraining the \textit{same} procedure. Therefore, an equally valid way to view our theory's setting is to consider $\calD_{\text{train}}^{\text{tgt}}$ the same as $\calD_{\text{train}}^{\text{src}}$ except with coarse-grained labels. In other words, we pretrain the network on two versions of the source dataset $\calD_{\text{train}}^{\text{src,coarse}}$ and $\calD_{\text{train}}^{\text{src,fine}}$, and then compare the two models on $\calD_{\text{test}}^{\text{tgt}}$ (which has coarse-grained labels).

\subsection{Characteristics about the learner}
\label{subsec: main text, learner}

Our model about the learner is consistent with \cite{zhu2020_kd,zhu2022_adv,shen2022}. The learner is a two-layer average-pooling convolutional ReLU network:
\begin{equation}
    F_{c}(\mX) = \sum_{r=1}^m a_{c,r}\sum_{p=1}^P \sigma(\langle \vw_{c,r}, \vx_p\rangle + b_{c,r}),
\end{equation}
where $m$ is a low-degree polynomial in $d$ and denotes the width of the network, $\sigma(\cdot) = \max(0, \cdot)$ is the ReLU nonlinearity, and $c$ denotes the class. We perform a \textit{random initialization} of $\vw_{c,r}^{(0)} \sim \calN(\vzero, \sigma_0^2\mI_d)$ with $\sigma_0^2 = 1/\poly(d)$; we set $b_{c,r}^{(0)} = -\Theta\left(\sigma_0\sqrt{\ln(d)}\right)$ and manually tune it, similar to \cite{zhu2022_adv}. Cross-entropy is the training loss for both baseline and transfer training. To simplify analysis and to focus solely on the learning of the feature extractor, we freeze $a_{c,r} = 1$ during all baseline and transfer training phases, and we use the fine-grained model for binary classification as follows: $\widehat{F}_+(\mX) = \max_{c\in[k_+]}F_{+,c}(\mX), \, \widehat{F}_-(\mX) = \max_{c\in[k_-]}F_{-,c}(\mX)$. See Appendix \ref{appendix: learner assumptions, training algo} and the beginning of Appendix \ref{section: appendix, finegrained} for details of learner characteristics and training algorithm.

\section{Theoretical results and proof strategy}
Our second theoretical contribution lies in establishing a correspondence between the \textit{complexity of the labels} and \textit{complexity of the network's representations}. Under the assumption of the hierarchical multi-view data structure, the following are true:
\begin{enumerate}[nosep, leftmargin=*]
    \item If trained with coarse-grained labels (i.e. overly simple labels), the network only learns the common features well, so its representations of the data is overly simple;
    \item In contrast, training with fine-grained labels helps the network learn the fine-grained features well in addition to the common ones, so its representation of the data is more complex.
\end{enumerate}
The difference in representation complexity leads to the difference in the network's downstream test accuracy.

\subsection{Main results}
\begin{theorem} [Coarse-label training: baseline]
(Summary). Let the number of subclasses be lower-bounded: $k_y \ge \polyln(d)$. With high probability, with proper choice of step size, there exists a time $T^* \in \poly(d)$ such that for any $T \in [T^*, \poly(d)]$, the training loss is upper bounded according to
\begin{equation}
\calL(F^{(T)}) \le o(1)
\end{equation}
Moreover, for an \textbf{easy} test sample $(\mX_{\text{easy}},y)$, the probability of making a classification mistake is small:
\begin{equation}
\mathbb{P}\left[F_y^{(T)}(\mX_{\text{easy}}) \le F_{y'}^{(T)}(\mX_{\text{easy}})\right] \le o(1), \quad \text{for} \; y' \not= y.
\end{equation}
However, for all $t\in[0,\poly(d)]$, given a \textbf{hard} test sample $(\mX_{\text{hard}},y)$, the probability of making a classification mistake is large:
\begin{equation}
\mathbb{P}\left[F_y^{(t)}(\mX_{\text{hard}}) \le F_{y'}^{(t)}(\mX_{\text{hard}})\right] \ge \Omega(1), \quad \text{for} \; y' \not= y.
\end{equation}
\label{thm: main text, coarse label, baseline}
\end{theorem}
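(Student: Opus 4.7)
The plan is to analyze SGD dynamics by tracking, for each neuron $(c,r)$, two kinds of inner products: the feature correlations $\langle \vw_{c,r}^{(t)}, \vv_y\rangle$ and $\langle \vw_{c,r}^{(t)}, \vv_{y,c'}\rangle$ against dictionary atoms, and the noise correlations $\langle \vw_{c,r}^{(t)}, \vzeta\rangle$ against fresh Gaussian draws. Under coarse labels the two output heads $F_+$ and $F_-$ are trained with cross-entropy, and the bias $b^{(0)} = -\Theta(\sigma_0\sqrt{\ln d})$ sits at the noise floor, so only patches already carrying a detectable signal push nontrivial gradient through the ReLU. The overall skeleton of tracking signal and noise correlations across iterations is borrowed from \cite{zhu2020_kd,zhu2022_adv,shen2022}, but it has to be reworked to incorporate the label hierarchy.

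First I would establish a common-feature dominance phase. In a positive sample, every one of the $s^*$ common-feature patches contributes the atom $\vv_+$, while each of the $s^*$ subclass-feature patches contributes some $\vv_{+,c'}$ with $c'$ roughly uniform over $[k_+]$. Averaged over a mini-batch, the gradient signal on $\vw_{+,r}$ is therefore of order $s^*$ in the direction $\vv_+$ but only of order $s^*/k_+$ in any individual fine direction $\vv_{+,c'}$. Since the hypothesis $k_y \ge \polyln(d)$ makes the common direction polylogarithmically faster-growing, a signal-amplification argument through the bias-thresholded ReLU (in the spirit of the cited multi-view analyses) shows that for each superclass $y$, a constant fraction of neurons $(y,r)$ reach $\langle \vw_{y,r}^{(t)}, \vv_y\rangle = \Omega(1)$ at some $T^* \in \poly(d)$, while all other correlations --- including $\langle \vw_{y,r}^{(t)}, \vv_{y,c}\rangle$, $\langle \vw_{y,r}^{(t)}, \vv_{-y}\rangle$, and the noise correlations on independently drawn $\vzeta$ --- remain $o(1)$. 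Once these detectors fire, the logit margin on every training sample becomes $\Omega(1)$, the logistic derivative is $o(1)$, and the training loss (together with the subsequent drift of all unlearned correlations) stays $o(1)$ throughout $[T^*, \poly(d)]$. The easy-sample bound is then immediate, because an easy test sample is statistically identical to a training sample and activates the same learned common-feature detectors.

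The hard part will be the hard-sample lower bound. By construction, a hard sample of true class $y$ contains no $\vv_y$ patch, so the learned $\vv_y$-detectors have nothing to bind to except the weak feature-noise patches of the form $\alpha_p^{\dagger}\vv_{-y} + \vzeta_p$ with $\alpha_p^{\dagger}\in o(1)$ and the single loud patch $\vzeta^*$ with $\sigma_{\zeta^*}\gg\sigma_\zeta$. I would compute, conditionally on a typical trajectory at some time $t\le\poly(d)$, the sign distribution of $F_y^{(t)}(\mX_{\text{hard}}) - F_{y'}^{(t)}(\mX_{\text{hard}})$. The subclass-feature patches in the hard sample still carry $\vv_{y,c}$, but because $\langle \vw_{y,r}, \vv_{y,c}\rangle$ stayed $o(1)$ throughout training, they contribute only $o(1)$ to the margin. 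The weak opposite-class contributions summed across the feature-noise patches induce a deterministic tilt toward the wrong head: each of the $\Omega(m)$ detectors aligned with $\vv_{-y}$ is nudged by $\alpha_p^{\dagger}$ in the wrong direction. Finally, the loud patch $\vzeta^*$ contributes a symmetric random perturbation to the margin whose standard deviation dominates every $o(1)$ term by $\sigma_{\zeta^*}\gg\sigma_\zeta$. Combining the adverse deterministic tilt with the symmetric large-variance perturbation yields the claimed $\Omega(1)$ error probability for all $t\in[0,\poly(d)]$.

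The most delicate step, and the one I would budget the most effort for, is the last: a careful concentration argument is needed to certify that the learned weights at time $t$ do not accidentally correlate with $\vzeta^*$ in a systematically helpful way. Because $\vzeta^*$ is drawn independently of training, a Gaussian projection bound suffices once the trajectory's effective dimension is controlled by $\poly(d)$, but tracking the joint dependence between the SGD trajectory and the freshly drawn test noise --- and verifying that the bias-thresholded ReLU does not silently clip away precisely the $\vzeta^*$ contribution that produces the symmetric perturbation on hard samples while still being inactive on easy ones --- is where the analysis needs the most care.
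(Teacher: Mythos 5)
Your overall plan matches the paper's strategy closely: identify a small set of detector neurons per feature, show the common-feature detectors grow roughly $k_y$ times faster than the subclass-feature detectors (because of the relative frequency of patches in each batch), then use the $k_y\ge\polyln(d)$ hypothesis to conclude the subclass response stays $o(1)$, and finally argue about hard samples via the loud noise patch $\vzeta^*$. That skeleton is correct, and you correctly flag the trajectory--$\vzeta^*$ coupling as the delicate step. But there are two substantive gaps in the middle that the paper's Appendices E and F work hard to close and your sketch does not.

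\textbf{Late-time growth is logarithmic, not stalled.} You write that once the learned margin reaches $\Omega(1)$, ``the logistic derivative is $o(1)$, and the training loss (together with the subsequent drift of all unlearned correlations) stays $o(1)$ throughout $[T^*,\poly(d)]$.'' This has a logical gap: a margin of $\Omega(1)$ makes the per-sample loss $\Theta(1)$, not $o(1)$, and the gradient factor $e^{-F}$ is only $\Theta(1)$, so naively summed over $\poly(d)$ iterations the unlearned correlations could accumulate past $o(1)$. The paper closes this by showing the common-feature response obeys $A(t)=\log(C(t-T_{1,1})+t_0)+E(t)$ (the harmonic-series bound in Lemma~\ref{lem: net response, final}), so the gradient scale is $\sim 1/t$, the cumulative drift over $\poly(d)$ time is $\Theta(\log d)$, and only the preserved ratio $A_{+,c,r^*}^{*(t)}/A_{+,r^*}^{*(t)}=\Theta(1/k_+)$ (Lemma~\ref{lemma: sgd phase II, common vs finegrained ratio}) forces the fine-grained response to remain $O(\log d/k_+)=o(1)$. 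Your argument asserts the conclusion but does not supply the mechanism; the exponential tail of cross-entropy is exactly what makes this nontrivial, and an $\Omega(1)$ threshold on the margin is not enough.

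\textbf{The $\vzeta^*$ response is controlled by the bias rule, not by a projection bound.} You propose a generic Gaussian projection argument over a $\poly(d)$-dimensional trajectory subspace, but the neuron norms $\|\vw^{(t)}\|$ grow during training, so $\langle\vw^{(t)},\vzeta^*\rangle$ grows in variance and the plain independence of $\vzeta^*$ does not by itself keep the response symmetric or bounded. The paper instead uses the architecture-specific bias rule $\Delta b_{c,r}^{(t)}=-\|\Delta\vw_{c,r}^{(t)}\|_2/\log^5(d)$: Corollary~\ref{coro: coarse training, bias upper bd} shows $|\Delta b_{+,r}^{(t)}|$ dominates $|\langle\Delta\vw_{+,r}^{(t)},\vzeta^*\rangle|$ by a $\polyln(d)$ factor, so the post-activation response $\sigma(\langle\vw_{+,r}^{(t)},\vzeta^*\rangle+b_{+,r}^{(t)})$ is pointwise upper bounded by its $t=0$ value, while the unchanged neurons $(-,r)\notin S_-^{(0)}$ preserve their $t=0$ response to $\vzeta^*$ exactly. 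This is what freezes the symmetric perturbation at the initialization distribution and yields the $\Omega(1)$ lower bound across all $t\in[0,\poly(d)]$. Your description gets the direction of the worry backwards (that the bias might clip the helpful $\vzeta^*$ perturbation) rather than the actual use (the bias clips away any learned correlation with $\vzeta^*$ while leaving the initialization-level symmetry intact). Finally, a minor count issue: only $\Theta(d^{c_0}/\sqrt{\log d})$ detector neurons per feature exist (Proposition~\ref{prop: init geometry, coarse}), not ``a constant fraction'' of the $m=\Theta(d^{2+2c_0})$ neurons nor $\Omega(m)$; this does not change the high-level structure but does matter once you write the $F_\pm$ estimates explicitly.
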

\vspace{-1ex}
This theorem essentially says that, with a mild lower bound on the number of fine-grained classes, if we only train on the \textit{easy} samples with \textit{coarse} labels, it is virtually impossible for the network to learn the fine-grained features even if we give it as much practically reachable amount of time and training samples as possible. Consequently, the network would perform poorly on the hard downstream test samples: if the sample is missing the \textit{common} features, then the network can be easily misled by the noise present in the sample. To see the full setup and statement of this theorem, please see Appendix \ref{section: theory, problem setup} and \ref{section: appendix, phase II coarse}. Its proof spans Appendix \ref{section: appendix init geometry} to \ref{section: appendix, phase II coarse}.

\begin{theorem}[Fine-grained-label training]
(Summary). Assume the same setting as in Theorem \ref{thm: main text, coarse label, baseline}, except let the labels be fine-grained and $k_y \le d^{0.4}$ (number of subclasses not pathologically large; see Section \ref{sec:main text, discussion} for its discussion). Within $\poly(d)$ time, the probability of making a classification mistake is small:
\begin{equation}
\mathbb{P}\left[\widehat{F}_y^{(T)}(\mX) \le \widehat{F}_{y'}^{(T)}(\mX)\right] \le o(1) \;\; \text{for} \;\; y' \not= y,
\end{equation}
on the target binary problem on both \textbf{easy} and \textbf{hard} test samples.
\end{theorem}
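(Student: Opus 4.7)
I would mirror the two-phase analysis of Theorem \ref{thm: main text, coarse label, baseline} but show that fine-grained labels force each subclass classifier $F_{y,c}$ to learn \emph{both} the common feature $\vv_y$ and the distinguishing subclass feature $\vv_{y,c}$. Once every subclass feature is detected by at least one neuron with $\Omega(1)$ alignment, the binary predictor $\widehat{F}_y(\mX)=\max_{c\in[k_y]}F_{y,c}(\mX)$ inherits robustness to hard samples: even when the common-feature patches are replaced by noise, the $\vv_{y,c}$-patches survive and suffice to fire the correct-superclass max. As a preliminary, the same Gaussian-concentration arguments used in Appendix \ref{section: appendix init geometry} give that with high probability a $\Theta(1/\polyln d)$ fraction of neurons of each $F_{y,c}$ have initial correlation at least $\sigma_0\sqrt{\ln d}$ with each of $\vv_y$ and $\vv_{y,c}$, and the initial bias kills pure-noise patches; in the earliest SGD window the common-feature alignment of the best-aligned neurons then grows to $\Omega(1)$ by the same tensor-power-style calculation used in the baseline proof.

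\textbf{Stage 2 (sibling competition amplifies subclass features).} This is the heart of the argument. For a sample of true subclass $(y,c)$, the cross-entropy gradient weights the logit $F_{y,c}$ by $1-p_{y,c}$ and each competing logit $F_{y',c'}\ne F_{y,c}$ by $-p_{y',c'}$. Inside the correct superclass, the common-feature patch contributes an identical inner-product signal to every sibling classifier once they all align with $\vv_y$, so it cannot separate $(y,c)$ from its siblings; hence the residual cross-entropy gradient inside $y$ is exactly what amplifies the correlation of neurons in $F_{y,c}$ with $\vv_{y,c}$. Running a per-subclass tensor-power argument in the spirit of \citet{zhu2020_kd, shen2022}, I would show that for every $(y,c)$ some neuron attains $\Omega(1)$ correlation with $\vv_{y,c}$ within $T\in\poly(d)$ steps, while opposite-superclass classifiers remain $o(1)$-aligned with any $\vv_y,\vv_{y,c}$ direction by the orthogonality of $\calV$. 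The hypothesis $k_y\le d^{0.4}$ keeps each $p_{y,c}\ge 1/\poly(d)$ so that the fine-grained signal is not smothered by softmax dilution; together with the common-feature margin this drives the training loss to $o(1)$.

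\textbf{Stage 3 (test error through the max aggregator).} For an easy sample of true label $(y,c)$, both $\vv_y$- and $\vv_{y,c}$-patches activate well-aligned neurons, yielding $F_{y,c}(\mX)=\Omega(s^*)$, while orthogonality of $\calV$ and the bias force $F_{y',c'}(\mX)=o(1)$ for every $y'\ne y$ and every $c'$. For a hard sample the $\vv_y$-patches vanish but the $\vv_{y,c}$-patches remain; the learned $\vv_{y,c}$-alignment keeps $F_{y,c}(\mX)=\Omega(1)$, while the $o(1)$-amplitude distractor patches and the large-variance patch $\vzeta^*$ are both suppressed by the bias and by orthogonality. Taking $\widehat{F}_y=\max_c F_{y,c}$ preserves the sign gap in both cases, yielding the claimed $o(1)$ mistake probability on both easy and hard samples.

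\textbf{Main obstacle.} The hard step is Stage 2: formally proving that sibling competition drives $\vv_{y,c}$-alignment all the way to $\Omega(1)$ rather than stalling once the common-feature alignment alone makes the per-sibling probabilities moderately uniform. One must control the coupled stochastic dynamics across all $k_y$ subclass classifiers simultaneously, bound interference from the Gaussian patches $\vzeta_p$ and from neurons whose initial correlation with $\vv_{y,c}$ is negative, and exploit $k_y\le d^{0.4}$ as the quantitative threshold at which the fine-grained gradient survives softmax dilution. The rest of the proof (Stages 1 and 3) reuses the concentration and activation-pattern machinery already developed for Theorem \ref{thm: main text, coarse label, baseline}.
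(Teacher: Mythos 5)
Your three-stage skeleton (initialization geometry, SGD feature-learning dynamics, max-aggregator test analysis) matches the paper's Appendix~\ref{section: appendix, finegrained} (Proposition~\ref{prop: init geometry, finegrained}, Theorem~\ref{thm: sgd fine-grained, universal nonact properties}, Lemma~\ref{lemma:finegrained sgd induction}, and the end-of-training error theorem), and your Stage~3 reasoning about easy and hard samples through $\widehat{F}_y = \max_c F_{y,c}$ is essentially what the paper proves. However, your Stage~2 mechanism --- that the common-feature signal first aligns every sibling, saturates as a discriminator among siblings, and only then does a "residual" gradient amplify the subclass feature --- is not the mechanism the paper uses, and the obstacle you flag (that the $\vv_{y,c}$-alignment might stall once the siblings' common-feature alignment equalizes) is a phantom under the paper's actual argument. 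The paper's Lemma~\ref{lemma:finegrained sgd induction} shows directly that from $t=0$ both alignments grow at $\Theta(\eta s^*/(k_+P))$ per step, because each classifier $F_{y,c}$ sees $\vv_y$-patches and $\vv_{y,c}$-patches at the \emph{same} per-batch frequency $\approx s^* N/(2k_+)$, while the on-diagonal logit weight $1-\logit_{y,c}\approx 1-O(1/k_+)$ is near $1$ from the start (there are $2k_+$ output nodes, so each logit is tiny at initialization). The off-diagonal, within-superclass negative contributions you describe as "sibling competition" do appear in equation~\ref{eq: fine training, common S^* update}, but they merely shave the common-feature growth rate to a $[1/4,2/3]$ factor of the fine-grained rate --- they never make the fine-grained gradient a "residual" of an exhausted common-feature signal, and no coupled-dynamics stall analysis is required. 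Two smaller discrepancies: the paper stops training at margin $\Omega(1)$ (logit bound $B\le\log(3/2)$), so the training cross-entropy loss is $\Theta(\log k_+)$, not $o(1)$ as you state; and the role of $k_y\le d^{0.4}$ in the paper is less about keeping $p_{y,c}\ge 1/\poly(d)$ and more about keeping the total width $2k_+\cdot m_{+,c}$, the batch size $N=\Omega(k_+d\,\polyln d)$, and $T_0 = \Theta(k_+P/(\eta s^{*2}d^{c_0}))$ all inside $\poly(d)$.
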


The full version of this result is presented in Appendix \ref{sec: appendix, finegrained trainining, end error}, and its proof in Appendix \ref{section: appendix, finegrained}. After fine-grained pretraining, the network's feature extractor gains a strong response to the fine-grained features, therefore its accuracy on the downstream hard test samples increases significantly.

\textit{Remark}. One concern about the above theorems is that the neural networks are trained only on easy samples. As noted in Sections \ref{sec: intro} and \ref{subsec: intuition}, \textit{easy} samples should make up the \textit{majority} of the training and testing samples. Pretraining at higher label granularities only improves network performance on \textit{rare} samples. Our theoretical result presents the feature-learning bias of a neural network in an exaggerated fashion. Therefore, it is natural to start with the case of no hard training samples. In reality, even if a small portion of hard training samples is present, finite-sized training datasets can have many flaws that can cause the network to overfit severely before learning the fine-grained features, especially since rarer features are learnt more slowly and corrupted by greater amount of noise. We leave these deeper considerations for future theoretical work.

\subsection{Proof strategy: representation-label correspondence}

The key idea of the proof is to establish a correspondence between the \textit{complexity of the labels} and \textit{complexity of the network's representations}. We show that when trained on coarse-grained labels (i.e. overly simple labels), the network only learns the common features well, so its representations of the data is overly simple. In contrast, training with fine-grained labels helps the network learn the fine-grained features well in addition to the common ones, so its representations are more complex.

We first sketch the proof of \textbf{baseline training} which uses \textit{coarse-grained} labels.

\textbf{Feature detector neurons}. We show that, at initialization, with high probability, for every feature $\vv\in\calV$, there exists a small group of ``lucky'' neurons, denoted $S^{*(0)}_y(\vv)$ (with $y$ indicating the superclass), that only activate on $\vv$-dominated feature patches. We prove that if $\vv$ is a feature of class $y$, then with high probability, the lucky neurons will remain activated on $\vv$-dominated patches throughout training, and dominate the feature extractor's response to the feature $\vv$.
In particular, given any $\vv$-dominated patch $\vx_p = \alpha_p \vv + \vzeta_p$,
\begin{equation}
\underset{\substack{\text{network representation of}\\ \text{$\vv$-dominated patch $\vx_p$}}}{\underbrace{\sum_{r=1}^m\sigma\left(\left\langle \vw_{y,r}^{(t)}, \vx_p \right\rangle + b_{y,r}^{(t)}\right) }} 
\approx 
\underset{\substack{\text{\textcolor{brown}{detector neurons'}}\\ \text{response to $\vv$-dominated patch $\vx_p$}}}{\underbrace{\sum^{m}_{r\in \color{brown}S^{*(0)}_y(\vv) }\color{black} \sigma\left(\left\langle \vw_{y,r}^{(t)}, \vx_p \right\rangle + b_{y,r}^{(t)}\right) }}, \;\; t\in [0,\poly(d)].
\label{eq: main text, feat extractor response to detector response}
\end{equation}
Therefore, we call neurons in $S^{*(0)}_y(\vv)$ the \textit{detector neurons} of feature $\vv$. 

The significance of \eqref{eq: main text, feat extractor response to detector response} is that, we may now argue about the \textit{network's representation of the input data} solely based on the \textit{behavior of the feature detector neurons}.

\textbf{Impartial representation at initalization}. At initialization, the feature extractor's response to common and fine-grained features are \textit{very close}. The reason is that, $\left\vert S^{*(0)}_y(\vv)\right\vert \approx \left\vert S^{*(0)}_{y'}(\vv')\right\vert$ for all superclasses $y,y'$ and features $\vv,\vv'$, and they all have a similar magnitude of activation strength. Written explicitly, given any \textcolor{blue}{common-feature} patch $\vx_{\text{com}} = \alpha \color{blue}\vv_y \color{black} + \vzeta$ and \textcolor{cyan}{subclass-feature} patch $\vx_{\text{sub}} = \alpha' \color{cyan} \vv_{y,c} \color{black} + \vzeta'$ (from the training or testing distribution), with high probability,
\begin{equation}
    \underset{\substack{\text{network representation of}\\ \text{common-feature patch $\vx_{\text{com}}$ at $t=0$}}}{\underbrace{\sum^{m}_{r\in S^{*(0)}_y(\color{blue} \vv_y \color{black}) } \sigma\left(\left\langle \vw_{y,r}^{(0)}, \alpha \color{blue}\vv_y \color{black} + \vzeta \right\rangle + b_{y,r}^{(0)}\right) }} 
    \approx
    \underset{\substack{\text{network representation of}\\ \text{subclass-feature patch $\vx_{\text{sub}}$  at $t=0$}}}{\underbrace{\sum^{m}_{r\in S^{*(0)}_y(\color{cyan}\vv_{y,c} \color{black}) } \sigma\left(\left\langle \vw_{y,r}^{(0)}, \alpha' \color{cyan}\vv_{y,c}\color{black} + \vzeta' \right\rangle + b_{y,r}^{(0)}\right) }} 
\end{equation}

So what happened during training which caused a strong \textit{imbalance} of representation of the common and fine-grained features in the end? The answer below is the core of the proof.

\textbf{\textit{Overly simple labels$\implies$overly simple representations}}. The imbalance of growth is a result of the subclass-feature patches occurring with less frequency in the training set than the common-feature patches. Recall that the number of subclasses is $k_y$: for any subclass $(y,c)$, subclass-feature patches dominated by $\vv_{y,c}$ are about $k_y$ times \textit{rarer} than the common feature patches. This has a direct impact on the growth speed of the common and fine-grained detector neurons: for any neuron $\color{blue}r_{\text{com}} \color{black} \in S^{*(0)}_y(\color{blue} \vv_y \color{black})$ and any $ \color{cyan} r_{\text{fine}} \color{black} \in S^{*(0)}_y(\color{cyan} \vv_{y,c} \color{black})$, $\langle \Delta \vw^{(t)}_{y,\color{blue}r_{\text{com}} \color{black}}, \color{blue}\vv_y \color{black}\rangle \approx \Theta\left( k_y\right)  \times \langle \Delta \vw^{(t)}_{y,\color{cyan}r_{\text{fine}} \color{black}}, \color{cyan}\vv_{y,c} \color{black}\rangle$.

With careful arguments on the influence of noise and bias on the activation values, we can show that, for $t$ \textit{sufficiently large}, the fine-grained detector neurons are about $\Theta(k_y)$ times \textit{weaker} in strength:
\begin{equation}
    \underset{\substack{\text{network representation of}\\ \text{common-feature patch $\vx_{\text{com}}$ at \textbf{large} $t$}}}{\underbrace{\sum^{m}_{r\in S^{*(0)}_y(\color{blue} \vv_y \color{black}) } \sigma\left(\left\langle \vw_{y,r}^{(t)}, \alpha \color{blue}\vv_y \color{black} + \vzeta \right\rangle + b_{y,r}^{(t)}\right) }} 
    \approx 
    \Theta\left( k_y\right)  \times
    \underset{\substack{\text{network representation of}\\ \text{subclass-feature patch $\vx_{\text{sub}}$  at \textbf{large} $t$}}}{\underbrace{\sum^{m}_{r\in S^{*(0)}_y(\color{cyan}\vv_{y,c} \color{black}) } \sigma\left(\left\langle \vw_{y,r}^{(t)}, \alpha' \color{cyan}\vv_{y,c}\color{black} + \vzeta' \right\rangle + b_{y,r}^{(t)}\right) }} 
\end{equation}

Furthermore, we prove that, due to the exponential tail of cross-entropy, by the end of training,
\begin{equation}
    \sum^{m}_{r\in S^{*(0)}_y(\color{blue} \vv_y \color{black}) } \sigma\left(\left\langle \Delta \vw_{y,r}^{(t)}, \alpha \color{blue}\vv_y \color{black} + \vzeta \right\rangle + b_{y,r}^{(t)}\right) = \Theta\left(\log(d) \right)
\end{equation}
which causes the representation of subclass-feature patches to be \textit{vanishing} in strength:
\begin{equation}
    \sum^{m}_{r\in S^{*(0)}_y(\color{cyan}\vv_{y,c} \color{black}) } \sigma\left(\left\langle \vw_{y,r}^{(t)}, \alpha' \color{cyan}\vv_{y,c}\color{black} + \vzeta' \right\rangle + b_{y,r}^{(t)}\right) \le O\left( \frac{\log(d)}{k_y}\right) < o(1), \;\; t \le \poly(d).
\end{equation}

In other words, the neural network almost cannot detect subclass features by the end of baseline training. Therefore, even though it can classify the easy test samples correctly since it learned the common features well, it simply cannot classify the hard ones, which requires the model to solely rely on subclass-feature patches for inference.

\textbf{Fine-grained} training alleviates this issue.

\textbf{\textit{Complex labels$\implies$complex representations}}. The proof of fine-grained training proceeds in a very similar fashion as the case of coarse-grained training. The main difference lies in the gradient updates. During training, for any neuron $\color{blue}r_{\text{com}} \color{black} \in S^{*(0)}_{(y,c)}(\color{blue} \vv_y \color{black})$ and any $ \color{cyan} r_{\text{fine}} \color{black} \in S^{*(0)}_{(y,c)}(\color{cyan} \vv_{y,c} \color{black})$,
\begin{equation}
    \left\langle \Delta \vw^{(t)}_{(y,c),\color{blue}r_{\text{com}} \color{black}}, \color{blue}\vv_y \color{black}\right\rangle \approx \left\langle \Delta \vw^{(t)}_{(y,c),\color{cyan}r_{\text{fine}} \color{black}}, \color{cyan}\vv_{y,c} \color{black}\right\rangle.
\end{equation}
In other words, the common and fine-grained detector neurons for each subclass grow at similar speeds now, because the common- and subclass-feature patches occur with similar frequency in each subclass. Again with careful analysis of how the noise and bias influence the activation values, we arrive at
\begin{equation}
\begin{aligned}
    & \underset{\substack{\text{network representation of}\\ \text{common-feature patch $\vx_{\text{com}}$, end of training}}}{\underbrace{\sum^{m}_{r\in S^{*(0)}_{(y,c)}(\color{blue} \vv_y \color{black}) } \sigma\left(\left\langle \vw_{(y,c),r}^{(t)}, \alpha \color{blue}\vv_y \color{black} + \vzeta \right\rangle + b_{(y,c),r}^{(t)}\right) }}
    \approx 
    \underset{\substack{\text{network representation of}\\ \text{subclass-feature patch $\vx_{\text{sub}}$, end of training}}}{\underbrace{\sum^{m}_{r\in S^{*(0)}_{(y,c)}(\color{cyan}\vv_{y,c} \color{black}) } \sigma\left(\left\langle \vw_{(y,c),r}^{(t)}, \alpha' \color{cyan}\vv_{y,c}\color{black} + \vzeta' \right\rangle + b_{(y,c),r}^{(t)}\right) }} \\
    & \hspace{17ex} \ge \Omega(1) \hspace{40ex} \ge \Omega(1)
\end{aligned}
\end{equation}
Therefore, both the common and fine-grained features are learnt well. It follows that the model can correctly utilize the common- and subclass-feature patches in the input, so it can classify easy and hard test samples with high accuracy.

\section{Empirical Results}
\label{sec:experiments}
Building on our theoretical analysis in an idealized setting, this section discusses conditions on the source and target label functions that we observed to be important for fine-grained pretraining to work \textit{in practice}, while remaining in the controlled setting described in Section \ref{sec: intro} for the sake of tractability. We present the core experimental results obtained on ImageNet21k and iNaturalist 2021 in the main text, and leave the experimental details and ablation studies to Appendix \ref{section: appendix A}.

\subsection{ImageNet21k$\to$ImageNet1k transfer experiment}
\label{subsect: im21k->im1k main text details}

This subsection provides more details about the experiment shown in Figure \ref{fig: im21k->im1k, linear probe}. Specifically, we show that the common practice of pretraining on ImageNet21k using leaf labels is indeed better than pretraining at lower granularities in the manual hierarchy.

\textbf{Hierarchy definition}. The label hierarchy in ImageNet21k is based on WordNet \cite{wordnet, deng2009}. To define fine-grained labels, we first define the leaf labels of the dataset as Hierarchy level 0. For each image, we trace the path from the leaf label to the root using the WordNet hierarchy. We then set the $k$-th synset (or the root synset, if it is higher in the hierarchy) as the level-$k$ label of this image. This procedure also applies to the multi-label samples. This is how we generate the hierarchies shown in Figure~\ref{fig: im21k->im1k, linear probe}.

\textbf{Network choice and training}. For this dataset, we use the more recent Vision Transformer ViT-B/16 \cite{vit2021}. Our pretraining pipeline is almost identical to the one in \cite{vit2021}. For fine-tuning, we experimented with several strategies and report only the best results in the main text; the finer details are discussed in Appendix \ref{appendix: im21k self transfer} and \ref{appendix: im21k cross-dataset}. To ensure a fair comparison, we also used these strategies to find the best baseline result by using $\calD^{\text{tgt}}_{\text{train}}$ for pretraining.

\subsection{Transfer experiment on iNaturalist 2021}
We conduct a systematic study of the transfer method \textit{within} the label hierarchies of iNaturalist 2021 \citep{inaturalist_2021}. This dataset is well-suited for our analysis because it has a manually defined label hierarchy that is based on the biological traits of the creatures in the images. Additionally, the large sample size of this dataset reduces the likelihood of sample-starved pretraining on reasonably fine-grained hierarchy levels.

Our experiments on this dataset again demonstrate that, as long as the finer-grained labels contain little noise, are well-aligned with the target label space, and sample count per subclass is not too limited, then we observe improvement in the model's generalization performance. However, we also show negative results outside of the aforementioned ``nice regime'': when sample count per sub-class is limited, or the fine-grained labels are \textit{noisy}, or potentially \textit{misaligned} with the target label space's, finer-grained labels do not necessarily improve generalization significantly.

\textbf{Relevant datasets}. We perform transfer experiments within iNaturalist2021. More specifically, we set $\calX^{\text{src}}_{\text{train}}$ and $\calX^{\text{tgt}}_{\text{train}}$ both equal to the training split of the input samples in iNaturalist2021, and set $\calX^{\text{tgt}}_{\text{train}}$ to the testing split of the input samples in iNaturalist2021. To focus on the ``fine-to-coarse'' transfer setting, the \textit{target problem} is to classify the root level of the manual hierarchy, which contains 11 superclasses. To generate a greater gap between the performance of different hierarchies and to shorten training time, we use the mini version of the training set in all our experiments.

\textbf{Alternative hierarchies generation}. To better understand the transfer method's operating regime, we experiment with different ways of generating the fine-grained labels for pretraining: we perform kMeans clustering on the ViT-L/14-based CLIP embedding \cite{radford2021, dehghani2021scenic} of every sample in the training set and use the cluster IDs as pretraining class labels. We carry out this experiment in two ways. The green curve in Figure \ref{fig:resnet34 inat2021} comes from performing kMeans clustering on the embedding of each superclass \textit{separately}, while the purple one's cluster IDs are from performing kMeans on the \textit{whole dataset}. The former way preserves the implicit hierarchy of the superclasses in the cluster IDs: samples from superclass $k$ cannot possibly share a cluster ID with samples belonging to superclass $k' \neq k$. Therefore, its label function is forced to align better with that of the 11 superclasses than the purple curve's. We also assign random class IDs to samples.

\textbf{Network choice and training}. We experiment with ResNet 34 and 50 on this dataset. 
For pretraining on $\calD_{\text{train}}^{\text{src}}$ with fine-grained labels, we adopt a standard 90-epoch large-batch-size training procedure commonly used on ImageNet \cite{kaiming2016,goyal2017}. Then we finetune the network for 90 epochs and test it on the 11-superclass $\calD^{\text{tgt}}_{\text{train}}$ and $\calD^{\text{tgt}}_{\text{test}}$, respectively, using the pretrained backbone $\vh(\mTheta_{\text{src}}; \cdot)$.

To ensure a fair comparison, we trained the baseline model using exactly the same training pipeline, except that the pretraining stage uses $\calD^{\text{tgt}}_{\text{train}}$. We observed that this ``retraining'' baseline consistently outperformed the naive one-pass 90-epoch training baseline on this dataset. Due to space limitations, we leave the results of ResNet50 to the appendix.

\begin{figure}[t]
    \centering
    \includegraphics[width=\linewidth]{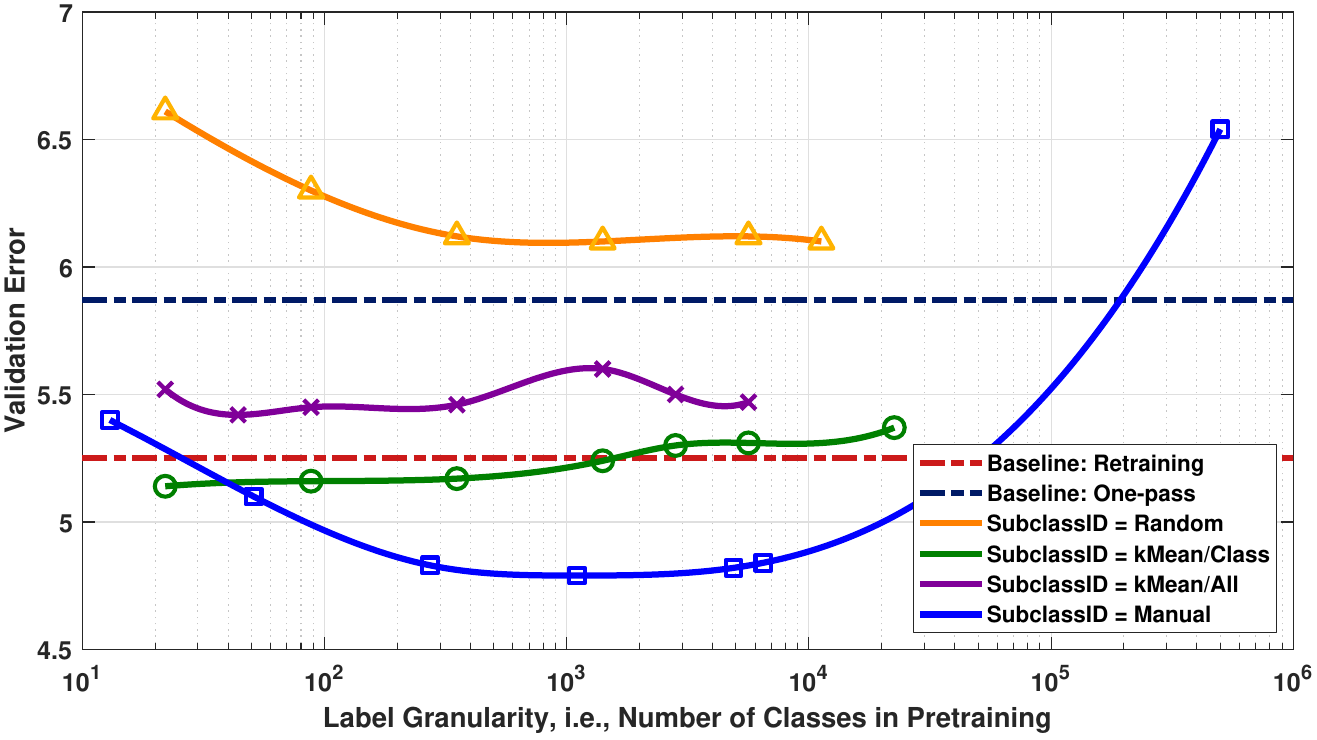}
    \vspace{-1ex}
    \caption{\textbf{In-dataset transfer}. ResNet34 validation error (with standard deviation) of finetuning on 11 superclasses of iNaturalist 2021, pretrained on various label hierarchies. The manual hierarchy outperforms the baseline and every other hierarchy, and exhibits a U-shaped curve.}
    \vspace{-1ex}
    \label{fig:resnet34 inat2021}
    \vspace{-2ex}
\end{figure}

\textbf{Interpretation of results}. Figure \ref{fig:resnet34 inat2021} shows the validation errors of the resulting models on the 11-superclass problem. We make the following observations.

\textit{Reasonably fine-grained labels benefit generalization, but there is a catch}. We can observe that in the blue curve of Figure \ref{fig:resnet34 inat2021} that, as long as the number of subclasses is less than $10^3$, we see obvious decline in the validation error on the target labels. In other words, reasonably fine-grained pretraining is indeed beneficial in this setting. We should note, however, the overall curve exhibits a U shape: overly fine-grained labels are not beneficial to downstream generalization. This is intuitive. If the pretraining granularity is too close to the target one, we should not expect improvement. On the other extreme, if we assign a unique label to \textit{every} sample in the training data, it is highly likely that the only \textit{differences} a model can find between each class would be frivolous details of the images, which would not be considered discriminative by the label function of the target coarse-label problem. In this case, the pretraining stage is almost meaningless and can be misleading, as evidenced by the very high label-per-sample error (red star in Figure \ref{fig:resnet34 inat2021}).

\textit{High granularity can be helpful, but label-assignment consistency is critical}. Random class ID pretraining (orange curve) performs the worst of all the alternatives. The label function of this type does not generate a \textit{meaningful hierarchy} because it has no consistency in the features it considers discriminative when decomposing the superclasses. This is in stark contrast to the manual hierarchies, which decompose the superclasses based on the finer biological traits (mostly visual in nature) of the creatures in the image.

\textit{Alignment between fine-grained and target label spaces are important}. For fine-grained pretraining to be effective, the features that the pretraining label function considers discriminative must \textit{align} well with those valued by the label function of the 11-superclass hierarchy. To see this point, observe that for models trained on cluster IDs obtained by performing kMeans on the CLIP embedding samples in each superclass \textit{separately} (green curve in Figure \ref{fig:resnet34 inat2021}), their validation errors are much lower than those trained on cluster IDs obtained by performing kMeans on the whole dataset (purple curve in Figure \ref{fig:resnet34 inat2021}). As expected, the manually defined fine-grained label functions align best with that of the 11 superclasses, and the results corroborate this view.

%------------------------------------------------------------------------

\section{Discussion}
\label{sec:main text, discussion}

\textit{Q: Are there other reasons why fine-grained labels benefit neural network generalization?}

A: Yes, it is possible, e.g., the optimization landscape induced by finer-grained labels could contain less saddle points, making it friendlier to SGD. We did not analyze this because our focus is primarily on the generalization instead of optimization aspect of the problem.

\textit{Q: Does a higher label granularity always imply better generalization?}

A: No. There is an operating regime. Training a model with \textit{pathologically} high label granularity is harmful. For example, if we assign a unique class to every sample in the dataset, the model will be forced to rely on the frivolous differences between each sample. We verify this intuition in Figure \ref{fig:resnet34 inat2021} in Appendix \ref{appdx_sec:inat2021} on the iNaturalist 2021 dataset. These extreme scenarios do not arise in common practice, so we do not focus on them in this paper.

\textit{Q: The theoretical setting appears restrictive.}

A: Our theoretical setting is consistent with \cite{cao2022benign,zhu2020_kd,zhu2022_adv,shen2022,jelassi22a}. With a limited number of available analytic tools in the literature, we believe these settings are necessary to keep things tractable.

%------------------------------------------------------------------------
\section{Conclusion}

In this paper, we formally studied the influence of pretraining label granularity on the generalization of DNNs, and performed large-scale experiments to complement our theoretical results. Under the new data model, hierarchical multi-view, we theoretically showed that higher label complexity leads to higher representation complexity, through which we explained why pretraining with fine-grained labels is beneficial to generalization. We complement our theory with experiments on ImageNet and iNaturalist, demonstrating that in the controlled setting of this paper, pretraining on reasonably fine-grained labels indeed benefits generalization. 

\subsubsection*{Broader Impact Statement}
This paper presents work whose goal is to advance the theory of deep learning. There are potential societal consequences of our work, none which we feel must be specifically highlighted here.

\bibliography{main}
\bibliographystyle{tmlr}

\newpage
\appendix
\begin{center}
    \huge Appendix
\end{center}

\section{Additional Experimental Results}
\label{section: appendix A}
In this section, we present the full details of our experiments and relevant ablation studies. All of our experiments were performed using tools in the Scenic library \cite{dehghani2021scenic}.
\subsection{In-dataset transfer results}
To clarify, in this transfer setting, we are essentially transferring \textit{within} a dataset. More specifically, we set $\calX^{\text{src}} = \calX^{\text{tgt}}$ and only the label spaces $\calY^{\text{src}}$ and $\calY^{\text{tgt}}$ may differ (in distribution). The baseline in this setting is clear: train on $\calD^{\text{tgt}}_{\text{train}}$ and test on $\calD^{\text{tgt}}_{\text{test}}$. In contrast, after pretraining the backbone network $\vh(\mTheta; \cdot)$ on $\mathcal{Y}^{\text{src}}$, we finetune or linear probe it on $\calD^{\text{tgt}}_{\text{train}}$ using the backbone and then test on $\calD^{\text{tgt}}_{\text{test}}$.

\subsubsection{iNaturalist 2021}
\label{appdx_sec:inat2021}
iNaturalist 2021 is well-suited for our analysis because it has a high-quality, manually defined label hierarchy that is based on the biological traits of the creatures in the images. Additionally, the large sample size of this dataset reduces the likelihood of sample-starved pretraining on reasonably fine-grained hierarchy levels. We use the mini training dataset with size 500,000 instead of the full training dataset to show a greater gap between the results of different hierarchies and speed up training. 

We use the architectures ResNet 34 and 50 \cite{kaiming2016}.

\textbf{Training details}. Our pretraining pipeline on iNaturalist is essentially the same as the standard large-batch-size ImageNet-type training for ResNets \cite{kaiming2016, goyal2017}. The following pipeline applies to model pretraining on any hierarchy.
\begin{itemize}
    \item Optimization: SGD with 0.9 momentum coefficient, 0.00005 weight decay, 4096 batch size, 90 epochs total training length. We perform 7 epochs of linear warmup in the beginning of training until the learning rate reaches $0.1\times 4096/256 = 1.6$, and then apply the cosine annealing schedule. Each training instance is run on 16 TPU v4 chips, taking around 2 hours per run.
    \item Data augmentation: subtracting mean and dividing by standard deviation, image (original or its horizontal flip) resized such that its shorter side is $256$ pixels, then a $224 \times 224$ random crop is taken.
\end{itemize}

For finetuning, we keep everything in the pipeline the same except setting the batch size to $4096/4 = 1024$ and base learning rate $1.6/4 = 0.4$. We found that finetuning at higher batch size and learning rate resulted in training instabilities and severely affected the final finetuned model's validation accuracy, while finetuning at lower batch size and learning rate than the chosen one resulted in lower validation accuracy at the end even though their training dynamics was stabler.

For the baseline accuracy, as mentioned in the main text, to ensure fairness of comparison, in addition to only training the network on the target 11-superclass problem for 90 epochs (using the same pretraining pipeline), we also perform ``retraining'': follow the exact training process of the models trained on the various hierarchies, but use  $\calD_{\text{train}}^{\text{tgt}}$ as the training dataset in both the pretrianing and finetuning stage. We observed consistent increase in the final validation accuracy of the model, so we report this as the baseline accuracy. Without retraining (so naive one-pass 90-epoch training on 11 superclasses), the average accuracy with standard deviation is $94.13, 0.025$.

\textbf{Clustering}. To obtain the cluster-ID-based labels, we perform the following procedure. 
\begin{enumerate}
    \item For every sample $\mX_n$ in the mini training dataset of iNaturalist 2021, obtain its ViT-L/14 CLIP embedding $\mE_n$.
    \item Per-superclass kMeans clustering. Let $C$ be the predefined number of clusters per class.
    \begin{enumerate}
        \item For every superclass $k$, for the set of embedding $\{(\mE_n, y_n = k)\}$ belonging to that superclass, perform kMeans clustering with cluster size set to $C$.
        \item Given a sample with superclass ID $k\in\{1, 2, ..., 11\}$ and cluster ID $c\in\{1, 2, ..., C\}$, define its fine-grained ID as $C\times k + c$. 
    \end{enumerate}
    \item Whole-dataset kMeans clustering. Let $C$ be the predefined number of clusters on the whole dataset.
    \begin{enumerate}
        \item Perform kMeans on the embedding of all the samples in the dataset, with the number of clusters set to $C$. Set the fine-grained class ID of a sample to its cluster ID.
    \end{enumerate}
\end{enumerate}
Some might have the concern that having the same number of kMeans clusters per superclass could cause certain classes to have too few samples, which could be a reason for why the cluster ID hierarchies perform worse than the manual hierarchies. Indeed, the number of samples per superclass on iNaturalist is different, so in addition to the above ``uniform-number-of-cluster-per-superclass'' hierarchy, we add an extra label hierarchy by performing the following procedure to balance the sample size of each cluster:
\begin{enumerate}
    \item Perform kMeans for each superclass with number of clusters set to 2, 8, 32, 64, 128, 256, 512, 1024 and save the corresponding image-ID-to-cluster-ID dictionaries (so we are basically reusing the clustering results of the CLIP+kMeans per superclass experiment)
    \item For each superclass, find the image-ID-to-cluster-ID dictionary with the highest granularity while still keeping the minimum number of samples for each cluster $>$ predefined threshold (e.g. 1000 samples per subclass)
    \item Now we have nonuniform granularity for each superclass while ensuring that the sample count per cluster is above some predefined threshold.
\end{enumerate}
This simple procedure somewhat improves the balance of sample count per cluster, for example, Figure \ref{fig:inat2021, kmean per superclass, rebalanced} shows the sample count per cluster for the cases of total number of clusters = 608 and 1984. Unfortunately, we do not observe any meaningful improvement on the model's validation accuracy trained on this more refined hierarchy.

\begin{table*}[t!]
\setlength{\tabcolsep}{5pt}
\centering
\caption{\textbf{In-dataset transfer, iNaturalist 2021}. ResNet34 average finetuning validation error and standard deviation on 11 superclasses in iNaturalist 2021, pretrained on various label hierarchies with different label granularity. Baseline (11-superclass) and best performance are highlighted.}

\scalebox{0.78}{
\begin{tabular}{l c | c c c c c c c c c}
\toprule
Manual Hierarchy & $\calG(\mathcal{Y}^{\text{src}})$ & 11 & 13 & 51 & 273 & 1103 & 4884 & 6485\\
% \cmidrule{2-9}
& Validation error & \textbf{\textcolor{red}{5.25$\pm$0.051}} & 5.40$\pm$0.075 & 5.10$\pm$0.038 & 4.83$\pm$0.041 & \textbf{4.79$\pm$0.045}& 4.82$\pm$0.056 & 4.84$\pm$0.033 \\
\hline
Random class ID & $\calG(\mathcal{Y}^{\text{src}})$ & 22 & 88 & 352 & 1,408 & 5,632 & 11,264 & 500,000\\
% \cmidrule{2-9}
& Validation error & 6.61$\pm$0.215 & 6.30$\pm$0.070 & 6.12$\pm$0.77 & 6.10$\pm$0.053 & 6.12$\pm$0.042 & 6.10$\pm$0.057 & 6.54$\pm$0.758 \\
\hline
CLIP+kMeans & $\calG(\mathcal{Y}^{\text{src}})$ & 22 & 88 & 352 & 1408 & 2816 & 5632 & 22528\\
% \cmidrule{2-9}
per superclass & Validation error & 5.14$\pm$0.049 & 5.16$\pm$0.033 & 5.17$\pm$0.027 & 5.24$\pm$0.029 & 5.30$\pm$0.029 & 5.31$\pm$0.077 & 5.37$\pm$0.032 \\
\hline
C+k per supclass & $\calG(\mathcal{Y}^{\text{src}})$ & 88 & 218 & 320 & 608 & 1040 & 1984\\
% \cmidrule{2-9}
Class rebalanced & Validation error & 5.18$\pm$0.054 & 5.17$\pm$0.038 & 5.23$\pm$0.052 & 5.28$\pm$0.045 & 5.26$\pm$0.035 & 5.21$\pm$0.040 &  \\
\hline
CLIP+kMeans & $\calG(\mathcal{Y}^{\text{src}})$ & 22 & 44 & 88 & 352 & 1408 & 2816 & 5632 \\
% \cmidrule{2 - 9}
whole dataset& Validation error & 5.52$\pm$0.015 & 5.42$\pm$0.047 & 5.45$\pm$0.049 & 5.46$\pm$0.019 & 5.60$\pm$0.029 & 5.50$\pm$0.029 & 5.47$\pm$0.029 \\
\bottomrule
\end{tabular}
}
\label{table: resnet34, inat2021, appendix 1}
\end{table*}

\begin{table*}[t!]
\setlength{\tabcolsep}{5pt}
\centering
\caption{\textbf{In-dataset transfer, iNaturalist 2021}. ResNet34 average finetuned validation error and standard deviation on 11 superclasses in iNaturalist 2021, pretrained on the manual hierarchies, with different backbone checkpoints.}
\scalebox{0.9}{
\begin{tabular}{c  c | c c c c c c c c}
\toprule
90-Epoch ckpt & $\calG(\mathcal{Y}^{\text{src}})$ & 13 & 51 & 273 & 1103 & 4884 & 6485\\
% \cmidrule{2-8}
& Validation error & 5.40$\pm$0.075 & 5.10$\pm$0.038 & 4.83$\pm$0.041 & 4.79$\pm$0.045 & 4.82$\pm$0.056 & 4.84$\pm$0.033 \\
\hline
70-Epoch ckpt & $\calG(\mathcal{Y}^{\text{src}})$ & 13 & 51 & 273 & 1103 & 4884 & 6485\\
% \cmidrule{2-8}
& Validation error & 5.43$\pm$0.055 & 5.08$\pm$0.029 & 4.86$\pm$0.037 & 4.82$\pm$0.034 & 4.83$\pm$0.064 & 4.85$\pm$0.018 \\
\hline
50-Epoch ckpt & $\calG(\mathcal{Y}^{\text{src}})$ & 13 & 51 & 273 & 1103 & 4884 & 6485\\
% \cmidrule{2-8}
& Validation error & 5.53$\pm$0.036 & 5.2$\pm$0.031 & 4.90$\pm$0.038 & 4.9$\pm$0.042 & 4.91$\pm$0.020 & 4.95$\pm$0.026 \\
\bottomrule
\end{tabular}
}
\label{table: resnet34 different backbone ckpts, inat2021, appendix 1}
\end{table*}

\begin{table*}[t!]
\setlength{\tabcolsep}{5pt}
\centering
\caption{\textbf{In-dataset transfer, iNaturalist 2021}. ResNet50 finetuned average validation error and standard deviation on 11 superclasses in iNaturalist 2021, pretrained on label hierarchies with different label granularity.}
\vspace{0.1in}
\scalebox{0.78}{
\begin{tabular}{l c | c c c c c c c c}
\toprule
Manual Hierarchy & $\calG(\mathcal{Y}^{\text{src}})$ & 11 & 13 & 51 & 273 & 1103 & 4884 & 6485\\
% \cmidrule{2-9}
& Validation error & \textbf{\textcolor{red}{4.43$\pm$0.029}} & 4.44$\pm$0.063 & 4.36$\pm$0.062 & 4.22$\pm$0.021 & \textbf{4.20$\pm$0.035 }& 4.23$\pm$0.054 & 4.33$\pm$0.037 \\
\hline
Random class ID & $\calG(\mathcal{Y}^{\text{src}})$ & 22 & 88 & 352 & 1,408 & 5,632 & 11,264 & 500,000\\
% \cmidrule{2-9}
& Validation error & 5.36$\pm$0.111 & 5.31$\pm$0.079 & 5.24$\pm$0.093 & 5.38$\pm$0.052 & 5.37$\pm$0.033 & 5.40$\pm$0.033 & 5.13$\pm$0.072 \\
\bottomrule
\end{tabular}
}
\label{table: resnet50, inat2021, appendix 1}
\end{table*}

\textbf{Experimental procedures}. All the validation accuracies we report on ResNet34 are the averaged results of experiments performed on at least 6 random seeds: 2 random seeds for backbone pretraining and 3 random seeds for finetuning. We report the average accuracies with their standard deviation on various hierarchies in Table \ref{table: resnet34, inat2021, appendix 1}.

An additional experiment we performed with ResNet34 is a small grid search over what checkpoint of a pretrained backbone we should use for finetuning on the 11-superclass method; we tried the 50-, 70- and 90-epoch checkpoints of the backbone on the manual hierarchies. We report these results in Table \ref{table: resnet34 different backbone ckpts, inat2021, appendix 1}. As we can see, 90-epoch checkpoints performs almost equally well as the 70-epoch checkpoints and better than the 50-epoch ones by a nontrivial margin. With this observation, we chose to use the end-of-pretraining 90-epoch checkpoints in all our other experiments without further ablation studies on those hierarchies.

Our ResNet50 results are not as extensive as those on ResNet34. We present the average accuracies and standard deviations in Table \ref{table: resnet50, inat2021, appendix 1}.

\begin{figure}[t!]
    \centering
    \begin{subfigure}{0.99\textwidth}
        \includegraphics[width=0.49\linewidth]{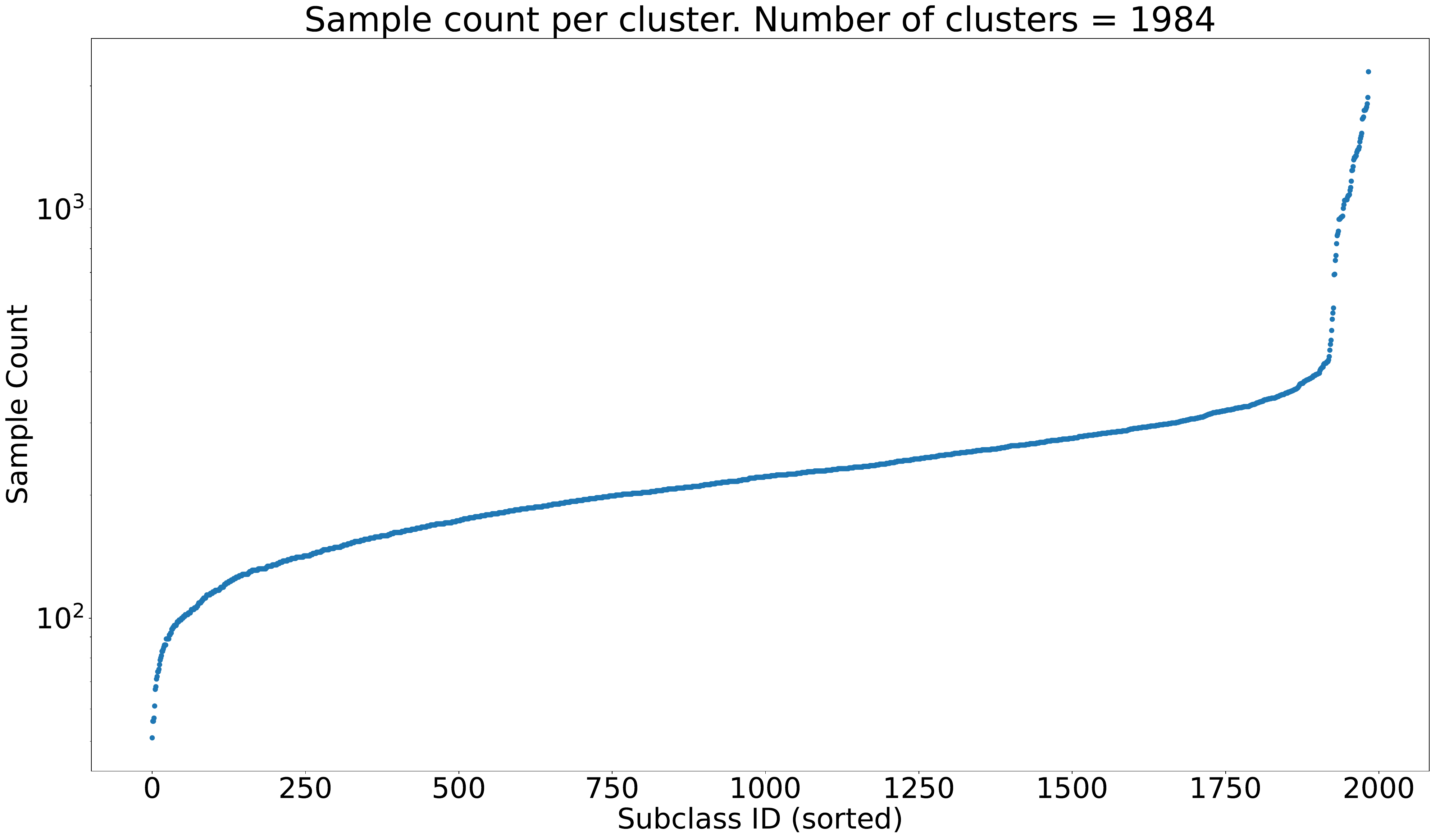}
        \includegraphics[width=0.49\linewidth]{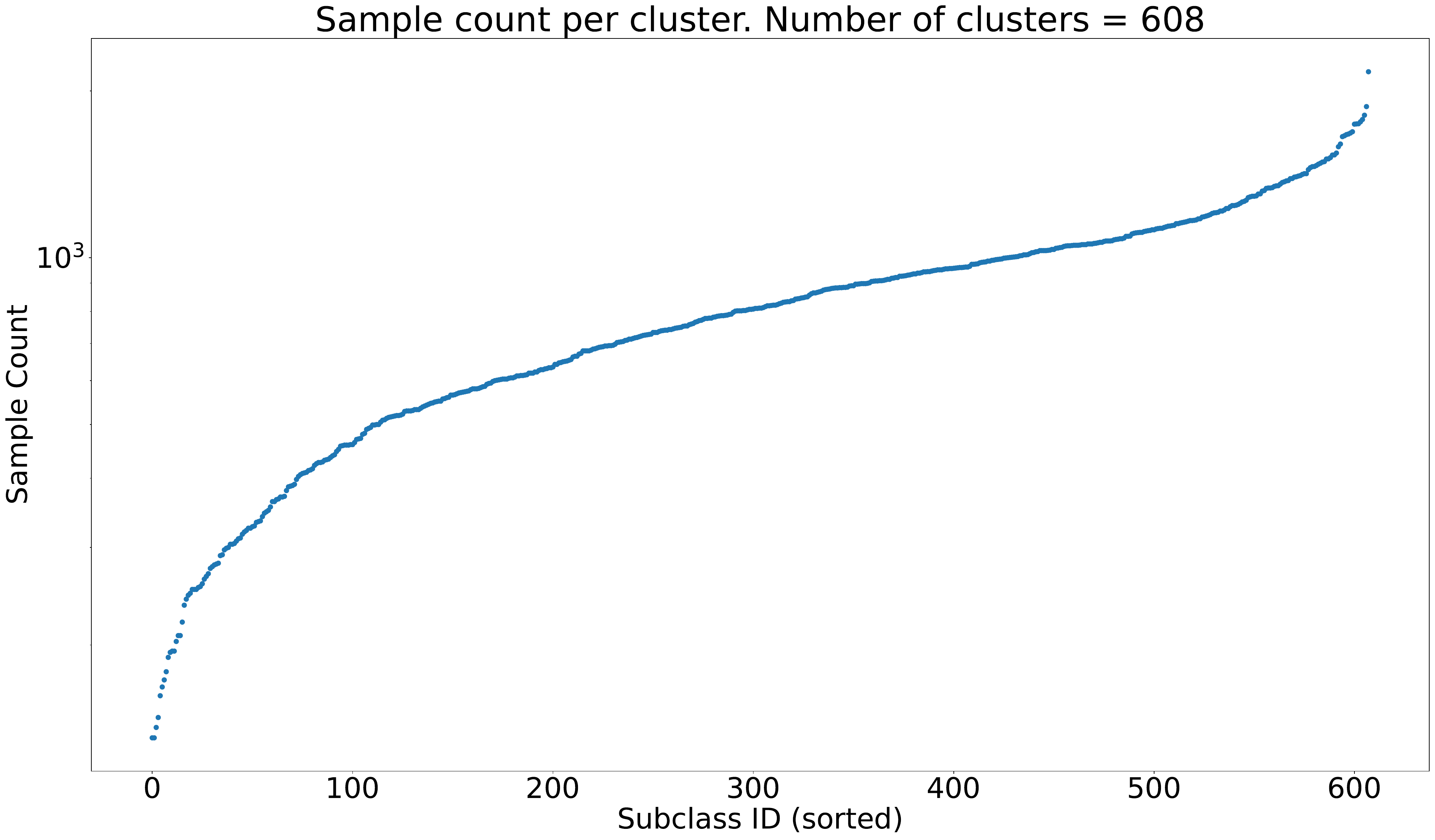}
    \end{subfigure}
    \caption{\textbf{In-dataset transfer, iNaturalist 2021}. Number of samples per cluster in the case of 608 and 1984 total clusters, after applying the sample size rebalancing procedure described in subsection \ref{appdx_sec:inat2021}. Observe that the sample sizes are reasonably balanced across almost all the subclasses.}
    \vspace{-2mm}
    \label{fig:inat2021, kmean per superclass, rebalanced}
\end{figure}

\subsubsection{ImageNet21k}
\label{appendix: im21k self transfer}

\begin{table}[t!]
\centering
\caption{\textbf{In-dataset transfer}. ViT-B/16 validation error on the binary problem ``is this object a \textit{living thing}?'' of ImageNet21k. Pretrained on various hierarchy levels of ImageNet21k, finetuned on the binary problem. Observe that the maximal improvement appears at the leaf labels, and as $\calG(\mathcal{Y}^{\text{src}})$ approaches 2, the percentage improvement approaches 0.}
\begin{tabular}{c c c}
\toprule
Hierarchy level & $\calG(\mathcal{Y}^{\text{src}})$ & Validation error\\
\hline
Baseline & 2 & \textbf{\textcolor{red}{7.90}} \\
% \hline
0 (leaf) & 21843 & \textbf{6.56} \\
1 & 5995 & 6.76 \\
2 & 2281 & 6.70 \\
4 & 519 & 6.97 \\
6 & 160 & 7.31 \\
9 & 38 & 7.55 \\
\bottomrule
\end{tabular}
% \vspace{-2mm}
\label{table: vitb16, im21k to 21k}
\end{table}

The ImageNet21k dataset we experiment on contains a total of 12,743,321 training samples and 102,400 validation samples, with 21843 leaf labels. A small portion of samples have multiple labels.

Caution: due to the high demand on computational resources of training ViT models on ImageNet21k, all of our experiments that require (pre-)training or finetuning/linear probing on this dataset were performed with one random seed.

\textbf{Hierarchy generation}. To define fine-grained labels, we start by defining the leaf labels of the dataset to be Hierarchy level 0. For every image, we trace from the leaf synset to the root synset relying on the WordNet hierarchy, and set the $k$-th synset (or the root synset, whichever is higher in level) as the level-$k$ label of this image; this procedure also applies to the multi-label samples. This is the way we generate the manual hierarchies shown in the main text.

Due to the lack of a predefined coarse-label problem, we manually define our target problem to be a binary one: given an image, if the synset ``Living Thing'' is present on the path tracing from the leaf label of the image to the root, assign label 1 to this image; otherwise, assign 0. This problem almost evenly splits the training and validation sets of ImageNet21k: 5,448,549:7,294,772 for training, 43,745:58,655 for validation. 

\textbf{Network choice and pretraining pipeline}. We experiment with the ViT-B/16 model \cite{vit2021}. The pretraining pipeline of this model follows the one in \cite{vit2021} exactly: we train the model for 90 epochs using the Adam optimizer, with $\beta_1 = 0.9, \beta_2=0.999$, weight decay coefficient equal to 0.03 and a batch size of 4096; we let the dropout rate be 0.1; the output dense layer's bias is initialized to $-10.0$ to prevent huge loss value coming from the off-diagonal classes near the beginning of training \cite{cui2019cvpr}; for learning rate, we perform linear warmup for 10,000 steps until the learning rate reaches $10^{-3}$, then it is linearly decayed to $10^{-5}$. The data augmentations are the common ones in ImageNet-type training \cite{vit2021,kaiming2016}: random cropping and horizontal flipping. Note that we use the sigmoid cross-entropy for training since the dataset has multi-label samples.

Each training instance (90 epochs) is run on 64 TPU v4 chips, taking approximately 1.5 to 2 days.

\textbf{Evaluation on the binary problem}. After the 90-epoch pretraining on the manual hierarchies, we evaluate the model on the binary problem. We report the best accuracies on each hierarchy level in Table \ref{table: vitb16, im21k to 21k}. To get a sense of how the relevant hyperparameters influence final accuracy of the model, we try out the following finetuning/linear probing strategies on the backbone trained on the \textit{leaf labels} and the target \textit{binary problem} of the dataset, and report the results in Table \ref{table: vitb16, in-dataset, ablation} (similar to our experiments on iNaturalist, we include the backbone trained on the binary problem in these ablation studies to ensure that our comparisons against the baseline are fair) :
\begin{enumerate}
    \item 90-epochs finetuning in the same fashion as the pretraining stage, but with a small grid search over 
    \begin{equation*}
    \begin{aligned}
        (\text{batch size}, \text{ base learning rate}) 
        = & \{(4096, 0.001), (4096/4=1024, 0.001/4 = 0.00025) , \\
        & (4096/8=512, 0.001/8 = 0.000125)\}.
    \end{aligned}
    \end{equation*}
    \item Linear probing with 20 epochs training length, using exactly the same training pipeline as in pretraining. We ran a small grid search over $(\text{batch size}, \text{ base learning rate}) = \{(4096, 0.001), (4096/8 = 512, 0.001/8 = 0.000125)\}$.
    \item 10-epochs finetuning, no linear warmup, 3 epochs of constant learning rate in the beginning followed by 7 epochs of linear decay, with a small grid search over $(\text{batch size}, \text{ base learning rate}) = \{(4096, 0.001), (4096/8 = 512, 0.001/8 = 0.000125)\}$.
\end{enumerate}
Table \ref{table: vitb16, in-dataset, ablation} helps us decide the best accuracies to report. First, as expected the linear probing results are much worse than the finetuning ones. Second, the ``retraining'' accuracy of 92.102 is the best baseline we can report (the same thing happened in the iNaturalist case) --- if we only train the model for 90 epochs (the naive one-pass training) on the binary problem, then the model's final validation accuracy is 91.746\%, which is lower than 92.102\% by a nontrivial margin. In contrast, the short 10-epoch finetuning strategy works best for the backbone trained on the leaf labels, therefore, we also use this strategy to evaluate the backbones trained on all the other manual hierarchies. A peculiar observation we made was that, finetuning the leaf-labels-pretrained backbone for extended period of time on the binary problem caused it to overfit severely: for batch size and base learning rate in the set $\{(4096, 0.001), (1024, 0.00025), (512, 0.000125)\}$, throughout the 90 epochs of finetuning, although its training loss exhibits the normal behavior of staying mostly monotonically decreasing, its validation accuracy actually reached its peak during the linear warmup period!

\begin{table*}[t!]
\setlength{\tabcolsep}{4pt}
\centering
\caption{\textbf{In-dataset transfer, ImageNet21k}. ViT-B/16 validation accuracy on the binary problem ``Is the object a Living Thing'' on ImageNet21k. Ablation study on the exact finetuning/linear probing strategy.}
\vspace{0.1in}
\footnotesize
\scalebox{0.89}{
\begin{tabular}{l c | c c c | c c | c c}
\toprule
 & Eval strategy & \multicolumn{3}{c|}{90-epoch finetune} & \multicolumn{2}{c|}{Linear probe} & \multicolumn{2}{c}{10-epoch finetune} \\
\hline
% \cmidrule{2-9}
\multirow{2}{*}{Leaf-pretrained} & (Batch size, base lr) & (4096,1e-3) & (1024,2.5e-4) & (512,1.25e-4) & (4096, 1e-3) & (512, 1.25e-4) & (4096,1e-3) & (512,1.25e-4) \\
% \cmidrule{2-9}
& Validation error & 92.782 & 93.177 & 93.295 & 87.497 & 87.493 & 92.294 & \textbf{93.439}  \\
\hline
 \multirow{2}{*}{Baseline} & (Batch size, base lr) & (4096,1e-3) & (1024,2.5e-4) & (512,1.25e-4) & (4096, 1e-3) & (512, 1.25e-4) & (4096,1e-3) & (512,1.25e-4) \\
% \cmidrule{2-9}
& Validation error & \textcolor{red}{\textbf{92.102}} & 91.971 & 91.939 & 91.703 & 91.719 & 92.002 & 91.856  \\
\bottomrule
\end{tabular}
}
\label{table: vitb16, in-dataset, ablation}
\end{table*}

\subsubsection{ImageNet1k}
\begin{table*}[t!]
\setlength{\tabcolsep}{10pt}
\centering
\caption{\textbf{In-dataset transfer, ImageNet1k}. ResNet50 finetuned average validation error and standard deviation on the vanilla 1000 classes, pretrained on label hierarchies with different label granularity.}
\scalebox{1.0}{
\begin{tabular}{l c | c c c }
\toprule
ResNet50 CLIP+kMeans & $\calG(\mathcal{Y}^{\text{src}})$ & 2000 & 4000 & 8000 \\
% \cmidrule{2-5}
per-class & Validation error & 23.4$\pm$0.13 & 23.48$\pm$0.098 & 23.49$\pm$0.204 \\
\hline
ViT-L/14 CLIP+kMeans & $\calG(\mathcal{Y}^{\text{src}})$ & 2000 & 4000 & 8000 \\
% \cmidrule{2-5}
per-class & Validation error & 23.4$\pm$0.127 & 23.47$\pm$0.074 & 23.78$\pm$0.048 \\
\hline
Random ID & $\calG(\mathcal{Y}^{\text{src}})$ & 2000 & 4000 & 8000 \\
% \cmidrule{2-5}
per-class & Validation error & 23.4$\pm$0.068 & 23.4$\pm$0.070 & 23.65$\pm$0.071 \\
\bottomrule
\end{tabular}
}
\label{table: resnet50, in-dataset, imagenet1k, appendix 1}
\end{table*}

Our ImageNet1k in-dataset transfer experiments are done in a very similar fashion to the iNaturalist ones. In particular, the pretraining and finetuning pipeline for ResNet50 is exactly the same as the one in the iNaturalist case, so we do not repeat it here. 

Due to a lack of more fine-grained manual label on this dataset, we generate fine-grained labels by performing kMeans on the ViT-L/14 CLIP embedding of the dataset separately for each class; the exact procedure is also identical to the iNaturalist case. The CLIP backbones we use here are the ResNet50 version and the ViT-L/14 version. We report the average accuracies and their standard deviation in Table \ref{table: resnet50, in-dataset, imagenet1k, appendix 1}. All results are obtained from at least one random seed during pretraining and 3 random seeds during finetuning.

The best baseline we report is the one using retraining: if we adopt the pretrain-then-finetune procedure but with $\calD_{\text{train}}^{\text{tgt}}$ (i.e. the vanilla 1000-class labels) set as the pretraining dataset, then we obtain an average validation error of 23.28\% with standard deviation of 0.103, averaged over results of 3 random seeds. In comparison, if we only perform the naive one-pass 90-epoch training, we obtain average valiation error 24.04\%, with standard deviation 0.057.

From Table \ref{table: resnet50, in-dataset, imagenet1k, appendix 1}, we see that there is virtually no difference between the baseline and the best errors obtained by the models trained on the custom hierarchies: they are almost equally bad. Noting that the sample size of each class in ImageNet1k is only around $10^3$, and the fact that ImageNet1k classification is a ``hard problem'' --- it is a problem of high sample complexity --- further decomposing the classes causes each fine-grained class to have too few samples, leading to the above negative results. This reflects the intuition that higher label granularity does not necessarily mean better model generalization, since the sample size per class might become too small.

\begin{table}[htpb!]
\centering
\caption{\textbf{Cross-dataset transfer}. ViT-B/16 average \textit{finetuning} validation accuracy on ImageNet1k along with standard deviation, pretrained on various hierarchy levels of ImageNet21k, and a small grid search over the base learning rate. }
\vspace{0.1in}
\setlength{\tabcolsep}{10pt}
\scalebox{1.0}{
\begin{tabular}{l | c  c  c  c  c}
\toprule
Pretrained on / Base lr & $3\times10^{-3}$ & $3\times10^{-2}$ & $6\times10^{-2}$ & $3\times10^{-1}$\\
\hline
ImageNet21k, Hier. lv. 0 & 80.87$\pm$0.012 & 82.48$\pm$0.005 & \textbf{82.51$\pm$0.042} & 81.40$\pm$0.041 \\
% \cmidrule{1-5}
ImageNet21k, Hier. lv. 1 & 77.38$\pm$0.037 & 81.03$\pm$0.054 & \textbf{81.28$\pm$0.045} & 80.40$\pm$0.087 \\
% \cmidrule{1-5}
ImageNet21k, Hier. lv. 2 & 74.91$\pm$0.012 & 79.76$\pm$0.021 & \textbf{80.26$\pm$0.05} & 79.7$\pm$0.019 \\
% \cmidrule{1-5}
ImageNet21k, Hier. lv. 4 & 63.65$\pm$0.052 & 76.43$\pm$0.033 & \textbf{77.32$\pm$0.088} & 77.53$\pm$0.078 \\
% \cmidrule{1-5}
ImageNet21k, Hier. lv. 6 & 62.17$\pm$0.012 & 73.65$\pm$0.033 & 73.92$\pm$0.073 & \textbf{75.53$\pm$0.024} \\
% \cmidrule{1-5}
ImageNet21k, Hier. lv. 9 & 53.68$\pm$0.034 & 69.33$\pm$0.045 & 71.08$\pm$0.068 & \textbf{72.75$\pm$0.071} \\
\bottomrule
\end{tabular}
}
\label{table: vitb16, im21k to 1k, finetune, appendix}
\end{table}

\begin{table}[htpb!]
\centering
\caption{\textbf{Cross-dataset transfer}. ViT-B/16 average \textit{linear-probing} validation accuracy on ImageNet1k along with standard deviation, pretrained on various hierarchy levels of ImageNet21k. }
\vspace{0.1in}
\begin{tabular}{l c c c}
\toprule
Pretrained on & Hier. lv & $\calG(\mathcal{Y}^{\text{src}})$ & Validation acc.\\
\hline
IM21k & 0 (leaf) & 21843 & \textbf{81.45$\pm$0.021} \\
\cmidrule{2-4}
& 1 & 5995 & 78.33$\pm$0.018 \\
\cmidrule{2-4}
& 2 & 2281 & 75.66$\pm$0.005 \\
\cmidrule{2-4}
& 4 & 519 & 68.95$\pm$0.051 \\
\cmidrule{2-4}
& 6 & 160 & 63.65$\pm$0.035 \\
\cmidrule{2-4}
& 9 & 38 & 57.35$\pm$0.016 \\
\bottomrule
\end{tabular}
\label{table: vitb16, im21k to 1k, linear probe, appendix}
\end{table}

\subsection{Cross-dataset transfer, ImageNet21k$\to$ImageNet1k}
\label{appendix: im21k cross-dataset}

In this subsection, we report the average validation accuracy and standard deviation of the cross-dataset transfer experiment from ImageNet21k to ImageNet1k, as discussed in Figure \ref{fig: im21k->im1k, linear probe} and Section \ref{sec: intro} in the main text.

\textit{Network choice}. We use the same architecture ViT-B/16 as the one in the in-dataset ImageNet21k transfer experiment and follow the same training procedure, which we repeat here for the reader's convenience. The pretraining pipeline of this model follows the one in \cite{vit2021}: we train the model for 90 epochs using the Adam optimizer, with $\beta_1 = 0.9, \beta_2=0.999$, weight decay coefficient equal to 0.03 and a batch size of 4096; we let the dropout rate be 0.1; the output dense layer's bias is initialized to $-10.0$ to prevent huge loss value coming from the off-diagonal classes near the beginning of training \cite{cui2019cvpr}; for learning rate, we perform linear warmup for 10,000 steps until the learning rate reaches $10^{-3}$, then it is linearly decayed to $10^{-5}$. The data augmentations are the common ones in ImageNet-type training \cite{vit2021,kaiming2016}: random cropping and horizontal flipping. Note that we use the sigmoid cross-entropy for training since the dataset has multi-label samples.

Additionally, each training instance (90 epochs) is run on 64 TPU v4 chips, taking approximately 1.5 to 2 days.

\textit{Finetuning}. For finetuning on ImageNet1k, our procedure is very similar to the one in the original ViT paper \cite{vit2021}, described in its Appendix B.1.1. We optimize the network for 8 epochs using SGD with momentum factor set to 0.9, zero weight decay, and batch size of 512. The dropout rate, unlike in pretraining, is set to 0. Gradient clipping at 1.0 is applied. Unlike \cite{vit2021}, we still finetune at the resolution of 224$\times$224. For learning rate, we apply linear warmup for 500 epochs until it reaches the base learning rate, then cosine annealing is applied; we perform a small grid search of $\text{base learning rate} = \{3\times10^{-3}, 3\times10^{-2}, 6\times10^{-2}, 3\times10^{-1}\}$. Every one of these grid search is repeated over 3 random seeds. We report the ImageNet1k validation accuracies and their standard deviations in Table \ref{table: vitb16, im21k to 1k, finetune, appendix}. In the main text, we report the best accuracy for each hierarchy level.

\textit{Linear probing}. For linear probing, we use the following procedure. We optimize the linear classifier for 40 epochs (similar to \cite{comp_rep2021}) using SGD with Nesterov momentum factor set to 0.9, a small weight decay coefficient $10^{-6}$, and batch size 512. We start with a base learning rate of 0.9, and multiply it by 0.97 per 0.5 epoch. In terms of data augmentation, we adopt the standard ones like before: horizontal flipping and random cropping of size 224$\times$224. We repeat this linear probing procedure over 3 random seeds given the pretrained backbone, and report the average validation accuracy and standard deviation in Table \ref{table: vitb16, im21k to 1k, linear probe, appendix}.

\textit{Baseline}. The baseline accuracy on ImageNet1k is directly taken from the ViT paper \cite{vit2021} (see Table 5 in it), in which the ViT-B/16 model is trained for 300 epochs on ImageNet1k.

\newpage
\section{Theory, Problem Setup}
\label{section: theory, problem setup}
\subsection{Data Properties}
\begin{enumerate}
    \item Coarse classification: a binary task, $+1$ vs. $-1$.
    \item An input sample $\mX\in\mathbb{R}^{d\times P}$ consists of $P$ patches, each with dimension $d$. In this work, always assume $d$ is sufficiently large\footnote{Consider each $d$-dimensional patch of the input as an embedding of the input image generated by, for instance, an intermediate layer of a DNN.};
    \item Assume there exists $k_+$ subclasses of the superclass ``$+$'', and $k_-$ subclasses of the superclass ``$-$''. Let $k_+ = k_-$.
    \item Assume orthonormal dictionary $\calV = \{\vv_1, ..., \vv_d\} \subset \mathbb{R}^d$, which forms an orthonormal basis of $\mathbb{R}^d$. Define $\vv_+\in\calV$ to be the common feature of class ``$+$''. For each subclass $(+,c)$ (where $c\in[k_+]$), denote the subclass feature of it as $\vv_{+,c} \in \calV$. Similar for the ``$-$'' class.
    \item For an easy sample $\mX$ belonging to the $(+,c)$ class (for $c\in[k_+]$), we sample its patches as follows:
    
    \textbf{Definition}: we define the function $\calP: \mathbb{R}^{d\times P} \times \calV \to [P]$ (so $(\mX; \vv) \mapsto I \subseteq [P]$) to extract, from sample $\mX$, the indices of the patches on which the dictionary word $\vv\in\calD$ dominates.

    \begin{enumerate}
        \item (Common-feature patches) With probability $\frac{s^*}{P}$, a patch $\vx_p$ in $\mX$ is a common-feature patch, on which $\vx_{p} = \alpha_{p}\vv_{+} + \vzeta_{p}$ for some (random) $\alpha_{p} \in \left[\sqrt{1 - \iota}, \sqrt{1 + \iota}\right]$;
        \item (Subclass-feature patches) With probability $\frac{s^*}{P-\vert \calP(\mX; \vv_{+})\vert}$, a patch with index $p \in \left([P] - \calP(\mX; \vv_{+})\right)$ is a subclass-feature patch, on which $\vx_{p} = \alpha_{p}\vv_{+,c} + \vzeta_{p}$, for random $\alpha_{p} \in \left[\sqrt{1 - \iota}, \sqrt{1 + \iota}\right]$;
        \item (Noise patches) For the remaining $P-|\calP(\mX; \vv_{+})|-|\calP(\mX; \vv_{+,c})|$ patches, $\vx_p = \vzeta_p$.
    \end{enumerate}
    
    \item A hard sample $\mX_{\text{hard}}$ for class $(+,c)$ is exactly the same as an easy one except:
    \begin{enumerate}
        \item Its common-feature patches are replaced by noise patches;
        \item (Feature noise patches) With probability $\frac{s^{\dagger}}{P - \vert \calP(\mX; \vv_{+,c})\vert}$, a patch with index $p \in \left([P] - \calP(\mX; \vv_{+,c})\right)$ is a feature-noise patch, on which $\vx_{p} = \alpha_{p}^{\dagger}\vv_{-} + \vzeta_{p}$ for some (random) $\alpha_{p} \in \left[\iota^{\dagger}_{lower}, \iota^{\dagger}_{upper}\right]$;
        \item Set one of the noise patches to $\vzeta^*\sim\calN(\vzero,\sigma_{\zeta^*}^2\mI_d)$.
    \end{enumerate}

    \item A sample $\mX$ belongs to the ``$+$'' superclass if $|\calP(\mX; \vv_{+})| > 0$ or $|\calP(\mX; \vv_{+,c})|>0$ for any $c$ (excluding feature-noise patches). 

    \item The above sample definitions also apply to the ``$-$'' classes by switching the class signs.
    \item A training batch of samples contains exactly $N/2k_+$ samples for each $(+,c)$ and $(-,c)$ subclass. This also means that each training batch contains exactly $N/2$ samples belonging to the $+1$ superclass, and $N/2$ samples for the $-1$ superclass.
    \item As discussed in the  main text, for both coarse-grained (baseline) and fine-grained training, we only train on \textit{easy} samples.
\end{enumerate}

\subsection{Learner Assumptions and Training Algorithm}
\label{appendix: learner assumptions, training algo}
Assume the learner is a two-layer convolutional ReLU network:
\begin{equation}
    F_{c}(\mX) = \sum_{r=1}^m a_{c,r}\sum_{p=1}^P \sigma(\langle \vw_{c,r}, \vx_p\rangle + b_{c,r})
\end{equation}
To simplify analysis and only focus on the learning of the feature extractor, we freeze $a_{c,r} = 1$ throughout training. The nonlinear activation $\sigma(\cdot) = \max(0,\cdot)$ is ReLU. Note that the convolution kernels have dimension $d$ and stride $d$.

\begin{remark} 
One difference between this architecture and a CNN used in practice is that we do not allow feature sharing across classes: for each class $c$, we are assigning a disjoint group of neurons $\vw_{c,r}$ to it. Separating neurons for each class is a somewhat common trick to lower the complexity of analysis in DNN theory literature \cite{zhu2020_kd,stefani2021,cao2022benign}, as it reduces complex coupling between neurons \textit{across} classes which is not the central focus of our study in this paper.
\end{remark}

Now we discuss the \textbf{training algorithm}.

\textbf{Initialization}.

Sample $\vw_{c,r}^{(0)} \sim \calN(\vzero, \sigma_0^2 \mI_d)$, and set $b_{c,r}^{(0)} = - \sigma_0 c_b \sqrt{\log(d)}$.

\textbf{Training}.

We adopt the standard cross-entropy training:
\begin{equation}
    \calL(F) = \sum_{n=1}^N L(F; \mX_n, y_n) = -\sum_{n=1}^N \log\left(\frac{\exp(F_{y_n}(\mX_n))}{\sum_{c=1}^C \exp(F_{c}(\mX_n))} \right)
\end{equation}
This induces the stochastic gradient descent update for each hidden neuron ($c\in[k], r\in[m]$) per minibatch of $N$ iid samples:
\begin{equation}
\begin{aligned}
    \vw_{c,r}^{(t+1)} 
    = \vw_{c,r}^{(t)} + \eta \frac{1}{NP} \sum_{n=1}^N \Bigg( 
    & \mathbbm{1}\{y_n = c\}[1-\text{logit}_c^{(t)}(\mX_n^{(t)})]\sum_{p\in[P]}\sigma'(\langle \vw_{c,r}^{(t)}, \vx_{n,p}^{(t)} \rangle +b_{c,r}^{(t)}) \vx_{n,p}^{(t)} + \\
    & \mathbbm{1}\{y_n\neq c\} [-\text{logit}_c^{(t)}(\mX_n^{(t)}) ]\sum_{p\in[P]} \sigma'(\langle \vw_{c,r}^{(t)}, \vx_{n,p}^{(t)}  \rangle + b^{(t)}_{c,r}) \vx_{n,p}^{(t)}\Bigg)
\end{aligned}
\end{equation}

where 
\begin{equation}
    \text{logit}_c^{(t)}(\mX) = \frac{\exp(F_{c}(\mX))}{\sum_{y=1}^C \exp(F_{y}(\mX))} 
\end{equation}

As for the bias,
\begin{equation}
    b_{c,r}^{(t+1)} = b_{c,r}^{(t)} - \frac{\|\vw_{c,r}^{(t+1)} - \vw_{c,r}^{(t)}\|_2}{\log^5(d)}
\end{equation}

\begin{remark}
\begin{enumerate}
    \item The initialization strategy is similar to the one in \cite{zhu2022_adv}.
    \item Since the only difference between the training samples of coarse and fine-grained pretraining is the label space, the form of SGD update is identical. The only difference is the number of output nodes of the network: for coarse training, the output nodes are just $F_+$ and $F_-$ (binary classification), while for fine-grained training, the output nodes are $F_{+,1}, F_{+,2}, ..., F_{+,k_+}, F_{-,1}, F_{-,2}, ..., F_{-,k_-}$, a total of $k_+ + k_-$ nodes.
    \item The bias is for thresholding out the neuron's noisy activations that grow slower than $1/\log^5(d)$ times the activations on the features which the neuron detects. This way, the bias does not really influence updates to the neuron's response to the (common and/or fine-grained) features which it activates strongly on, since $1 - \frac{1}{\log^5(d)} \approx 1$, while it removes useless low-magnitude noisy activations. This in fact creates a (generalization) gap between the nonlinear model that we are studying and linear models. Due to our parameter choices (as discussed below), if the model has no nonlinearity (remove the ReLU activations), then even if the model can be written as $F_+(\mX) = \sum_{p\in[P]} c_+\langle \vv_+, \vx_{p} \rangle + c_{+,1}\langle \vv_{+,1}, \vx_{p} \rangle + ... + c_{+,k_+}\langle \vv_{+,k_+}, \vx_{p} \rangle$ and $F_-(\mX) = \sum_{p\in[P]} c_-\langle \vv_-, \vx_{p} \rangle + c_{-,1}\langle \vv_{-,1}, \vx_{p} \rangle + ... + c_{-,k_-}\langle \vv_{-,k_-}, \vx_{p} \rangle$ for any sequence of nonnegative real numbers $c_+, c_-, \{c_{+,j}\}_{j=1}^{k_+}, \{c_{-,j}\}_{j=1}^{k_-}$ (which is the ideal situation since the true features are not corrupted by anything), it is impossible for the model to reach $o(1)$ error on the input samples, because the number of noise patches will accumulate to a variance of $\left(P - O(s^*)\right)\sigma_{\zeta} \gg O(s^*)$, which significantly overwhelms the signal from the true features. On the other hand, each noise patch is sufficiently small in magnitude with high probability (their strength is $o(1/\log^5(d))$), so a slightly negative bias, as described above, can threshold out these noise-based signals and prevent them from accumulating across the patches.

    An important difference between our bias update rule and the one in \cite{zhu2022_adv} is that, our rule depends on the $\ell_2$ norm of the neuron's update, while the one in \cite{zhu2022_adv} is hard-coded and not dependent on the neuron weights. The reason that we should not hard code the bias update rate is that, the neurons that are responsible for detecting the common features will grow more quickly in norm than those responsible for detecting the fine-grained features, therefore, to ensure fairness between the different groups of neurons (i.e. only using the bias to remove useless activations on the noise patches while creating minimal disturbance to the neurons' activation on feature-dominated patches), we rely on our neuron-dependent bias update rule. 
\end{enumerate}

\end{remark}

\subsection{Parameter Choices}
The following are fixed choices of parameters for the sake of simplicity in our proofs. 
\begin{enumerate}
    \item Always assume $d$ is sufficiently large. All of our asymptotic results are presented with respect to $d$;
    \item $\poly(d)$ denotes the asymptotic order ``polynomial in $d$'';
    \item $\polyln(d)$ aymptotic order ``polylogarithmic in $d$'';
    \item $\polyln(d) \le k_+ = k_- \le d^{0.4}$ and $s^*\log^5(d) \le k_+$ (i.e. $k_+$ lower bounded by polynomial of $\log(d)$ of sufficiently high degree);
    \item Small positive constant $c_0\in(0,0.1)$;
    \item For coarse-grained (baseline) training, set $c_b = \sqrt{4 + 2c_0}$, and for fine-grained training, set $c_b = \sqrt{2 + 2c_0}$;
    \item $0 \le \iota \le \frac{1}{\polyln(d)}$;
    \item $\iota^{\dagger}_{lower} \ge \frac{1}{\log^4(d)}$, and $s^{\dagger}\iota^{\dagger}_{upper} \le O\left(\frac{1}{\log(d)}\right)$;
    \item $s^{\dagger} \ge 1$;
    \item $s^* \in \polyln(d)$ with a degree $> 15$;
    \item $\sigma_{\zeta} = \frac{1}{\log^{10}(d)\sqrt{d}}$;
    \item $\sigma_{\zeta^*} \in \left[\omega\left(\frac{\polyln(d)}{\sqrt{d}}\right), O\left(\frac{1}{\polyln(d)}\right)\right]$;
    \item $P\sigma_{\zeta} \ge \omega(\polyln(d))$, and $P \le \poly(d)$;
    \item $\sigma_0 \le O\left(\frac{1}{d^3s^*\log(d)} \right)$, and set $\eta = \Theta(\sigma_0)$ for simplicity;
    \item Batch of samples $\calB^{(t)}$ at every iteration has a deterministic size of $N \in (\Omega(\polyln(d) k_+d), \poly(d))$.
    \item Note: we sometimes abuse the notation  $x = a \pm b$ as an abbreviation for $x \in [a-b, a+b]$.
\end{enumerate}
\begin{remark}
We believe the range of parameter choice can be (asymptotically) wider than what is considered here, but for the purpose of illustrating the main messages of the paper, we do not consider a more general set of parameter choice necessary because having a wider range of it can significantly complicate and obscure the already lengthy proofs without adding to the core messages.
\end{remark}

\subsection{Plan of presentation and central ideas}
\label{appendix: central ideas of proof}
We shall devote the majority of our effort to proving results for the coarse-label learning dynamics, starting with appendix section \ref{section: appendix init geometry} and ending on \ref{section: appendix, phase II coarse}, and only devote section \ref{section: appendix, finegrained} to the fine-grained-label learning dynamics, since the analysis of fine-grained training overlaps significantly with the coarse-grained one.

One technical difficulty in making the above ideas rigorous lies in the ReLU activation (with time-dependent bias): due to randomness in the gradient updates and the initialization, it is possible for individual hidden neurons that activate on $\vv$-dominated patches at one time iterate to no longer do so at the next iterate, and the opposite can happen. This can be problematic: for instance, it is possible that certain ``lucky'' neurons for $\vv_+$ at one iterate become dead on $\vv_+$-dominated patches at the next iterate, while some ``unlucky'' neurons that were dead on $\vv_{+,c}$-dominated patches before start activating on these patches at the current iterate. In our proof, we show that this kind of situation does not happen too frequently nor do they contribute too much to the overall behavior of the neural network, by carefully keeping track of each hidden neuron's response to feature vectors and noise vectors throughout training.

\newpage
\section{Coarse-grained training, Initialization Geometry}
\textbf{For coarse-grained training, assume $m = \Theta(d^{2 + 2c_0})$}.

\label{section: appendix init geometry}
\begin{definition}
Define the following sets of interest of the hidden neurons:
\begin{enumerate}
    \item $\calU_{+,r}^{(0)} = \{\vv \in \calV: \langle \vw_{+,r}^{(0)}, \vv\rangle \ge \sigma_0 \sqrt{4 + 2c_0}\sqrt{\log(d) - \frac{1}{\log^5(d)}}\}$
    \item Given $\vv \in \calV$, $S^{*(0)}_+(\vv) \subseteq + \times [m]$ satisfies:
    \begin{enumerate}
        \item $\langle \vw_{+,r}^{(0)}, \vv \rangle \ge \sigma_0 \sqrt{4 + 2c_0} \sqrt{\log(d) + \frac{1}{\log^5(d)}}$
        \item $\forall \vv' \in \calV \text{ s.t. } \vv' \perp \vv, \, \langle \vw_{+,r}^{(0)}, \vv' \rangle < \sigma_0 \sqrt{4 + 2c_0} \sqrt{\log(d) - \frac{1}{\log^5(d)}}$ 
    \end{enumerate}
    \item Given $\vv \in \calD$, $S_{+}^{(0)}(\vv) \subseteq + \times [m]$ satisfies:
    \begin{enumerate}
        \item $\langle \vw_{+,r}^{(0)}, \vv \rangle \ge \sigma_0 \sqrt{4 + 2c_0} \sqrt{\log(d) - \frac{1}{\log^5(d)}}$
    \end{enumerate}
    \item For any $(+,r) \in S_{+,reg}^{*(0)} \subseteq + \times [m]$:
    \begin{enumerate}
        \item $\langle \vw_{+,r}^{(0)}, \vv \rangle \le \sigma_0 \sqrt{10} \sqrt{\log(d)} \; \forall \vv\in\calV$
        \item $\left\vert \calU_{+,r}^{(0)} \right\vert \le O(1)$
    \end{enumerate}
\end{enumerate}
\end{definition}
\begin{prop}
\label{prop: init geometry, coarse}
Assume $m = \Theta(d^{2 + 2c_0})$, i.e. the number of neurons assigned to the $+$ and $-$ class are equal and set to $\Theta(d^{2 + 2c_0})$.

At $t=0$, for all $\vv \in \calV$, the following properties are true with probability at least $1- d^{-2}$ over the randomness of the initialized kernels:
\begin{enumerate}
    \item $|S_+^{*(0)}(\vv)|, |S_+^{(0)}(\vv)| = \Theta\left(\frac{1}{\sqrt{\log(d)}}\right) d^{c_0}$
    \item In particular, for any $\vv,\vv' \in \calV$, $\left\vert \frac{|S_+^{*(0)}(\vv)|}{|S_+^{*(0)}(\vv')|} - 1 \right\vert, \left\vert \frac{|S_+^{*(0)}(\vv)|}{|S_+^{(0)}(\vv')|} - 1 \right\vert \le O\left( \frac{1}{\log^5(d)}\right)$
    \item $S_{+,reg}^{(0)} = [m]$
\end{enumerate}
\end{prop}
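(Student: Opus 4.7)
The starting point is that under the orthonormal basis $\calV$, the coordinates $g_{r,i} := \langle \vw_{+,r}^{(0)}, \vv_i\rangle/\sigma_0$ form an iid collection of standard Gaussians across $r\in[m]$ and $i\in[d]$, so every event in the proposition reduces to a Gaussian tail computation on independent coordinates. My plan is to (i) compute the one-neuron membership probabilities of the listed sets via the Mills ratio asymptotics $\mathbb{P}[Z>t] = \Theta(e^{-t^2/2}/t)$; (ii) multiply by $m$ to obtain expected cardinalities; (iii) apply multiplicative Chernoff in the $r$-index (for fixed $\vv$) to get sharp concentration around those means; (iv) close with a union bound over the $O(d)$ features, $O(md)$ coordinate pairs, and $O(m)$ neurons. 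The statements for the $-$ superclass follow identically by symmetry.

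For claims 1 and 2, put $t_{\text{up}} = \sqrt{(4+2c_0)(\log(d) + 1/\log^5(d))}$ and $t_{\text{lo}} = \sqrt{(4+2c_0)(\log(d) - 1/\log^5(d))}$. Mills ratio gives $\mathbb{P}[Z > t_{\text{up}}], \mathbb{P}[Z > t_{\text{lo}}] = \Theta(d^{-(2+c_0)}/\sqrt{\log(d)})$, and because $t_{\text{up}}^2 - t_{\text{lo}}^2 = 2(2+c_0)/\log^5(d)$, the two tail probabilities agree up to a multiplicative $1 \pm O(1/\log^5(d))$. The orthogonal-feature constraint in the definition of $S^{*(0)}_+(\vv)$ costs at most an additional $(d-1)\cdot\Theta(d^{-(2+c_0)}/\sqrt{\log(d)}) = O(d^{-(1+c_0)}/\sqrt{\log(d)})$ probability via a per-neuron union bound over orthogonal features, which is $o(1/\log^5(d))$. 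So $\mathbb{P}[r \in S^{*(0)}_+(\vv)]$ and $\mathbb{P}[r \in S^{(0)}_+(\vv')]$ agree up to a multiplicative $1 \pm O(1/\log^5(d))$ for every $\vv,\vv' \in \calV$. Each cardinality is then a Binomial$(m, q)$ variable with $mq = \Theta(d^{c_0}/\sqrt{\log(d)})$; multiplicative Chernoff at tolerance $\delta = d^{-c_0/4}$ fails with probability $\exp(-d^{\,c_0/2 + o(1)})$, which after a union bound over the $2d$ features survives with room to spare and yields both claims~1 and~2.

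For claim~3, property~(a) follows from a direct union bound of $\mathbb{P}[|Z| > \sqrt{10\log(d)}] = O(d^{-5}/\sqrt{\log(d)})$ over the $O(md) = O(d^{3+2c_0})$ coordinate pairs (one may need to enlarge the constant inside the square root by an $O(c_0)$ amount to absorb the $d^{2c_0}$ factor in the union bound; this is the one place where constants have to be tracked with any care). Property~(b) is more subtle: $|\calU^{(0)}_{+,r}|$ is Bin$(d, p_\calU)$ with $p_\calU = \Theta(d^{-(2+c_0)}/\sqrt{\log(d)})$, and the standard binomial tail yields $\mathbb{P}[|\calU^{(0)}_{+,r}| \ge K] \le (dp_\calU)^K/K! = \Theta(d^{-K(1+c_0)}/\polyln(d))$. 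Picking any fixed integer $K > (4+2c_0)/(1+c_0)$ (e.g.\ $K = 4$ for $c_0 \in (0,0.1)$) makes the per-neuron bound beat $d^{-(2+2c_0)}$, so even after a union bound over $m = \Theta(d^{2+2c_0})$ neurons the failure probability is $o(d^{-2})$. Hence every neuron satisfies $|\calU^{(0)}_{+,r}| \le K = O(1)$, which is property~(b).

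The main technical tightness sits in the ratio condition of claim~2, which requires \emph{relative} accuracy $O(1/\log^5(d))$. The key observation is that the $\log^{-5}(d)$ gap between the thresholds defining $S^{*(0)}_+$ and $S^{(0)}_+$ contributes exactly that level of discrepancy at the level of means through the Mills-ratio computation, while the random fluctuations around those means are polynomially small in $d$ (Chernoff at tolerance $d^{-c_0/4}$ dominates any polylogarithmic tolerance). No delicate event-coupling is needed beyond carefully tracking the Mills constant through the threshold shift. Intersecting all the high-probability events — the cardinality/ratio events, the regularity upper bounds, and the $|\calU^{(0)}_{+,r}|$ bounds, each for both superclasses — yields the overall $1 - d^{-2}$ probability stated in the proposition.
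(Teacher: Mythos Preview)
Your approach is essentially the paper's: iid Gaussian coordinates, Mills-ratio tail estimates, Chernoff concentration in the neuron index, union bounds over features/neurons, and the $p_2^K\binom{d}{K}$ binomial-tail argument with $K=4$ for $|\calU^{(0)}_{+,r}|\le O(1)$. One point deserves care: for the ratio bound in claim~2, the standard two-sided Mills inequality $\frac{t}{t^2+1}\phi(t)\le \mathbb{P}[Z>t]\le \frac{1}{t}\phi(t)$ only pins down each tail to relative accuracy $1+O(1/t^2)=1+O(1/\log d)$, which is too loose by itself; the paper closes this by bounding the \emph{increment} $\mathbb{P}[t_{\text{lo}}<Z\le t_{\text{up}}]\le \phi(t_{\text{lo}})(t_{\text{up}}-t_{\text{lo}})$ directly and dividing by the Mills lower bound on $\mathbb{P}[Z>t_{\text{lo}}]$, which yields the $O(1/\log^5 d)$ ratio exactly as you claim. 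Your flag on claim~3(a) --- that the constant $10$ may need a slight $O(c_0)$ adjustment to absorb the $d^{2c_0}$ from the union bound and actually meet the stated $1-d^{-2}$ threshold --- is well-taken; the paper's arithmetic there is a touch loose.
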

\begin{proof}
Recall the tail bound of $g \sim \calN(0, 1)$ for every $\epsilon > 0$:
\begin{equation}
\begin{aligned}
\frac{1}{2}\frac{1}{\sqrt{2\pi}} \frac{\epsilon}{\epsilon^2 + 1} e^{-\epsilon^2/2} \le \mathbb{P}\left[ g \ge \epsilon \right] \le \frac{1}{2} \frac{1}{\sqrt{2\pi}} \frac{1}{\epsilon} e^{-\epsilon^2/2}
\end{aligned}
\end{equation}
First note that for any $r\in[m]$, $\{\langle \vw_{+,r}^{(0)}, \vv \rangle\}_{\vv\in\calV}$ is a sequence of iid random variables with distribution $\calN(0, \sigma_0^2)$. 

The proof of the first point proceeds in two steps.
\begin{enumerate}
    \item The following properties hold at $t=0$:
    \begin{equation}
    \begin{aligned}
        p_1 \coloneqq & \mathbb{P}\left[ \langle \vw_{+,r}^{(0)}, \vv \rangle \ge \sigma_0 \sqrt{4 + 2c_0} \sqrt{\log(d) + \frac{1}{\log^5(d)}} \right] \\
        \in & \frac{1}{\sqrt{8\pi}} d^{-2-c_0} e^{(-2-c_0)/\log^5(d)} \\
        & \times \left[ \frac{\sqrt{(4+2c_0)\left(\log(d) + \frac{1}{\log^5(d)}\right)}}{(4+2c_0)\left(\log(d) + \frac{1}{\log^5(d)}\right) + 1},
        \frac{1}{\sqrt{(4+2c_0)\left(\log(d) + \frac{1}{\log^5(d)}\right)}}\right] \\
        = &\Theta\left(\frac{1}{\sqrt{\log(d)}}\right)d^{-2-c_0}
    \end{aligned}
    \end{equation}
    and 
    \begin{equation}
    \begin{aligned}
        p_2 \coloneqq & \mathbb{P}\left[ \langle \vw_{+,r}^{(0)}, \vv \rangle \ge \sigma_0 \sqrt{4 + 2c_0} \sqrt{\log(d) - \frac{1}{\log^5(d)}} \right] \\
        \in & \frac{1}{\sqrt{8\pi}} d^{-2-c_0} e^{-(-2-c_0)/\log^5(d)} \\
        & \times \left[ \frac{\sqrt{(4+2c_0)\left(\log(d) - \frac{1}{\log^5(d)}\right)}}{(4+2c_0)\left(\log(d) -  \frac{1}{\log^5(d)}\right) + 1}, \frac{1}{\sqrt{(4+2c_0)\left(\log(d) -  \frac{1}{\log^5(d)}\right)}}\right] \\
        = & \Theta\left(\frac{1}{\sqrt{\log(d)}}\right)d^{-2-c_0}
    \end{aligned}
    \end{equation}
    Therefore, for any $r\in[m]$, the random event described in $S_{+}^{*(0)}$ holds with probability
    \begin{equation}
    \begin{aligned}
        p_1 \times (1-p_2)^{d-1} 
        = & \Theta\left(\frac{1}{\sqrt{\log(d)}}\right)d^{-2-c_0} \times \left( 1 -  \Theta\left(\frac{1}{\sqrt{\log(d)}}\right)d^{-2-c_0}\right)^{d-1} \\
        = & \Theta\left(\frac{1}{\sqrt{\log(d)}}\right)d^{-2-c_0}.
    \end{aligned}
    \end{equation}
    The last equality holds because defining $f(d) = d^{-2-c_0}$ and $d$ being sufficiently large,
    \begin{equation}
        g(d) \coloneqq |(d-1)\log(1-f(d))| \le (d-1) \times (f(d) + O(f(d)^2)) \le O(d^{-1})
    \end{equation}
    which means 
    \begin{equation}
        (1-f(d))^{d-1} = e^{-g(d)} \in (1 - O(d^{-1}) , 1)
    \end{equation}
    
    \item Given $\vv\in\calV$, $|S_+^{*(0)}(\vv)|$ is a binomial random variable, with each Bernoulli trial (ranging over $r\in[m]$) having success probability $p_1 (1-p_2)^{d-1}$. Therefore, $\mathbb{E}\left[ |S_+^{*(0)}(\vv)| \right] = m p_1 (1-p_2)^{d-1} = \Theta\left(\frac{1}{\sqrt{\log(d)}}\right) d^{c_0}$. 
    
    Now recall the Chernoff bound of binomial random variables. Let $\{X_n\}_{n=1}^m$ be an iid sequence of Bernoulli random variable with success rate $p$, and $S_n = \sum_{n=1}^m X_n$. Then for any $\delta \in (0,1)$,
    \begin{equation}
    \begin{aligned}
        & \mathbb{P}[S_n \ge (1+\delta) mp] \le \exp\left(- \frac{\delta^2 mp}{3} \right) \\
        & \mathbb{P}[S_n \le (1-\delta) mp] \le \exp\left(- \frac{\delta^2 mp}{2} \right)
    \end{aligned}
    \end{equation}
    It follows that, for each $\vv\in\calV$, $|S_+^{*(0)}(\vv)| = \Theta\left(\frac{1}{\sqrt{\log(d)}}\right) d^{c_0}$ with probability at least $1- \exp(-\Omega(\log^{-1/2}(d)) d^{c_0})$. Taking union bound over all possible  $\vv\in\calD$, the random event still holds with probability at least $1- \exp(-\Omega(\log^{-1/2}(d)) d^{c_0} + \calO(\log(d))) \ge 1- \exp(-\Omega(d^{0.5 c_0})) $ (in sufficiently high dimension).
    
\end{enumerate}

The proof for $S_+^{(0)}(\vv)$ proceeds in virtually the same way, so we omit the calculations here.

To show the second point, in particular $ \left\vert \frac{|S_+^{*(0)}(\vv)|}{|S_+^{(0)}(\vv')|} - 1 \right\vert \le O\left( \frac{1}{\log^5(d)}\right)$, we need to be a bit more careful in our bounds of the relevant sets. In particular, we need to directly use the CDF of gaussian random variables:
\begin{equation}
\begin{aligned}
& \Bigg\vert \mathbb{P}\left[ \langle \vw_{+,r}^{(0)}, \vv \rangle \ge \sigma_0 \sqrt{4 + 2c_0} \sqrt{\log(d) + \frac{1}{\log^5(d)}} \right](1 \pm O(d^{-1})) \\
& - \mathbb{P}\left[ \langle \vw_{+,r}^{(0)}, \vv' \rangle \ge \sigma_0 \sqrt{4 + c_0} \sqrt{\log(d) - \frac{1}{\log^5(d)}} \right] \Bigg\vert \\
\le  & \frac{1}{2\sqrt{2\pi}} \int^{\sqrt{4 + 2c_0} \sqrt{\log(d) + \frac{1}{\log^5(d)}}}_{\sqrt{4 + 2c_0} \sqrt{\log(d) - \frac{1}{\log^5(d)}}} e^{-\epsilon^2/2} d\epsilon + O\left(\frac{1}{d^{3+c_0}\sqrt{\log(d)}}\right)\\
\le & \frac{1}{2\sqrt{2\pi}} d^{-2-c_0}e^{(2+c_0)/\log^5(d)} \sqrt{4 + 2c_0}\left( \sqrt{\log(d) + \frac{1}{\log^5(d)}} - \sqrt{\log(d) - \frac{1}{\log^5(d)}}\right) \\
& + O\left(\frac{1}{d^{3+c_0}\sqrt{\log(d)}}\right)\\
= & \frac{1}{2\sqrt{2\pi}} d^{-2-c_0}e^{(2+c_0)/\log^5(d)} \sqrt{4 + 2c_0} \frac{\frac{2}{\log^5(d)}}{\sqrt{\log(d) + \frac{1}{\log^5(d)}} + \sqrt{\log(d) - \frac{1}{\log^5(d)}}} + O\left(\frac{1}{d^{3+c_0}\sqrt{\log(d)}}\right)
\end{aligned}
\end{equation}
The expected difference in number between the two sets is just the above expression multiplied by $m = \Theta(d^{2 + 2c_0})$, and with probability at least $1 - \exp(-\Omega(d^{-c_0/4}))$, the difference term satisfies
\begin{equation}
\begin{aligned}
    & \frac{1}{2\sqrt{2\pi}} (1\pm d^{-c_0/2}) \Theta(d^{c_0}) e^{(2+c_0)/\log^5(d)} \sqrt{4 + 2c_0} \frac{\frac{2}{\log^5(d)}}{\sqrt{\log(d) + \frac{1}{\log^5(d)}} + \sqrt{\log(d) - \frac{1}{\log^5(d)}}} \\
    & \pm  O\left(\frac{d^{2+2c_0}}{d^{3+c_0}\sqrt{\log(d)}}\right) \\
    \in & \Theta\left(\frac{1}{\sqrt{\log(d)}}\right) d^{c_0} \times \frac{1}{\log^5(d)}
\end{aligned}
\end{equation}
By further noting from before that $|S_+^{(0)}(\vv)| = \Theta\left(\frac{1}{\sqrt{\log(d)}}\right) d^{c_0}$, $ \left\vert \frac{|S_+^{*(0)}(\vv)|}{|S_+^{(0)}(\vv')|} - 1 \right\vert \le O\left( \frac{1}{\log^5(d)}\right)$ follows. The proof of $\left\vert \frac{|S_+^{*(0)}(\vv)|}{|S_+^{*(0)}(\vv')|} - 1 \right\vert\le O\left( \frac{1}{\log^5(d)}\right)$ follows a very similar argument, so we omit the calculations here.

Now, as for the set $S_{reg}^{(0)}$, we know for any $r\in[m]$ and $\vv_i\in\calD$, 
\begin{equation}
    \mathbb{P}\left[ \langle \vw_{+,r}^{(0)}, \vv_i \rangle \ge \sigma_0 \sqrt{10} \sqrt{\log(d)} \right] \le O\left(\frac{1}{\sqrt{\log(d)}}\right)d^{-5}.
\end{equation}
Taking the union bound over $r$ and $i$ yields
\begin{equation}
    \mathbb{P}\left[ \exists r \text{ and } i \text{ s.t.} \langle \vw_{+,r}^{(0)}, \vv_i \rangle \ge \sigma_0 \sqrt{10} \sqrt{\log(d)} \right] \le md O\left(\frac{1}{\sqrt{\log(d)}}\right)d^{-5} < d^{-2}.
\end{equation}

Finally, to show $\left\vert \calU_{+,r}^{(0)} \right\vert \le O(1)$ holds for every $(+,r)$, we just need to note that for any arbitrary $(+,r)$ neuron, the probability of $\left\vert \calU_{+,r}^{(0)} \right\vert > 4$ is no greater than
\begin{equation}
\begin{aligned}
p_2^{4}\binom{d}{4} \le O\left(\frac{1}{\log^{2}d}\right) d^{-8-4c_0} \times d^4 \le  O\left(\frac{1}{\log^{2}d}\right) d^{-4-4c_0}
\end{aligned}
\end{equation}
Taking union bound over all $m \le O\left(d^{2+2c_0}\right)$ neurons yields the desired result.

\end{proof}

\newpage
\section{Coarse-grained SGD Phase I: (Almost) Constant Loss, Neurons Diversify}
\label{section: appendix, phase I coarse}
\begin{definition}
We define $T_0$ to be the first time which there exists some sample $n$ such that 
\begin{equation}
    F_c^{(T_0)}(\mX_n^{(T_0)}) \ge d^{-1}
\end{equation}
Without loss of generality assume $c = +$. Define phase I to be the time $t \in [0, T_0)$.
\end{definition}

\subsection{Main results}
\begin{theorem}[Phase 1 SGD update properties]
\label{prop: phase 1 sgd induction}

The following properties hold with probability at least $ 1-O\left( \frac{mNPk_+t}{\poly(d)}\right) - O(e^{-\Omega(\log^2(d))})$ for every $t \in [0, T_0)$.
\begin{enumerate}
    \item (On-diagonal common-feature neuron growth) For every $(+,r), (+,r') \in S_{+}^{*(0)}(\vv_+)$,
    \begin{equation}
        \vw_{+,r}^{(t)} - \vw_{+,r}^{(0)} = \vw_{+,r'}^{(t)} - \vw_{+,r'}^{(0)}
    \end{equation}

    Moreover,
    \begin{equation}
    \begin{aligned}
    \Delta \vw_{+,r}^{(t)} 
    = & \eta \Bigg(  \left(\frac{1}{2} \pm \psi_1\right) \sqrt{1 \pm \iota}\left(1 \pm s^{*-1/3}\right) \pm O\left(\frac{1}{\log^{10}(d)}\right)\Bigg) \frac{s^*}{2P} \vv_{+}  + \Delta \vzeta^{(t)}_{+,r}
    \end{aligned}
    \end{equation}
    where $\Delta \vzeta^{(t)}_{+,r} \sim \calN(\vzero, \sigma_{\Delta \zeta_{+,r}}^{(t)2} \mI)$,  $\sigma_{\Delta\zeta_{+,r}}^{(t)} = \eta \sigma_{\zeta}\left(  \left(\frac{1}{2} \pm \psi_1\right) \sqrt{1 \pm s^{*-1/3}} \right) \frac{\sqrt{s^*}}{P\sqrt{2N}}$, and $\vert \psi_1 \vert \le d^{-1}$.

    Furthermore, every $(+,r) \in S_{+}^{*(0)}(\vv_+)$ activates on $\vv_+$-dominated patches at time $t$.
    
    \item (On-diagonal finegrained-feature neuron growth) For every possible choice of $c$ and every $(+,r), (+,r') \in S_{+}^{*(0)}(\vv_{+,c})$, 
    \begin{equation}
        \vw_{+,r}^{(t)} - \vw_{+,r}^{(0)} = \vw_{+,r'}^{(t)} - \vw_{+,r'}^{(0)}
    \end{equation}

    Moreover,
    \begin{equation}
    \begin{aligned}
    \Delta \vw_{+,r}^{(t)} 
    = & \eta \Bigg(  \left(\frac{1}{2} \pm \psi_1\right) \sqrt{1 \pm \iota}\left(1 \pm s^{*-1/3}\right) \pm O\left(\frac{1}{\log^{10}(d)}\right)\Bigg)  \frac{s^*}{2 k_+ P} \vv_{+,c}  + \Delta\vzeta^{(t)}_{+,r}
    \end{aligned}
    \end{equation}
    where $\vzeta^{(t)}_{+,r} \sim \calN(\vzero, \sigma_{\Delta\zeta_{+,r}}^{(t)2} \mI)$, and $\sigma_{\Delta\zeta_{+,r}}^{(t)} = \eta\sigma_{\zeta}\left(  \left(\frac{1}{2} \pm \psi_1\right) \sqrt{1 \pm s^{*-1/3}} \right) \frac{\sqrt{s^*}}{P\sqrt{2Nk_+}}$.

    Furthermore, every $(+,r) \in S_{+}^{*(0)}(\vv_{+,c})$ activates on $\vv_+$-dominated patches at time $t$.
    
    \item The above results also hold with the ``$+$'' and ``$-$'' signs flipped.
\end{enumerate}
\end{theorem}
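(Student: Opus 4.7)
I would prove the theorem by induction on $t\in[0,T_0)$, maintaining two invariants at every step: (i) every neuron $(+,r)\in S^{*(0)}_+(\vv)$ activates on exactly the $\vv$-dominated patches of the current minibatch and on no noise patches, for each $\vv\in\{\vv_+,\vv_{+,1},\ldots,\vv_{+,k_+}\}$; (ii) no neuron outside $\bigcup_{\vv}S^{*(0)}_+(\vv)$ activates on any feature-dominated patch. At $t=0$ both invariants follow from Proposition~\ref{prop: init geometry, coarse}: the bias $-\sigma_0\sqrt{4+2c_0}\sqrt{\log d}$ is placed precisely between the two thresholds defining $S^{*(0)}_+(\vv)$ and its complement, so a lucky neuron fires on its target feature with positive margin $\gtrsim\sigma_0/(\log^{9/2}d)$, while inner products of the form $\langle\vw^{(0)}_{+,r},\vzeta_p\rangle$ have typical magnitude $\sigma_0\sigma_\zeta\sqrt d=\sigma_0/\log^{10}d \ll \sigma_0\sqrt{\log d}$ and are thresholded out by the bias with a huge margin.

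\textbf{Per-step update under the invariants.} Assuming the invariants hold at time $t$, I would read $\Delta\vw^{(t)}_{+,r}$ directly off the SGD formula. In coarse training there are only two logits, and phase I enforces $|F_c^{(t)}(\mX_n)|\le d^{-1}$ on every sample, so a Taylor expansion of the softmax yields $[1-\mathrm{logit}_+^{(t)}(\mX_n)]=\tfrac12\pm\psi_1$ with $|\psi_1|\le d^{-1}$ on positive samples, and symmetrically $-\tfrac12\pm\psi_1$ on negative ones. For $r\in S^{*(0)}_+(\vv_+)$, the only patches with $\sigma'(\cdot)=1$ are the $\vv_+$-patches in positive samples --- roughly $(N/2)\cdot s^*$ of them --- each contributing $\alpha_p\vv_+ + \vzeta_p$; Hoeffding concentration over $\Theta(Ns^*)$ independent $\alpha_p$'s controls the feature sum to within $(1\pm s^{*-1/3})$ of its mean. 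Aggregating gives the stated $\vv_+$-coefficient $(\tfrac12\pm\psi_1)\sqrt{1\pm\iota}(1\pm s^{*-1/3})\eta s^*/(2P)$, and summing the $\Theta(Ns^*)$ independent Gaussian $\vzeta_p$'s gives the isotropic Gaussian noise with the claimed standard deviation $\sigma^{(t)}_{\Delta\zeta_{+,r}}$. Since all $r\in S^{*(0)}_+(\vv_+)$ fire on the \emph{same} patches with $\sigma'\equiv 1$, the gradient depends only on the patches and not on $r$, so the updates agree \emph{exactly}, which gives synchronization. The calculation for $r\in S^{*(0)}_+(\vv_{+,c})$ is identical except only the $N/(2k_+)$ samples in subclass $(+,c)$ contribute, which injects the $1/k_+$ factor. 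Symmetric arguments handle the $-$ side.

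\textbf{Maintaining the invariants and the main obstacle.} To propagate the invariants to $t+1$, I would track $\langle\vw^{(t)}_{+,r},\vv\rangle$ for every $\vv\in\calV$ and $\langle\vw^{(t)}_{+,r},\vzeta_p\rangle$ for every noise patch in the incoming batch. For $r\in S^{*(0)}_+(\vv_+)$ the $\vv_+$-component grows monotonically while the bias decreases only by $\|\Delta\vw^{(t)}_{+,r}\|_2/\log^5 d$, so the firing margin on $\vv_+$-patches is increasing; inner products with other features gain at most $O(\eta\sigma_\zeta\sqrt{s^*/N}/P)$ per step, which over $\poly(d)$ iterates stays well below the $\sigma_0/\log^5 d$ gap separating lucky and unlucky thresholds from Proposition~\ref{prop: init geometry, coarse}. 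The hardest part is ruling out spurious activation flips over the full $\poly(d)$ horizon: an unlucky neuron could drift into firing on a feature patch, or a lucky neuron spuriously fire on a noise patch, once accumulated Gaussian fluctuations conspire. I would handle this via a two-sided martingale concentration for every relevant projection, combined with a union bound over the $O(mNPk_+t)$ (neuron, patch, iterate) triples, which is exactly what yields the claimed failure probability $O(mNPk_+t/\poly(d))+O(e^{-\Omega(\log^2 d)})$.
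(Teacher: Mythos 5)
Your high-level plan — induction over $t$ with activation-pattern invariants, then reading off the update directly from the SGD formula and using $|F_c^{(t)}|\le d^{-1}$ to get $|\psi_1|\le d^{-1}$ — matches the paper's skeleton. Your synchronization observation (all of $S^{*(0)}_+(\vv_+)$ fire on the same patches, so they receive identical updates) is exactly the paper's. But there are two points worth flagging.

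First, your invariant (ii) is \emph{false} as stated. You claim no neuron outside $\bigcup_{\vv}S^{*(0)}_+(\vv)$ activates on any feature-dominated patch, but neurons in $S^{(0)}_+(\vv)\setminus S^{*(0)}_+(\vv)$ — those with $\langle\vw^{(0)}_{+,r},\vv\rangle$ sitting between the two thresholds, or those whose $\calU^{(0)}_{+,r}$ contains several features — \emph{do} activate on $\vv$-dominated patches with non-negligible probability; they just don't do so synchronously with the singleton lucky neurons. The paper's non-activation result (Theorem~\ref{thm: sgd, universal nonact properties}, point~1) only excludes neurons outside the \emph{looser} set $S^{(0)}_+(\vv)$. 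Fortunately, invariant (ii) is extraneous for the theorem being proved — everything you actually need is contained in invariant (i), which concerns only the $S^{*(0)}_+$ neurons — so this error does not sink your proof, but it does claim more than is provable.

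Second, the paper does not need your martingale-plus-union-bound argument to rule out spurious activation flips. It instead proves a \emph{per-step} monotonicity statement (Theorem~\ref{thm: sgd, universal nonact properties} together with Lemmas~\ref{lemma: sgd phase 1, nonact invar, t=1} and~\ref{lemma: sgd phase 1, nonact on noise}): whenever a neuron's update is composed only of non-$\vv^*$-dominated feature directions plus noise, the increment in its pre-activation on a fresh $\vv^*$-dominated or noise patch is at most $O(\|\Delta\vw\|_2/\log^9 d)$, while the bias simultaneously drops by exactly $\|\Delta\vw\|_2/\log^5 d$. So the pre-activation on all off-target patches is \emph{strictly decreasing at every step}, and no accumulated-fluctuation bound over $\poly(d)$ time is ever required. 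Your Gaussian-accumulation argument does close (you can check that $\sqrt{T_0}\,\sigma_{\Delta\zeta}$ stays below the threshold gap under the parameter choices), but it is strictly heavier machinery than the per-step argument, and it re-derives inside Phase~I a fact the paper proves once, self-contained, for all $\poly(d)$ iterates — which is then reused verbatim in Phase~II.
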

\begin{proof}

The SGD update rule produces the following update:
\begin{align}
     \vw_{+,r}^{(t+1)}
    = & \vw_{+,r}^{(t)} + \eta \frac{1}{NP} \times\\
    & \sum_{n=1}^N \Bigg( 
    \mathbbm{1}\{y_n = +\}[1-\text{logit}_+^{(t)}(\mX_n^{(t)})]\sum_{p\in[P]} \sigma'(\langle \vw_{+,r}^{(t)}, \vx_{n,p}^{(t)} \rangle + b_{+,r}^{(t)}) \vx_{n,p}^{(t)} \label{expression: common feat, on-diag}\\
    & + \mathbbm{1}\{y_n = -\} [-\text{logit}_+^{(t)}(\mX_n^{(t)}) ]\sum_{p\in[P]} \sigma'(\langle \vw_{+,r}^{(t)}, \vx_{n,p}^{(t)}  \rangle + b^{(t)}_{+,r})  \vx_{n,p}^{(t)} \Bigg) 
    \label{expression: common feat, off-diag}
\end{align}
In particular,
\begin{equation}
\begin{aligned}
\eqref{expression: common feat, on-diag} 
= & \sum_{n=1}^N \mathbbm{1} \{ y_n = +\}\left(\frac{1}{2} \pm \psi_1\right) \times \\
& \Bigg\{\mathbbm{1}\{|\calP(\mX_n^{(t)}; \vv_{+})|>0\} \Bigg[\sum_{p\in\calP(\mX_n^{(t)}; \vv_{+})} \sigma'(\langle \vw_{+,r}^{(t)}, \alpha_{n,p}^{(t)} \vv_{+} + \vzeta^{(t)}_{n,p} \rangle + b_{+,r}^{(t)}) \left(\alpha_{n,p}^{(t)} \vv_{+} + \vzeta_{n,p}^{(t)} \right) \\
& + \sum_{p\notin\calP(\mX_n^{(t)}; \vv_{+})} \sigma'(\langle \vw_{+,r}^{(t)}, \vx_{n,p}^{(t)} \rangle + b_{+,r}^{(t)})  \vx_{n,p}^{(t)} \Bigg] \\
& + \mathbbm{1}\{|\calP(\mX_n^{(t)}; \vv_{+})|=0\} \sum_{p\in[P]} \sigma'(\langle \vw_{+,r}^{(t)}, \vx_{n,p}^{(t)} \rangle + b_{+,r}^{(t)}) \vx_{n,p}^{(t)} \Bigg\} \\
= & \sum_{n=1}^N \mathbbm{1} \{ y_n = +\}\left(\frac{1}{2} \pm \psi_1\right) \times \\
& \Bigg\{\mathbbm{1}\{|\calP(\mX_n^{(t)}; \vv_{+})|>0\} \Bigg[\sum_{p\in\calP(\mX_n^{(t)}; \vv_{+})} \mathbbm{1}\left\{ \langle \vw_{+,r}^{(t)}, \alpha_{n,p}^{(t)} \vv_{+} + \vzeta_{n,p}^{(t)} \rangle \ge b_{+,r}^{(t)} \right\}  \left(\alpha_{n,p}^{(t)}\vv_{+} + \vzeta_{n,p}^{(t)}\right) \\
& + \sum_{p\notin\calP(\mX_n^{(t)}; \vv_{+})} \mathbbm{1}\left\{\langle \vw_{+,r}^{(t)}, \vx_{n,p}^{(t)} \rangle \ge  b_{+,r}^{(t)}\right\} \vx_{n,p}^{(t)} \Bigg] \\
& + \mathbbm{1}\{|\calP(\mX_n^{(t)}; \vv_{+})|=0\} \sum_{p\in[P]} \mathbbm{1}\left\{\langle \vw_{+,r}^{(t)}, \vx_{n,p}^{(t)} \rangle \ge  b_{+,r}^{(t)}\right\} \vx_{n,p}^{(t)} \Bigg\}
\end{aligned}
\end{equation}

The rest of the proof proceeds by induction (in Phase 1).

First, recall that we set $b_{c,r}^{(0)} = - \sqrt{4 + 2c_0}\sqrt{\log(d)}$, and $\Delta b_{c,r}^{(t)} = -\frac{\| \Delta \vw_{c,r}^{(t)} \|_2}{\log^5(d)}$ for all $t$ in phase 1, and for any $+$-class sample $\mX_n$ with $p\in\calP(\mX_n^{(t)}; \vv_{+})$, $\alpha_{n,p}^{(t)} \in \sqrt{1\pm \iota}$ by our data assumption.

\textcolor{blue}{\textbf{Base case $t = 0$.}}

\textcolor{brown}{\textit{1. (On-diagonal common-feature neuron growth)}}

The base case for the neuron expression of point 1. is trivially true.

We show that the neurons $(+,r) \in S_{+}^{*(0)}(\vv_+)$ only activate on $\vv_+$-dominated patches at time $t=0$.

With probability at least $1-O\left(\frac{mNP}{\poly(d)}\right)$, by Lemma \ref{lemma: independent gaussian vector inner product concentration}, we have for all possible choices of $r,n,p$:
\begin{equation}
    \left\vert \langle \vw_{+,r}^{(0)}, \vzeta_{n,p}^{(0)} \rangle \right\vert \le O(\sigma_0\sigma_{\zeta} \sqrt{d\log(d)}) \le O\left(\frac{\sigma_0}{\log^{9}(d)}\right)
\end{equation}
It follows that
\begin{equation}
\begin{aligned}
    &\langle \vw_{+,r}^{(0)}, \alpha_{n,p}^{(0)} \vv_{+} + \vzeta_{n,p}^{(0)} \rangle \\
    = & \sigma_0\left\{\sqrt{1 \pm \iota}\times \left(\sqrt{4+2 c_0} \sqrt{\log(d) + 1/\log^5(d)}, \sqrt{10}\sqrt{\log(d)} \right) \pm \frac{1}{\log^{9}(d)} \right\} \\
    = & \sigma_0\left\{\left(\sqrt{1 - \iota}\sqrt{4 + 2c_0}\sqrt{\log(d) + 1/\log^5(d)}, \sqrt{1 + \iota}\sqrt{10}\sqrt{\log(d)} \right) \pm \frac{1}{\log^{9}(d)} \right\} 
\end{aligned}
\end{equation}

Employing the basic identity $a - b = \frac{a^2 - b^2}{a + b}$, we have the lower bound
\begin{equation}
\begin{aligned}
    &\sigma_0^{-1}\left(\langle \vw_{+,r}^{(0)}, \alpha_{n,p}^{(0)} \vv_{+} + \vzeta_{n,p}^{(0)} \rangle + b_{+,r}^{(0)} \right) \\
    & \ge \sqrt{(1 - \iota)(4 + 2c_0)(\log(d) + 1/\log^5(d))} -  \sqrt{(4 + 2c_0)\log(d)} - O\left(\frac{1}{\log^{9}(d)}\right) \\
    & = \frac{(1 - \iota)(4 + 2c_0)(\log(d) + 1/\log^5(d)) - (4 + 2c_0)\log(d)}{\sqrt{(1 - \iota)(4 + 2c_0)(\log(d) + 1/\log^5(d))} +  \sqrt{(4 + 2c_0)\log(d)}} - O\left(\frac{1}{\log^{9}(d)}\right) \\
    & = \frac{(4+2c_0)(-\iota\log(d) + (1-\iota)/\log^5(d))}{\sqrt{(1 - \iota)(4 + 2c_0)(\log(d) + 1/\log^5(d))} +  \sqrt{(4 + 2c_0)\log(d)}} - O\left(\frac{1}{\log^{9}(d)}\right) \\
    & > 0
\end{aligned}
\end{equation}
The last inequality holds since $\iota \le \frac{1}{\polyln(d)}$ and $d$ is sufficiently large such that $\frac{1}{\log^{9}(d)}$ does not drive the positive term down past $0$.

Therefore, the neurons in $S^{*(0)}_+(\vv_+)$ indeed activate on the $\vv_+$-dominated patches at $t=0$.

The rest of the patches $\vx_{n,p}^{(0)}$ is either a feature patch (not dominated by $\vv_+$) or a noise patch. 
By definition, $(+,r)\in S^{*(0)}_+(\vv_+)\implies (+,r)\in S^{(0)}_+(\vv_+)$. Therefore, by Theorem \ref{thm: sgd, universal nonact properties}, with probability at least $1 - O\left( \frac{mk_+NP}{\poly(d)} \right)$, at time $t=0$, the $(+,r)\in S^{*(0)}_+(\vv_+)$ neurons we are considering cannot activate on any feature patch dominated by $\vv\perp\vv_+$, nor on any noise patches.

It follows that the expression \eqref{expression: common feat, on-diag} at time $t = 0$ is as follows:
\begin{equation}
\begin{aligned}
\eqref{expression: common feat, on-diag} 
= & \sum_{n=1}^N \mathbbm{1} \{ y_n = +\}\left(\frac{1}{2} \pm \psi_1\right) \times \\
& \Bigg\{\mathbbm{1}\{|\calP(\mX_n^{(0)}; \vv_{+})|>0\} \Bigg[\sum_{p\in\calP(\mX_n^{(0)}; \vv_{+})}\left(\sqrt{1 \pm \iota} \vv_{+} + \vzeta_{n,p}^{(0)}\right) + \sum_{p\notin\calP(\mX_n^{(0)}; \vv_{+})} 0 \Bigg] \\
& + \mathbbm{1}\{|\calP(\mX_n^{(0)}; \vv_{+})|=0\} \sum_{p\in[P]} 0 \Bigg\} \\
= & \left(\frac{1}{2} \pm \psi_1\right) \sum_{n=1}^N \mathbbm{1}\{ y_n = +, |\calP(\mX_n^{(0)}; \vv_{+})| > 0\} \sum_{p\in\calP(\mX_n^{(0)}; \vv_{+})}\left(\sqrt{1 \pm \iota}\vv_{+} + \vzeta_{n,p}^{(0)}\right) \\
= & \left(\frac{1}{2} \pm \psi_1\right) \times \\
& \left\vert \left\{(n,p)\in[N]\times[P]: y_n = +, |\calP(\mX_n^{(0)}; \vv_{+})| > 0, p \in \calP(\mX_n^{(0)}; \vv_{+}) \right\} \right\vert \left(\sqrt{1 \pm \iota} \vv_{+} \right) \\
& +  \sum_{n=1}^N \sum_{p\in\calP(\mX_n^{(0)}; \vv_{+})}\{ y_n = +\}\left(\frac{1}{2} \pm \psi_1\right) \vzeta_{n,p}^{(0)}
\end{aligned}
\end{equation}

On average,
\begin{equation}
\begin{aligned}
    & \mathbb{E}\left[ \left\vert \left\{(n,p)\in[N]\times[P]: y_n = +, |\calP(\mX_n^{(0)}; \vv_{+})| > 0, p \in \calP(\mX_n^{(0)}; \vv_{+}) \right\} \right\vert \right] \\
    = & \frac{s^*}{P} \times P \times \frac{N}{2} = \frac{s^* N}{2}
\end{aligned}
\end{equation}
Furthermore, with our parameter choices, and by concentration of binomial random variables, with probability at least $1 - e^{-\Omega(\polyln(d))}$,
\begin{equation}
     \left\vert \left\{(n,p)\in[N]\times[P]: y_n = +, |\calP(\mX_n^{(0)}; \vv_{+})| > 0, p \in \calP(\mX_n^{(0)}; \vv_{+}) \right\} \right\vert =  \frac{s^* N}{2} \left(1 \pm s^{*-1/3}\right)
\end{equation}
must be true. 

It follows that
\begin{equation}
\begin{aligned}
    \eqref{expression: common feat, on-diag} 
    = & \left(\frac{1}{2} \pm \psi_1\right) \times \frac{s^* N}{2} \left(1 \pm s^{*-1/2}\right) \times  \left(\sqrt{1 \pm \iota} \vv_{+} \right)  \\
    & + \sum_{n=1}^N \sum_{p\in\calP(\mX_n^{(0)}; \vv_{+})}\{ y_n = +\}\left(\frac{1}{2} \pm \psi_1\right) \vzeta_{n,p}^{(0)}
\end{aligned}
\end{equation}

The other component expression \eqref{expression: common feat, off-diag} is zero with probability at least $1 - O\left( \frac{mk_+NP}{\poly(d)} \right)$ by Theorem \ref{thm: sgd, universal nonact properties}.

By noting that
\begin{equation}
\begin{aligned}
    \text{Var}\left(\Delta \vzeta_{+,r}^{(0)}\right) 
    = & \text{Var}\left(\frac{\eta}{NP}\sum_{n=1}^N \sum_{p\in\calP(\mX_n^{(0)}; \vv_{+})}\{ y_n = +\}\left(\frac{1}{2} \pm \psi_1\right) \vzeta_{n,p}^{(0)}\right) \\
    = & \eta^2 \left(\frac{1}{2} \pm \psi_1\right)^2 \frac{s^* }{2NP^2} \left(1 \pm s^{*-1/3}\right) \sigma_{\zeta}^2,
\end{aligned}
\end{equation}
and
\begin{equation}
    \E\left[\Delta \vzeta_{+,r}^{(0)}\right]
    = \E\left[\frac{\eta}{NP}\sum_{n=1}^N \sum_{p\in\calP(\mX_n^{(0)}; \vv_{+})}\{ y_n = +\}\left(\frac{1}{2} \pm \psi_1\right) \vzeta_{n,p}^{(0)}\right] = \vzero,
\end{equation}
we finish the proof of the base case for point 1.

\textcolor{brown}{\textit{2. (On-diagonal finegrained-feature neuron growth)}}

The proof of the base case of point 2. is virtually identical to point 1, so we omit the computations here.

\textcolor{blue}{\textbf{Inductive step}}: We condition on the high probability events of the induction hypothesis for $t\in [0, T]$ (with $T < T_0$ of course), and prove the statements for $t = T+1$.

\textcolor{brown}{\textit{1. (On-diagonal common-feature neuron growth)}}

By the induction hypothesis, up to time $t=T$, with probability at least $1 - O\left( \frac{mk_+NPT}{\poly(d)} \right)$, for all $(+,r)\in S_+^{*(T)}(\vv_+)$,
\begin{equation}
\begin{aligned}
\Delta \vw_{+,r}^{(t)} 
= & \eta \Bigg(  \left(\frac{1}{2} \pm \psi_1\right) \sqrt{1 \pm \iota}\left(1 \pm s^{*-1/3}\right) \Bigg) \frac{s^*}{2P} \vv_{+}  + \Delta \vzeta^{(t)}_{+,r}
\end{aligned}
\end{equation}
where $\Delta \vzeta^{(t)}_{+,r} \sim \calN(\vzero, \sigma_{\Delta \zeta}^{(t)2} \mI)$,  $\sigma_{\Delta\zeta}^{(t)} = \eta \sigma_{\zeta}\left(  \left(\frac{1}{2} \pm \psi_1\right) \sqrt{1 \pm s^{*-1/3}} \right) \frac{\sqrt{s^*}}{P\sqrt{2N}}$. 

\textbf{Expression of $\vw_{+,r}^{(T+1)}$}.

Conditioning on the high-probability event of the induction hypothesis, at time $t = T+1$,
\begin{equation}
\begin{aligned}
\vw_{+,r}^{(T+1)} 
= & \vw_{+,r}^{(0)} + \sum_{\tau = 0}^T \Delta \vw_{+,r}^{(\tau)}  \\
= & \eta T \Bigg(  \left(\frac{1}{2} \pm \psi_1\right) \sqrt{1 \pm \iota}\left(1 \pm s^{*-1/3}\right) \Bigg) \frac{s^*}{2P} \vv_{+}  +  \vzeta^{(t)}_{+,r}
\end{aligned}
\end{equation}
where $ \vzeta^{(t)}_{+,r} \sim \calN(\vzero, \sigma_{\zeta}^{(t)2} \mI)$,  $\sigma_{\zeta}^{(t)} = \eta \sigma_{\zeta}\sqrt{T}\left(  \left(\frac{1}{2} \pm \psi_1\right) \sqrt{1 \pm s^{*-1/3}} \right) \frac{\sqrt{s^*}}{P\sqrt{2N}}$. 

Let us compute $\Delta \vw_{+,r}^{(T+1)} $. 

We first want to show that $\vw_{+,r}^{(T+1)}$ activates on $\vv_+$-dominated patches $\vx_{n,p}^{(T+1)} = \sqrt{1\pm\iota}\vv_+ + \vzeta_{n,p}^{(T+1)}$. We need to show that the following expression is above 0:
\begin{equation}
\begin{aligned}
    & \langle \vw_{+,r}^{(T+1)}, \vx_{n,p}^{(T+1)} \rangle + b_{+,r}^{(T+1)} \\ 
    = &  \langle \vw_{+,r}^{(0)}, \sqrt{1 \pm \iota}\vv_+ + \vzeta_{n,p}^{(T+1)} \rangle + b_{+,r}^{(0)} \\
    & + \Bigg\langle \eta T \Bigg(  \left(\frac{1}{2} \pm \psi_1\right) \sqrt{1 \pm \iota}\left(1 \pm s^{*-1/3}\right) \pm O\left(\frac{1}{\log^{10}(d)}\right)\Bigg) \frac{s^*}{2P} \vv_{+}  + \vzeta^{(T+1)}_{+,r}, \sqrt{1\pm\iota}\vv_+ + \vzeta_{n,p}^{(T+1)} \Bigg\rangle \\
    & + \sum_{\tau=0}^{T} \Delta b_{+,r}^{(\tau)}
\end{aligned}
\end{equation}

Let us treat the three terms (on three lines) separately.

First, following virtually the same argument as in the base case, the following lower bound holds with probability at least $1 - O\left( \frac{mNP}{\poly(d)} \right)$ for all $n,p$ and $(+,r) \in S_+^{*(T)}(\vv_+)$:
\begin{equation}
\begin{aligned}
    & \langle \vw_{+,r}^{(0)}, \sqrt{1\pm\iota} \vv_{+} + \vzeta_{n,p}^{(T+1)} \rangle + b_{+,r}^{(0)} \\
    \ge & \sigma_0\left\{\sqrt{(1 - \iota)(4 + 2c_0)(\log(d) + 1/\log^5(d))} -  \sqrt{(4 + 2c_0)\log(d)} - O\left(\frac{1}{\log^{9}(d)}\right) \right\}\\
    > & 0
\end{aligned}
\end{equation}

Now consider the second term.

We know, with probability at least $1 - e^{-\Omega(d)}$, for all $n$ and $p$,
\begin{equation}
\begin{aligned}
    \left\vert \langle \vzeta_{n,p}^{(T+1)}, \vv_{+} \rangle \right\vert \le O\left(\frac{1}{\log^{10}(d)} \right),
\end{aligned}
\end{equation}
therefore,
\begin{equation}
\begin{aligned}
    & \left\vert \langle \eta T \Bigg(  \left(\frac{1}{2} \pm \psi_1\right) \sqrt{1 \pm \iota}\left(1 \pm s^{*-1/3}\right) \pm O\left(\frac{1}{\log^{10}(d)}\right)\Bigg) \frac{s^*}{2P} \vv_{+} , \vzeta_{n,p}^{(T+1)} \rangle \right\vert \\
    \le & \eta T \frac{s^*}{2P}O\left(\frac{1}{\log^{10}(d)}\right).
\end{aligned}
\end{equation}

Moreover, with probability at least $1-e^{-\Omega(d)}$,
\begin{equation}
\begin{aligned}
    \left\vert \langle \vzeta_{+,r}^{(T+1)}, \vv_+ \rangle \right\vert \le \eta\sqrt{T}\frac{\sqrt{s^*}}{P\sqrt{2N}} \times O\left(\frac{1}{\log^{10}(d)}\right)
\end{aligned}
\end{equation}
and with probability at least $1 - e^{-\Omega(d)}$,
\begin{equation}
    \left\vert \langle \vzeta_{+,r}^{(T)}, \vzeta_{n,p}^{(T+1)} \rangle \right\vert \le O\left( \sigma_{\zeta} \sigma_{\zeta}^{(T)} d\right) \le O\left(\eta\sqrt{T} \frac{\sqrt{s^*}}{P\sqrt{2N}}\frac{1}{\log^{20}(d) d} d \right) \le \eta\sqrt{T} \frac{\sqrt{s^*}}{P\sqrt{2N}} \frac{1}{\log^{19}(d)}
\end{equation}
therefore
\begin{equation}
\begin{aligned}
\langle \eta T \vzeta^{(T+1)}_{+,r}, \sqrt{1\pm\iota}\vv_+ + \vzeta_{n,p}^{(T+1)} \rangle 
\le \eta \sqrt{T} \frac{\sqrt{s^*}}{P\sqrt{2N}} O \left(\frac{1}{\log^{10}(d)} \right).
\end{aligned}
\end{equation}

It follows that with probability at least $1 - O(e^{-\Omega(d)})$,
\begin{equation}
\begin{aligned}
    &\Bigg\langle \eta T \Bigg(  \left(\frac{1}{2} \pm \psi_1\right) \sqrt{1 \pm \iota}\left(1 \pm s^{*-1/3}\right) \Bigg) \frac{s^*}{2P} \vv_{+}  + \vzeta^{(T+1)}_{+,r}, \sqrt{1\pm\iota}\vv_+ + \vzeta_{n,p}^{(T+1)} \Bigg\rangle \\
    = & \langle \eta T \Bigg(  \left(\frac{1}{2} \pm \psi_1\right) \sqrt{1 \pm \iota}\left(1 \pm s^{*-1/3}\right) \Bigg) \frac{s^*}{2P} \vv_{+} , \sqrt{1\pm\iota}\vv_+ \rangle \\
    & + \langle \eta T \Bigg(  \left(\frac{1}{2} \pm \psi_1\right) \sqrt{1 \pm \iota}\left(1 \pm s^{*-1/3}\right) \Bigg) \frac{s^*}{2P} \vv_{+} , \vzeta_{n,p}^{(T+1)} \rangle \\
    & + \langle \eta \vzeta^{(T+1)}_{+,r}, \sqrt{1\pm\iota}\vv_+ + \vzeta_{n,p}^{(T+1)} \rangle \\
    \ge & \eta T \left(\frac{1}{2} - \psi_1^{(T+1)}\right) (1 - \iota)\left(1 - s^{*-1/3}\right) \frac{s^*}{2P} - \eta \sqrt{T} \frac{\sqrt{s^*}}{P\sqrt{2N}} O \left(\frac{1}{\log^{10}(d)} \right).
\end{aligned}
\end{equation}

Now we compute the third term. By the induction hypothesis,

\begin{equation}
\begin{aligned}
& \sum_{t=0}^{T} \Delta b_{+,r}^{(t)} \\
= & \sum_{t=0}^{T} \frac{\|\Delta \vw_{+,r}^{(t)}\|_2}{\log^5(d)} \\
= & \sum_{t=0}^{T}\frac{1}{\log^5(d)}\left\| \eta   \left(\frac{1}{2} \pm \psi_1\right) \sqrt{1 \pm \iota}\left(1 \pm s^{*-1/3}\right) \frac{s^*}{2P} \vv_{+}  + \Delta\vzeta^{(t)}_{+,r}\right\|_2 \\
\le & \sum_{t=0}^{T} \frac{1}{\log^5(d)} \eta \left(\frac{1}{2} + \psi_1\right) \sqrt{1 + \iota}\left(1 + s^{*-1/3}\right)  \frac{s^*}{2P} \left\|  \vv_{+}\right\|_2 + \sum_{t=0}^{T} \frac{1}{\log^5(d)}\left\| \Delta \vzeta^{(t)}_{+,r} \right\|_2 \\ 
= &  \frac{1}{\log^5(d)} \eta T \left(\frac{1}{2} + \psi_1\right) \sqrt{1 + \iota}\left(1 + s^{*-1/3}\right)  \frac{s^*}{2P} + \sum_{t=0}^{T} \frac{1}{\log^5(d)}\left\| \Delta \vzeta^{(t)}_{+,r} \right\|_2
\end{aligned}
\end{equation}

With probability at least $1 - O\left(\frac{mT}{\poly(d)}\right)$, for all $t\in[0,T]$ and $r$ in consideration,
\begin{equation}
    \left\| \Delta \vzeta^{(t)}_{+,r} \right\|_2 \le \eta \frac{\sqrt{s^*}}{P\sqrt{2N}} O \left(\frac{1}{\log^{10}(d)} \right)
\end{equation}

Therefore, 
\begin{equation}
\begin{aligned}
& \sum_{t=0}^{T} \Delta b_{+,r}^{(t)} \\ 
\le &  \frac{1}{\log^5(d)} \left( \eta T \left(\frac{1}{2} + \psi_1\right) \sqrt{1 + \iota}\left(1 + s^{*-1/3}\right)  \frac{s^*}{2P} + \eta T \frac{\sqrt{s^*}}{P\sqrt{2N}} O \left(\frac{1}{\log^{10}(d)} \right) \right)
\end{aligned}
\end{equation}

Combining our calculations of the three terms from above, we find the following estimate:
\begin{equation}
\begin{aligned}
    & \langle \vw_{+,r}^{(T+1)}, \vx_{n,p}^{(T+1)} \rangle + b_{+,r}^{(T+1)} \\ 
    > & \; 0 \\
    & + \eta T \left(\frac{1}{2} - \psi_1\right) (1 - \iota)\left(1 - s^{*-1/3}\right) \frac{s^*}{2P} - \eta \sqrt{T} \frac{\sqrt{s^*}}{P\sqrt{2N}} O \left(\frac{1}{\log^{10}(d)} \right) \\
    & -  \frac{1}{\log^5(d)} \left( \eta T \left(\frac{1}{2} + \psi_1\right) \sqrt{1 + \iota}\left(1 + s^{*-1/3}\right)  \frac{s^*}{2P} + \eta T\frac{\sqrt{s^*}}{P\sqrt{2N}} O \left(\frac{1}{\log^{10}(d)} \right) \right) \\
    > & \eta T \left( \left(\frac{1}{2} - \psi_1\right) (1 - \iota)\left(1 - s^{*-1/3}\right) - O \left(\frac{1}{\log^{4}(d)} \right)  \right)\frac{s^*}{2P} \\
    > & \; 0
\end{aligned}
\end{equation}

On the other hand, by Theorem \ref{thm: sgd, universal nonact properties}, with probability at least $1 - O\left( \frac{mk_+NPT}{\poly(d)} \right)$, none of the $(+,r)\in S_+^{*(T)}(\vv_+)$ can activate on $\vx_{n,p}^{(T+1)}$ that are feature-patches dominated by $\vv\perp\vv_+$ or noise patches. 

Combining the above observations, with probability at least $1 - O\left( \frac{mk_+NP(T+1)}{\poly(d)} \right)$, the update expressions up to time $t=T+1$ can be written as follows:
\begin{equation}
\begin{aligned}
&\Delta \vw_{+,r}^{(t)} = \left(\frac{1}{2} \pm \psi_1\right) \\
& \times \Bigg\{ \left\vert \left\{(n,p)\in[N]\times[P]: y_n = +, |\calP(\mX_n^{(t)}; \vv_{+})| > 0, p \in \calP(\mX_n^{(t)}; \vv_{+}) \right\} \right\vert \left(\sqrt{1 \pm \iota} \vv_{+} \right) \\
& +  \sum_{n=1}^N \sum_{p\in\calP(\mX_n^{(0)}; \vv_{+})}\{ y_n = +\}\left(\frac{1}{2} \pm \psi_1\right) \vzeta_{n,p}^{(t)} \Bigg\}
\end{aligned}
\end{equation}

The rest of the derivations proceeds virtually the same as in the base case; we just need to rely on the concentration of binomial random variables to calculate
\begin{equation}
     \left\vert \left\{(n,p)\in[N]\times[P]: y_n = +, |\calP(\mX_n^{(0)}; \vv_{+})| > 0, p \in \calP(\mX_n^{(0)}; \vv_{+}) \right\} \right\vert =  \frac{s^* N}{2} \left(1 \pm s^{*-1/3}\right)
\end{equation}
which completes the proof of the expression of $\Delta \vw_{+,r}^{(t)}$.

Additionally, to show 
\begin{equation}
    \vw_{+,r}^{(T+1)} - \vw_{+,r}^{(0)} = \vw_{+,r'}^{(T+1)} - \vw_{+,r'}^{(0)}
\end{equation}
we just need to note that, by the above sequence of derivations, for every $(+,r) \in S_{+}^{*(0)}(\vv_+)$, these neurons receive exactly the same update at time $t=T+1$
\begin{equation}
    \sum_{n=1}^N \mathbbm{1} \{ y_n = +\} \mathbbm{1}\{|\calP(\mX_n^{(T+1)}; \vv_{+})|>0\} [1-\text{logit}_+^{(T+1)}(\mX_n^{(T+1)})] 
     \sum_{p\in\calP(\mX_n^{(T+1)}; \vv_{+})} \left(\alpha_{n,p}^{(T+1)} \vv_{+} + \vzeta_{n,p}^{(T+1)} \right). \\
\end{equation}

\textcolor{brown}{\textit{2. (On-diagonal finegrained-feature neuron growth)}}

For point 2, the proof strategy is almost identical, the only difference is that at every iteration, the expected number of patches in which subclass features appear in is
\begin{equation}
\begin{aligned}
     &\left\vert \left\{(n,p)\in[N]\times([P] - \calP(\mX_n^{(T)}); \vv_{+,c}): y_n = +, |\calP(\mX_n^{(T)}; \vv_{+,c})| > 0, p \in \calP(\mX_n^{(T)}; \vv_{+,c}) \right\} \right\vert \\
     = &  \frac{s^* N}{2k_+} \left(1 \pm s^{*-1/3}\right)
\end{aligned}
\end{equation}
which holds with probability at least $1 - e^{-\Omega(\log^2(d))}$ for the relevant neurons.
\end{proof}

\begin{corollary}
$T_0 < O\left( \left(\eta \frac{s^*}{P} \right)^{-1}\right) \in \poly(d)$.
\end{corollary}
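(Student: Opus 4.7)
The plan is to lower-bound $F_+^{(T)}(\mX_n)$ on any training sample $\mX_n$ with $y_n = +$ that contains at least one $\vv_+$-dominated common-feature patch, and show that this lower bound already exceeds $d^{-1}$ by some time $T = O(P/(\eta s^*))$. This forces $T_0 \le T$. By symmetry the same argument handles $c = -$, so we may focus on the $+$ side.

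First, I would select such a sample from the batch $\calB^{(t)}$: by the data generation rule and the fact that $N \in \Omega(\polyln(d)\, k_+ d)$ while $s^* \in \polyln(d)$, Bernoulli concentration guarantees that with probability $1 - e^{-\Omega(\polyln(d))}$ the batch contains many $+$ samples, each having $s^*(1 \pm s^{*-1/3})$ many $\vv_+$-dominated patches. I then restrict attention to the detector neurons $(+,r) \in S_+^{*(0)}(\vv_+)$ activating on these patches. Theorem~\ref{prop: phase 1 sgd induction} ensures that every such neuron stays activated on $\vv_+$-dominated patches for all $t < T_0$, and satisfies $\langle \Delta\vw_{+,r}^{(t)}, \vv_+ \rangle = (1/4 \pm o(1))\,\eta\, s^*/P$ per step; summing over $t$ gives $\langle \vw_{+,r}^{(T)}, \vv_+ \rangle = \langle \vw_{+,r}^{(0)}, \vv_+ \rangle + \Theta(T\eta s^*/P)$.

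The core calculation is to show the post-activation value on a $\vv_+$-dominated patch $\vx_p = \alpha_p \vv_+ + \vzeta_p$ is $\Theta(T\eta s^*/P)$. The initial positive contribution $\alpha_p \langle \vw_{+,r}^{(0)}, \vv_+ \rangle \approx \sigma_0 c_b \sqrt{\log d}$ approximately cancels the initial bias $b_{+,r}^{(0)} = -\sigma_0 c_b \sqrt{\log d}$, while the cumulative bias decrement is $\sum_{\tau < T} \|\Delta\vw_{+,r}^{(\tau)}\|_2/\log^5(d) = O(T\eta s^*/(P\log^5 d))$, which is of strictly lower order than the signal growth. The orthogonal noise terms $\langle \vw_{+,r}^{(T)}, \vzeta_p \rangle$ are negligible by the same gaussian concentration arguments already used in Theorem~\ref{prop: phase 1 sgd induction}. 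Summing the per-patch contribution over the $\Theta(s^*)$ common-feature patches in $\mX_n$ and the $|S_+^{*(0)}(\vv_+)| = \Theta(d^{c_0}/\sqrt{\log d})$ detector neurons from Proposition~\ref{prop: init geometry, coarse} yields
\begin{equation}
F_+^{(T)}(\mX_n) \;\ge\; \Omega\!\left( \frac{d^{c_0}}{\sqrt{\log d}} \cdot s^* \cdot T\eta \frac{s^*}{P} \right).
\end{equation}

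Setting $T = \Theta(P/(\eta s^*))$ makes the right-hand side $\Omega(s^* d^{c_0}/\sqrt{\log d})$, which is $\gg d^{-1}$ for $d$ large, so $F_+^{(T)}(\mX_n) \ge d^{-1}$ at or before this time. Hence $T_0 \le O((\eta s^*/P)^{-1})$. Finally, under the parameter regime $P \le \poly(d)$, $\eta = \Theta(\sigma_0) \ge 1/\poly(d)$, and $s^* \ge \polyln(d)$, this upper bound itself lies in $\poly(d)$. The only real subtlety is bookkeeping: one must verify that the bias drift and the orthogonal noise inner products are both of strictly lower order than the signal $T\eta s^*/P$; all required concentration inputs are already furnished by Theorem~\ref{prop: phase 1 sgd induction}, so no new machinery is needed.
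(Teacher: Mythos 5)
Your proof is correct and takes essentially the same approach as the paper, which just says ``follows from Theorem~\ref{prop: phase 1 sgd induction}''; you have supplied the bookkeeping that establishes the detector-neuron growth rate $\Theta(\eta s^*/P)$ per step, the cancellation of the initialization with the initial bias, and the lower-order nature of the bias drift and noise. One small formal point worth making explicit: since Theorem~\ref{prop: phase 1 sgd induction} only applies for $t \in [0, T_0)$, the argument is technically by contradiction — if $T_0 > C\,P/(\eta s^*)$ then the theorem applies at $t = C\,P/(\eta s^*)$, yielding $F_+^{(t)} \gg d^{-1}$ and contradicting the minimality of $T_0$.
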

\begin{proof}
Follows from Theorem \ref{prop: phase 1 sgd induction}.
\end{proof}

\subsection{Lemmas}
\begin{lemma}
\label{lemma: phase 1, loss scale bound}
During the time $t \in [0, T_0)$, for any $\mX_n^{(t)}$,
\begin{equation}
    1 - \logit^{(t)}_+(\mX_n^{(t)}) = \frac{1}{2} \pm O(d^{-1})
\end{equation}
The same holds for $1 - \logit^{(t)}_-(\mX_n^{(t)})$.

Therefore, $\vert \psi_1 \vert \le O(d^{-1})$ for $t \in [0, T_0)$.
\end{lemma}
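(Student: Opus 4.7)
The plan is to exploit the definition of $T_0$ together with the nonnegativity of ReLU outputs to control the size of the logits, then Taylor-expand the binary softmax around $0$.

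First I would observe that for the coarse-label setting there are only two output nodes $F_+$ and $F_-$, and since $\sigma(\cdot) = \max(0,\cdot)$ and $a_{c,r}=1$, we have $F_c^{(t)}(\mX_n^{(t)}) \ge 0$ for every $c\in\{+,-\}$, every sample $n$, and every $t$. By definition of $T_0$, for every $t<T_0$ and every training sample $n$, both $F_+^{(t)}(\mX_n^{(t)})$ and $F_-^{(t)}(\mX_n^{(t)})$ lie in $[0,d^{-1})$. In particular,
\begin{equation}
\left\vert F_+^{(t)}(\mX_n^{(t)}) - F_-^{(t)}(\mX_n^{(t)}) \right\vert < d^{-1}.
\end{equation}

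Next I would rewrite the binary softmax explicitly. Setting $z := F_-^{(t)}(\mX_n^{(t)}) - F_+^{(t)}(\mX_n^{(t)})$, a direct computation gives
\begin{equation}
1 - \logit^{(t)}_+(\mX_n^{(t)}) = \frac{\exp(F_-^{(t)}(\mX_n^{(t)}))}{\exp(F_+^{(t)}(\mX_n^{(t)})) + \exp(F_-^{(t)}(\mX_n^{(t)}))} = \frac{1}{1+e^{-z}},
\end{equation}
i.e.\ the sigmoid $\sigmoid(z)$. Since $|z|<d^{-1}$, I would Taylor expand $\sigmoid$ at $0$: $\sigmoid(z) = \tfrac{1}{2} + \tfrac{z}{4} + O(z^3)$, which yields
\begin{equation}
1 - \logit^{(t)}_+(\mX_n^{(t)}) \in \tfrac{1}{2} \pm O(d^{-1}).
\end{equation}
The argument for $1-\logit^{(t)}_-$ is identical by symmetry (swap the roles of $+$ and $-$).

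Finally, I would tie this back to $\psi_1$. Inspecting the SGD update in the learner specification, the coefficient $\frac{1}{2}\pm\psi_1$ arises either as $1-\logit^{(t)}_+(\mX_n^{(t)})$ (on the on-diagonal term) or as $\logit^{(t)}_+(\mX_n^{(t)}) = 1 - (1-\logit^{(t)}_+(\mX_n^{(t)}))$ (on the off-diagonal term), so both are at distance at most $O(d^{-1})$ from $\tfrac{1}{2}$, giving $|\psi_1|\le O(d^{-1})$. There is essentially no obstacle here; the only subtlety is being explicit that Phase~1 is defined exactly so that $F_c^{(t)}<d^{-1}$ holds uniformly, which is what lets us linearize the sigmoid and obtain a deviation of the same order $d^{-1}$.
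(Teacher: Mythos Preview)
Your proposal is correct and follows essentially the same approach as the paper: both exploit the definition of $T_0$ to bound the outputs $F_c^{(t)}(\mX_n^{(t)})$ by $d^{-1}$ and then linearize the binary softmax around $0$. Your reformulation as the sigmoid of the difference $z = F_- - F_+$ (using the nonnegativity of the ReLU outputs to get a two-sided bound $|z|<d^{-1}$) is slightly cleaner than the paper's separate numerator/denominator bounds, but the content is the same.
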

\begin{proof}
By definition of $T_0$, for any $t \in [0, T_0]$, we have $F_c^{(t)}(\mX_n^{(t)}) < d^{-1} + O\left(\eta \right)$ for all $n$, therefore, using Taylor approximation,
\begin{equation}
    1 - \logit^{(t)}_+(\mX_n^{(t)}) = \frac{\exp(F_{-}^{(t)}(\mX_n^{(t)}))}{\exp(F_{+}^{(t)}(\mX_n^{(t)})) + \exp(F_{-}^{(t)}(\mX_n^{(t)}))} < \frac{\exp(d^{-1})}{1 + 1} \le \frac{1}{2} + O(d^{-1})
\end{equation}
The lower bound can be proven due to convexity of the exponential:
\begin{equation}
    \frac{\exp(F_{-}^{(t)}(\mX_n^{(t)}))}{\exp(F_{+}^{(t)}(\mX_n^{(t)})) + \exp(F_{-}^{(t)}(\mX_n^{(t)}))} > \frac{1}{2}\exp(-d^{-1}) \ge \frac{1}{2} - \frac{1}{2d}
\end{equation}
\end{proof}

\newpage
\section{Coarse-grained SGD Phase II: Loss Convergence, Large Neuron Movement}
\label{section: appendix, phase II coarse}
Recall that the desired probability events in Phase I happens with probability at least $1 - o(1)$.

In phase II, common-feature neurons start gaining large movement and drive the training loss down to $o(1)$. We show that the desired probability events occur with probability at least $1-o(1)$. 

We study the case of $T_1 \le \poly(d)$, where $T_1$ denotes the time step at the end of training.

\subsection{Main results}

\begin{theorem}
With probability at least $1-O\left( \frac{mk_+NPT_1}{\poly(d)}\right)$, the following events take place:
\begin{enumerate}
    \item There exists time $T^* \in \poly(d)$ such that for any $t \in [T^*, \poly(d)]$, for any $n\in[N]$, the training loss $L(F; \mX_n^{(t)}, y_n) \in o(1)$.
    \item (Easy sample test accuracy is nearly perfect) Given an easy test sample $(\mX_{\text{easy}},y)$, for $y'\in \{+1, -1\} - \{y\}$, for $t \in [T^*, \poly(d)]$,
    \begin{equation}
        \mathbb{P}\left[F_y^{(t)}(\mX_{\text{easy}}) \le F_{y'}^{(t)}(\mX_{\text{easy}})\right] \le o(1).
    \end{equation}
    \item (Hard sample test accuracy is bad) However, for all $t\in[0,\poly(d)]$, given a hard test sample $(\mX_{\text{hard}},y)$, 
    \begin{equation}
        \mathbb{P}\left[F_y^{(t)}(\mX_{\text{hard}}) \le F_{y'}^{(t)}(\mX_{\text{hard}})\right] \ge \Omega(1).
    \end{equation}
\end{enumerate}

\end{theorem}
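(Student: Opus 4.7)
The plan is to extend the inductive control from Phase~I (Theorem~\ref{prop: phase 1 sgd induction}) into Phase~II, where the loss scale $1 - \text{logit}_+^{(t)}(\mX_n^{(t)})$ is no longer pinned near $1/2$ but begins to decay. I would first re-establish the structural invariants by induction on $t \in [T_0,\poly(d)]$: (i) every $(+,r)\in S_+^{*(0)}(\vv_+)$ activates only on $\vv_+$-dominated patches; (ii) every $(+,r)\in S_+^{*(0)}(\vv_{+,c})$ activates only on $\vv_{+,c}$-dominated patches; (iii) the off-diagonal neurons (class $+$ seeing $\vv_-$-dominated patches, and vice versa) remain dead throughout. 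The new wrinkle versus Phase~I is that the loss scales $1-\text{logit}_+^{(t)}$ vary across samples, so different samples contribute unequally. The key technical check is that the bias threshold, which adapts as $\Delta b_{c,r}^{(t)}=-\|\Delta\vw_{c,r}^{(t)}\|_2/\log^5(d)$, continues to suppress noise-patch activations. This follows by comparing, for each neuron, the cumulative projection of $\vw_{c,r}^{(t)}$ onto its dictionary feature versus its projection onto any single noise or off-diagonal direction, using the Gaussian concentration bounds already developed in Phase~I extended uniformly over $\poly(d)$ steps.

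Next, I would track the scalar summaries $\Phi_y^{(t)}\coloneqq\sum_{r\in S_y^{*(0)}(\vv_y)}\langle\vw_{y,r}^{(t)},\vv_y\rangle$ and $\Psi_{y,c}^{(t)}\coloneqq\sum_{r\in S_y^{*(0)}(\vv_{y,c})}\langle\vw_{y,r}^{(t)},\vv_{y,c}\rangle$. By Proposition~\ref{prop: init geometry, coarse} and the detector-dominance invariants above, these (up to additive $o(1)$ noise corrections) determine the network's response to common- and subclass-feature patches respectively. A Phase-I-style calculation gives the coupled dynamics
\begin{equation}
\Phi_+^{(t+1)}-\Phi_+^{(t)}\approx \eta\,|S_+^{*(0)}(\vv_+)|\cdot q^{(t)}\cdot\frac{s^*}{2P},\qquad \Psi_{+,c}^{(t+1)}-\Psi_{+,c}^{(t)}\approx\frac{1}{k_+}\bigl(\Phi_+^{(t+1)}-\Phi_+^{(t)}\bigr),
\end{equation}
where $q^{(t)}$ is the batch-averaged positive-class loss scale. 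Integrating this against the exponential tail of cross-entropy (so that once $s^*\Phi_+^{(t)}$ reaches $\Theta(\log d)$, the per-sample loss drops to $1/d$) yields $T^*\in\poly(d)$ with $\calL(F^{(T^*)};\mX_n,y_n)\le o(1)$ for all $n$. At that saturation time, $s^*\Phi_+^{(T^*)}=\Theta(\log d)$ while $s^*\Psi_{+,c}^{(T^*)}\le O(\log(d)/k_+)\le o(1)$ by the hypothesis $k_+\ge\polyln(d)$ with sufficiently high polylog degree.

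With these magnitudes in hand, the easy-sample claim is short. On $(\mX_{\text{easy}},y=+)$, there are $(1\pm s^{*-1/3})s^*$ common-feature patches, so $F_+^{(t)}(\mX_{\text{easy}})\ge\Omega(s^*\Phi_+^{(T^*)})=\Omega(\log d)$ while $F_-^{(t)}(\mX_{\text{easy}})\le o(1)$ by off-diagonal deadness, giving misclassification probability $o(1)$. The hard-sample claim is where the argument becomes delicate: on $(\mX_{\text{hard}},y=+)$, the common-feature patches are replaced by noise patches, so $F_+^{(t)}(\mX_{\text{hard}})$ is driven only by subclass-feature patches through $\Psi_{+,c}^{(t)}\le o(1)$, while $F_-^{(t)}(\mX_{\text{hard}})$ receives contributions from the $\approx s^\dagger$ feature-noise patches $\alpha_p^\dagger\vv_-+\vzeta_p$ with $\alpha_p^\dagger\ge 1/\log^4(d)$, plus the high-variance distractor $\vzeta^*$, via the already well-trained $S_-^{*(0)}(\vv_-)$ detectors. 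A careful accounting shows $F_-^{(t)}(\mX_{\text{hard}})\ge\Omega(1)$ with constant probability over the randomness of the hard sample, so misclassification occurs with probability $\Omega(1)$; crucially, because $\Psi_{+,c}^{(t)}$ never exceeds $o(1)$ at any $t\le\poly(d)$, this failure persists for all $t\in[0,\poly(d)]$, not only near $T^*$.

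\textbf{Main obstacle.} The hardest part will be the hard-sample lower bound, which requires simultaneously controlling (a) that the class-$-$ common-feature detectors do fire on feature-noise patches of strength $\alpha_p^\dagger\ge 1/\log^4(d)$ despite the growing bias threshold, (b) that the high-variance noise patch $\vzeta^*$ with $\sigma_{\zeta^*}\gg\sigma_\zeta$ does not, through transient off-diagonal activations, inadvertently inflate the $F_+$ logit, and (c) that the rare bad events in which a ``lucky'' detector neuron flips activation status or an ``unlucky'' neuron briefly turns on contribute only $o(1)$ uniformly over the $\poly(d)$-long time horizon. Items (a) and (c) reduce to threshold comparisons that exploit the parameter budget $s^*\log^5(d)\le k_+$, $\iota^\dagger_{\text{lower}}\ge 1/\log^4(d)$, and $s^\dagger\iota^\dagger_{\text{upper}}\le O(1/\log d)$; item (b) reduces to a Gaussian concentration bound on $\langle\vw_{+,r}^{(t)},\vzeta^*\rangle$ together with the bias suppression, analogous to the noise handling in Phase~I but now with the substantially larger $\sigma_{\zeta^*}$.
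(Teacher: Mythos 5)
Your proposal tracks the paper's strategy closely through the loss-convergence and easy-sample parts: you extend the Phase-I invariants, track the scalar summaries $\Phi_+^{(t)}$ (which matches the paper's $A_{+,r^*}^{*(t)}|S_+^{*(0)}(\vv_+)|$) and $\Psi_{+,c}^{(t)}$, derive the $\Theta(1/k_+)$ suppression, and use the logarithmic saturation of $\Phi$ under the cross-entropy tail to land $s^*\Phi_+^{(T^*)}=\Theta(\log d)$ and $s^*\Psi_{+,c}\le o(1)$. That is exactly the paper's route via Lemmas \ref{lemma: phase II, general}, \ref{lemma: phase II, after T1,1}, \ref{lem: net response, final}, \ref{lemma: sgd phase II, common vs finegrained ratio} and the easy-sample Lemma \ref{lemma: coarse, easy sample mistake prob}.

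The gap is in the hard-sample argument, and it is not merely a matter of polish. You claim $F_-^{(t)}(\mX_{\text{hard}})\ge\Omega(1)$ with constant probability, attributing this to the feature-noise patches firing the trained $S_-^{*(0)}(\vv_-)$ detectors plus the $\vzeta^*$ term. But the paper's parameter budget makes both of these contributions $o(1)$ for \emph{all} $t\le\poly(d)$: even at the end of training, $s^*A_{-,r^*}^{*(t)}|S_-^{*(0)}(\vv_-)|=\Theta(\log d)$, so the feature-noise channel yields roughly $s^\dagger\alpha_p^\dagger|S_-^{*(0)}(\vv_-)|A_{-,r^*}^{*(t)}\le O\bigl(s^\dagger\iota^\dagger_{\mathrm{upper}}\log(d)/s^*\bigr)\le O(1/s^*)=o(1)$ because $s^\dagger\iota^\dagger_{\mathrm{upper}}\le O(1/\log d)$ and $s^*\in\polyln(d)$ of high degree; and the $\vzeta^*$ channel gives at most $O(m\sigma_0\sigma_{\zeta^*}\sqrt{d})=o(1)$ under the given $\sigma_0$. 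So the misclassification cannot come from $F_-$ being large; it must come from comparing two \emph{small} quantities. The paper's Lemma \ref{lemma: coarse, hard sample mistake prob} does exactly this: it decomposes $F_-^{(t)}(\mX_{\text{hard}})-F_+^{(t)}(\mX_{\text{hard}})$ into (i) a deterministic bracket, feature-noise response of $S_-^{*(0)}(\vv_-)$ minus subclass-feature response of $S_+^{(0)}(\vv_{+,c})$, which is positive by the $\log(d)/k_+$ suppression but still $o(1)$, and (ii) a symmetric random bracket driven by $\vzeta^*$ acting on the $(1-o(1))m$ neurons that \emph{never receive any update} (by the non-activation Theorem \ref{thm: sgd, universal nonact properties}), whose sign is $\Omega(1)$-probability favorable. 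Your sketch never invokes the untouched bulk of neurons, which is what makes the $\vzeta^*$ randomness persist under training; and it never handles the early regime $t<t_-$ where the $S_-^{*(0)}(\vv_-)$ detectors have not grown enough to clear the bias on feature-noise patches of strength $\alpha_p^\dagger$ (the paper splits at the time $t_-$ for exactly this reason). Your ``main obstacle'' paragraph flags item (a) but proposes to resolve it with a budget that in fact forces the opposite conclusion ($o(1)$, not $\Omega(1)$), so the resolution is missing rather than merely unstated.

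In short: the convergence and easy-sample parts of your proposal essentially reproduce the paper; the hard-sample part needs to be restructured around a logit \emph{comparison} (not a lower bound on $F_-$), explicitly using (a) the two-regime split at $t_-$, and (b) the $(1-o(1))m$ never-updated neurons as the persistent source of $\vzeta^*$-driven randomness.
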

\begin{proof}
The training loss property follows from Lemma \ref{lemma: phase II, after T1,1} and Lemma \ref{lem: net response, final}. We can set $T^* = T_{1,1}$ or any time beyond it (and upper bounded by $\poly(d)$).

The test accuracy properties follow from Lemma \ref{lemma: coarse, hard sample mistake prob} and Lemma \ref{lemma: coarse, easy sample mistake prob}.

\end{proof}

\subsection{Lemmas}

\begin{lemma}[Phase II, Update Expressions]
\label{lemma: phase II, general}
For any $T_1 \in \poly(d)$, with probability at least $1 - O\left(\frac{mNPk_+t}{\poly(d)} \right)$, during $t \in [T_0, T_1]$, for any $(+,r) \in S_+^{*(0)}(\vv_+)$,
\begin{equation}
\begin{aligned}
& \Delta \vw_{+,r}^{(t)} \\
= & \eta \sum_{n=1}^N \mathbbm{1} \{ y_n = +\}\exp\left\{ - F_+^{(t)}(\mX_n^{(t)}) \right\}  \\
& \times   \frac{\exp(F_-^{(t)}(\mX_n^{(t)}))}{\exp\left(F_-^{(t)}(\mX_n^{(t)})- F_+^{(t)}(\mX_n^{(t)})\right) + 1 } (1\pm s^{*-1/3})\frac{s^*}{NP} \left(\sqrt{1\pm\iota}\vv_{+} + \vzeta_{n,p}^{(t)}\right),
\end{aligned}
\end{equation}
(where $c_n^t$ denotes the subclass index of sample $\mX_n^{(t)}$) and for any $(+,r) \in S_+^{*(0)}(\vv_{+,c})$,
\begin{equation}
\begin{aligned}
& \Delta \vw_{+,r}^{(t)} \\
= & \eta\exp\Bigg\{ - (1\pm s^{*-1/3})\sqrt{1\pm\iota}  \left(1 \pm O\left(\frac{1}{\log^5(d)}\right)\right) s^*\left(A_{+,r^*}^{*(t)}\left\vert S^{*(0)}_+(\vv_+) \right\vert + A_{+,c,r^*}^{*(t)}\left\vert S^{*(0)}_+(\vv_{+,c}) \right\vert \right) \Bigg\}  \\
& \times  \sum_{n=1}^N \mathbbm{1} \{ y_n = (+,c)\} \frac{\exp(F_-^{(t)}(\mX_n^{(t)}))}{\exp\left(F_-^{(t)}(\mX_n^{(t)})- F_+^{(t)}(\mX_n^{(t)})\right) + 1 } (1\pm s^{*-1/3})\frac{s^*}{NP} \left(\sqrt{1\pm\iota}\vv_{+,c} + \vzeta_{n,p}^{(t)}\right),
\end{aligned}
\end{equation}

In fact, for any $\vv\in\{\vv_+\}\cup\{\vv_{+,c}\}_{c=1}^{k_+}$, every neuron in $S_+^{*(0)}(\vv)$ remain activated (on $\vv$-dominated patches) and receive exactly the same updates at every iteration as shown above.

For simpler exposition, for any $(+,r^*) \in S_+^{*(0)}(\vv_+)$, we write $A_{+,r^*}^{*(t)} \coloneqq \langle \vw_{+,r^*}^{(t)}, \vv_+ \rangle$; similarly for $A_{+,c,r^*}^{*(t)} \coloneqq \langle \vw_{+,r^*}, \vv_{+,c} \rangle$ for neurons $(+,r^*) \in S_+^{*(0)}(\vv_{+,c})$.

Moreover, on ``$+$''-class samples, the neural network response satisfies the estimate for every $(+,r^*) \in S_+^{*(0)}(\vv_+)$:
\begin{equation}
\begin{aligned}
    & F_+^{(t)}(\mX_n^{(t)}) \\
    = & (1\pm s^{*-1/3})\sqrt{1\pm\iota}  \left(1 \pm O\left(\frac{1}{\log^5(d)}\right)\right)  \times s^*\left(A_{+,r^*}^{*(t)}\left\vert S^{*(0)}_+(\vv_+) \right\vert + A_{+,c_n^t,r^*}^{*(t)}\left\vert S^{*(0)}_+(\vv_{+,c_n^t}) \right\vert \right),
\end{aligned}
\end{equation}

The same claims hold for the ``$-$'' class neurons (with the class signs flipped).
\end{lemma}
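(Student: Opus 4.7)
The plan is to mirror the induction of Theorem \ref{prop: phase 1 sgd induction} but with care to track the logit factor that is no longer pinned near $1/2$. I would induct on $t \in [T_0, T_1]$; the base case at $t = T_0$ follows directly from Phase I (where the exponential prefactor is still $\Theta(1)$), and the inductive step would be established in three stages, closing with a self-consistency argument for the network response.

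Stage one re-establishes the activation pattern: every $(+,r) \in S_+^{*(0)}(\vv_+)$ activates exactly on $\vv_+$-dominated patches, and analogously for the subclass detectors. Positivity on the correct patches follows because the $\vv_+$-component of $\vw_{+,r}^{(t)}$ (monotonically growing by the induction hypothesis) dominates both $|b_{+,r}^{(t)}|$ and $\langle \vw_{+,r}^{(t)}, \vzeta_{n,p}^{(t)}\rangle$; the latter is controlled by Gaussian inner-product concentration using $\sigma_\zeta = 1/(\log^{10}(d)\sqrt{d})$. Non-activation on the remaining patches follows from Theorem \ref{thm: sgd, universal nonact properties} combined with the bias-update rule $\Delta b_{+,r}^{(t)} = -\|\Delta \vw_{+,r}^{(t)}\|_2/\log^5(d)$, which is calibrated precisely to suppress noise-scale activations while leaving feature-scale activations essentially intact, producing the $1 \pm O(1/\log^5(d))$ factor in the final expression.

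Stage two extracts the update formula. With activations pinned down, the gradient collapses to on-diagonal terms, and the cross-entropy derivative factors as
\begin{equation}
1 - \text{logit}_+^{(t)}(\mX_n^{(t)}) = \exp(-F_+^{(t)}(\mX_n^{(t)})) \cdot \frac{\exp(F_-^{(t)}(\mX_n^{(t)}))}{\exp(F_-^{(t)}(\mX_n^{(t)}) - F_+^{(t)}(\mX_n^{(t)})) + 1},
\end{equation}
yielding the claimed prefactor. Chernoff concentration on the number of $\vv_+$-patches per batch gives the $(1 \pm s^{*-1/3})$ factor, and all $r \in S_+^{*(0)}(\vv_+)$ receive identical updates because the only $r$-dependence enters through the (now uniformly $1$) activation indicator. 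The subclass-detector argument is parallel, except that all contributing samples have $y_n = (+,c)$ and therefore the same $F_+^{(t)}$ value to leading order, allowing the exponential prefactor to be pulled outside the sum. For the response, expanding $F_+^{(t)}(\mX_n^{(t)}) = \sum_{r,p}\sigma(\langle \vw_{+,r}^{(t)}, \vx_{n,p}^{(t)}\rangle + b_{+,r}^{(t)})$ and invoking Theorem \ref{thm: sgd, universal nonact properties} to zero out contributions from neurons outside $S_+^{*(0)}(\vv_+)\cup S_+^{*(0)}(\vv_{+,c_n^t})$, each $\vv_+$-patch contributes $\approx A_{+,r^*}^{*(t)}$ per detector, summing to $s^* A_{+,r^*}^{*(t)}|S^{*(0)}_+(\vv_+)|$; the subclass term is analogous.

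The hard part will be closing the self-consistency loop: the recursion for $A_{+,r^*}^{*(t)}$ and $A_{+,c,r^*}^{*(t)}$ depends on $F_+^{(t)}$, which in turn depends on those same quantities. I must ensure the multiplicative error terms $(1 \pm s^{*-1/3})$, $\sqrt{1\pm\iota}$, and $(1 \pm O(1/\log^5(d)))$ do not compound destructively over $T_1 \in \poly(d)$ iterations, especially as $F_+^{(t)}$ eventually grows to $\Theta(\log d)$ by the end of training, making per-step movement of the subclass neurons extremely small relative to their slowly accumulated noise-patch correlations. The most delicate bookkeeping is verifying that these accumulated noise correlations remain subdominant to the growing $\vv_+$-signal throughout Phase II via union bounds taken uniformly over time, neurons, and samples, which is exactly what produces the $O(mNPk_+ t/\poly(d))$ failure probability in the statement.
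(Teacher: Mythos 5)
Your proposal is correct and follows essentially the same inductive structure as the paper's proof: Phase I gives the base case at $t=T_0$, activation persistence is re-established at each step by combining Theorem~\ref{thm: sgd, universal nonact properties} with the $\|\Delta\vw\|_2/\log^5(d)$ bias calibration, the update formula is extracted by factoring the logit exactly as you write, and the self-consistency between $A^{*(t)}$ and $F^{(t)}$ is closed by conditioning on the inductive hypothesis and union-bounding over time, neurons, and samples to produce the $O(mNPk_+t/\poly(d))$ failure probability. You also correctly identify the one structural asymmetry in the statement — the exponential prefactor can be pulled outside the sum only for subclass detectors, since all samples from subclass $(+,c)$ share the same $F_+^{(t)}$ value to within the stated multiplicative tolerances, whereas the common-feature update must keep $\exp(-F_+^{(t)}(\mX_n^{(t)}))$ inside the sum over $n$ because $c_n^t$ varies.
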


\begin{proof}
In this proof we focus on the neurons in $S_+^{*(0)}(\vv_{+})$; the proof for the update expressions for those in $S_+^{*(0)}(\vv_{+,c})$ are proven in virtually the same way.

\textbf{Base case}, $t = T_0$.

First define $A_{+,r^*}^{*(t)} \coloneqq \langle \vw_{+,r^*}, \vv_+ \rangle$, $(+,r^*) \in S_+^{*(0)}(\vv_+)$; similarly for $A_{+,c,r^*}^{*(t)} \coloneqq \langle \vw_{+,r^*}, \vv_{+,c} \rangle$. Note that the choice of $r^*$ does not really matter, since we know from phase I that every neuron in $S_+^{*(0)}(\vv_+)$ evolve at exactly the same rate, so by the end of phase I, $\|\vw_{+,r}^{(T_0)} - \vw_{+,r'}^{(T_0)}\|_2 \le O(\sigma_0\log(d)) \ll \|\vw_{+,r}^{(T_0)}\|_2$ for any $(+,r), (+,r') \in S_+^{*(0)}(\vv_+)$.

Let $(+,r) \in S_+^{*(0)}(\vv_+)$.
Similar to phase I, consider the update equation
\begin{align}
    \vw_{+,r}^{(t+1)}
    = & \vw_{+,r}^{(t)} + \eta \frac{1}{NP} \times\\
    & \sum_{n=1}^N \Bigg( 
    \mathbbm{1}\{y_n = +\}[1-\text{logit}_+^{(t)}(\mX_n^{(t)})]\sum_{p\in[P]} \sigma'(\langle \vw_{+,r}^{(t)}, \vx_{n,p}^{(t)} \rangle + b_{+,r}^{(t)}) \vx_{n,p}^{(t)}\\
    & + \mathbbm{1}\{y_n = -\} [-\text{logit}_+^{(t)}(\mX_n^{(t)}) ]\sum_{p\in[P]} \sigma'(\langle \vw_{+,r}^{(t)}, \vx_{n,p}^{(t)}  \rangle + b^{(t)}_{+,r})  \vx_{n,p}^{(t)} \Bigg)
\end{align}

For the on-diagonal update expression, we have
\begin{equation}
\begin{aligned}
& \sum_{n=1}^N 
\mathbbm{1}\{y_n = +\}[1-\text{logit}_+^{(t)}(\mX_n^{(t)})]\sum_{p\in[P]} \sigma'(\langle \vw_{+,r}^{(t)}, \vx_{n,p}^{(t)} \rangle + b_{+,r}^{(t)}) \vx_{n,p}^{(t)} \\
= & \sum_{n=1}^N \mathbbm{1} \{ y_n = +\} [1-\text{logit}_+^{(t)}(\mX_n^{(t)})]\\
& \Bigg\{\mathbbm{1}\{|\calP(\mX_n^{(t)}; \vv_{+})|>0\} \Bigg[\sum_{p\in\calP(\mX_n^{(t)}; \vv_{+})} \mathbbm{1}\left\{ \langle \vw_{+,r}^{(t)}, \alpha_{n,p}^{(t)} \vv_{+} + \vzeta_{n,p}^{(t)} \rangle \ge b_{+,r}^{(t)} \right\}  \left(\alpha_{n,p}^{(t)}\vv_{+} + \vzeta_{n,p}^{(t)}\right) \\
& + \sum_{p\notin\calP(\mX_n^{(t)}; \vv_{+})} \mathbbm{1}\left\{\langle \vw_{+,r}^{(t)}, \vx_{n,p}^{(t)} \rangle \ge  b_{+,r}^{(t)}\right\} \vx_{n,p}^{(t)} \Bigg] \\
& + \mathbbm{1}\{|\calP(\mX_n^{(t)}; \vv_{+})|=0\} \sum_{p\in[P]} \mathbbm{1}\left\{\langle \vw_{+,r}^{(t)}, \vx_{n,p}^{(t)} \rangle \ge  b_{+,r}^{(t)}\right\} \vx_{n,p}^{(t)} \Bigg\}
\end{aligned}
\end{equation}
Following from Theorem \ref{prop: phase 1 sgd induction} and \ref{thm: sgd, universal nonact properties}, the neurons' non-activation on the patches that do not contain $\vv_{+}$, and activation on the $\vv_+$-dominated patches hold with probability at least $1 - O\left(\frac{mNPk_+}{\poly(d)}\right)$ at time $T_0$. Therefore, the above update expression reduces to
\begin{equation}
\begin{aligned}
& \sum_{n=1}^N \mathbbm{1} \{ y_n = +, |\calP(\mX_n^{(t)}; \vv_{+})|>0\} [1-\text{logit}_+^{(t)}(\mX_n^{(t)})] \sum_{p\in\calP(\mX_n^{(t)}; \vv_{+})} \left(\alpha_{n,p}^{(t)}\vv_{+} + \vzeta_{n,p}^{(t)}\right)
\end{aligned}
\end{equation}
Note that for samples $\mX_n^{(t)}$ with $y_n = +$,

\begin{equation}
\begin{aligned}
    [1-\text{logit}_+^{(t)}(\mX_n^{(t)})] 
    = & \frac{\exp(F_-^{(t)}(\mX_n^{(t)}))}{\exp(F_-^{(t)}(\mX_n^{(t)})) + \exp(F_+^{(t)}(\mX_n^{(t)}))} \\
\end{aligned}
\end{equation}

Now we need to estimate the network response $F_+^{(t)}(\mX_n^{(t)})$. With probability at least $1 - \exp(-\Omega(s^{*1/3}))$, we have the upper bound (let $(+,c_n^t)$ denote the subclass which sample $\mX_n^{(t)}$ belongs to):
\begin{equation}
\begin{aligned}
    & F_+^{(t)}(\mX_n^{(t)}) \\
    \le & \sum_{p\in\calP(\mX^{(t)}; \vv_+)} \sum_{(+,r)\in S^{(0)}_+(\vv_+)} \langle \vw_{+,r}^{(t)}, \vv_+ + \vzeta_{n,p}^{(t)} \rangle + b_{+,r}^{(t)} \\
    & +  \sum_{p\in\calP(\mX^{(t)}; \vv_{+,c_n^t})} \sum_{(+,r)\in S^{(0)}_+(\vv_{+,c_n^t})} \langle \vw_{+,r}^{(t)}, \vv_{+,c_n^t} + \vzeta_{n,p}^{(t)} \rangle + b_{+,r}^{(t)} \\
    \le & (1+s^{*-1/3})\sqrt{1+\iota} s^* \left(1 + O\left(\frac{1}{\log^{9}(d)}\right)\right) \left(A_{+,r^*}^{*(t)}\left\vert S^{(0)}_+(\vv_+) \right\vert + A_{+,c_n^t,r^*}^{*(t)}\left\vert S^{(0)}_+(\vv_{+,c_n^t}) \right\vert \right)
\end{aligned}
\end{equation}

The second inequality is true since $\max_r \langle \vw_{+,r}^{(t)}, \vv_+ \rangle \le A_{+,r^*}^{*(t)} + O(\sigma_0\log(d))$, and for any $(+,r)\in S^{(0)}_+(\vv_+)$, $\vert\langle \vw_{+,r}^{(t)}, \vzeta_{n,p}^{(t)}\rangle\vert \le O(1/\log^{9}(d))A_{+,r^*}^{*(t)}$. The bias value is negative (and so less than $0$). 

To further refine the bound, we recall $\left\vert S^{*(0)}_+(\vv) \right\vert / \left\vert S^{*(0)}_+(\vv') \right\vert, \left\vert S^{*(0)}_+(\vv) \right\vert / \left\vert S^{(0)}_+(\vv') \right\vert = 1 \pm O(1/\log^5(d))$. 

Therefore, we obtain the bound
\begin{equation}
\begin{aligned}
    F_+^{(t)}(\mX_n^{(t)}) 
    \le & (1+s^{*-1/3})\sqrt{1+\iota} s^* \left(1 + O\left(\frac{1}{\log^5(d)}\right)\right) \left( 1 + O\left( \frac{1}{\log^5(d)} \right) \right) \\
    & \times \left(A_{+,r^*}^{*(t)}\left\vert S^{*(0)}_+(\vv_+) \right\vert + A_{+,c_n^t,r^*}^{*(t)}\left\vert S^{*(0)}_+(\vv_{+,c_n^t}) \right\vert \right)
\end{aligned}
\end{equation}

Following a similar argument, we also have the lower bound
\begin{equation}
\begin{aligned}
    & F_+^{(t)}(\mX_n^{(t)}) \\
    \ge & \sum_{p\in\calP(\mX^{(t)}; \vv_+)} \sum_{(+,r)\in S^{*(0)}_+(\vv_+)} \sigma\left(\langle \vw_{+,r}^{(t)}, \vv_+ + \vzeta_{n,p}^{(t)} \rangle + b_{+,r}^{(t)}\right) \\
    & +  \sum_{p\in\calP(\mX^{(t)}; \vv_{+,c_n^t})} \sum_{(+,r)\in S^{*(0)}_+(\vv_{+,c_n^t})} \sigma\left(\langle \vw_{+,r}^{(t)}, \vv_{+,c_n^t} + \vzeta_{n,p}^{(t)} \rangle + b_{+,r}^{(t)} \right)\\
    \ge & (1-s^{*-1/3})\sqrt{1-\iota} s^* \left(1 - O\left(\frac{1}{\log^5(d)}\right)\right) \left( 1 - O\left( \frac{1}{\log^5(d)} \right) \right)  \\
    & \times \left(A_{+,r^*}^{*(t)}\left\vert S^{*(0)}_+(\vv_+) \right\vert + A_{+,c_n^t,r^*}^{*(t)}\left\vert S^{*(0)}_+(\vv_{+,c_n^t}) \right\vert \right)
\end{aligned}
\end{equation}

The neurons in $ S^{*(0)}_+(\vv_+)$ have to activate, therefore they serve a key role in the lower bound, the bias bound for them is simply $-A_{+,r^*}^{*(t)}\Theta(1/\log^5(d))$; the neurons in $S^{(0)}_+(\vv_{+,c})$ contribute at least $0$ due to the ReLU activation; the rest of the neurons do not activate. The same reasoning holds for the $S^{*(0)}_+(\vv_{+,c})$.

Knowing that neurons in $S^{*(0)}_+(\vv_+)$ cannot activate on the patches in samples belonging to the ``$-$'' class, now we may write the update expression for every $(+,r)\in S^{*(t)}_+(\vv_+)$ as (their updates are identical, same as in phase I):
\begin{equation}
\begin{aligned}
& \Delta \vw_{+,r}^{(t)} \\
= & \frac{\eta}{NP}\sum_{n=1}^N 
\mathbbm{1}\{y_n = +\}[1-\text{logit}_+^{(t)}(\mX_n^{(t)})]\sum_{p\in[P]} \sigma'(\langle \vw_{+,r}^{(t)}, \vx_{n,p}^{(t)} \rangle + b_{+,r}^{(t)}) \vx_{n,p}^{(t)} \\
= &\frac{\eta}{NP} \sum_{n=1}^N \mathbbm{1} \{ y_n = +, |\calP(\mX_n^{(t)}; \vv_{+})|>0\} \exp(-F_+^{(t)}(\mX_n^{(t)}))  \\
& \times \frac{\exp(F_-^{(t)}(\mX_n^{(t)}))}{\exp\left(F_-^{(t)}(\mX_n^{(t)})- \exp(F_+^{(t)}(\mX_n^{(t)}))\right) + 1 }\sum_{p\in\calP(\mX_n^{(t)}; \vv_{+})}  \left(\alpha_{n,p}^{(t)}\vv_{+} + \vzeta_{n,p}^{(t)}\right) \\
= & \eta  \sum_{n=1}^N  \mathbbm{1} \{ y_n = +\}\exp\Bigg\{ - (1+s^{*-1/3})\sqrt{1+\iota} s^* \left(1 + O\left(\frac{1}{\log^5(d)}\right)\right) \\
& \times \left(A_{+,r^*}^{*(t)}\left\vert S^{*(0)}_+(\vv_+) \right\vert + A_{+,c_n^t,r^*}^{*(t)}\left\vert S^{*(0)}_+(\vv_{+,c_n^t}) \right\vert \right) \Bigg\} \\
& \times  \frac{\exp(F_-^{(t)}(\mX_n^{(t)}))}{\exp\left(F_-^{(t)}(\mX_n^{(t)})- F_+^{(t)}(\mX_n^{(t)})\right) + 1 } (1\pm s^{*-1/3})\frac{s^*}{NP} \left(\sqrt{1\pm\iota}\vv_{+} + \vzeta_{n,p}^{(t)}\right)
\label{expression: phase II, common neuron update estimate}
\end{aligned}
\end{equation}

This concludes the proof of the base case.

\textcolor{blue}{\textbf{Induction step}}. Assume the statements hold for time period $[T_0, t]$, prove for time $t+1$.

At step $t+1$, based on the induction hypothesis, we know that with probability at least $1 - O\left(\frac{mNPk_+t}{\poly(d)} \right)$, during time $\tau\in[T_0, t]$, for any $(+,r) \in S_+^{*(0)}(\vv_+)$,
\begin{equation}
\begin{aligned}
& \Delta \vw_{+,r}^{(\tau)} \\
= & \eta \sum_{n=1}^N \mathbbm{1} \{ y_n = +\} \exp\Bigg\{ - (1+s^{*-1/3})\sqrt{1+\iota} s^* \left(1 + O\left(\frac{1}{\log^5(d)}\right)\right) \\
& \times \left(A_{+,r^*}^{*(\tau)}\left\vert S^{*(0)}_+(\vv_+) \right\vert + A_{+,c_n^t,r^*}^{*(\tau)}\left\vert S^{*(0)}_+(\vv_{+,c_n^t}) \right\vert \right) \Bigg\} \\
& \times   \frac{\exp(F_-^{(\tau)}(\mX_n^{(\tau)}))}{\exp\left(F_-^{(\tau)}(\mX_n^{(\tau)})- \exp(F_+^{(\tau)}(\mX_n^{(\tau)}))\right) + 1 } (1\pm s^{*-1/3})\frac{s^*}{NP} \left(\sqrt{1\pm\iota}\vv_{+} + \vzeta_{n,p}^{(\tau)}\right)
\end{aligned}
\end{equation}
and for the bias,
\begin{equation}
\begin{aligned}
& \Delta b_{+,r}^{(\tau)} \\
\le & - \eta \frac{1}{\log^5(d)} \sum_{n=1}^N \mathbbm{1} \{ y_n = +\} \exp\Bigg\{ - (1+s^{*-1/3})\sqrt{1+\iota} s^* \left(1 + O\left(\frac{1}{\log^5(d)}\right)\right) \\
& \times \left(A_{+,r^*}^{*(\tau)}\left\vert S^{*(0)}_+(\vv_+) \right\vert + A_{+,c_n^t,r^*}^{*(\tau)}\left\vert S^{*(0)}_+(\vv_{+,c_n^t}) \right\vert \right) \Bigg\}  \\
& \times  (1 - s^{*-1/3}) \frac{s^*}{NP} \left(\sqrt{1-\iota} - \frac{1}{\log^{10}(d)}\right) \frac{\exp(F_-^{(\tau)}(\mX_n^{(\tau)}))}{\exp\left(F_-^{(\tau)}(\mX_n^{(\tau)})- \exp(F_+^{(\tau)}(\mX_n^{(\tau)}))\right) + 1 }
\end{aligned}
\end{equation}

Conditioning on the high-probability events of the induction hypothesis,

\begin{equation}
\begin{aligned}
& \vw_{+,r}^{(t+1)} \\
= & \vw_{+,r}^{(T_0)} \\
& + \eta \sum_{\tau=T_0}^{t} \sum_{n=1}^N \mathbbm{1} \{ y_n = +\}\exp\Bigg\{ - (1+s^{*-1/3})\sqrt{1+\iota} s^* \left(1 + O\left(\frac{1}{\log^5(d)}\right)\right) \\
& \times \left(A_{+,r^*}^{*(\tau)}\left\vert S^{*(0)}_+(\vv_+) \right\vert + A_{+,c_n^t,r^*}^{*(\tau)}\left\vert S^{*(0)}_+(\vv_{+,c_n^t}) \right\vert \right) \Bigg\}  \\
& \times \frac{\exp(F_-^{(\tau)}(\mX_n^{(\tau)}))}{\exp\left(F_-^{(\tau)}(\mX_n^{(\tau)})- \exp(F_+^{(\tau)}(\mX_n^{(\tau)}))\right) + 1 } (1\pm s^{*-1/3})\frac{s^*}{NP} \left(\sqrt{1\pm\iota}\vv_{+} + \vzeta_{n,p}^{(\tau)}\right)
\end{aligned}
\end{equation}

It follows that, with probability at least $1 - O\left(\frac{mNP}{\poly(d)} \right)$, for all $\vv_+$-dominated patch $\vx_{n,p}^{(t+1)}$,

\begin{equation}
\begin{aligned}
& \langle \vw_{+,r}^{(t+1)}, \vx_{n,p}^{(t+1)} \rangle + b_{+,r}^{(t+1)}\\
= & \langle \vw_{+,r}^{(T_0)}, \sqrt{1\pm\iota}\vv_+ + \vzeta_{n,p}^{(t+1)} \rangle + b_{+,r}^{(T_0)} \\
& + \eta \sum_{\tau=T_0}^{t} \sum_{n=1}^N \mathbbm{1} \{ y_n = +\} \exp\Bigg\{ - (1+s^{*-1/3})\sqrt{1+\iota} s^* \left(1 + O\left(\frac{1}{\log^5(d)}\right)\right) \\
& \times \left(A_{+,r^*}^{*(\tau)}\left\vert S^{*(0)}_+(\vv_+) \right\vert + A_{+,c_n^t,r^*}^{*(\tau)}\left\vert S^{*(0)}_+(\vv_{+,c_n^t}) \right\vert \right) \Bigg\}  \\
& \times  \frac{\exp(F_-^{(\tau)}(\mX_n^{(\tau)}))}{\exp\left(F_-^{(\tau)}(\mX_n^{(\tau)})- F_+^{(\tau)}(\mX_n^{(\tau)})\right) + 1 } (1\pm s^{*-1/3})\frac{s^*}{NP} \\
& \times \langle \sqrt{1\pm\iota}\vv_{+} + \vzeta_{n,p}^{(\tau)}, \sqrt{1\pm\iota}\vv_+ + \vzeta_{n,p}^{(t+1)} \rangle + \Delta b_{+,r}^{(\tau)}\\
\ge & \; 0 \\
& + \eta \sum_{\tau=T_0}^{t} \sum_{n=1}^N \mathbbm{1} \{ y_n = +\} \exp\Bigg\{ - (1+s^{*-1/3})\sqrt{1+\iota} s^* \left(1 + O\left(\frac{1}{\log^5(d)}\right)\right) \\
& \times \left(A_{+,r^*}^{*(\tau)}\left\vert S^{*(0)}_+(\vv_+) \right\vert + A_{+,c_n^t,r^*}^{*(\tau)}\left\vert S^{*(0)}_+(\vv_{+,c_n^t}) \right\vert \right) \Bigg\} \\
& \times  \frac{\exp(F_-^{(\tau)}(\mX_n^{(\tau)}))}{\exp\left(F_-^{(\tau)}(\mX_n^{(\tau)})- F_+^{(\tau)}(\mX_n^{(\tau)})\right) + 1 } (1\pm s^{*-1/3})\frac{s^*}{NP} \\
& \times \left(1-\iota - O\left( \frac{1}{\log^5(d)}\right)\right) \\
> & \; 0
\end{aligned}
\end{equation}

Therefore the neurons $(+,r) \in S_+^{*(0)}(\vv_+)$ activate on the $\vv_+$-dominated patches $\vx_{n,p}^{(t+1)}$. We also know that they cannot activate on patches that are not dominated by $\vv_+$ by Theorem \ref{thm: sgd, universal nonact properties}. Following a similar derivation to the base case, we arrive at the result that, conditioning on the events of the induction hypothesis, with probability at least $1 - O\left(\frac{mNPk_+}{\poly(d)} \right)$, for all $(+,r) \in S_+^{*(0)}(\vv_+)$,
\begin{equation}
\begin{aligned}
& \Delta \vw_{+,r}^{(t+1)} \\
= & \eta  \sum_{n=1}^N \mathbbm{1} \{ y_n = +\} \exp\Bigg\{ - (1+s^{*-1/3})\sqrt{1+\iota} s^* \left(1 + O\left(\frac{1}{\log^5(d)}\right)\right) \\
& \times \left(A_{+,r^*}^{*(t+1)}\left\vert S^{*(0)}_+(\vv_+) \right\vert + A_{+,c_n^t,r^*}^{*(t+1)}\left\vert S^{*(0)}_+(\vv_{+,c_n^t}) \right\vert \right) \Bigg\}  \\
& \times  (1\pm s^{*-1/3})\frac{s^*}{NP} \frac{\exp(F_-^{(t+1)}(\mX_n^{(t+1)}))}{\exp\left(F_-^{(t+1)}(\mX_n^{(t+1)})- F_+^{(t+1)}(\mX_n^{(t+1)})\right) + 1 } \left(\sqrt{1\pm\iota}\vv_{+} + \vzeta_{n,p}^{(t+1)}\right)
\end{aligned}
\end{equation}

Consequently, with probability at least $1 - O\left(\frac{mNP}{\poly(d)} \right)$,
\begin{equation}
\begin{aligned}
& \Delta b_{+,r}^{(t+1)}\\
\le & - \frac{1}{\log^5(d)} \sum_{n=1}^N \mathbbm{1} \{ y_n = +\}\eta \exp\Bigg\{ - (1+s^{*-1/3})\sqrt{1+\iota} s^* \left(1 + O\left(\frac{1}{\log^5(d)}\right)\right) \\
& \times \left(A_{+,r^*}^{*(t+1)}\left\vert S^{*(0)}_+(\vv_+) \right\vert + A_{+,c_n^t,r^*}^{*(t+1)}\left\vert S^{*(0)}_+(\vv_{+,c_n^t}) \right\vert \right) \Bigg\}  \\
& \times   \frac{\exp(F_-^{(t+1)}(\mX_n^{(t+1)}))}{\exp\left(F_-^{(t+1)}(\mX_n^{(t+1)})- F_+^{(t+1)}(\mX_n^{(t+1)})\right) + 1 } (1 - s^{*-1/3})\frac{s^*}{NP} \\
& \times \left(1-\iota - O\left( \frac{1}{\log^9(d)}\right)\right)\\
\end{aligned}
\end{equation}

Utilizing the definition of conditional probability, we conclude that the expressions for $\Delta \vw_{+,r}^{(\tau)}$ and $\Delta b_{+,r}^{(t+1)}$ are indeed as described in the theorem during time $\tau \in [T_0, t+1]$ with probability at least $\left(1 - O\left(\frac{mNPk_+t}{\poly(d)} \right)\right) \times \left(1 - O\left(\frac{mNPk_+}{\poly(d)} \right)\right) \ge 1 - O\left(\frac{mNPk_+(t+1)}{\poly(d)} \right)$.

Moreover, based on the expression of $\Delta \vw_{+,r}^{(\tau)}$ and $\Delta b_{+,r}^{(t+1)}$, following virtually the same argument as in the base case, we can estimate the network output for any $(\mX_n^{(t+1)}, y_n=+)$:
\begin{equation}
\begin{aligned}
    F_+^{(t+1)}(\mX_n^{(t+1)}) 
    = & (1\pm s^{*-1/3})\sqrt{1\pm\iota}  \left(1 \pm O\left(\frac{1}{\log^5(d)}\right)\right) s^* \\
    & \times \left(A_{+,r^*}^{*(t)}\left\vert S^{*(0)}_+(\vv_+) \right\vert + A_{+,c_n^t,r^*}^{*(t)}\left\vert S^{*(0)}_+(\vv_{+,c_n^t}) \right\vert \right)
\end{aligned}
\end{equation}

\end{proof}

\begin{lemma}
\label{lemma: phase II, after T1,1}
Define time $T_{1,1}$ to be the first point in time which the following identity holds on all $\mX_n^{(t)}$ belonging to the ``$+$'' class:
\begin{equation}
    \frac{\exp(F_-^{(t)}(\mX_n^{(t)}))}{\exp(F_-^{(t)}(\mX_n^{(t)}) - F_+^{(t)}(\mX_n^{(t)})) + 1} \ge 1 - O\left( \frac{1}{\log^5(d)}\right)
\end{equation}
Then $T_{1,1} \le \poly(d)$, and for all $t \in [T_{1,1}, T_1]$, the above holds. The following also holds for this time period:
\begin{equation}
    [1 - \logit_+^{(t)}(\mX_n^{(t)})] \le O\left( \frac{1}{\log^5(d)}\right)
\end{equation}

The same results also hold with the class signs flipped.
\end{lemma}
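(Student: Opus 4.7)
The plan is to reduce the stated saturation inequality to a threshold condition on $F_+^{(t)}(\mX_n^{(t)})$ alone, then use the sharp update expressions of Lemma~\ref{lemma: phase II, general} to bound the time at which the common-feature response crosses that threshold. First, by multiplying numerator and denominator by $\exp(F_+^{(t)})$, I would rewrite the target quantity as
\begin{equation}
Q(t) \;:=\; \frac{\exp(F_-^{(t)}(\mX_n^{(t)}))}{\exp(F_-^{(t)}(\mX_n^{(t)}) - F_+^{(t)}(\mX_n^{(t)})) + 1} \;=\; \frac{\exp(F_+^{(t)}(\mX_n^{(t)}))\exp(F_-^{(t)}(\mX_n^{(t)}))}{\exp(F_+^{(t)}(\mX_n^{(t)})) + \exp(F_-^{(t)}(\mX_n^{(t)}))}.
\end{equation}
I next show $F_-^{(t)}(\mX_n^{(t)}) = o(1/\log^5(d))$ uniformly over positive samples by invoking Theorem~\ref{thm: sgd, universal nonact properties}: each patch of $\mX_n^{(t)}$ is either dominated by $\vv_+$ or some $\vv_{+,c}$ (both orthogonal to the $\vv_-,\vv_{-,c'}$ directions along which the ``$-$''-class neurons have developed) or is a noise patch thresholded by the negative bias, so no ``$-$''-class neuron activates on $\mX_n^{(t)}$; a union bound over the $N \in \poly(d)$ samples keeps this uniform. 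Under this bound, the condition $Q(t)\ge 1 - O(1/\log^5(d))$ is equivalent to the scalar threshold $F_+^{(t)}(\mX_n^{(t)}) \ge (5+o(1))\log\log(d)$, and the auxiliary conclusion $1-\text{logit}_+^{(t)}(\mX_n^{(t)}) = (1+\exp(F_+^{(t)} - F_-^{(t)}))^{-1} \le O(1/\log^5(d))$ follows from the same inequality.

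Second, I would track the scalar $u(t) := s^* \lvert S_+^{*(0)}(\vv_+)\rvert\, A_{+,r^*}^{*(t)}$. By Lemma~\ref{lemma: phase II, general}, $F_+^{(t)}(\mX_n^{(t)}) \ge (1-o(1))\,u(t)$ after dropping the nonnegative subclass-feature contribution, and the update expression there combined with the identity $\exp(-F_+)\cdot \frac{\exp(F_-)}{\exp(F_--F_+)+1} = 1 - \text{logit}_+^{(t)}$ yields
\begin{equation}
\Delta A_{+,r^*}^{*(t)} \;=\; \eta\sum_{n:\,y_n=+} \left(1-\text{logit}_+^{(t)}(\mX_n^{(t)})\right) (1\pm s^{*-1/3}) \frac{s^*}{NP}\sqrt{1\pm\iota} \;+\; (\text{noise along } \vv_+^\perp).
\end{equation}
Multiplying by $s^*\lvert S_+^{*(0)}(\vv_+)\rvert$ and using $1-\text{logit}_+^{(t)} \ge \tfrac{1}{2}\exp(-u(t))$ in the pre-saturation regime gives the key discrete comparison $\exp(u(t+1)) - \exp(u(t)) \ge \Omega\!\left(\eta s^{*2}\lvert S_+^{*(0)}(\vv_+)\rvert/P\right)$. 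Telescoping from $T_0$ and solving for when $\exp(u(t))\ge \log^5(d)$ gives $T_{1,1}-T_0 \le O\!\left(P\log^5(d)/(\eta s^{*2}\lvert S_+^{*(0)}(\vv_+)\rvert)\right)$, which lies in $\poly(d)$ given $\eta \ge 1/\poly(d)$, $s^*\in\polyln(d)$, $\lvert S_+^{*(0)}(\vv_+)\rvert = \Theta(d^{c_0}/\sqrt{\log d})$, and $P\in\poly(d)$.

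Third, for stability on $[T_{1,1},T_1]$, the dominant term of $\Delta \vw_{+,r}^{(t)}$ in Lemma~\ref{lemma: phase II, general} is a nonnegative multiple of $\vv_+$, and the residual noise component along $\vv_+$ forms a mean-zero martingale with per-step variance $O(\eta^2\sigma_\zeta^2 s^*/(NP^2))$. Summing over the remaining $t\le \poly(d)$ iterations, the total deviation of $u(t)$ from its running maximum is $o(1/\polyln(d))$ with high probability (Azuma-type concentration), so $u(t)$ is essentially monotone and remains above $5\log\log(d)$; the threshold condition therefore persists for every $t\in[T_{1,1},T_1]$, and the $1-\text{logit}_+^{(t)} \le O(1/\log^5(d))$ statement follows.

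The main obstacle will be the ODE-style comparison in the second paragraph: $1-\text{logit}_+^{(t)}$ collapses exponentially in $u(t)$ right when $u$ approaches the threshold, so the effective learning rate on $A_{+,r^*}^{*(t)}$ becomes tiny exactly when we need it most; the linearization $\exp(u(t+1))-\exp(u(t))\approx \exp(u(t))\,\Delta u(t)$ must be justified carefully so that the $(1\pm o(1))$ multiplicative factors inherited from Lemma~\ref{lemma: phase II, general} are absorbed into constants without blowing up the polynomial bound on $T_{1,1}$. A secondary difficulty is propagating the high-probability events (concentration of the number of $\vv_+$-dominated patches per batch, and the uniform $F_-^{(t)}(\mX_n^{(t)})=o(1)$ bound across all positive samples) consistently over all $\poly(d)$ iterations, which is handled by enlarging the failure set in Lemma~\ref{lemma: phase II, general} by $O(mNPk_+T_1/\poly(d))$, which still leaves the total failure probability $o(1)$.
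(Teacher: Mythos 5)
Your proposal is correct and follows the same overall strategy as the paper: reduce the saturation condition to a scalar threshold on $F_+^{(t)}$ via the negligibility of $F_-^{(t)}$ on positive samples, then bound the time to cross the threshold using the update rates from Lemma~\ref{lemma: phase II, general}, and finally use monotonicity to propagate the bound forward to $T_1$. The main stylistic difference is in the middle step: you telescope the comparison $\exp(u(t+1)) - \exp(u(t)) \ge \Omega(\eta s^{*2}\vert S_+^{*(0)}(\vv_+)\vert/P)$ (a clean discrete analogue of $\tfrac{d}{dt}\exp(u) = \text{const}$), which yields a tighter time bound $O(P\log^5(d)/(\eta s^{*2}\vert S_+^{*(0)}(\vv_+)\vert))$, whereas the paper settles for the cruder worst-case estimate $O(NP\log^5(d)/(\eta s^*))$; both land in $\poly(d)$. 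For the stability on $[T_{1,1},T_1]$ you invoke an Azuma-type martingale bound, but this is actually overkill: the high-probability events of Lemma~\ref{lemma: phase II, general} already control the per-step noise projection $\langle\Delta\vzeta^{(t)}_{+,r},\vv_+\rangle$ to be strictly smaller in magnitude than the positive drift along $\vv_+$ at every iteration, so $u(t)$ is deterministically nondecreasing conditional on those events, which is what the paper implicitly uses (``asymptotically monotonically decreasing loss''). Your identity $\exp(-F_+)\cdot Q(t) = 1-\logit_+^{(t)}$ is exactly the algebraic reduction the paper also relies on, so the two arguments coincide in substance.
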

\begin{proof}
We first note that, the training loss $[1 - \logit_+^{(t)}(\mX_n^{(t)})]$ on samples belonging to the ``$+$'' class at any time during $t \in [T_0, T_1]$ is, asymptotically speaking, monotonically decreasing from $\frac{1}{2} - O(d^{-1})$. This can be easily proven by observing the way $s^*\left(A_{+,r^*}^{*(t)}\left\vert S^{*(0)}_+(\vv_+) \right\vert + A_{+,c,r^*}^{*(t)}\left\vert S^{*(0)}_+(\vv_{+,c}) \right\vert \right)$ monotonically increases from the proof of Lemma \ref{lemma: phase II, general}: before $F_+^{(t)}(\mX_n^{(t)}) \ge \log\log^5(d)$ on all $\mX_n^{(t)}$ belonging to the ``$+$'' class, there must be some samples $\mX_n^{(t)}$ on which
\begin{equation}
\begin{aligned}
    [1 - \logit_+^{(t)}(\mX_n^{(t)})] 
    = & \frac{\exp(F_-^{(t)}(\mX_n^{(t)}))}{\exp(F_-^{(t)}(\mX_n^{(t)})) + \exp(F_+^{(t)}(\mX_n^{(t)}))} \\
    \ge & \frac{1 - O(\sigma_0\log(d)s^*d^{c_0})}{1 + O(\sigma_0\log(d)s^*d^{c_0}) + \log^5(d)} \\ 
    \ge & \Omega\left(\frac{1}{\log^5(d)}\right).
\end{aligned}
\end{equation}
Therefore, by the update expressions in the proof of Lemma \ref{lemma: phase II, general}, $F_+^{(t)}(\mX_n^{(t)})$ can reach $\log\log^5(d)$ in time at most $O\left( \frac{NP\log^5(d)}{\eta s^*} \right) \in \poly(d)$ (in the worst case scenario). At time $T_{1,1}$ and beyond,
\begin{equation}
\begin{aligned}
    1 - \frac{\exp(F_-^{(t)}(\mX_n^{(t)}))}{\exp(F_-^{(t)}(\mX_n^{(t)}) - F_+^{(t)}(\mX_n^{(t)})) + 1}
    \le & 1 - \frac{\exp(1 - O(\sigma_0d^{c_0}s^*))}{\exp(1 + O(\sigma_0d^{c_0}s^*))\frac{1}{\log^5(d)} + 1}\\
    \le & O\left( \frac{1}{\log^5(d)}\right).
\end{aligned}
\end{equation}
\end{proof}

\begin{lemma}
\label{lem: net response, final}
Denote $C = \eta \frac{s^*}{2k_+P}$, and write (for any $c\in[k_+]$)
\begin{equation}
    A_c(t) = s^*\left(A_{+,r^*}^{*(t)}\left\vert S^{*(0)}_+(\vv_+) \right\vert + A_{+,c,r^*}^{*(t)}\left\vert S^{*(0)}_+(\vv_{+,c}) \right\vert \right)
\end{equation}
(see Lemma \ref{lemma: phase II, general} for definition of $A_{\cdot}^{*(t)}$). Define $t_{c,0} = \exp(A_c(T_{1,1}))$. We write $A(t)$ and $t_0$ below for cleaner notations.

Then with probability at least $1 - o(1)$, during $t \in [T_{1,1}, T_1]$, 
\begin{equation}
    A(t) = \log(C(t - T_{1,1}) +t_0) + E(t)
\end{equation}
where $|E(t)| \le O\left(\frac{1}{\log^4(d)}\right) \sum_{\tau=C^{-1}t_0}^{t-T_{1,1}+C^{-1}t_0} \frac{1}{\tau} \le O\left(\frac{\log(t) - \log(C^{-1}t_0)}{\log^4(d)}\right)$.

The same results also hold with the class signs flipped.
\end{lemma}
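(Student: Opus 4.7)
The plan is to reduce the dynamics of $A(t)$ to a scalar difference equation that can be integrated in closed form, and then bound the accumulated error. First, for $t \ge T_{1,1}$, Lemma~\ref{lemma: phase II, after T1,1} lets me replace the logit factor in the updates of Lemma~\ref{lemma: phase II, general} by $1 - O(1/\log^5(d))$. The per-step increments of the scalars $A_{+,r^*}^{*(t)}$ and $A_{+,c,r^*}^{*(t)}$ can then be written cleanly as sums of $\exp(-A_{c'}(t))$ terms whose coefficients equal $C$ up to a $(1 \pm O(1/\log^5(d)))$ factor (after the combinatorial counts of $\calP(\mX_n; \vv_+)$ and $\calP(\mX_n; \vv_{+,c})$ concentrate via the symmetric batch structure). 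Substituting into the definition of $A_c(t)$ yields a recurrence of the form
\begin{equation*}
A_c(t+1) - A_c(t) = C \exp(-A_c(t))\,(1 + \varepsilon_{c,t}),
\end{equation*}
where $|\varepsilon_{c,t}| \le O(1/\log^5(d))$ absorbs the logit error, the concentration factor $(1 \pm s^{*-1/3})$, the data-noise factor $\sqrt{1 \pm \iota}$, the constants $s^{*}|S_+^{*(0)}(\vv)|$ coming from the two neuron groups, and any cross-terms from $\exp(-A_{c'})$ for $c' \ne c$.

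The second step is to reduce this coupled system to a single scalar equation. At time $T_{1,1}$, the symmetric batch structure and Lemma~\ref{lemma: phase II, general} imply that the spread $\max_{c,c'} |A_c(T_{1,1}) - A_{c'}(T_{1,1})|$ is $o(1)$. I would then propagate this spread estimate inductively: the common-feature contribution to $\Delta A_c$ is identical across subclasses (because $(+,r^*) \in S_+^{*(0)}(\vv_+)$ receives updates from all $+$-samples), and the fine-grained contribution differs between $c$ and $c'$ only by an $O(1/\log^5(d)) \cdot \exp(-A)$ amount, so the spread remains much smaller than $A(t)$ itself throughout $[T_{1,1}, T_1]$. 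This justifies dropping the subscript $c$ and working with a single scalar recurrence for $A(t)$.

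Third, I integrate the recurrence. Because $\Delta A(t) = o(1)$ uniformly (the step size is polynomially small and $A$ is $\Omega(1)$ after $T_{1,1}$), exponentiation and a first-order Taylor expansion give
\begin{equation*}
\exp(A(t+1)) - \exp(A(t)) = C(1 + \varepsilon_t) + O\!\left(C^2/\exp(A(t))\right).
\end{equation*}
Telescoping from $T_{1,1}$ to $t$ yields $\exp(A(t)) = t_0 + C(t - T_{1,1}) + R(t)$. Substituting the leading-order estimate $\exp(A(\tau)) \approx t_0 + C(\tau - T_{1,1})$ into the quadratic-remainder sum converts it to a harmonic-type sum $\sum_\tau 1/(C^{-1}t_0 + \tau - T_{1,1})$; the multiplicative-error sum $\sum_\tau C \varepsilon_\tau$ is bounded by the same harmonic sum times $O(1/\log^5(d))$ after dividing by the denominator $t_0 + C(t-T_{1,1})$ inside $\log$. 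Taking the logarithm via $\log(x + y) = \log(x) + y/x + O((y/x)^2)$ with $x = t_0 + C(t-T_{1,1})$ produces $A(t) = \log(C(t - T_{1,1}) + t_0) + E(t)$ with precisely the reciprocal-weighted error bound stated in the lemma after the change of variables $\tau \mapsto C^{-1}t_0 + (\tau - T_{1,1})$.

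The main obstacle will be making the symmetry control of Step~2 precise: if $\max_{c,c'} |A_c - A_{c'}|$ grows beyond $O(1/\log^5(d)) \cdot A(t)$, then the right-hand side of the recurrence depends nontrivially on the whole profile $\{A_{c'}\}$ via $\sum_{c'} \exp(-A_{c'})$, and the reduction to a single scalar ODE fails. A secondary, more mechanical hurdle is to bundle the Taylor remainder $O(C^2/\exp(A(\tau)))$ and the multiplicative errors $\varepsilon_\tau$ into the \emph{same} harmonic sum $\sum_\tau 1/\tau$ with prefactor $O(1/\log^4(d))$, rather than a looser $O(1/\log^5(d)) \cdot (t - T_{1,1})$ bound obtained by treating the errors as multiplicative throughout.
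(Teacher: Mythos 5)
Your proposal is correct and takes a genuinely different route from the paper. The paper proves the lemma by induction directly in the $A$ variable: it posits $A(t) = \log(C(t-T_{1,1}) + t_0) + E(t)$, expands $\exp(-E(t))$ to first order at each step, and invokes the Boas harmonic-series identity (Lemma \ref{lemma: harmonic series}) to convert $\log(C(t-T_{1,1})+t_0)$ into a partial harmonic sum, tracking $E(t)$ through the induction. You instead exponentiate: set $u(t) = e^{A(t)}$, derive the near-linear recurrence $u(t+1) - u(t) = C(1+\varepsilon_t) + O(C^2/u(t))$, telescope, and take $\log$ at the end. This is the discrete analogue of solving the ODE $A' = Ce^{-A}$ by the substitution $u = e^A$, and it gives an error bound of the form $|E(t)| \le O(1/\log^4(d))$ (a constant rather than a harmonic sum), which is at least as tight as the paper's bound $O(1/\log^4(d))\sum_\tau 1/\tau$ since that sum is $O(\log d)$. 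The two approaches buy roughly the same thing; yours avoids the explicit appeal to the Euler--Mascheroni expansion of the harmonic series.

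Two points worth tightening. First, the ``main obstacle'' you identify — controlling the spread $\max_{c,c'}|A_c - A_{c'}|$ so the coupled system $\sum_{c'}\exp(-A_{c'})$ reduces to $k_+ \exp(-A)$ — is real but resolves automatically: from Lemma \ref{lemma: sgd phase II, common vs finegrained ratio}, the fine-grained term $A_{+,c,r^*}^{*(t)}|S_+^{*(0)}(\vv_{+,c})|$ is $\Theta(1/k_+)$ of $A_c(t)$, hence $|A_c - A_{c'}| \le O(A(t)/k_+) \le O(\log(d)/k_+)$, and since the parameter regime enforces $k_+ \ge s^* \log^5(d)$ with $s^* \in \polyln(d)$ of high degree, this spread contributes a multiplicative error $\exp(\pm|A_c-A_{c'}|) = 1 \pm O(\log(d)/k_+) \ll 1/\log^4(d)$, which is negligible against the rest of the error budget; no separate inductive argument is needed. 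Second, your $|\varepsilon_{c,t}| \le O(1/\log^5(d))$ budget is slightly optimistic: the $\left(1\pm O(1/\log^5(d))\right)$ factor sits multiplicatively \emph{inside} the exponent against $A(t) \le O(\log d)$, so $\exp\{-(1\pm O(1/\log^5(d)))A(t)\} = \exp\{-A(t)\}\bigl(1 \pm O(1/\log^4(d))\bigr)$, making the per-step multiplicative error $O(1/\log^4(d))$. This does not break anything — it is exactly the prefactor the lemma claims, and your ``secondary hurdle'' paragraph already flags the $1/\log^4(d)$ target — but the earlier statement of $\varepsilon_{c,t}$ should be consistent with it.
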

\begin{proof}
\textbf{Sidenote}: To make the writing a bit cleaner, we assume in the proof below that $C^{-1}t_0$ is an integer. 
The general case is easy to extend to by observing that $\left\vert \frac{1}{t - T_{1,1} + \ceil{C^{-1}t_0} } - \frac{1}{t - T_{1,1} + C^{-1}t_0 } \right\vert \le \frac{1}{(t - T_{1,1} + \ceil{C^{-1}t_0})(t - T_{1,1} + C^{-1}t_0) }$, 
which can be absorbed into the error term at every iteration since $\frac{1}{t - T_{1,1} + \ceil{C^{-1}t_0} } \ll \frac{1}{\log^4(d)}$ due to $C^{-1}t_0 \ge \Omega(\sigma_0^{-1}/(\polyln(d)d^{c_0})) \gg d \gg \log^4(d)$.

Based on result from Lemmas \ref{lemma: phase II, general} and \ref{lemma: phase II, after T1,1}, as long as $A(t)\le O\left(\log(d)\right)$, we know during time $t\in[T_{1,1}, T_1]$ the update rule for $A(t)$ is as follows:
\begin{equation}
\begin{aligned}
A(t+1) - A(t)
= & C \exp\left\{ - (1\pm s^{*-1/3})\sqrt{1\pm\iota} \left(1 \pm O\left(\frac{1}{\log^5(d)}\right)\right)   A(t) \right\}\\
& \times \left(1 \pm O\left(\frac{1}{\log^5(d)}\right)\right)(1\pm s^{*-1/3}) \left(\sqrt{1\pm\iota} \pm \frac{1}{\log^{10}(d)}\right) \\
= & C \exp\left\{ - A(t) \right\} \exp\left\{\pm O\left(\frac{1}{\log^4(d)}\right)\right\} \left(1 \pm O\left(\frac{1}{\log^5(d)}\right)\right) \\
= & C \exp\left\{ - A(t) \right\} \left(1 \pm \frac{C_1}{\log^4(d)}\right) 
\end{aligned}
\end{equation}
where we write $C_1$ in place of $O(\cdot)$ for a more concrete update expression.

The base case $t = T_{1,1}$ is trivially true.

We proceed with the induction step. Assume the hypothesis true for $t \in [T_{1,1}, T]$, prove for $t + 1 = T + 1$.

Note that by Lemma \ref{lemma: harmonic series},
\begin{equation}
\begin{aligned}
A(t+1) 
= & \log(C(t - T_{1,1}) + t_0) + E(t)  \\
& + C \exp\left\{ - \log(C(t - T_{1,1}) + t_0) - E(t)) \right\} \left(1 \pm \frac{C_1}{\log^4(d)} \right) \\
= & \log(C) + \log(t - T_{1,1} + C^{-1} t_0) + E(t) \\
& + C \frac{1}{C(t - T_{1,1}) +  t_0}\left(1 - E(t) \pm O(E(t)^2) \right)\left(1 \pm \frac{C_1}{\log^4(d)}\right) \\
= & \log(C) + \sum_{\tau=1}^{t-T_{1,1} + C^{-1}t_0 - 1} \frac{1}{\tau} + \frac{1}{2}\frac{1}{t-T_{1,1} + C^{-1}t_0} + \left[0, \frac{1}{8}\frac{1}{(t-T_{1,1} + C^{-1}t_0)^2}\right] \\
& + \frac{1}{t - T_{1,1} + C^{-1} t_0}\pm \frac{C_1}{\log^4(d)}\frac{1}{t - T_{1,1} + C^{-1} t_0} \\
& + E(t) + \frac{1}{t - T_{1,1} + C^{-1} t_0} \left(- E(t) \pm O(E(t)^2) \right)\left(1 \pm \frac{C_1}{\log^4(d)}\right)\\
= & \log(C) + \sum_{\tau=1}^{t-T_{1,1} + C^{-1}t_0} \frac{1}{\tau} + \frac{1}{2}\frac{1}{t-T_{1,1} + C^{-1}t_0} + \left[0, \frac{1}{8}\frac{1}{(t-T_{1,1} + C^{-1}t_0)^2}\right] \\
& \pm \frac{C_1}{\log^4(d)}\frac{1}{t - T_{1,1} + C^{-1} t_0} \\
& + E(t) + \frac{1}{t - T_{1,1} + C^{-1} t_0} \left(- E(t) \pm O(E(t)^2) \right)\left(1 \pm \frac{C_1}{\log^4(d)}\right)\\
\end{aligned}
\end{equation}

Invoking Lemma \ref{lemma: harmonic series} again,
\begin{equation}
\begin{aligned}
A(t+1) 
= & \log(C) + \log(t+1 - T_{1,1} + C^{-1}t_0) \\
& - \frac{1}{2}\frac{1}{t+1-T_{1,1} + C^{-1}t_0} +\frac{1}{2}\frac{1}{t-T_{1,1} + C^{-1}t_0} \\
& + \left[- \frac{1}{8}\frac{1}{(t+1-T_{1,1} + C^{-1}t_0)^2}, 0 \right] + \left[0, \frac{1}{8}\frac{1}{(t-T_{1,1} + C^{-1}t_0)^2}\right] \\
& \pm \frac{C_1}{\log^4(d)}\frac{1}{t - T_{1,1} + C^{-1} t_0} \\
& + E(t) + \frac{1}{t - T_{1,1} + C^{-1} t_0} \left(- E(t) \pm O(E(t)^2) \right)\left(1 \pm \frac{C_1}{\log^4(d)}\right)\\
= & \log(C(t+1-T_{1,1}) + t_0) \\
& +\frac{1}{2} \frac{1}{(t+1-T_{1,1} + C^{-1}t_0)(t-T_{1,1} + C^{-1}t_0)} \pm O\left( \frac{1}{(t+1-T_{1,1} + C^{-1}t_0)^2}\right) \\
& \pm \frac{C_1}{\log^4(d)}\frac{1}{t - T_{1,1} + C^{-1} t_0} \\
& + E(t) + \frac{1}{t - T_{1,1} + C^{-1} t_0} \left(- E(t) \pm O(E(t)^2) \right)\left(1 \pm \frac{C_1}{\log^4(d)}\right)\\
\end{aligned}
\end{equation}

To further refine the expression, first note that the error passed down from the previous step $t$ does not grow in this step (in fact it slightly decreases):
\begin{equation}
\begin{aligned}
    & \left\vert E(t) + \frac{1}{t - T_{1,1} + C^{-1} t_0} \left(- E(t) \pm O(E(t)^2) \right)\left(1 \pm \frac{C_1}{\log^4(d)}\right) \right\vert \\
    < & |E(t)| \\
    \le & O\left(\frac{1}{\log^4(d)}\right) \sum_{\tau=C^{-1}t_0}^{t-T_{1,1}+C^{-1}t_0} \frac{1}{\tau}.
\end{aligned}
\end{equation}

Moreover, notice that at step $t+1$, since $\frac{1}{t+1-T_{1,1} + C^{-1}t_0} \ll \frac{1}{\log^4(d)}$, the error term $\vert E(t + 1) \vert = \vert A(t+1) - \log(C(t+1-T_{1,1}) + t_0) \vert \le O\left(\frac{1}{\log^4(d)}\right) \sum_{\tau=C^{-1}t_0}^{t+1-T_{1,1}+C^{-1}t_0} \frac{1}{\tau}$, which finishes the inductive step.

\end{proof}

\begin{lemma}
\label{lemma: sgd phase II, common vs finegrained ratio}
    With probability at least $1 - O\left(\frac{mNPk_+T_1}{\poly(d)} \right)$, for all $t\in[0, T_1]$, all $c\in[k_+]$,
    \begin{equation}
    \begin{aligned}
        \frac{\Delta A_{+,c,r^*}^{*(t)}}{\Delta A_{+,r^*}^{*(t)}} &= \Theta\left( \frac{1}{k_+}\right),\\
        \frac{A_{+,c,r^*}^{*(t)}}{A_{+,r^*}^{*(t)}} & = \Theta\left( \frac{1}{k_+}\right).
    \end{aligned}
    \end{equation}

    The same identity holds for the ``$-$''-classes.
\end{lemma}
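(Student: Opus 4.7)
The plan is to establish both claims simultaneously by induction on $t \in [0, T_1]$, tracking the ratio at every iteration and then summing. I will handle the two training phases uniformly, using Theorem \ref{prop: phase 1 sgd induction} for $t\in[0,T_0)$ and Lemma \ref{lemma: phase II, general} for $t\in[T_0,T_1]$. For the base case ($t=0$), the claims hold trivially from initialization since $A_{+,r^*}^{*(0)}, A_{+,c,r^*}^{*(0)}$ are both $\Theta(\sigma_0\sqrt{\log d})$ with the tight concentration of Proposition \ref{prop: init geometry, coarse}. For $t\in(0,T_0)$, Theorem \ref{prop: phase 1 sgd induction} directly gives $\langle\Delta\vw^{(t)}_{+,r},\vv_+\rangle / \langle\Delta\vw^{(t)}_{+,r'},\vv_{+,c}\rangle = \Theta(k_+)$ up to noise cross-terms $\langle\Delta\vzeta^{(t)},\vv\rangle$ which are $O(\sigma_\zeta)$ and negligible against the signal $\eta s^*/P$, yielding both the per-step and the cumulative ratio.

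For the inductive step in Phase II, I assume both claims hold up to time $t$ and deduce them at $t+1$. The key observation is that, by Lemma \ref{lemma: phase II, general}, $F_+^{(t)}(\mX_n^{(t)}) = (1\pm o(1)) s^*\bigl(A_{+,r^*}^{*(t)}|S^{*(0)}_+(\vv_+)| + A_{+,c_n^t,r^*}^{*(t)}|S^{*(0)}_+(\vv_{+,c_n^t})|\bigr)$. Under the induction hypothesis $A_{+,c,r^*}^{*(t)} = \Theta(A_{+,r^*}^{*(t)}/k_+)$ combined with the near-equality $|S^{*(0)}_+(\vv_+)| \approx |S^{*(0)}_+(\vv_{+,c})|$ from Proposition \ref{prop: init geometry, coarse}, the common-feature term dominates the fine-grained term by a factor of $\Theta(k_+)$. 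Consequently the sub-class-dependent part contributes only a $(1+O(1/k_+))$ multiplicative perturbation, so $[1-\logit_+^{(t)}(\mX_n^{(t)})]$ is the same across all $+$-class samples up to a $(1\pm O(1/k_+))$ factor regardless of $c_n^t$.

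Projecting the update formulas of Lemma \ref{lemma: phase II, general} onto the respective features, the noise-cross terms $\langle\vzeta_{n,p}^{(t)},\vv_+\rangle$ and $\langle\vzeta_{n,p}^{(t)},\vv_{+,c}\rangle$ are bounded by $O(1/\log^{10}d)$ via Lemma \ref{lemma: independent gaussian vector inner product concentration} and a union bound, so they are dominated by the $\sqrt{1\pm\iota}$ signal. Then
\begin{equation*}
\frac{\Delta A_{+,c,r^*}^{*(t)}}{\Delta A_{+,r^*}^{*(t)}} \;=\; \frac{\sum_{n:\,y_n=(+,c)}[1-\logit_+^{(t)}(\mX_n^{(t)})]}{\sum_{n:\,y_n=+}[1-\logit_+^{(t)}(\mX_n^{(t)})]} \cdot (1\pm o(1))
\;=\; \Theta\!\left(\frac{N/(2k_+)}{N/2}\right) = \Theta\!\left(\frac{1}{k_+}\right),
\end{equation*}
where the numerator/denominator sample counts are exactly $N/(2k_+)$ and $N/2$ by our mini-batch design, and the near-constancy of the logit terms across subclasses reduces the ratio of sums to the ratio of counts. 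This closes the per-step induction for the first claim.

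For the cumulative ratio, I telescope: $A_{+,r^*}^{*(t+1)} = A_{+,r^*}^{*(0)} + \sum_{\tau=0}^{t}\Delta A_{+,r^*}^{*(\tau)}$, and similarly for $A_{+,c,r^*}^{*(t+1)}$. Since the per-step ratio is $\Theta(1/k_+)$ at every $\tau$ and the initial values are equal up to $(1\pm O(1/\log^5 d))$, the cumulative ratio is also $\Theta(1/k_+)$; the contribution from initialization becomes negligible once $\Delta A_{+,r^*}^{*(\tau)}$ has accumulated past $O(\sigma_0\sqrt{\log d})$, which occurs within $O(NP/(\eta s^*))$ steps by the Phase I growth rate. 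The probability bound $1-O(mNPk_+T_1/\poly(d))$ is inherited directly from the enabling high-probability events in Lemmas \ref{lemma: phase II, general} and \ref{lemma: phase II, after T1,1}.

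The main obstacle is the lower bound in $\Theta(1/k_+)$: one must rule out the possibility that a particular subclass $c$ systematically produces smaller $[1-\logit_+^{(t)}(\mX_n^{(t)})]$ than others, which would shrink the fine-grained update disproportionately and break the induction. Controlling this requires precisely the induction hypothesis at time $t$---namely that $A_{+,c,r^*}^{*(t)}$ is itself small (of order $A_{+,r^*}^{*(t)}/k_+$), so that its $O(1/k_+)$-level contribution to $F_+^{(t)}$ cannot pull the logit away from the common-feature-dominated value by more than a multiplicative $(1\pm O(1/k_+))$. This self-referential structure is the technical heart of the argument, but it is exactly the form the induction is designed to sustain.
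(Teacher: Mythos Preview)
Your approach is essentially the paper's: induction on $t$, using Theorem \ref{prop: phase 1 sgd induction} for Phase I and Lemma \ref{lemma: phase II, general} for Phase II, and feeding the induction hypothesis back into the exponent to show the subclass-dependent part of $F_+^{(t)}$ is a lower-order correction. The cumulative-ratio step and the reduction of the ratio of sums to the ratio of sample counts $N/(2k_+)$ versus $N/2$ match the paper's argument.

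There is one genuine gap. You write that because the subclass term in $F_+^{(t)}$ is $\Theta(1/k_+)$ of the common term, ``$[1-\logit_+^{(t)}(\mX_n^{(t)})]$ is the same across all $+$-class samples up to a $(1\pm O(1/k_+))$ factor.'' This inference is not valid as stated: the loss term behaves like $\exp(-F_+^{(t)})$, so an additive perturbation of size $\Theta(F_+^{\text{common}}/k_+)$ in $F_+$ translates to a multiplicative factor $\exp\bigl(\pm\Theta(F_+^{\text{common}}/k_+)\bigr)$ in the logit, not $(1\pm O(1/k_+))$. If $F_+^{\text{common}}$ were unbounded, this could blow up and break the induction. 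The paper closes this by invoking Lemma \ref{lem: net response, final}, which gives $s^*A_{+,r^*}^{*(t)}\bigl|S^{*(0)}_+(\vv_+)\bigr|\le O(\log d)$ throughout $[T_0,T_1]$; with that bound the perturbation becomes $\exp(\pm O(\log d/k_+))=1\pm o(1)$ (using $k_+\ge\polyln(d)$), and the $\Theta(1/k_+)$ ratio follows. You need to add this invocation---it is exactly the ingredient that prevents the exponential from amplifying the subclass-dependent fluctuation. Your citation of Lemma \ref{lemma: phase II, after T1,1} gives a \emph{lower} bound on $F_+$ after $T_{1,1}$, not the required upper bound.
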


\begin{proof}
    The statements in the lemma follow trivially from Theorem \ref{prop: phase 1 sgd induction} for time period $[0,T_0]$. Let us focus on the phase $[T_0, T_1]$.

    In this proof, we condition on the high-probability events of Lemma \ref{lem: net response, final} and Lemma \ref{lemma: phase II, general}.
    
    First of all, based on Lemma \ref{lem: net response, final}, we know that $s^*A_{+,r^*}^{*(t)}\left\vert S^{*(0)}_+(\vv_+) \right\vert \le O(\log(d))$. We will make use of this fact later.

    \textbf{Base case}, $t=T_0$.

    The base case directly follows from our Theorem \ref{prop: phase 1 sgd induction}.

    \textbf{Induction step}, assume statement holds for $\tau \in [T_0, t]$, prove statement for $t+1$.

    By Lemma \ref{lemma: phase II, general}, we know that 
    \begin{equation}
    \begin{aligned}
    & \Delta A_{+,r^*}^{*(t)} \\
    = & \eta \sum_{c=1}^{k_+}\exp\Bigg\{ - (1\pm s^{*-1/3})\sqrt{1\pm\iota} s^* \left(1 \pm O\left(\frac{1}{\log^5(d)}\right)\right) \times \left(A_{+,r^*}^{*(t)}\left\vert S^{*(0)}_+(\vv_+) \right\vert + A_{+,c,r^*}^{*(t)}\left\vert S^{*(0)}_+(\vv_{+,c}) \right\vert \right) \Bigg\}  \\
    & \times [1/3,1] (1\pm s^{*-1/3})\frac{s^*}{2k_+P} \left(\sqrt{1\pm\iota}\pm O\left(\frac{1}{\log^9(d)}\right)\right),
    \end{aligned}
    \end{equation}
    and for any $c\in[k_+]$,
    \begin{equation}
    \begin{aligned}
    & \Delta A_{+,c,r^*}^{*(t)} \\
    = & \eta\exp\Bigg\{ - (1\pm s^{*-1/3})\sqrt{1\pm\iota} s^* \left(1 \pm O\left(\frac{1}{\log^5(d)}\right)\right) \times \left(A_{+,r^*}^{*(t)}\left\vert S^{*(0)}_+(\vv_+) \right\vert + A_{+,c,r^*}^{*(t)}\left\vert S^{*(0)}_+(\vv_{+,c}) \right\vert \right) \Bigg\}  \\
    & \times [1/3,1] (1\pm s^{*-1/3})\frac{s^*}{2k_+P} \left(\sqrt{1\pm\iota}\pm O\left(\frac{1}{\log^9(d)}\right)\right),
    \end{aligned}
    \end{equation}

    Relying on the induction hypothesis, we can reduce the above expressions to
    \begin{equation}
    \begin{aligned}
    & \Delta A_{+,r^*}^{*(t)} \\
    = & \eta \sum_{c=1}^{k_+}\exp\Bigg\{ - (1\pm s^{*-1/3})\sqrt{1\pm\iota}  \left(1 \pm O\left(\frac{1}{\log^5(d)}\right)\right) \left( 1 \pm O\left( \frac{1}{k_+}\right) \right) s^*A_{+,r^*}^{*(t)}\left\vert S^{*(0)}_+(\vv_+) \right\vert   \Bigg\}  \\
    & \times [1/3,1] (1\pm s^{*-1/3})\frac{s^*}{2k_+P} \left(\sqrt{1\pm\iota}\pm O\left(\frac{1}{\log^9(d)}\right)\right) \\
    = & \eta \exp\Bigg\{ - (1\pm s^{*-1/3})\sqrt{1\pm\iota}  \left(1 \pm O\left(\frac{1}{\log^5(d)}\right)\right) \left( 1 \pm O\left( \frac{1}{k_+}\right) \right) s^*A_{+,r^*}^{*(t)}\left\vert S^{*(0)}_+(\vv_+) \right\vert   \Bigg\}  \\
    & \times \Theta(1) \times \frac{s^*}{2P} ,
    \end{aligned}
    \end{equation}
    and for any $c\in[k_+]$,
    \begin{equation}
    \begin{aligned}
    & \Delta A_{+,c,r^*}^{*(t)} \\
    = & \eta\exp\Bigg\{ - (1\pm s^{*-1/3})\sqrt{1\pm\iota}  \left(1 \pm O\left(\frac{1}{\log^5(d)}\right)\right) \left( 1 \pm O\left( \frac{1}{k_+}\right) \right) s^* A_{+,r^*}^{*(t)}\left\vert S^{*(0)}_+(\vv_+) \right\vert  \Bigg\}  \\
    & \times \Theta(1) \times \frac{s^*}{2k_+P} .
    \end{aligned}
    \end{equation}

    By invoking the property that $s^*A_{+,r^*}^{*(t)}\left\vert S^{*(0)}_+(\vv_+) \right\vert \le O(\log(d))$, we find that for all $c\in[k_+]$,
    \begin{equation}
    \begin{aligned}
    \frac{\Delta A_{+,c,r^*}^{*(t)}}{\Delta A_{+,r^*}^{*(t)}} 
    = & \exp\Bigg\{ \pm O\left( \frac{1}{\log^5(d)}\right) s^* A_{+,r^*}^{*(t)}\left\vert S^{*(0)}_+(\vv_+) \right\vert  \Bigg\}  \times \Theta\left( \frac{1}{k_+}\right)  \\
    = & \left( 1 \pm O\left( \frac{1}{\log^4(d)}\right)\right) \times \Theta\left( \frac{1}{k_+}\right)\\
    = & \Theta\left( \frac{1}{k_+}\right).
    \end{aligned}
    \end{equation}

    Therefore, we can finish our induction step:
    \begin{equation}
        \frac{A_{+,c,r^*}^{*(t+1)}}{A_{+,r^*}^{*(t+1)}} = \frac{A_{+,c,r^*}^{*(t)} + \Delta A_{+,c,r^*}^{*(t)}}{A_{+,r^*}^{*(t)} + \Delta A_{+,r^*}^{*(t)}} = \frac{A_{+,c,r^*}^{*(t)} + \Delta A_{+,c,r^*}^{*(t)}}{\Theta\left(k_+\right) \times \left( A_{+,c,r^*}^{*(t)} + \Delta A_{+,c,r^*}^{*(t)} \right)} = \Theta\left( \frac{1}{k_+}\right).
    \end{equation}
\end{proof}

\begin{lemma}
    Let $T_{\Omega(1)}$ be the first point in time such that either $s^*A_{+,r^*}^{*(t)}\left\vert S^{*(0)}_+(\vv_+) \right\vert \ge \Omega(1)$ or $s^*A_{-,r^*}^{*(t)}\left\vert S^{*(0)}_-(\vv_-) \right\vert \ge \Omega(1)$. Then for any $t < T_{\Omega(1)}$,
    \begin{equation}
        \frac{A_{-,r^*}^{*(t)}}{A_{+,r^*}^{*(t)}} = \Theta(1)
    \end{equation}
    and for any $t \in [T_{\Omega(1)}, T_1]$,
    \begin{equation}
        \frac{A_{-,r^*}^{*(t)}}{A_{+,r^*}^{*(t)}}, \frac{A_{+,r^*}^{*(t)}}{A_{-,r^*}^{*(t)}} \ge \Omega\left(\frac{1}{\log(d)}\right).
    \end{equation}
\end{lemma}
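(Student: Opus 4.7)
The plan is to exploit near-symmetry of the SGD updates between the $+$ and $-$ superclasses. By Proposition \ref{prop: init geometry, coarse}, $|S^{*(0)}_+(\vv_+)|$ and $|S^{*(0)}_-(\vv_-)|$ agree up to a $1\pm O(1/\log^5(d))$ factor, and by construction the batch $\calB^{(t)}$ contains $N/2$ samples per superclass. Together with the sign-symmetric data model, the update to $A_{+,r^*}^{*(t)}$ (resp.\ $A_{-,r^*}^{*(t)}$) produced by Lemma \ref{lemma: phase II, general} has the same functional form in both signs and depends on the current magnitude only through a saturating exponential factor $\exp\bigl(-(1\pm o(1))s^*|S^{*(0)}_\pm(\vv_\pm)|A_{\pm,r^*}^{*(t)}\bigr)$ together with the off-diagonal softmax weight. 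The key point is that this saturation is a negative feedback: whichever side is larger grows more slowly, which is what prevents runaway.

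For $t\in[0,T_0]$, Theorem \ref{prop: phase 1 sgd induction} gives an explicit step whose deterministic feature component has identical coefficients on the $+$ and $-$ sides, so $A_{+,r^*}^{*(t)}/A_{-,r^*}^{*(t)} = 1\pm o(1)$, trivially $\Theta(1)$. For $t\in(T_0,T_{\Omega(1)})$, both $s^*|S^{*(0)}_\pm(\vv_\pm)|A_{\pm,r^*}^{*(t)}$ are $O(1)$ by the very definition of $T_{\Omega(1)}$, so the saturation factor lies in $[\Theta(1),1]$ on both sides, and the off-diagonal softmax weights are also $\Theta(1)$ (the training loss has not yet left the near-uniform regime of Lemma \ref{lemma: phase 1, loss scale bound} by more than a bounded multiplicative factor on any sample). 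Hence the per-step increments of $A_{+,r^*}^{*(t)}$ and $A_{-,r^*}^{*(t)}$ differ by bounded multiplicative factors; induction on $t$ then preserves $A_{-,r^*}^{*(t)}/A_{+,r^*}^{*(t)}=\Theta(1)$.

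For $t\in[T_{\Omega(1)},T_1]$, I apply Lemma \ref{lem: net response, final} separately to each superclass to obtain
\[
s^*\bigl|S^{*(0)}_\pm(\vv_\pm)\bigr|A_{\pm,r^*}^{*(t)} \;=\; \log\!\bigl(C(t-T_{1,1}^{\pm})+t_{0,\pm}\bigr)\;\pm\; o(1),
\]
where $T_{1,1}^{\pm}$ and $t_{0,\pm}$ are the sign-specific analogs. Two facts then finish the argument: (i) each of these quantities is $\Omega(1)$ once that side has entered its phase-II log regime, which has happened by $T_{\Omega(1)}$; (ii) each is $O(\log d)$ for $t\le T_1\le\poly(d)$, since $A_\pm$ exceeding $C_2\log d$ would drive $\exp(-A_\pm)$ below $d^{-\omega(1)}$ and stall further growth on that side within $\poly(d)$ steps. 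Combining the two-sided sandwich $[\Omega(1),O(\log d)]$ on both the numerator and denominator of the ratio immediately gives $A_{-,r^*}^{*(t)}/A_{+,r^*}^{*(t)}$ and its reciprocal both $\ge \Omega(1/\log(d))$.

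The main obstacle is making the pre-$T_{\Omega(1)}$ induction watertight despite the Gaussian noise $\Delta\vzeta_{\pm,r}^{(t)}$ and the time-varying bias, in particular ensuring that the ``bounded multiplicative factor'' linking the two sides' increments truly is bounded uniformly over $t$ and $r^*$. This is handled by conditioning on the same high-probability events already established for Lemma \ref{lemma: phase II, general}: namely $|\langle \vw^{(t)}_{\pm,r^*},\vzeta\rangle|=o(A_{\pm,r^*}^{*(t)})$ on every feature/noise patch encountered, and $\sum_{\tau\le t}\Delta b_{\pm,r}^{(\tau)}$ remains a $1/\log^5(d)$ multiplicative perturbation of the feature-directional movement. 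No probabilistic bookkeeping beyond what Lemmas \ref{lemma: phase II, general} and \ref{lem: net response, final} already supply is needed; the lemma's union-bound probability inherits the $1 - O(mk_+NPT_1/\poly(d))$ guarantee of its inputs.
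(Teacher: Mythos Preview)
Your proposal is correct and follows essentially the same approach as the paper's proof: both split the timeline at $T_0$ and $T_{\Omega(1)}$, use Theorem~\ref{prop: phase 1 sgd induction} for Phase~I, argue that the loss/softmax factors stay $\Theta(1)$ on both sides up to $T_{\Omega(1)}$ so that $\Delta A_{-,r^*}^{*(t)}/\Delta A_{+,r^*}^{*(t)}=\Theta(1)$ via Lemma~\ref{lemma: phase II, general}, and then for $t\ge T_{\Omega(1)}$ sandwich each of $s^*|S^{*(0)}_\pm(\vv_\pm)|A_{\pm,r^*}^{*(t)}$ between $\Omega(1)$ (monotonicity) and $O(\log d)$ (Lemma~\ref{lem: net response, final}), invoking Proposition~\ref{prop: init geometry, coarse} for $|S^{*(0)}_+(\vv_+)|/|S^{*(0)}_-(\vv_-)|=1\pm o(1)$ to pass from the $s^*|S^{*(0)}|A$ ratio to the $A$ ratio. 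Your extra commentary on negative feedback and noise bookkeeping is accurate but not needed beyond what the cited lemmas already guarantee; the paper's proof is simply terser.
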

\begin{proof}
    This lemma is a consequence of Theorem \ref{prop: phase 1 sgd induction}, Lemma \ref{lemma: phase II, general} and Lemma \ref{lem: net response, final}.

    Due to Theorem \ref{prop: phase 1 sgd induction}, we already know that $\frac{A_{-,r^*}^{*(t)}}{A_{+,r^*}^{*(t)}} = \Theta(1)$ up to time $T_0$. In addition, with Lemma \ref{lemma: phase II, general} we know that before $s^*A_{+,r^*}^{*(t)}\left\vert S^{*(0)}_+(\vv_+) \right\vert \ge \Omega(1)$, the loss term (on a $+$-class sample) $1 - \logit_+^{(t)}(\mX_n^{(t)}) = \Theta(1)$ (the same holds with the class signs flipped), in which case it is also easy to derive $\frac{A_{-,r^*}^{*(t)}}{A_{+,r^*}^{*(t)}} = \Theta(1)$ by noting that the update expressions $\Delta A_{-,r^*}^{*(t)}/ \Delta A_{+,r^*}^{*(t)} = \Theta(1)$. 

    Beyond time $T_{\Omega(1)}$, by Lemma \ref{lem: net response, final}, we know that $s^*A_{+,r^*}^{*(t)}\left\vert S^{*(0)}_+(\vv_+) \right\vert, s^*A_{-,r^*}^{*(t)}\left\vert S^{*(0)}_-(\vv_-) \right\vert \le O(\log(d))$. With the understanding that $s^*A_{+,r^*}^{*(t)}\left\vert S^{*(0)}_+(\vv_+) \right\vert,  s^*A_{-,r^*}^{*(t)}\left\vert S^{*(0)}_-(\vv_-) \right\vert \ge \Omega(1)$ beyond $T_{\Omega(1)}$ due to the monotonicity of these functions, and the property $\left\vert \frac{|S^{*(0)}_-(\vv_-)|}{|S^{*(0)}_+(\vv_+)|} - 1 \right\vert \le O\left( \frac{1}{\log^5(d)} \right)$ from Proposition \ref{prop: init geometry, coarse}, the rest of the lemma follows.
\end{proof}

\begin{lemma}
With probability at least $1 - O\left(\frac{mNPk_+t}{\poly(d)} \right)$, for all $t\in[0,T_1]$ and all $(+,r)\in S^{*(0)}_+(\vv_+)$,
 
\begin{equation}
    \frac{\Delta b_{+,r}^{(t)}}{\Delta A_{+,r}^{(t)}} = -\Theta\left( \frac{1}{\log^5(d)}\right).
\end{equation}

The same holds with the $+$-class signs replaced by the $-$-class signs.
\end{lemma}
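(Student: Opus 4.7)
The approach is to leverage the explicit update expressions for $\Delta \vw_{+,r}^{(t)}$ established in Theorem \ref{prop: phase 1 sgd induction} (for $t\in[0,T_0]$) and Lemma \ref{lemma: phase II, general} (for $t\in[T_0,T_1]$), then combine them with the bias update rule $\Delta b_{+,r}^{(t)} = -\|\Delta \vw_{+,r}^{(t)}\|_2 / \log^5(d)$. The main idea is to show that $\Delta \vw_{+,r}^{(t)}$ is essentially one-dimensional along $\vv_+$, so that $\|\Delta \vw_{+,r}^{(t)}\|_2 = (1+o(1))\,\Delta A_{+,r}^{(t)}$ with high probability, from which the claimed ratio follows immediately.

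\textbf{Key steps.} First, from the cited results write
\[
\Delta \vw_{+,r}^{(t)} \;=\; c_t \vv_+ \;+\; \Delta \vzeta_{+,r}^{(t)},
\]
where $c_t>0$ and $\Delta \vzeta_{+,r}^{(t)} \sim \calN(\vzero,\sigma_{\Delta\zeta}^{(t)2}\mI)$. In Phase I we have $c_t = \Theta(\eta s^*/P)$ and $\sigma_{\Delta\zeta}^{(t)} = \Theta(\eta\sigma_\zeta\sqrt{s^*}/(P\sqrt{N}))$; in Phase II both $c_t$ and $\sigma_{\Delta\zeta}^{(t)}$ pick up the same loss-dependent prefactor $\exp(-F_+^{(t)})\exp(F_-^{(t)})/[\exp(F_-^{(t)}-F_+^{(t)})+1]$. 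Second, project onto $\vv_+$:
\[
\Delta A_{+,r}^{(t)} \;=\; c_t \;+\; \langle \Delta \vzeta_{+,r}^{(t)}, \vv_+ \rangle,
\]
and apply a standard Gaussian tail bound to get $|\langle \Delta \vzeta_{+,r}^{(t)}, \vv_+\rangle| \le O(\sigma_{\Delta\zeta}^{(t)}\sqrt{\log d})$ with probability at least $1-d^{-\omega(1)}$; with $\sigma_\zeta = 1/(\sqrt{d}\log^{10}(d))$ and $N \ge \polyln(d)\,d$ this is $o(c_t)$, so $\Delta A_{+,r}^{(t)} = c_t(1+o(1))$ (and in particular positive). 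Third, invoke Gaussian norm concentration (in the spirit of Lemma \ref{lemma: independent gaussian vector inner product concentration}) to bound $\|\Delta \vzeta_{+,r}^{(t)}\|_2 \le O(\sigma_{\Delta\zeta}^{(t)}\sqrt{d}) = o(c_t)$, whence
\[
\|\Delta \vw_{+,r}^{(t)}\|_2 \;=\; \sqrt{c_t^2 + 2 c_t\langle \Delta\vzeta_{+,r}^{(t)}, \vv_+\rangle + \|\Delta \vzeta_{+,r}^{(t)}\|_2^2} \;=\; c_t(1+o(1)).
\]
Substituting into the bias rule,
\[
\frac{\Delta b_{+,r}^{(t)}}{\Delta A_{+,r}^{(t)}} \;=\; -\,\frac{\|\Delta \vw_{+,r}^{(t)}\|_2}{\Delta A_{+,r}^{(t)}\,\log^5(d)} \;=\; -\,\frac{c_t(1+o(1))}{c_t(1+o(1))\,\log^5(d)} \;=\; -\Theta\!\left(\frac{1}{\log^5(d)}\right).
\]
A union bound over all $t \le T_1 \in \poly(d)$ and all $(+,r)\in S_+^{*(0)}(\vv_+)$ (whose size is $\le m$) gives the stated probability $1-O(mNPk_+ t/\poly(d))$; the $-$-class version follows by symmetry.

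\textbf{Main obstacle.} The only delicate point is to check that the noise-to-signal ratio $\|\Delta \vzeta_{+,r}^{(t)}\|_2 / c_t$ stays $o(1)$ uniformly over Phase II, where the signal $c_t$ itself may become $o(1)$ as the training loss decays. The resolution, already visible in the update expression of Lemma \ref{lemma: phase II, general}, is that the signal and noise components of $\Delta \vw_{+,r}^{(t)}$ are multiplied by the \emph{same} loss-dependent prefactor, so the ratio is time-independent and governed purely by the static quantity $\sigma_\zeta \sqrt{d/N} \ll 1/\log^{10}(d)$. Hence no Phase-II-specific estimate is needed beyond what is established in Phase I.
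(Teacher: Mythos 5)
Your proof is correct and reaches the same conclusion as the paper, but it takes a cleaner route to the ratio.

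Both you and the paper start from the explicit update formulas for $\Delta \vw_{+,r}^{(t)}$ (Theorem \ref{prop: phase 1 sgd induction} for Phase I, Lemma \ref{lemma: phase II, general} for Phase II) and from the bias rule $\Delta b = -\|\Delta\vw\|_2/\log^5(d)$. The paper's proof, however, writes $\Delta A_{+,r}^{(t)}$ and $\Delta b_{+,r}^{(t)}$ as two \emph{separate} $\pm$-notation formulas of the form $\eta\exp\{-(1\pm\varepsilon)s^* A_{+,r^*}^{*(t)}|S^{*(0)}_+(\vv_+)|\}\cdot(\ldots)$, takes the ratio, and then must invoke Lemma \ref{lem: net response, final} (which gives $s^* A_{+,r^*}^{*(t)}|S^{*(0)}_+(\vv_+)| \le O(\log d)$) to argue the leftover exponential mismatch $\exp\{\pm O(1/\log^5(d))\cdot O(\log d)\}$ is $1\pm O(1/\log^4(d))$. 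Your approach sidesteps that entirely: since $\Delta A$ and $\|\Delta\vw\|_2$ are deterministic functions of the \emph{same} update $\Delta\vw$, you decompose $\Delta\vw = c_t\vv_+ + \Delta\vzeta$, show the Gaussian noise is uniformly $o(c_t)$ in both projection and norm, and conclude $\|\Delta\vw\|_2 = \Delta A\,(1+o(1))$ directly — no exponential cancellation argument and no appeal to Lemma \ref{lem: net response, final} is needed. Your observation in the "main obstacle" paragraph that the loss-dependent prefactor multiplies signal and noise identically, so the noise-to-signal ratio is a static quantity, is exactly the right insight that keeps the argument uniform over Phase II. The only imprecision is cosmetic: the noise-to-signal ratio should be on the order of $\sigma_\zeta\sqrt{d/(s^* N)}$ rather than $\sigma_\zeta\sqrt{d/N}$, but since $s^*\ge 1$ this only helps, and your conclusion stands.
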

\begin{proof}
Choose any $(+,r)\in S^{*(0)}_+(\vv_+)$.

The statement in this lemma for time period $t\in[0,T_0]$ follows easily from Theorem \ref{prop: phase 1 sgd induction} and its proof. Let us examine the period $t\in[T_0, T_1]$.

Based on Lemma \ref{lemma: phase II, general} and its proof and Lemma \ref{lemma: sgd phase II, common vs finegrained ratio}, we know that for $t\in [T_0,T_1]$, with probability at least $1 - O\left(\frac{mNPk_+t}{\poly(d)} \right)$,
\begin{equation}
\begin{aligned}
& \Delta A_{+,r}^{(t)} \\
= & \eta  \exp\Bigg\{ - (1\pm s^{*-1/3})\sqrt{1\pm\iota}  \left(1 \pm O\left(\frac{1}{\log^5(d)}\right)\right) \left(1 \pm O\left(\frac{1}{k_+}\right)\right) s^*A_{+,r^*}^{*(t)}\left\vert S^{*(0)}_+(\vv_+) \right\vert\Bigg\}  \\
& \times (1\pm s^{*-1/3})\frac{s^*}{NP} \left(\sqrt{1\pm\iota} \pm O\left( \frac{1}{\log^9(d)} \right)\right) \sum_{n=1}^N  \mathbbm{1} \{ y_n = +\} \frac{\exp(F_-^{(t)}(\mX_n^{(t)}))}{\exp\left(F_-^{(t)}(\mX_n^{(t)})- F_+^{(t)}(\mX_n^{(t)})\right) + 1 } 
\end{aligned}
\end{equation}

Furthermore,
\begin{equation}
\begin{aligned}
& \Delta b_{+,r}^{(t)} \\
= & - \frac{\|\Delta \vw_{+,r}^{(t)}\|_2}{\log^5(d)} \\
= & - \eta \frac{1}{\log^5(d)} \exp\Bigg\{ - (1 \pm s^{*-1/3})\sqrt{1\pm\iota}  \left(1 \pm O\left(\frac{1}{\log^5(d)}\right)\right) \left(1 \pm O\left(\frac{1}{k_+}\right)\right) s^*A_{+,r^*}^{*(t)}\left\vert S^{*(0)}_+(\vv_+) \right\vert\Bigg\}  \\
& \times (1 \pm s^{*-1/3}) \frac{s^*}{NP} \left(1\pm\iota \pm \frac{1}{\log^{9}(d)}\right)  \sum_{n=1}^N  \mathbbm{1} \{ y_n = +\}  \frac{\exp(F_-^{(t)}(\mX_n^{(t)}))}{\exp\left(F_-^{(t)}(\mX_n^{(t)})- F_+^{(t)}(\mX_n^{(t)})\right) + 1 } \\
\end{aligned}
\end{equation}

With the understanding that $s^*A_{+,r^*}^{*(t)}\left\vert S^{*(0)}_+(\vv_+) \right\vert \le O(\log(d))$ from Lemma \ref{lem: net response, final} and the fact that $ \frac{\exp(F_-^{(t)}(\mX_n^{(t)}))}{\exp\left(F_-^{(t)}(\mX_n^{(t)})- F_+^{(t)}(\mX_n^{(t)})\right) + 1 } = \Theta(1)$, we have

\begin{equation}
\begin{aligned}
\frac{\Delta b_{+,r}^{(t)}}{\Delta A_{+,r}^{(t)}}
= & -\Theta\left( \frac{1}{\log^5(d)}\right)\exp\Bigg\{ - \left(1 \pm O\left(\frac{1}{\log^5(d)}\right)\right) s^*A_{+,r^*}^{*(t)}\left\vert S^{*(0)}_+(\vv_+) \right\vert\Bigg\} \\
= & -\Theta\left( \frac{1}{\log^5(d)}\right) \left(1 \pm O\left(\frac{1}{\log^4(d)}\right)\right)  \\
= & -\Theta\left( \frac{1}{\log^5(d)}\right).
\end{aligned}
\end{equation}

\end{proof}

\begin{lemma}[Probability of mistake on hard samples is high]
\label{lemma: coarse, hard sample mistake prob}

For all $t\in[0,T_1]$, given a hard test sample $(\mX_{\text{hard}},y)$, $y'\neq y$,
\begin{equation}
    \mathbb{P}\left[F_y^{(T)}(\mX_{\text{hard}}) \le F_{y'}^{(T)}(\mX_{\text{hard}})\right] \ge \Omega(1).
\end{equation}
\end{lemma}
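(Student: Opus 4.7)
The plan is to show that on a hard sample $(\mX_{\text{hard}}, y)$ (WLOG $y = +$ with subclass $c$), the coarse-trained network's score $F_+^{(t)}(\mX_{\text{hard}})$ is vanishingly small because the common-feature patches of $\vv_+$ are absent, while the wrong-class score $F_-^{(t)}(\mX_{\text{hard}})$ is made non-negligible by the $s^\dagger$ feature-noise patches of the form $\alpha_p^\dagger \vv_- + \vzeta_p$. The conclusion $\mathbb{P}[F_+^{(t)} \le F_-^{(t)}] \ge \Omega(1)$ then follows by a direct comparison, since in fact we will get $F_-^{(t)} > F_+^{(t)}$ with probability $1 - o(1)$.

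First, I would bound $F_+^{(t)}(\mX_{\text{hard}})$ from above. Conditioning on the high-probability update geometry of Lemma~\ref{lemma: phase II, general} and Theorem~\ref{thm: sgd, universal nonact properties}, the only patches of $\mX_{\text{hard}}$ that can meaningfully activate the $+$-class neurons are the subclass-feature patches dominated by $\vv_{+,c}$: common-feature patches are absent by construction; feature-noise patches contain only $\vv_-$, which is orthogonal to the $+$-class detector directions; and the standard noise patches (variance $\sigma_\zeta^2$) are thresholded by the bias $b_{+,r}^{(t)}$ via Theorem~\ref{thm: sgd, universal nonact properties}. Combining Lemma~\ref{lem: net response, final} with Lemma~\ref{lemma: sgd phase II, common vs finegrained ratio}, the subclass contribution is
\begin{equation*}
F_+^{(t)}(\mX_{\text{hard}}) \;\le\; O\!\left(s^{*} A_{+,c,r^*}^{*(t)} |S_+^{*(0)}(\vv_{+,c})|\right) \;\le\; O\!\left(\tfrac{\log d}{k_+}\right) \;\le\; O\!\left(\tfrac{1}{s^{*}\log^{4}d}\right),
\end{equation*}
where the last step uses the parameter choice $k_+ \ge s^{*}\log^{5}d$.

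Second, I would lower bound $F_-^{(t)}(\mX_{\text{hard}})$ using the feature-noise patches. On each such patch $\vx_p = \alpha_p^\dagger \vv_- + \vzeta_p$ with $\alpha_p^\dagger \ge 1/\log^{4}d$, every $\vv_-$-detector neuron $(-,r) \in S_-^{*(0)}(\vv_-)$ receives pre-activation $\alpha_p^\dagger A_{-,r^*}^{*(t)} + \langle \vw_{-,r}^{(t)}, \vzeta_p\rangle + b_{-,r}^{(t)}$. The bias satisfies $|b_{-,r}^{(t)}| \le O(A_{-,r^*}^{*(t)}/\log^{5}d)$ by the bias-to-weight ratio lemma used in Lemma~\ref{lemma: phase II, general}, and the noise cross-term is $o(A_{-,r^*}^{*(t)}/\log^{10}d)$ with high probability by Lemma~\ref{lemma: independent gaussian vector inner product concentration}; hence each detector activates, and summing over the $|S_-^{*(0)}(\vv_-)|$ such neurons and the $\Theta(s^\dagger)$ feature-noise patches (by a binomial concentration argument on the patch count), and using $s^{*} A_{-,r^*}^{*(t)} |S_-^{*(0)}(\vv_-)| = \Theta(\log d)$ from Lemma~\ref{lem: net response, final}, we obtain
\begin{equation*}
F_-^{(t)}(\mX_{\text{hard}}) \;\ge\; \Omega\!\left(s^\dagger \alpha^\dagger A_{-,r^*}^{*(t)} |S_-^{*(0)}(\vv_-)|\right) \;\ge\; \Omega\!\left(\tfrac{\log d}{s^{*}\log^{4}d}\right) \;=\; \Omega\!\left(\tfrac{1}{s^{*}\log^{3}d}\right).
\end{equation*}
Comparing the two bounds, $F_-^{(t)}$ exceeds $F_+^{(t)}$ by a factor of $\Omega(\log d)$, so $\mathbb{P}[F_+^{(t)} \le F_-^{(t)}] \ge 1 - o(1) \ge \Omega(1)$, which is exactly the claim; this argument works uniformly for all $t \in [0, \poly(d)]$ since the $\Omega(\log d)$ gap only grows as the detector neurons grow.

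The main obstacle I expect is handling the large-variance noise patch $\vzeta^{*} \sim \calN(\vzero, \sigma_{\zeta^{*}}^{2}\mI)$, whose inner products with detector weights $\vw_{\pm,r}^{(t)}$ have standard deviation $\Theta(\|\vw_{\pm,r}^{(t)}\|_2 \sigma_{\zeta^{*}})$, which can be comparable to the bias threshold when $\sigma_{\zeta^{*}} \gtrsim 1/\log^{5}d$. I would control this by showing that, because $\vzeta^{*}$ occupies a single patch and is statistically symmetric across $+$ and $-$ directions, its expected contribution to $F_+^{(t)} - F_-^{(t)}$ is centered and its fluctuation is at most $O(\sigma_{\zeta^{*}} A_{\pm,r^*}^{*(t)} |S_\pm^{*(0)}|) \le O(1/(s^{*}\polyln d))$, which is smaller than the $\Omega(1/(s^{*}\log^{3}d))$ gap established above on a $1 - o(1)$ event. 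A secondary technicality is verifying, via concentration of binomial random variables as in Theorem~\ref{prop: phase 1 sgd induction}, that the random number and positions of feature-noise patches in $\mX_{\text{hard}}$ concentrate so the lower bound on $F_-^{(t)}$ holds with probability $1 - o(1)$ rather than merely in expectation.
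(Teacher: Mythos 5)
Your proposal has a genuine gap: you claim the feature-noise patches make $F_-^{(t)}(\mX_{\text{hard}})$ exceed $F_+^{(t)}(\mX_{\text{hard}})$ with probability $1-o(1)$ uniformly over $t \in [0, \poly(d)]$, but this fails at small $t$. The lower bound $F_-^{(t)} \ge \Omega(s^\dagger \alpha^\dagger A_{-,r^*}^{*(t)} |S_-^{*(0)}(\vv_-)|) \ge \Omega(1/(s^*\log^3 d))$ invokes $s^* A_{-,r^*}^{*(t)} |S_-^{*(0)}(\vv_-)| = \Theta(\log d)$ from Lemma~\ref{lem: net response, final}, which only applies for $t \in [T_{1,1}, T_1]$. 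At $t=0$ one has $A_{-,r^*}^{*(0)} = \Theta(\sigma_0\sqrt{\log d})$ and, since $\alpha^\dagger_p \in o(1)$, the pre-activation $\alpha_p^\dagger A_{-,r^*}^{*(0)} \approx \sigma_0\sqrt{\log d}/\log^4 d$ sits far below the bias threshold $|b_{-,r}^{(0)}| = \sigma_0\sqrt{(4+2c_0)\log d}$: the $-$ detector neurons do not activate on feature-noise patches at all in early phases, so your lower bound on $F_-^{(t)}$ is simply false there.

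The second problem is your estimate of the $\vzeta^*$ term. You bound its fluctuation by $O(\sigma_{\zeta^*} A_{\pm,r^*}^{*(t)} |S_\pm^{*(0)}|)$, but $\vzeta^*$ is high-variance ($\sigma_{\zeta^*} \gg \sigma_\zeta$) and at initialization \emph{every} one of the $m = \Theta(d^{2+2c_0})$ neurons has probability $\Omega(1)$ of activating on $\vzeta^*$; the resulting activation sum scales like $m\sigma_0\sigma_{\zeta^*}\sqrt{d}$, which is much larger than the feature-noise contribution you computed. In fact the paper's proof treats $\vzeta^*$ as the \emph{dominant} term: it shows the $\vzeta^*$ activations can only decrease under training (via Corollary~\ref{coro: coarse training, bias upper bd}), and concludes $\mathbb{P}[F_- > F_+] \ge \Omega(1)$ precisely because the random sign difference in the $\vzeta^*$ activations across classes is positive with probability $\Omega(1)$ at $t=0$, with the feature-noise activations helping the $-$ class for $t$ beyond some time $t_-$. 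The paper explicitly splits into $t\in(0,t_-]$ and $t\in(t_-,T_1]$ for this reason. Your approach only proves anything for late $t$, and even there the $\vzeta^*$ term would not vanish the way you claim — it would have to be carried through and bounded via the bias monotonicity, not dismissed by a small-fluctuation argument.
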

\begin{proof}
We first show that at time $t=0$, the probability of the network making a mistake on hard test samples is $\Omega(1)$, then prove that for the rest of the time, i.e. $t\in(0, T_1]$, the model still makes mistake on hard test samples with probability $\Omega(1)$.

At time $t=0$, by Lemma \ref{lemma: independent gaussian vector inner product concentration}, we know that for any $r\in[m]$, with probability $\Omega(1)$,
\begin{equation}
    \langle \vw_{+,r}^{(0)}, \vzeta^*\rangle \ge \Omega(\sigma_0\sigma_{\zeta^*}\sqrt{d}) \ge \Omega(\sigma_0\polyln(d)) \gg \Omega\left(\sigma_0 \sqrt{\log(d)} \right).
\end{equation}

Relying on concentration of the binomial random variable, with probability at least $1 - e^{-\Omega(\polyln(d))}$,
\begin{equation}
    \sum_{r=1}^m \sigma\left(\langle \vw_{+,r}^{(0)}, \vzeta^*\rangle + b_{+,r}^{(0)} \right) \ge \Omega(m\sigma_0\sigma_{\zeta^*}\sqrt{d}),
\end{equation}
which is asymptotically larger than the activation from the features, which, following from Proposition \ref{prop: init geometry, coarse}, is upper bounded by $O\left(\sigma_0\sqrt{\log(d)}s^*d^{c_0} \right)$. The same can be said for the ``$-$'' class. In other words,
\begin{equation}
\begin{aligned}
    &F_-^{(0)}(\mX_{\text{hard}}) - F_+^{(0)}(\mX_{\text{hard}}) > 0 \\
    \iff & \Bigg\{\sum_{r=1}^m\mathbbm{1}\{\langle \vw_{-,r}^{(0)}, \vzeta^*\rangle + b_{-,r}^{(0)} > 0\}\langle \vw_{-,r}^{(0)}, \vzeta^*\rangle \\
    & - \sum_{r=1}^m\mathbbm{1}\{\langle \vw_{+,r}^{(0)}, \vzeta^*\rangle + b_{+,r}^{(0)} > 0\}\langle \vw_{+,r}^{(0)}, \vzeta^*\rangle\Bigg\}(1\pm o(1)) > 0
\end{aligned}
\end{equation}
which clearly holds with probability $\Omega(1)$.

Now consider $t\in (0, T_1]$.

During this period of time, by Theorem \ref{prop: phase 1 sgd induction} and Lemma \ref{lemma: phase II, general}, we note that for any $c\in[k_+]$ and $(+,r)\in S_{+}^{*(0)}(\vv_{+,c})$, $\Delta\vzeta_{+,r}^{(t)} \sim \calN(\vzero, \sigma_{\Delta \zeta_{+,r}}^{(t)2} \mI_d)$, with $\sigma_{\Delta \zeta_{+,r}}^{(t)} =\Theta\left(\Delta A_{+,c,r}^{(t)} \sqrt{\frac{2k_+}{s^*N}} \sigma_{\zeta}\right)$. The same can be said for $(+,r)\in S_{+}^{*(0)}(\vv_+)$, although with the $\Delta A_{+,c,r}^{(t)}\sqrt{\frac{2k_+}{s^*N}}$ factor replaced by $\Delta A_{+,r}^{(t)}\sqrt{\frac{2}{s^*N}}$. Also from the proofs of Theorem \ref{prop: phase 1 sgd induction} and Lemma \ref{lemma: phase II, general}, and using the property $\vert \calU^{(0)}_{+,r}\vert\le O(1)$ from Proposition \ref{prop: init geometry, coarse}, we know that for all neurons, the updates to the neurons also take the feature-plus-Gaussian-noise form of $\sum_{\vv'\in\calU^{(0)}_{+,r}}c^{(t)}(\vv')\vv' + \Delta \vzeta_{+,r}^{(t)}$, with $c^{(t)}(\vv')\le \left(1 + O\left( \frac{1}{\log^5(d)}\right) \right)\Delta A_{+,c,r}^{(t)}$ if $\vv'=\vv_{+,c}$ for some $c\in[k_+]$, or $c^{(t)}(\vv')\le  \left(1 + O\left( \frac{1}{\log^5(d)}\right) \right)\Delta A_{+,r}^{(t)} $ if $\vv'=\vv_+$ (because the $\vv'$ component of a $\vv'$-singleton neuron's update is already the maximum possible). Moreover, if $\vv_+\in\calU_{+,r}^{(0)}$, then $\sigma_{\Delta \zeta_{+,r}}^{(t)} \le O\left( \Delta A_{+,r}^{(t)}\sqrt{\frac{2}{s^*N}}\sigma_{\zeta}\right) + O\left(\Delta A_{+,c,r}^{(t)} \sqrt{\frac{2k_+}{s^*N}} \sigma_{\zeta}\right) \le O\left( \Delta A_{+,r}^{(t)}\sqrt{\frac{2}{s^*N}}\sigma_{\zeta}\right)$, otherwise, if $\calU_{+,r}^{(0)}$ only contains the fine-grained features, then $\sigma_{\Delta \zeta_{+,r}}^{(t)} \le O\left(\Delta A_{+,c,r}^{(t)} \sqrt{\frac{2k_+}{s^*N}} \sigma_{\zeta}\right)$.

With the understanding that only neurons in $S_{y}^{(0)}(\vv_y)$ and $S_{y}^{(0)}(\vv_{y,c})$ can possibly activate on the feature patches of a sample when $t \le T_1$ (coming from Theorem \ref{thm: sgd, universal nonact properties}), we have
\begin{equation}
\begin{aligned}
    F_+^{(t)}(\mX_{\text{hard}}) 
    \le & \sum_{(+,r)\in S^{(0)}_+(\vv_{+,c})}  \sum_{p\in\calP(\mX_{\text{hard}};\vv_{+,c})} \sigma\left( \langle \vw_{+,r}^{(0)} +\sum_{\tau=0}^{t-1}\Delta\vw_{+,r}^{(\tau)}, \sqrt{1\pm\iota}\vv_{+,c} + \vzeta_{p} \rangle + b_{+,r}^{(t)}\right) \\
    & + \sum_{r\in[m]} \sigma\left( \langle \vw_{+,r}^{(0)} +\sum_{\tau=0}^{t-1}\Delta\vw_{+,r}^{(\tau)}, \vzeta^* \rangle + b_{+,r}^{(t)}\right)  \\
    & + \sum_{(+,r)\in S^{(0)}_+(\vv_{-})} \sum_{p\in\calP(\mX_{\text{hard}};\vv_{-})} \sigma\left( \langle \vw_{+,r}^{(0)} +\sum_{\tau=0}^{t-1}\Delta\vw_{+,r}^{(\tau)}, \alpha_{p}^{\dagger}\vv_{-} + \vzeta_{p} \rangle + b_{+,r}^{(t)}\right)
\end{aligned}
\end{equation}

To further refine this upper bound, we first note that with probability at least $1 - O\left(\frac{mNPk_+t}{\poly(d)} \right)$, the following holds with arbitrary choice of $(+,r^*)\in S^{(0)}_+(\vv_{+,c})$:
\begin{equation}
\begin{aligned}
    &\sum_{(+,r)\in S^{(0)}_+(\vv_{+,c})} \sum_{p\in\calP(\mX_{\text{hard}};\vv_{+,c})} \langle \sum_{\tau=0}^{t-1}\Delta\vw_{+,r}^{(\tau)}, \sqrt{1\pm\iota}\vv_{+,c} + \vzeta_{p} \rangle \le O\left(s^* \left\vert S^{(0)}_+(\vv_{+,c}) \right\vert \sum_{\tau=0}^{t-1} \Delta A_{+,c,r^*}^{(\tau)}\right)
\end{aligned}
\end{equation}

Invoking Lemma \ref{lemma: sgd phase II, common vs finegrained ratio}, we obtain (for arbitrary $(+,r^*)\in S^{(0)}_+(\vv_{+})$):
\begin{equation}
\begin{aligned}
    &\sum_{(+,r)\in S^{(0)}_+(\vv_{+,c})} \sum_{p\in\calP(\mX_{\text{hard}};\vv_{+,c})} \langle \sum_{\tau=0}^{t-1}\Delta\vw_{+,r}^{(\tau)}, \sqrt{1\pm\iota}\vv_{+,c} + \vzeta_{p} \rangle 
    \le O\left(\frac{1}{k_+} s^* \left\vert S^{(0)}_+(\vv_{+,c}) \right\vert \sum_{\tau=0}^{t-1} \Delta A_{+,r^*}^{(\tau)}\right)
\end{aligned}
\end{equation}

Let us examine the term $\sum_{r\in[m]} \sigma\left( \langle \vw_{+,r}^{(0)} +\sum_{\tau=0}^{t-1}\Delta\vw_{+,r}^{(\tau)}, \vzeta^* \rangle + b_{+,r}^{(t)}\right)$ more carefully. First of all, denoting $S_+^{(0)} = \cup_{c=1}^{k_+} S^{(0)}_+(\vv_{+,c}) \cup \cup_{c=1}^{k_-}  S^{(0)}_+(\vv_{-,c}) \cup S^{(0)}_+(\vv_{+}) \cup S^{(0)}_+(\vv_{-})$, neurons $(+,r)\notin S_+^{(0)}$ cannot receive any update at all during training due to Theorem \ref{thm: sgd, universal nonact properties}. Therefore we can rewrite the term
\begin{equation}
\begin{aligned}
    &\sum_{r\in[m]} \sigma\left( \langle \vw_{+,r}^{(0)} +\sum_{\tau=0}^{t-1}\Delta\vw_{+,r}^{(\tau)}, \vzeta^* \rangle + b_{+,r}^{(t)}\right) \\
    = & \sum_{(+,r)\in S^{(0)}_+} \sigma\left( \langle \vw_{+,r}^{(0)} +\sum_{\tau=0}^{t-1}\Delta\vw_{+,r}^{(\tau)}, \vzeta^* \rangle + b_{+,r}^{(t)}\right) 
    + \sum_{(+,r)\notin S^{(0)}_+ } \sigma\left( \langle \vw_{+,r}^{(0)} , \vzeta^* \rangle + b_{+,r}^{(0)}\right)
\end{aligned}
\end{equation}

Relying on Corollary \ref{coro: coarse training, bias upper bd}, we know
\begin{equation}
    \sum_{\tau=0}^{t-1}\Delta b_{+,r}^{(\tau)} <\sum_{\tau=0}^{t-1} -\Omega\left(\frac{\polyln(d)}{\log^5(d)}\right)\left\vert\langle \Delta \vw_{+,r}^{(\tau)}, \vzeta^* \rangle \right\vert.
\end{equation}

Therefore, we know that for $r\in[m]$, 
\begin{equation}
    \sum_{\tau=0}^{t-1} \langle \Delta\vw_{+,r}^{(\tau)}, \vzeta^* \rangle + \Delta b_{+,r}^{(\tau)} \le 0
\end{equation}

As a consequence, we can write the naive upper bound
\begin{equation}
\begin{aligned}
    &\sum_{r\in[m]} \sigma\left( \langle \vw_{+,r}^{(0)} +\sum_{\tau=0}^{t-1}\Delta\vw_{+,r}^{(\tau)}, \vzeta^* \rangle + b_{+,r}^{(t)}\right) \\
    \le & \sum_{(+,r)\in S^{(0)}_+} \sigma\left( \langle \vw_{+,r}^{(0)} , \vzeta^* \rangle + b_{+,r}^{(0)}\right) 
    + \sum_{(+,r)\notin S^{(0)}_+ } \sigma\left( \langle \vw_{+,r}^{(0)} , \vzeta^* \rangle + b_{+,r}^{(0)}\right) \\
    = & \sum_{r\in[m] } \sigma\left( \langle \vw_{+,r}^{(0)} , \vzeta^* \rangle + b_{+,r}^{(0)}\right)
\end{aligned}
\end{equation}

Additionally, due to Theorem \ref{thm: sgd, universal nonact properties} (and its proof), we know that
\begin{equation}
\begin{aligned}
    & \sum_{(+,r)\in S^{(0)}_+(\vv_{-})} \sum_{p\in\calP(\mX_{\text{hard}};\vv_{-})} \sigma\left( \langle \vw_{+,r}^{(0)} +\sum_{\tau=0}^{t-1}\Delta\vw_{+,r}^{(\tau)}, \alpha_{p}^{\dagger}\vv_{-} + \vzeta_{p} \rangle + b_{+,r}^{(t)}\right) \\
    \le & \sum_{(+,r)\in S^{(0)}_+(\vv_{-})} \sum_{p\in\calP(\mX_{\text{hard}};\vv_{-})} \sigma\left( \langle \vw_{+,r}^{(0)}, \alpha_{p}^{\dagger}\vv_{-} + \vzeta_{p} \rangle + b_{+,r}^{(0)}\right)
\end{aligned}
\end{equation}

It follows that
\begin{equation}
\begin{aligned}
    & F_+^{(t)}(\mX_{\text{hard}}) \\
    \le & O\left(\frac{1}{k_+} s^* \left\vert S^{(0)}_+(\vv_{+,c}) \right\vert \sum_{\tau=0}^{t-1} \Delta A_{+,r^*}^{(\tau)}\right) + \sum_{(+,r)\in S^{(0)}_+(\vv_{+,c})}  \sum_{p\in\calP(\mX_{\text{hard}};\vv_{+,c})} \left\vert \langle \vw_{+,r}^{(0)}, \sqrt{1\pm\iota}\vv_{+,c} + \vzeta_{p} \rangle \right\vert\\
    & + \sum_{r\in[m] } \sigma\left( \langle \vw_{+,r}^{(0)} , \vzeta^* \rangle + b_{+,r}^{(0)}\right) 
    + \sum_{(+,r)\in S^{(0)}_+(\vv_{-})} \sum_{p\in\calP(\mX_{\text{hard}};\vv_{-})} \sigma\left( \langle \vw_{+,r}^{(0)}, \alpha_{p}^{\dagger}\vv_{-} + \vzeta_{p} \rangle + b_{+,r}^{(0)}\right)
\end{aligned}
\end{equation}

On the other hand, for the ``$-$'' neurons, denoting $S_-^{(0)} = \cup_{c=1}^{k_+} S^{(0)}_-(\vv_{+,c}) \cup \cup_{c=1}^{k_-}  S^{(0)}_-(\vv_{-,c}) \cup S^{(0)}_-(\vv_{+}) \cup S^{(0)}_-(\vv_{-})$,
%$\calV_{feat} = \{\vv_{+,c}\}_{c=1}^{k_+}\cup\{\vv_{-,c}\}_{c=1}^{k_-}\cup\{\vv_+, \vv_-\}$,
\begin{equation} 
\begin{aligned}
    F_-^{(t)}(\mX_{\text{hard}}) 
    \ge & \sum_{(+,r)\in S^{*(0)}_-(\vv_{-})} \sum_{p\in\calP(\mX_{\text{hard}};\vv_{-})} \sigma\left(\langle \vw_{-,r}^{(0)} + \sum_{\tau=0}^{t-1} \Delta \vw_{-,r}^{(\tau)}, \alpha_{p}^{\dagger}\vv_{-} + \vzeta_{p} \rangle + b_{+,r}^{(t)} \right)\\
    & + \sum_{(+,r)\notin S_-^{(0)}} \sigma\left(\langle \vw_{-,r}^{(0)}, \vzeta^* \rangle + b_{+,r}^{(0)} \right),
\end{aligned}
\end{equation}
note that the last line is true because neurons outside the set $S^{(0)}_-$ cannot receive any update during training with probability at least $1 - O\left(\frac{mNPk_+t}{\poly(d)} \right)$ due to Theorem \ref{thm: sgd, universal nonact properties}. Estimating the activation value of the neurons from $S^{*(0)}_-(\vv_{-})$ on the feature noise patches requires some care. We define time $t_-$ to be the first point in time such that any $(-,r^*)\in S^{*(0)}_-(\vv_{-})$ satisfies $\sum_{\tau=0}^{t_-}\Delta A_{-,r^*}^{(\tau)} \ge \sigma_0 \log^5(d)$, and beyond this point in time, i.e. for $t \in [t_-, T_1]$, the neurons in $S^{*(0)}_-(\vv_{-})$ have to activate with high probability, since
\begin{equation}
\begin{aligned}
    \langle \vw_{-,r}^{(0)} + \sum_{\tau=0}^{t-1} \Delta \vw_{-,r}^{(\tau)}, \alpha_{p}^{\dagger}\vv_{-} + \vzeta_{p} \rangle + b_{+,r}^{(t)} 
    \ge & \left( 1 - O\left(\frac{1}{\log^5(d)}\right) \right)\sigma_0 \log^5(d)/\log^4(d) - O(\sigma_0\sqrt{\log(d)}) \\
    > & 0.
\end{aligned}
\end{equation}

Now we can proceed to prove the lemma for $t \in (0, T_1]$ by combining the above estimates for $F_+^{(t)} (\mX_{\text{hard}})$ and $F_-^{(t)}(\mX_{\text{hard}})$.

For $t \in (0, t_-]$, relying argument similar to the situation of $t = 0$ and the fact that $m - \vert S_-^{(0)} \vert = (1 - o(1))m$,
\begin{equation} 
\begin{aligned}
    & \Bigg\{\sum_{(+,r)\notin S_-^{(0)}} \mathbbm{1}\{\langle \vw_{-,r}^{(0)}, \vzeta^*\rangle + b_{-,r}^{(0)} > 0\}\langle \vw_{-,r}^{(0)}, \vzeta^*\rangle \\
    & - \sum_{r=1}^m\mathbbm{1}\{\langle \vw_{+,r}^{(0)}, \vzeta^*\rangle + b_{+,r}^{(0)} > 0\}\langle \vw_{+,r}^{(0)}, \vzeta^*\rangle\Bigg\}(1\pm o(1)) > 0 \\
    \implies & F_-^{(t)}(\mX_{\text{hard}}) - F_+^{(t)} (\mX_{\text{hard}}) > 0 \\
\end{aligned}
\end{equation}
which has to be true with probability $\Omega(1)$.

On the other hand, with $t \in (t_-, T_1]$, we have
\begin{equation} 
\begin{aligned}
    & F_-^{(t)}(\mX_{\text{hard}}) - F_+^{(t)} (\mX_{\text{hard}})\\
    \ge  & \Bigg\{\sum_{\tau=0}^{t-1}\left( 1 - O\left(\frac{1}{\log^5(d)}\right) \right) s^{\dagger} |S^{*(0)}_-(\vv_{-})| \Delta A_{-,r^*}^{(\tau)} - O(\sigma_0\sqrt{\log(d)}) \\
    & - O\left(\frac{1}{k_+} s^* \left\vert S^{(0)}_+(\vv_{+,c}) \right\vert \sum_{\tau=0}^{t-1} \Delta A_{+,r^*}^{(\tau)}\right) \Bigg\}\\
    & + \Bigg\{ \sum_{(+,r)\notin S_-^{(0)}} \sigma\left(\langle \vw_{-,r}^{(0)}, \vzeta^* \rangle + b_{+,r}^{(0)} \right) - \sum_{(+,r)\in S^{(0)}_+(\vv_{+,c})}  \sum_{p\in\calP(\mX_{\text{hard}};\vv_{+,c})} \left\vert \langle \vw_{+,r}^{(0)}, \sqrt{1\pm\iota}\vv_{+,c} + \vzeta_{p} \rangle \right\vert\\
    & - \sum_{r\in[m] } \sigma\left( \langle \vw_{+,r}^{(0)} , \vzeta^* \rangle + b_{+,r}^{(0)}\right) - \sum_{(+,r)\in S^{(0)}_+(\vv_{-})} \sum_{p\in\calP(\mX_{\text{hard}};\vv_{-})} \sigma\left( \langle \vw_{+,r}^{(0)}, \alpha_{p}^{\dagger}\vv_{-} + \vzeta_{p} \rangle + b_{+,r}^{(0)}\right) \Bigg\}
\end{aligned}
\end{equation}

Let us begin analyzing the first $\{\cdot\}$ bracket. 

By Proposition \ref{prop: init geometry, coarse} we know that $\left\vert S^{*(0)}_-(\vv_{-})\right\vert = (1\pm O(1/\log^5(d)))\left\vert S^{(0)}_+(\vv_{+,c}) \right\vert$, and by Lemma \ref{lemma: sgd phase II, common vs finegrained ratio}, we know that $\Delta A_{+,r^*}^{(\tau)} \le O(\log(d)\Delta A_{-,r^*}^{(\tau)})$, therefore, 
\begin{equation}
\begin{aligned}
    O\left(\frac{1}{k_+} s^* \left\vert S^{(0)}_+(\vv_{+,c}) \right\vert \sum_{\tau=0}^{t-1} \Delta A_{+,r^*}^{(\tau)}\right) 
    \le & O\left(\frac{\log(d)}{k_+} s^* \left\vert S^{*(0)}_-(\vv_{-}) \right\vert \sum_{\tau=0}^{t-1} \Delta A_{-,r^*}^{(\tau)}\right) \\
    \ll & \sum_{\tau=0}^{t-1}\left( 1 - O\left(\frac{1}{\log^5(d)}\right) \right) s^{\dagger} |S^{*(0)}_-(\vv_{-})| \Delta A_{-,r^*}^{(\tau)} - O(\sigma_0\sqrt{\log(d)})
\end{aligned}
\end{equation}

Therefore, we obtained the simpler lower bound
\begin{equation} 
\begin{aligned}
    & F_-^{(t)}(\mX_{\text{hard}}) - F_+^{(t)} (\mX_{\text{hard}})\\
    \ge & \Bigg\{ \sum_{(+,r)\notin S_-^{(0)}} \sigma\left(\langle \vw_{-,r}^{(0)}, \vzeta^* \rangle + b_{+,r}^{(0)} \right)  - \sum_{(+,r)\in S^{(0)}_+(\vv_{+,c})}  \sum_{p\in\calP(\mX_{\text{hard}};\vv_{+,c})} \left\vert \langle \vw_{+,r}^{(0)}, \sqrt{1\pm\iota}\vv_{+,c} + \vzeta_{p} \rangle \right\vert \\
    & - \sum_{r\in[m] } \sigma\left( \langle \vw_{+,r}^{(0)} , \vzeta^* \rangle + b_{+,r}^{(0)}\right) - \sum_{(+,r)\in S^{(0)}_+(\vv_{-})} \sum_{p\in\calP(\mX_{\text{hard}};\vv_{-})} \sigma\left( \langle \vw_{+,r}^{(0)}, \alpha_{p}^{\dagger}\vv_{-} + \vzeta_{p} \rangle + b_{+,r}^{(0)}\right) \Bigg\}
\end{aligned}
\end{equation}
which is greater than $0$ with probability $\Omega(1)$ (by relying on an argument almost identical to the $t=0$ case again, and noting that $m - |S_-^{(0)}| = (1 - o(1))m$). This concludes the proof.

\end{proof}

\begin{lemma}[Probability of mistake on easy samples is low after training]
\label{lemma: coarse, easy sample mistake prob}
For $t\in[T_{1,1},T_1]$, given an easy test sample $(\mX_{\text{easy}},y)$,
\begin{equation}
    \mathbb{P}\left[F_y^{(T)}(\mX_{\text{easy}}) \le F_{y'}^{(T)}(\mX_{\text{easy}})\right] \le o(1).
\end{equation}
\end{lemma}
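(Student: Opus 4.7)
The plan is to establish $F_+^{(t)}(\mX_{\text{easy}}) - F_-^{(t)}(\mX_{\text{easy}}) > 0$ with probability at least $1 - o(1)$, WLOG assuming $y = +$. The argument splits into a lower bound on $F_+^{(t)}(\mX_{\text{easy}})$ and an upper bound on $F_-^{(t)}(\mX_{\text{easy}})$. Throughout, I condition on the high-probability events from Theorem~\ref{prop: phase 1 sgd induction}, Lemma~\ref{lemma: phase II, general}, Lemma~\ref{lem: net response, final}, Lemma~\ref{lemma: phase II, after T1,1}, and Theorem~\ref{thm: sgd, universal nonact properties}, whose union still has probability $1 - o(1)$ for $t \in [0,T_1]$ with $T_1 \in \poly(d)$.

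For the lower bound on $F_+^{(t)}(\mX_{\text{easy}})$: the easy test sample has the same patch distribution as a training sample, so the derivation of the network-response estimate in Lemma~\ref{lemma: phase II, general} applies verbatim (only patch-level binomial concentration on the counts of common- and subclass-feature patches, plus the neuron activation patterns established in Theorem~\ref{prop: phase 1 sgd induction}, are invoked). Letting $c$ denote the subclass of $\mX_{\text{easy}}$, this gives
\[
F_+^{(t)}(\mX_{\text{easy}}) \ge (1 - s^{*-1/3})\sqrt{1-\iota}\bigl(1 - O(1/\log^5(d))\bigr)\, s^*\Bigl(A_{+,r^*}^{*(t)}|S_+^{*(0)}(\vv_+)| + A_{+,c,r^*}^{*(t)}|S_+^{*(0)}(\vv_{+,c})|\Bigr).
\]
By Lemma~\ref{lemma: phase II, after T1,1}, for $t \in [T_{1,1}, T_1]$ the training-sample response $F_+^{(t)}(\mX_n^{(t)})$ satisfies $\exp(-F_+^{(t)}(\mX_n^{(t)})) \le O(1/\log^5(d))$, i.e.\ $s^* A_{+,r^*}^{*(t)}|S_+^{*(0)}(\vv_+)| \ge \Omega(\log\log^5(d))$, and this lower bound only grows for $t > T_{1,1}$ by Lemma~\ref{lem: net response, final}. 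Hence $F_+^{(t)}(\mX_{\text{easy}}) \ge \Omega(\log\log^5(d))$.

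For the upper bound on $F_-^{(t)}(\mX_{\text{easy}})$: an easy $+$-class sample contains only $\vv_+$-dominated patches, $\vv_{+,c}$-dominated patches, and pure Gaussian noise patches---crucially, no $\vv_-$ or $\vv_{-,c'}$ content. By Theorem~\ref{thm: sgd, universal nonact properties}, no $-$-class neuron can activate on the feature patches (since those neurons only receive updates concentrated along $\vv_-$ and $\vv_{-,c'}$ directions, orthogonal to $\vv_+$ and $\vv_{+,c}$). For each noise patch $\vzeta_p$, I bound
\[
\langle \vw_{-,r}^{(t)}, \vzeta_p\rangle + b_{-,r}^{(t)} \le \bigl|\langle \vw_{-,r}^{(0)}, \vzeta_p\rangle\bigr| + \Bigl|\Bigl\langle \textstyle\sum_{\tau<t}\Delta \vw_{-,r}^{(\tau)}, \vzeta_p\Bigr\rangle\Bigr| + b_{-,r}^{(t)}.
\]
The initialization term is $O(\sigma_0 \sigma_\zeta \sqrt{d\log d})$ by Lemma~\ref{lemma: independent gaussian vector inner product concentration}, already dominated by $|b_{-,r}^{(0)}|$. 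Because $\vzeta_p$ is drawn \emph{fresh} and independently of the training trajectory, its inner product with the accumulated update concentrates at magnitude $O\bigl(\|\sum_\tau \Delta\vw_{-,r}^{(\tau)}\|_2 \sigma_\zeta \sqrt{d\log d}\bigr)$. Meanwhile, the bias-update rule forces $b_{-,r}^{(0)} - b_{-,r}^{(t)} \ge \Omega(\|\sum_\tau \Delta\vw_{-,r}^{(\tau)}\|_2/\log^5(d))$, which dominates the noise-alignment term by a factor $\polyln(d)$ under the parameter choices $\sigma_\zeta = 1/(\log^{10}(d)\sqrt{d})$ and $P\sigma_\zeta \ge \omega(\polyln(d))$. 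A union bound over all $m = \Theta(d^{2+2c_0})$ $-$-class neurons and all $O(P)$ noise patches of the test sample (using $P \le \poly(d)$) shows no $-$-neuron activates on any patch of $\mX_{\text{easy}}$ with probability $1 - O(mP/\poly(d))$, yielding $F_-^{(t)}(\mX_{\text{easy}}) = 0$.

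Combining the two bounds gives $F_+^{(t)}(\mX_{\text{easy}}) - F_-^{(t)}(\mX_{\text{easy}}) \ge \Omega(\log\log^5(d)) > 0$, proving the lemma. The main obstacle I anticipate is the careful handling of the union bound in the upper-bound step: although the mechanism is exactly the bias-threshold argument used in Theorem~\ref{thm: sgd, universal nonact properties}, it must be re-derived against the \emph{fresh} test-sample noise (independent of the trajectory) rather than training noise, which requires propagating the concentration of $\langle \Delta\vw_{-,r}^{(\tau)}, \vzeta_p\rangle$ uniformly over $\tau \in [0,T_1]$, $r \in [m]$, and the noise patches of $\mX_{\text{easy}}$.
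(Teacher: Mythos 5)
Your overall structure matches the paper's — lower-bound $F_+^{(t)}$ via the trained feature-detector neurons and upper-bound $F_-^{(t)}$ via non-activation/bounded-activation arguments — but there is a genuine error in the upper-bound step. You claim that ``by Theorem~\ref{thm: sgd, universal nonact properties}, no $-$-class neuron can activate on the feature patches,'' and thus $F_-^{(t)}(\mX_{\text{easy}}) = 0$. This misreads the theorem. Point~1 (non-activation invariance) only says neurons \emph{outside} $S_-^{(0)}(\vv_+)$ cannot activate on $\vv_+$-dominated patches. But by Proposition~\ref{prop: init geometry, coarse} there are $\Theta(d^{c_0}/\sqrt{\log d})$ neurons \emph{inside} $S_-^{(0)}(\vv_+)$ whose initial alignment with $\vv_+$ exceeds the bias threshold, and these \emph{do} activate on common-feature patches of a $+$-class sample. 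The theorem's point~3 (off-diagonal nonpositive growth) controls them not by killing the activation but by showing it never exceeds the time-$0$ value $O(\sigma_0\sqrt{\log d})$. Summing over these $\Theta(d^{c_0})$ neurons and $\Theta(s^*)$ patches, the paper correctly gets $F_-^{(t)}(\mX_{\text{easy}}) \le O(s^*d^{c_0}\sigma_0) \le o(1)$, which is nonzero. Your final inequality $F_+^{(t)} - F_-^{(t)} > 0$ is still true, but the justification is wrong: the correct argument must route through point~3 and the explicit $o(1)$ bound, not a vacuous claim that all $-$-neurons are dead on $+$-feature patches.

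Secondarily, your noise-patch argument re-derives Theorem~\ref{thm: sgd, universal nonact properties} point~2 from scratch, including your stated worry about propagating concentration against ``fresh test-sample noise.'' That work is unnecessary: the theorem is already stated for patches $\vx_{n,p}^{(\tau)}$ at \emph{any} time $\tau \ge t$ while the weights are at time $t' \le t$, which is precisely the fresh-and-independent-of-the-trajectory situation you are concerned about. The paper simply cites the theorem, which is cleaner.
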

\begin{proof}
    Without loss of generality, assume the true label of $\mX_{\text{easy}}$ is $+1$. Assume $t \ge T_{1,1}$.
    
    Firstly, conditioning on the events of Theorem \ref{thm: sgd, universal nonact properties}, the following upper bound on $F_-^{(t)}(\mX_{\text{easy}})$ holds with probability at least $1 - O\left(\frac{m}{\poly(d)} \right)$:
    \begin{equation}
    \begin{aligned}
        F_-^{(t)}(\mX_{\text{easy}}) 
        = & \sum_{(-,r)\in S^{(0)}_-(\vv_{+})}  \sum_{p\in\calP(\mX_{\text{easy}};\vv_{+})} \sigma\left( \langle \vw_{-,r}^{(t)}, \sqrt{1\pm\iota}\vv_{+} + \vzeta_{p} \rangle + b_{-,r}^{(t)}\right) \\
        & + \sum_{(-,r)\in S^{(0)}_-(\vv_{+,c})}  \sum_{p\in\calP(\mX_{\text{easy}};\vv_{+,c})} \sigma\left( \langle \vw_{-,r}^{(t)}, \sqrt{1\pm\iota}\vv_{+,c} + \vzeta_{p} \rangle + b_{-,r}^{(t)}\right) \\
        \le & \sum_{(-,r)\in S^{(0)}_-(\vv_{+})}  \sum_{p\in\calP(\mX_{\text{easy}};\vv_{+})} \sigma\left( \langle \vw_{-,r}^{(0)}, \sqrt{1\pm\iota}\vv_{+} + \vzeta_{p} \rangle + b_{-,r}^{(0)}\right) \\
        & + \sum_{(-,r)\in S^{(0)}_-(\vv_{+,c})}  \sum_{p\in\calP(\mX_{\text{easy}};\vv_{+,c})} \sigma\left( \langle \vw_{-,r}^{(0)}, \sqrt{1\pm\iota}\vv_{+,c} + \vzeta_{p} \rangle + b_{-,r}^{(0)}\right) \\
        < & O\left(s^* d^{c_0} \sigma_0\right) \\
        \le & o(1),
    \end{aligned}
    \end{equation}
    
    and on the other hand,
    \begin{equation}
    \begin{aligned}
        F_+^{(t)}(\mX_{\text{easy}}) 
        \ge & \sum_{(+,r)\in S^{*(0)}_+(\vv_{+})}  \sum_{p\in\calP(\mX_{\text{easy}};\vv_{+})} \sigma\left( \langle \vw_{+,r}^{(t)}, \sqrt{1\pm\iota}\vv_{+} + \vzeta_{p} \rangle + b_{+,r}^{(t)}\right) \\
        & + \sum_{(+,r)\in S^{*(0)}_+(\vv_{+,c})}  \sum_{p\in\calP(\mX_{\text{easy}};\vv_{+,c})} \sigma\left( \langle \vw_{+,r}^{(t)}, \sqrt{1\pm\iota}\vv_{+,c} + \vzeta_{p} \rangle + b_{+,r}^{(t)}\right) \\
        > & \Omega(1).
    \end{aligned}
    \end{equation}

    Therefore, $F_+^{(t)}(\mX_{\text{easy}}) \gg F_-^{(t)}(\mX_{\text{easy}})$, which completes the proof.
\end{proof}

\begin{lemma} [\cite{boas1971}]
\label{lemma: harmonic series}
The partial sum of harmonic series satisfies the following identity:
\begin{equation}
    \sum_{k=1}^{n-1}\frac{1}{k} = \log(n) + \calE - \frac{1}{2n} - \epsilon_n
\end{equation}
where $\calE$ is the Euler–Mascheroni constant (approximately 0.58), and $\epsilon_n \in [0, 1/8n^2]$.
\end{lemma}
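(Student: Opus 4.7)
The plan is to prove this as a classical Euler--Maclaurin estimate for the harmonic number. The key device is the per-term trapezoidal discrepancy
\[
d_k \;=\; \frac{1}{2}\Big(\tfrac{1}{k} + \tfrac{1}{k+1}\Big) - \log\!\Big(\tfrac{k+1}{k}\Big), \qquad k \ge 1,
\]
which is strictly positive because $x\mapsto 1/x$ is convex on $[k,k+1]$ (the chord lies above the curve), and can be put in the exact Peano-kernel form
\[
d_k \;=\; \int_0^1 \frac{s(1-s)}{(k+s)^3}\, ds
\]
by one integration by parts of $\int_k^{k+1}(1/x)\,dx$. Summing the defining expression telescopes on both sides:
\[
\sum_{k=1}^{n-1} d_k \;=\; H_{n-1} - \tfrac{1}{2} + \tfrac{1}{2n} - \log n.
\]

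Second, I would identify the limit with $\calE$. The right-hand side shows that $H_{n-1} - \log n + \tfrac{1}{2n}$ is a monotone increasing sequence (each $d_k>0$), and bounded above (since $d_k = O(1/k^3)$), so it converges; moreover, it differs from $H_n - \log n$ by $o(1)$, so its limit equals the standard Euler--Mascheroni constant $\calE = \lim_{n\to\infty}(H_n-\log n)$. Consequently
\[
\calE \;=\; \tfrac{1}{2} + \sum_{k=1}^{\infty} d_k,
\qquad
H_{n-1} \;=\; \log n + \calE - \tfrac{1}{2n} - \sum_{k=n}^{\infty} d_k,
\]
so setting $\epsilon_n := \sum_{k=n}^{\infty} d_k$ immediately gives the stated identity, and $\epsilon_n \ge 0$ follows from the positivity of each $d_k$.

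Third, for the upper bound $\epsilon_n \le 1/(8n^2)$ I would exchange sum and integral and use a sharp telescoping estimate. Writing
\[
\epsilon_n \;=\; \int_0^1 s(1-s) \sum_{k=n}^{\infty} \frac{1}{(k+s)^3}\, ds
\]
and applying the pointwise bound $(k+s)^{-3} \le \tfrac{1}{2}\bigl[(k+s-\tfrac12)^{-2} - (k+s+\tfrac12)^{-2}\bigr]$ (which follows from $(k+s)^2 \ge (k+s)^2-\tfrac14$), the inner sum telescopes to $1/\bigl(2(n+s-\tfrac12)^2\bigr)$, hence
\[
\epsilon_n \;\le\; \int_0^1 \frac{s(1-s)}{2(n+s-\tfrac12)^2}\, ds \;\le\; \frac{1}{12(n-\tfrac12)^2}.
\]
The inequality $1/(12(n-\tfrac12)^2) \le 1/(8n^2)$ is equivalent to $3(n-\tfrac12)^2 \ge 2n^2$, which holds for $n \ge 3$. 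The remaining cases $n=1,2$ I would verify by a direct arithmetic check using the definition of $\calE$ (e.g.\ $\epsilon_1 = \calE - \tfrac12 \approx 0.077 < 0.125$, $\epsilon_2 = \calE - \log 2 - \tfrac14 \approx 0.020 < 0.031$).

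The main technical obstacle is this final step: the naive bound $d_k < 1/(6k^3)$ together with $\sum_{k\ge n}1/k^3 \lesssim 1/(2(n-1)^2)$ only produces $\epsilon_n \lesssim 1/\bigl(12(n-1)^2\bigr)$, which is insufficient for small $n$. Replacing this by the midpoint-shifted telescoping bound above, combined with the symmetry of the Peano kernel $s(1-s)$ about $s=\tfrac12$, is what tightens the constant from $1/12$ (on $(n-1)^2$) to $1/12$ (on $(n-\tfrac12)^2$) and ultimately yields the required $1/(8n^2)$ once a couple of small cases are checked by hand.
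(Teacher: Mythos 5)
Your proposal is correct, and it goes beyond what the paper does: the paper does not prove this lemma at all, it simply cites Boas (1971). So there is no ``paper's proof'' to compare against; what you have written is a self-contained, elementary proof that the paper treats as known. Each step checks out. The Peano-kernel identity $d_k = \int_0^1 s(1-s)(k+s)^{-3}\,ds$ is correct (it requires two integrations by parts starting from $\int_0^1 f(k+s)\,ds$, not one as you say, but the formula itself is standard and right). The telescoping identity $\sum_{k=1}^{n-1} d_k = H_{n-1} - \tfrac12 + \tfrac1{2n} - \log n$ is correct. Passing to the limit, identifying $\calE = \tfrac12 + \sum_{k\ge1} d_k$, and defining $\epsilon_n = \sum_{k\ge n} d_k \ge 0$ is clean. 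The pointwise inequality $(k+s)^{-3} \le \tfrac12[(k+s-\tfrac12)^{-2} - (k+s+\tfrac12)^{-2}]$ holds for $k+s > \tfrac12$, the telescoping collapses to $\tfrac12(n+s-\tfrac12)^{-2}$, and with $\int_0^1 s(1-s)\,ds = \tfrac16$ you get $\epsilon_n \le \tfrac1{12(n-\tfrac12)^2}$, which is $\le \tfrac1{8n^2}$ exactly when $4n^2 - 12n + 3 \ge 0$, i.e.\ $n \ge 3$. The direct arithmetic for $n=1,2$ closes the gap.

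One cosmetic point: the closing remark about exploiting ``the symmetry of the Peano kernel $s(1-s)$ about $s=\tfrac12$'' is a red herring. Your actual bound only uses $(n+s-\tfrac12)^2 \ge (n-\tfrac12)^2$ for $s \ge 0$; symmetrizing the integrand around $s = \tfrac12$ produces a \emph{lower} bound on the integral, not an upper bound, so it cannot be what tightens the constant. The genuine improvement over the naive $d_k < \tfrac{1}{6k^3}$ route comes entirely from the midpoint-shifted telescoping, which you do carry out correctly.
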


\newpage

\section{Coarse-grained SGD, Poly-time properties}
In this section, set $T_e \in \poly(d)$.

Please note that we are performing stochastic gradient descent on easy samples only.

\begin{theorem}
\label{thm: sgd, universal nonact properties}
Fix any $t\in[0, T_e]$.
\begin{enumerate}
    \item (Non-activation invariance) For any $\tau \ge t$, with probability at least $ 1-O\left( \frac{mk_+NPt}{\poly(d)}\right)$, any feature $\vv \in \{\vv_{+,c}\}_{c=1}^{k_+} \cup \{\vv_{-,c}\}_{c=1}^{k_-}\cup\{\vv_+,\vv_-\}$, any $t'\le t$, $(+,r)\notin S_+^{(0)}(\vv)$ and $\vv$-dominated patch sample $\vx_{n,p}^{(\tau)} = \alpha_{n,p}^{(\tau)}\vv + \vzeta_{n,p}^{(\tau)}$, the following holds:
    \begin{equation}
        \sigma\left(\langle \vw_{+,r}^{(t')}, \vx_{n,p}^{(\tau)} \rangle + b_{+,r}^{(t')}\right) = 0
    \end{equation}
    
    \item (Non-activation on noise patches) For any $\tau \ge t$, with probability at least $ 1-O\left( \frac{mNPt}{\poly(d)}\right)$, for every $t'\le t$, $r\in[m]$ and noise patch $\vx_{n,p}^{(\tau)} = \vzeta_{n,p}^{(\tau)}$, the following holds: 
    \begin{equation}
        \sigma\left(\langle \vw_{+,r}^{(t')}, \vx_{n,p}^{(\tau)} \rangle + b_{+,r}^{(t')}\right) = 0
    \end{equation}

    \item (Off-diagonal nonpositive growth) For any $\tau \ge t$, with probability at least $ 1-O\left( \frac{mk_+NPt}{\poly(d)}\right)$, for any $t'\le t$, any feature $\vv \in \{\vv_{-,c}\}_{c=1}^{k_-}\cup\{\vv_-\}$, any $(+,r) \in S^{(0)}_+(\vv)$ and $\vv$-dominated patch $\vx_{n,p}^{(\tau)} = \alpha_{n,p}^{(\tau)}\vv + \vzeta_{n,p}^{(\tau)}$, $\sigma\left(\langle \vw_{+,r}^{(t')}, \vx_{n,p}^{(\tau)} \rangle + b_{+,r}^{(t')}\right) \le \sigma\left(\langle \vw_{+,r}^{(0)}, \vx_{n,p}^{(\tau)} \rangle + b_{+,r}^{(0)}\right)$.
\end{enumerate}
\end{theorem}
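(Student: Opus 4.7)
The plan is to prove all three parts jointly by induction on $t$ that runs in lockstep with the inductive arguments of Theorem~\ref{prop: phase 1 sgd induction} and Lemma~\ref{lemma: phase II, general}. Those results characterize each increment $\Delta \vw_{+,r}^{(\tau)}$ as a feature-directed signal supported on $\calU_{+,r}^{(0)}$ plus an isotropic Gaussian perturbation of controlled variance, and their derivations in turn invoke the non-activation and non-growth facts asserted here. I would therefore bundle both sets of claims into a single induction hypothesis and advance them together, rather than treat them sequentially.

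For the base case $t=0$, Part~1 follows directly from the threshold definitions: for $(+,r)\notin S_+^{(0)}(\vv)$, $\langle \vw_{+,r}^{(0)},\vv\rangle<\sigma_0\sqrt{4+2c_0}\sqrt{\log(d)-1/\log^5(d)}$ while $b_{+,r}^{(0)}=-\sigma_0\sqrt{4+2c_0}\sqrt{\log(d)}$, and the $a-b=(a^2-b^2)/(a+b)$ trick from the base case of Theorem~\ref{prop: phase 1 sgd induction} (plus a Gaussian concentration bound $|\langle \vw_{+,r}^{(0)},\vzeta\rangle|\le O(\sigma_0/\log^{9}(d))$) yields a strictly negative pre-activation. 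Part~2 is even simpler because a noise patch carries no feature component, so only the Gaussian concentration bound is needed. Part~3 is vacuously true at $t=0$.

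For the inductive step, given the joint hypothesis through time $t-1$, I would decompose
\begin{equation*}
\langle \vw_{+,r}^{(t')},\alpha\vv+\vzeta\rangle + b_{+,r}^{(t')} = \Big(\langle \vw_{+,r}^{(0)},\alpha\vv+\vzeta\rangle + b_{+,r}^{(0)}\Big) + \alpha\sum_{\tau<t'}\langle \Delta\vw_{+,r}^{(\tau)},\vv\rangle + \sum_{\tau<t'}\langle \Delta\vw_{+,r}^{(\tau)},\vzeta\rangle + \sum_{\tau<t'}\Delta b_{+,r}^{(\tau)}.
\end{equation*}
By the Phase~I/II hypothesis, $\Delta\vw_{+,r}^{(\tau)}=\sum_{\vv'\in\calU_{+,r}^{(0)}} c^{(\tau)}(\vv')\vv'+\Delta\vzeta_{+,r}^{(\tau)}$; since $(+,r)\notin S_+^{(0)}(\vv)$ implies $\vv\notin\calU_{+,r}^{(0)}$, orthonormality of $\calV$ annihilates the signal projection onto $\vv$, leaving only the accumulated Gaussian term $\langle\sum_\tau \Delta\vzeta_{+,r}^{(\tau)},\vv\rangle$, whose variance $\sigma_{\text{acc}}^{2}\le t\cdot(\eta\sigma_\zeta\sqrt{s^*/N}/P)^{2}$ stays $o(\sigma_0^{2}/\log^{10}(d))$ for $t\le\poly(d)$ under the chosen parameters. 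The cross term $\sum_\tau\langle\Delta\vw_{+,r}^{(\tau)},\vzeta\rangle$ is likewise controlled by Gaussian concentration and $\|\vzeta\|\le O(\sigma_\zeta\sqrt{d})$, and $\sum_\tau\Delta b_{+,r}^{(\tau)}\le 0$. All correction terms are thus dominated by the negative margin of the base-case term, so the pre-activation remains negative. Part~2 uses the same decomposition with $\alpha=0$ and is strictly easier, since the noise patch has, in distribution, no feature-aligned component to exploit. Part~3 adds one observation: when $\vv\in\{\vv_-\}\cup\{\vv_{-,c}\}_{c=1}^{k_-}$, $\vv$-dominated patches occur only in $-$-class samples, for which the cross-entropy gradient carries the factor $-\text{logit}_+^{(\tau)}(\mX_n)\le 0$; consequently the $\vv$-component of $\Delta\vw_{+,r}^{(\tau)}$ is non-positive (up to vanishing Gaussian contributions from $+$-class samples, which are absorbed in the high-probability statement) and $\Delta b_{+,r}^{(\tau)}\le 0$, so the pre-activation on $\alpha\vv+\vzeta$ is monotonically non-increasing in $t'$.

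The main obstacle is the circular dependence between this theorem and the Phase~I/II update formulas: those formulas presuppose which neurons fire on which patches, which is exactly what this theorem certifies. Running the two inductions simultaneously resolves the circularity, but forces a careful check at each step that the cumulative noise budget $\sigma_{\text{acc}}\sqrt{\log(d)}$ stays strictly below the bias margin $\Theta(\sigma_0\sqrt{\log(d)})$ for all $t\le\poly(d)$; this is precisely where the parameter choices $\eta=\Theta(\sigma_0)$, $\sigma_\zeta=1/(\log^{10}(d)\sqrt{d})$, and $\sigma_0\le O(1/(d^{3} s^*\log(d)))$ pull their weight. A final union bound over $m$ neurons, $O(k_+)$ features, $O(NP)$ patches per sample, and $t$ iterations yields the stated failure probability $O(mk_+NPt/\poly(d))$.
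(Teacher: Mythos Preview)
Your base case and the inductive scaffolding are fine, but the inductive step has a genuine gap in the comparison you make. You claim the cross term $\sum_{\tau<t'}\langle\Delta\vw_{+,r}^{(\tau)},\vzeta\rangle$ is ``controlled by Gaussian concentration and $\|\vzeta\|\le O(\sigma_\zeta\sqrt d)$'' and that ``all correction terms are thus dominated by the negative margin of the base-case term.'' That comparison fails quantitatively. The base-case margin is only $\Theta(\sigma_0/\log^{5.5}(d))$, whereas for a neuron with $\vv_+\in\calU_{+,r}^{(0)}$ tested on a $\vv_{+,c}$-dominated patch, the accumulated signal has $\|\sum_\tau\Delta\vw_{+,r}^{(\tau)}\|\approx A_{+,r^*}^{*(t)}=O(\log^{1.5}(d)/(s^*d^{c_0}))$, so the cross term against a fresh $\vzeta$ is of order $\sigma_\zeta\sqrt d\cdot A_{+,r^*}^{*(t)}=O(1/(s^*d^{c_0}\log^{8.5}(d)))$. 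Under the parameter choice $\sigma_0\le O(1/(d^{3}s^*\log(d)))$ this is larger than $\sigma_0/\log^{5.5}(d)$ by a factor that is polynomial in $d$. The initial margin cannot absorb it; using only $\sum_\tau\Delta b_{+,r}^{(\tau)}\le 0$ throws away exactly the mechanism you need.

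What is missing is that the bias update does the cancellation, not the initial margin. Since $\Delta b_{+,r}^{(\tau)}=-\|\Delta\vw_{+,r}^{(\tau)}\|_2/\log^5(d)$ while every cross contribution $\langle\alpha_{n,p}^{(\tau)}\vv'+\vzeta_{n,p}^{(\tau)},\,\alpha\vv+\vzeta\rangle$ with $\vv'\perp\vv$ is at most $O(1/\log^{9}(d))$ in absolute value, one gets $\langle\Delta\vw_{+,r}^{(\tau)},\alpha\vv+\vzeta\rangle+\Delta b_{+,r}^{(\tau)}<0$ \emph{at every step}. The paper isolates this as the standalone Lemmas~\ref{lemma: sgd phase 1, nonact invar, t=1} and~\ref{lemma: sgd phase 1, nonact on noise}, which take as hypothesis only that the update is supported on feature patches in $\calM\setminus\{\vv^*\}$ (ensured by the induction hypothesis of Theorem~\ref{thm: sgd, universal nonact properties} itself, Parts~1--2). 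This makes the proof entirely self-contained: it never calls on the Phase~I/II update characterizations of Theorem~\ref{prop: phase 1 sgd induction} or Lemma~\ref{lemma: phase II, general}, so the circular dependence you identify and propose to resolve by ``bundling'' the inductions is an artifact of your route rather than a feature of the result. Your Part~3 sketch is essentially right in spirit, but it too needs the per-step bias domination to control the ``vanishing Gaussian contributions'' you allude to, for the same reason.
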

\begin{proof}

\textcolor{blue}{\textbf{Base case $t = 0$.}}

\textcolor{brown}{\textit{1. (Nonactivation invariance)}}

Choose any $\tau \ge 0$, $\vv^*$ from the set $\{\vv_{+,c}\}_{c=1}^{k_+} \cup \{\vv_{-,c}\}_{c=1}^{k_-}\cup\{\vv_+,\vv_-\}$. We will work with neuron sets in the ``$+$'' class in this proof; the ``$-$''-class case can be handled in the same way.
    
First, we need to show that, for every $n$ such that $\vert \calP(\mX_n^{(\tau)}; \vv^*)\vert > 0$ and $p\in\calP(\mX_n^{(\tau)}; \vv^*)$, for every $(+,r)$ neuron index, 
\begin{equation}
    \langle \vw_{+,r}^{(0)}, \vv^* \rangle < \sigma_0 \sqrt{4 + 2c_0} \sqrt{\log(d) - \frac{1}{\log^5(d)}} \implies \sigma\left(\langle \vw_{+,r}^{(0)}, \vx_{n,p}^{(\tau)}\rangle + b_{+,r}^{(0)} \right) = 0
\end{equation}

This is indeed true. The following holds with probability at least $1 - O\left(\frac{mNP}{\poly(d)}\right)$ for all $(+,r)\notin S_+^{(0)}(\vv)$ and all such $\vx_{n,p}^{(\tau)}$:
\begin{equation}
\begin{aligned}
    \langle \vw_{+,r}^{(0)}, \vx_{n,p}^{(\tau)}\rangle + b_{+,r}^{(0)} 
    \le & \sigma_0 \sqrt{1+\iota} \sqrt{(4+2c_0)(\log(d) - 1/\log^5(d))} + O\left(\frac{\sigma_0}{\log^{9}(d)}\right) - \sqrt{4 + 2c_0}\sqrt{\log(d) } \sigma_0\\
    = & \sigma_0 \left(\frac{(4+2c_0)(1 + \iota)(\log(d) - 1/\log^5(d)) - (4+2c_0)\log(d)}{\sqrt{(4+2c_0)(\log(d) - 1/\log^5(d))} + \sqrt{4 + 2c_0}\sqrt{\log(d) }}   + O\left(\frac{1}{\log^{9}(d)}\right)\right) \\
    = & \sigma_0 \left(\frac{(4+2c_0)\iota\log(d) - (1+\iota)/\log^5(d)}{\sqrt{(4+2c_0)(\log(d) - 1/\log^5(d))} + \sqrt{4 + 2c_0}\sqrt{\log(d) }}   + O\left(\frac{1}{\log^{9}(d)}\right)\right) \\
    < & 0,
\end{aligned}
\end{equation}

The first equality holds by utilizing the identity $a - b = \frac{a^2 - b^2}{a + b}$. As a consequence, $\sigma(\langle \vw_{+,r}^{(0)}, \vx_{n,p}^{(\tau)}\rangle + b_{+,r}^{(0)}) = 0$.

\textcolor{brown}{\textit{2. (Non-activation on noise patches)}}
Invoking Lemma \ref{lemma: independent gaussian vector inner product concentration}, for any $\tau \ge 0$, with probability at least $1-O\left(\frac{mNP}{\poly(d)}\right)$, we have for all possible choices of $r\in[m]$ and the noise patches $\vx_{n,p}^{(\tau)} = \vzeta_{n,p}^{(\tau)}$:
\begin{equation}
    \left\vert \langle \vw_{+,r}^{(0)}, \vzeta_{n,p}^{(\tau)} \rangle \right\vert \le O(\sigma_0\sigma_{\zeta} \sqrt{d\log(d)}) \le O\left(\frac{\sigma_0}{\log^{9}(d)}\right) \ll b_{+,r}^{(0)}.
\end{equation}

Therefore, no neuron can activate on the noise patches at time $t=0$.

\textcolor{brown}{\textit{3. (Off-diagonal nonpositive growth)}}
This point is trivially true at $t=0$.
%--------------------------------------------

\vspace{4ex}
\textcolor{blue}{\textbf{Inductive step}}: we assume the induction hypothesis for $t\in [0, T]$ (with $T < T_e$ of course), and prove the statements for $t = T+1$.

\textcolor{brown}{\textit{1. (Nonactivation invariance)}}

Choose any $\vv^*$ from the set $\{\vv_{+,c}\}_{c=1}^{k_+} \cup \{\vv_{-,c}\}_{c=1}^{k_-}\cup\{\vv_+,\vv_-\}$. We will work with neuron sets in the ``$+$'' class in this proof; the ``$-$''-class case can be handled in the same way. 

We need to prove that given $\tau \ge T+1$, with probability at least $ 1-O\left( \frac{mNP(T+1)}{\poly(d)}\right)$, for every $t'\le T+1$, $(+,r)$ neuron index and $\vv^*$-dominated patch $\vx_{n,p}^{(\tau)}$,
\begin{equation}
    (+,r)\notin S_+^{(0)}(\vv^*) \implies \sigma\left(\langle \vw_{+,r}^{(t')}, \vx_{n,p}^{(\tau)}\rangle + b_{+,r}^{(t')} \right) = 0.
\end{equation}

Conditioning on the (high-probability) event of the induction hypothesis of point 1., the following is already true on all the $\vv^*$-dominated patches at time $t' \le T$:
\begin{equation}
    (+,r)\notin S_+^{(0)}(\vv^*) \implies \sigma\left(\langle \vw_{+,r}^{(t')}, \vx_{n,p}^{(T)}\rangle + b_{+,r}^{(t')} \right) = 0.
\end{equation}
In particular, $\sigma\left(\langle \vw_{+,r}^{(T)}, \vx_{n,p}^{(T)}\rangle + b_{+,r}^{(T)} \right) = 0$.

In other words, no $(+,r)\notin S_+^{(0)}(\vv^*)$ can be updated on the $\vv^*$-dominated patches at time $t=T$. Furthermore, the induction hypothesis of point 2. also states that the network cannot activate on any noise patch $\vx_{n,p}^{(T)} = \vzeta_{n,p}^{(T)}$ with probability at least $ 1-O\left( \frac{mNPT}{\poly(d)}\right)$. Therefore, the neuron update for those $(+,r)\notin S_+^{(0)}(\vv^*)$ takes the form
\begin{equation}
\begin{aligned}
    \Delta \vw_{+,r}^{(T)} 
    = & \frac{\eta}{NP} \sum_{\vv\in\calC(\vv^*)}\sum_{n=1}^N \mathbbm{1}\{|\calP(\mX_n^{(T)}; \vv)|>0\} [\mathbbm{1}\{y_n=+\}-\text{logit}_+^{(T)}(\mX_n^{(T)})] \\
    & \times  \sum_{p\in\calP(\mX_n^{(T)}; \vv)} \mathbbm{1}\{\langle \vw_{+,r}^{(T)}, \alpha_{n,p}^{(T)} \vv +\vzeta_{n,p}^{(T)}\rangle + b_{c,r}^{(T)} > 0\} \left(\alpha_{n,p}^{(T)} \vv +\vzeta_{n,p}^{(T)}\right)
\end{aligned}
\end{equation}

Now we can invoke Lemma \ref{lemma: sgd phase 1, nonact invar, t=1} and obtain that, with probability at least $ 1-O\left( \frac{mNP}{\poly(d)}\right)$, the following holds for all relevant neurons and $\vv^*$-dominated patches:
\begin{equation}
    \langle \Delta \vw_{+,r}^{(T)} , \vx_{n,p}^{(\tau)} \rangle + \Delta b_{+,r}^{(T)} < 0.
\end{equation}

In conclusion, with $\tau \ge T+1$, with probability at least $ 1-O\left( \frac{mNP}{\poly(d)}\right)$, for every $(+,r)\notin S_+^{(0)}(\vv^*)$ and relevant $(n,p)$'s,
\begin{equation}
    \langle \vw_{+,r}^{(T)} + \Delta \vw_{+,r}^{(T)}, \vx_{n,p}^{(\tau)}\rangle + b_{+,r}^{(T)} + \Delta b_{+,r}^{(T)} = \langle \vw_{+,r}^{(T+1)}, \vx_{n,p}^{(\tau)}\rangle + b_{+,r}^{(T+1)} < 0,
\end{equation}
which leads to $\langle \vw_{+,r}^{(t')}, \vx_{n,p}^{(\tau)}\rangle + b_{+,r}^{(t')} < 0$ for all $t' \le T+1$ with probability at least $ 1-O\left( \frac{mk_+NP(T+1)}{\poly(d)}\right)$ (also taking union bound over all the possible choices of $\vv^*$). This finishes the inductive step for point 1.

\textcolor{brown}{\textit{2. (Non-activation on noise patches)}}

Relying on the event of the induction hypothesis, for any $\tau \ge T$, the following holds for every $r\in[m]$ and noise patch $\vx_{n,p}^{(\tau)} = \vzeta_{n,p}^{(\tau)}$, 
\begin{equation}
    \langle \vw_{+,r}^{(T)}, \vx_{n,p}^{(\tau)} \rangle + b_{+,r}^{(T)} < 0.
\end{equation}

Conditioning on this high-probability event, this means no neuron $\vw_{+,r}^{(T)}$ can be updated on the noise patches.  Denoting the set of features $\calM=\{\vv_{+,c}\}_{c=1}^{k_+} \cup \{\vv_{-,c}\}_{c=1}^{k_-}\cup\{\vv_+,\vv_-\}$, for every $r\in[m]$, its update is reduced to
\begin{equation}
\begin{aligned}
    \Delta \vw_{+,r}^{(T)} 
    = & \frac{\eta}{NP} \sum_{\vv\in\calM}\sum_{n=1}^N \mathbbm{1}\{|\calP(\mX_n^{(T)}; \vv)|>0\} [\mathbbm{1}\{y_n=+\}-\text{logit}_+^{(T)}(\mX_n^{(T)})] \\
    & \times  \sum_{p\in\calP(\mX_n^{(T)}; \vv)} \mathbbm{1}\{\langle \vw_{+,r}^{(T)}, \alpha_{n,p}^{(T)} \vv +\vzeta_{n,p}^{(T)}\rangle + b_{c,r}^{(T)} > 0\} \left(\alpha_{n,p}^{(T)} \vv +\vzeta_{n,p}^{(T)}\right),
\end{aligned}
\end{equation}

Invoking Lemma \ref{lemma: sgd phase 1, nonact on noise}, we have that, for any  $\tau \ge T+1$, the following inequality holds with probability at least $ 1-O\left( \frac{mNP}{\poly(d)}\right)$ for every $r\in[m]$ and noise patches, 
\begin{equation}
    \langle \Delta \vw_{+,r}^{(T)} , \vx_{n,p}^{(\tau)} \rangle + \Delta b_{+,r}^{(T)} < 0.
\end{equation}

Consequently, for any $\tau \ge T+1$, the following inequality holds with probability at least $ 1-O\left( \frac{mNP}{\poly(d)}\right)$ for every $r\in[m]$ and noise patches $\vx_{n,p}^{(\tau)} = \vzeta_{n,p}^{(\tau)}$:
\begin{equation}
    \langle \vw_{+,r}^{(T)} + \Delta \vw_{+,r}^{(T)}, \vx_{n,p}^{(\tau)}\rangle + b_{+,r}^{(T)} + \Delta b_{+,r}^{(T)} = \langle \vw_{+,r}^{(T+1)}, \vx_{n,p}^{(\tau)}\rangle + b_{+,r}^{(T+1)} < 0.
\end{equation}
 This finishes the inductive step for point 2.

\textcolor{brown}{\textit{3. (Off-diagonal nonpositive growth)}}
Choose any $\vv^*\in\{\vv_-\}\cup\{\vv_{-,c}\}_{c=1}^{k_-}$.

Choose any neuron with index $(+,r)$. Similar to our proof for point 2., we know that its update, when taken inner product with a $\vv^*$-dominated patch $\vx_{n,p}^{(\tau)} = \sqrt{1\pm\iota}\vv^* + \vzeta_{n,p}^{(\tau)}$, has to take the form
\begin{equation}
\begin{aligned}
    & \langle \Delta \vw_{+,r}^{(T)} , \sqrt{1\pm\iota}\vv^* + \vzeta_{n,p}^{(\tau)} \rangle \\
    = & \frac{\eta}{NP} \sum_{\vv\in\calM}\sum_{n=1}^N \mathbbm{1}\{|\calP(\mX_n^{(T)}; \vv)|>0\} [\mathbbm{1}\{y_n=+\}-\text{logit}_+^{(T)}(\mX_n^{(T)})] \\
    & \times  \sum_{p\in\calP(\mX_n^{(T)}; \vv)} \mathbbm{1}\{\langle \vw_{+,r}^{(T)}, \alpha_{n,p}^{(T)} \vv +\vzeta_{n,p}^{(T)}\rangle + b_{+,r}^{(T)} > 0\} \langle \alpha_{n,p}^{(T)} \vv +\vzeta_{n,p}^{(T)}, \sqrt{1\pm\iota}\vv^* + \vzeta_{n,p}^{(\tau)} \rangle \\
    = & \frac{\eta}{NP} \sum_{\vv\in\calM-\{\vv^*\}}\sum_{n=1}^N \mathbbm{1}\{|\calP(\mX_n^{(T)}; \vv)|>0\} [\mathbbm{1}\{y_n=+\}-\text{logit}_+^{(T)}(\mX_n^{(T)})] \\
    & \times  \sum_{p\in\calP(\mX_n^{(T)}; \vv)} \mathbbm{1}\{\langle \vw_{+,r}^{(T)}, \alpha_{n,p}^{(T)} \vv +\vzeta_{n,p}^{(T)}\rangle + b_{+,r}^{(T)} > 0\} \left(\langle \vzeta_{n,p}^{(T)}, \sqrt{1\pm\iota}\vv^* \rangle + \langle \alpha_{n,p}^{(T)} \vv +\vzeta_{n,p}^{(T)}, \vzeta_{n,p}^{(\tau)} \rangle\right)\\
    & - \frac{\eta}{NP} \sum_{n=1}^N \mathbbm{1}\{|\calP(\mX_n^{(T)}; \vv^*)|>0\} [\text{logit}_+^{(T)}(\mX_n^{(T)})] \\
    & \times  \sum_{p\in\calP(\mX_n^{(T)}; \vv)} \mathbbm{1}\{\langle \vw_{+,r}^{(T)}, \alpha_{n,p}^{(T)} \vv +\vzeta_{n,p}^{(T)}\rangle + b_{+,r}^{(T)} > 0\} \langle \alpha_{n,p}^{(T)} \vv^* +\vzeta_{n,p}^{(T)}, \sqrt{1\pm\iota}\vv^* + \vzeta_{n,p}^{(\tau)} \rangle
\end{aligned}
\end{equation}

With probability at least $ 1-O\left( \frac{NP}{\poly(d)}\right)$, $\langle \alpha_{n,p}^{(T)} \vv^* +\vzeta_{n,p}^{(T)}, \sqrt{1\pm\iota}\vv^* + \vzeta_{n,p}^{(\tau)} \rangle > 0$, and $\langle \vzeta_{n,p}^{(T)}, \sqrt{1\pm\iota}\vv^* \rangle + \langle \alpha_{n,p}^{(T)} \vv +\vzeta_{n,p}^{(T)}, \vzeta_{n,p}^{(\tau)} \rangle < O(1/\log^9(d))$. Therefore, 
\begin{equation}
\begin{aligned}
    \langle \Delta \vw_{+,r}^{(T)} , \vv^* \rangle
    < & \frac{\eta}{NP} \sum_{\vv\in\calM-\{\vv^*\}}\sum_{n=1}^N \mathbbm{1}\{|\calP(\mX_n^{(T)}; \vv)|>0\} [\mathbbm{1}\{y_n=+\}-\text{logit}_+^{(T)}(\mX_n^{(T)})] \\
    & \times  \sum_{p\in\calP(\mX_n^{(T)}; \vv)} \mathbbm{1}\{\langle \vw_{+,r}^{(T)}, \alpha_{n,p}^{(T)} \vv +\vzeta_{n,p}^{(T)}\rangle + b_{+,r}^{(T)} > 0\} O\left( \frac{1}{\log^9(d)}\right)
\end{aligned}
\end{equation}
Invoking Lemma \ref{lemma: sgd phase 1, nonact on noise}, we know that
\begin{equation}
\begin{aligned}
    &\Delta b_{+,r}^{(T)} \\
    \le & -\frac{1}{\log^5(d)} \frac{\eta}{NP}\left(\sqrt{1-\iota} - \frac{1}{\log^9(d)}\right) \\
    & \times \Bigg( \sum_{\vv\in\calM}\sum_{n=1}^N \mathbbm{1}\{|\calP(\mX_n^{(T)}; \vv)|>0\} \left\vert\mathbbm{1}\{y_n=+\}-\text{logit}_+^{(T)}(\mX_n^{(T)})\right\vert \\
    & \times \sum_{p\in\calP(\mX_n^{(T)}; \vv)} \mathbbm{1}\{\langle \vw_{+,r}^{(T)}, \alpha_{n,p}^{(T)} \vv +\vzeta_{n,p}^{(T)}\rangle + b_{+,r}^{(T)} > 0\} \Bigg).
\end{aligned}
\end{equation}

It follows that 
\begin{equation}
\begin{aligned}
    & \langle \Delta \vw_{+,r}^{(T)} , \sqrt{1\pm\iota}\vv^* + \vzeta_{n,p}^{(\tau)}\rangle + \Delta b_{+,r}^{(T)} \\
    < & O\left( \frac{1}{\log^9(d)}\right)\frac{\eta}{NP} \Bigg(\sum_{\vv\in\calM-\{\vv^*\}}\sum_{n=1}^N \mathbbm{1}\{|\calP(\mX_n^{(T)}; \vv)|>0\} [\mathbbm{1}\{y_n=+\}-\text{logit}_+^{(T)}(\mX_n^{(T)})] \\
    & \times  \sum_{p\in\calP(\mX_n^{(T)}; \vv)} \mathbbm{1}\{\langle \vw_{+,r}^{(T)}, \alpha_{n,p}^{(T)} \vv +\vzeta_{n,p}^{(T)}\rangle + b_{c,r}^{(T)} > 0\} \Bigg)  \\
    & - \Omega\left( \frac{1}{\log^5(d)}\right)  \frac{\eta}{NP}  \Bigg( \sum_{\vv\in\calM}\sum_{n=1}^N \mathbbm{1}\{|\calP(\mX_n^{(T)}; \vv)|>0\} \left\vert\mathbbm{1}\{y_n=+\}-\text{logit}_+^{(T)}(\mX_n^{(T)})\right\vert \\
    & \times \sum_{p\in\calP(\mX_n^{(T)}; \vv)} \mathbbm{1}\{\langle \vw_{+,r}^{(T)}, \alpha_{n,p}^{(T)} \vv +\vzeta_{n,p}^{(T)}\rangle + b_{c,r}^{(T)} > 0\} \Bigg) \\
    < & 0.
\end{aligned}
\end{equation}

Consequently, 
\begin{equation}
\begin{aligned}
    &\sigma\left( \langle \vw_{+,r}^{(T+1)} , \sqrt{1\pm\iota}\vv^* + \vzeta_{n,p}^{(\tau)}\rangle + b_{+,r}^{(T+1)}\right) \\
    = & \sigma\left(\langle  \vw_{+,r}^{(T)} , \sqrt{1\pm\iota}\vv^* + \vzeta_{n,p}^{(\tau)}\rangle +  b_{+,r}^{(T)} + \langle \Delta \vw_{+,r}^{(T)} , \sqrt{1\pm\iota}\vv^* + \vzeta_{n,p}^{(\tau)}\rangle + \Delta b_{+,r}^{(T)} \right) \\
    \le & \sigma\left(\langle  \vw_{+,r}^{(T)} , \sqrt{1\pm\iota}\vv^* + \vzeta_{n,p}^{(\tau)}\rangle +  b_{+,r}^{(T)} \right) \\
    \le & \sigma\left(\langle  \vw_{+,r}^{(0)} , \sqrt{1\pm\iota}\vv^* + \vzeta_{n,p}^{(\tau)}\rangle +  b_{+,r}^{(0)} \right).
\end{aligned}
\end{equation}
\end{proof}

\begin{corollary}[Bias update upper bound]
\label{coro: coarse training, bias upper bd}
Choose any $T_e \le \poly(d)$. With probability at least $ 1-O\left( \frac{mk_+NPT_e}{\poly(d)}\right)$, for all $t \in [0,T_e]$, any neuron $\vw_{+,r}$, and any $\vv\in\calU_{+,r}^{(0)}$,
\begin{equation}
    \Delta b_{+,r}^{(t)} < -\Omega\left(\frac{\polyln(d)}{\log^5(d)}\right)\left\vert\langle \Delta \vw_{+,r}^{(t)}, \vzeta^* \rangle \right\vert.
\end{equation}
    
\end{corollary}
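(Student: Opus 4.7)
The strategy is to unfold the explicit bias update rule and compare its magnitude to a Gaussian-concentration bound on $\langle \Delta \vw_{+,r}^{(t)},\vzeta^*\rangle$. Recall from Section \ref{appendix: learner assumptions, training algo} that the bias evolves as
\begin{equation*}
\Delta b_{+,r}^{(t)} \;=\; -\,\frac{\|\Delta \vw_{+,r}^{(t)}\|_2}{\log^5(d)}.
\end{equation*}
Hence the claim reduces to showing that with the stated probability,
\begin{equation*}
\|\Delta \vw_{+,r}^{(t)}\|_2 \;\ge\; \Omega(\polyln(d))\cdot\bigl|\langle \Delta \vw_{+,r}^{(t)},\vzeta^*\rangle\bigr|
\quad\text{for all } r\in[m],\ t\in[0,T_e].
\end{equation*}

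The key observation is that $\vzeta^*$ is the distinguished noise patch sampled \emph{only} in the hard test sample, with $\vzeta^*\sim\calN(\vzero,\sigma_{\zeta^*}^2\mI_d)$, and is drawn independently of everything used during training (the initial weights $\vw_{+,r}^{(0)}$, the training patches, and the training noise vectors $\vzeta_{n,p}^{(\tau)}$ appearing in the SGD updates through time $T_e$). Conditioning on the full training sigma-algebra $\mathcal{F}^{(t)}$ that generates $\Delta \vw_{+,r}^{(t)}$, the vector $\vzeta^*$ remains a fresh isotropic Gaussian, so
\begin{equation*}
\langle \Delta \vw_{+,r}^{(t)},\vzeta^*\rangle \;\bigm|\; \mathcal{F}^{(t)} \;\sim\; \calN\!\bigl(0,\,\sigma_{\zeta^*}^2\,\|\Delta \vw_{+,r}^{(t)}\|_2^2\bigr).
\end{equation*}
Standard Gaussian tail bounds then give, for each fixed $(r,t)$,
\begin{equation*}
\bigl|\langle \Delta \vw_{+,r}^{(t)},\vzeta^*\rangle\bigr| \;\le\; O\!\bigl(\sigma_{\zeta^*}\sqrt{\log d}\bigr)\,\|\Delta \vw_{+,r}^{(t)}\|_2
\end{equation*}
with probability at least $1 - 1/\poly(d)$, and a union bound over the $mT_e\le\poly(d)$ pairs preserves the inverse-polynomial failure probability.

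By the parameter choice $\sigma_{\zeta^*}\le O(1/\polyln(d))$ with $\polyln$-degree taken strictly larger than the degree appearing on the right-hand side of the corollary (plus the $\sqrt{\log d}$ factor), we obtain
\begin{equation*}
O\!\bigl(\sigma_{\zeta^*}\sqrt{\log d}\bigr) \;\le\; \frac{1}{\polyln(d)},
\end{equation*}
so $|\langle \Delta \vw_{+,r}^{(t)},\vzeta^*\rangle|\le \|\Delta \vw_{+,r}^{(t)}\|_2/\polyln(d)$, which inverts to the desired
\begin{equation*}
\Delta b_{+,r}^{(t)} \;=\; -\frac{\|\Delta \vw_{+,r}^{(t)}\|_2}{\log^5(d)} \;<\; -\Omega\!\left(\frac{\polyln(d)}{\log^5(d)}\right) \bigl|\langle \Delta \vw_{+,r}^{(t)},\vzeta^*\rangle\bigr|.
\end{equation*}

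The routine parts are the Gaussian tail bound and the union bound. The only genuine subtlety, and the step I would write most carefully, is the \emph{independence argument}: verifying that $\vzeta^*$ does not enter the SGD dynamics (training is on easy samples only, while $\vzeta^*$ is part of a hard test sample), so that the conditional distribution of $\langle \Delta \vw_{+,r}^{(t)},\vzeta^*\rangle$ given $\mathcal{F}^{(t)}$ really is a zero-mean Gaussian whose variance scales with $\|\Delta \vw_{+,r}^{(t)}\|_2^2$. No invocation of Theorem~\ref{prop: phase 1 sgd induction}, Lemma~\ref{lemma: phase II, general}, or of the structural form of $\Delta \vw_{+,r}^{(t)}$ (features plus training-noise residuals) is needed; the bound is a clean consequence of independence and the assumption $\sigma_{\zeta^*}=o(1/\polyln(d))$.
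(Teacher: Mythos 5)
Your argument is correct, and it takes a genuinely different (arguably cleaner) route to the same bound. The paper's proof first conditions on the high-probability structural events of Theorem~\ref{thm: sgd, universal nonact properties} to get the explicit form
$\Delta\vw_{+,r}^{(t)} = \frac{\eta}{NP}\sum_{\vv\in\calU_{+,r}^{(0)}}\sum_n\sum_p (\text{logit weight})\,(\alpha_{n,p}^{(t)}\vv+\vzeta_{n,p}^{(t)})$,
then upper-bounds $|\langle\Delta\vw_{+,r}^{(t)},\vzeta^*\rangle|$ term by term using $|\langle\alpha\vv+\vzeta_{n,p}^{(t)},\vzeta^*\rangle|\le O(1/\polyln(d))$, and lower-bounds $\|\Delta\vw_{+,r}^{(t)}\|_2$ on the same sum via the reverse triangle inequality; the ratio then yields $\Omega(\polyln(d))$. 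You bypass that decomposition entirely: since training is on easy samples only, $\vzeta^*$ is independent of the filtration generating $\Delta\vw_{+,r}^{(t)}$, so conditionally $\langle\Delta\vw_{+,r}^{(t)},\vzeta^*\rangle\sim\calN(0,\sigma_{\zeta^*}^2\|\Delta\vw_{+,r}^{(t)}\|_2^2)$, and a Gaussian tail plus a union bound over $(r,t)$ pairs gives $|\langle\Delta\vw_{+,r}^{(t)},\vzeta^*\rangle|\le O(\sigma_{\zeta^*}\sqrt{\log d})\,\|\Delta\vw_{+,r}^{(t)}\|_2 \le \|\Delta\vw_{+,r}^{(t)}\|_2/\polyln(d)$. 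This buys you a smaller failure probability, $O(mT_e/\poly(d))$ rather than $O(mk_+NPT_e/\poly(d))$, because you do not need to inherit the failure probability of Theorem~\ref{thm: sgd, universal nonact properties} or the bounds on the cross terms. Two small points worth making explicit in a full write-up: (i) the degree of $\polyln(d)$ in the assumption $\sigma_{\zeta^*}\le O(1/\polyln(d))$ must be taken large enough to absorb both the $\sqrt{\log d}$ from the tail and the $\polyln(d)$ degree appearing in the corollary's conclusion, which is permitted by the parameter regime but should be stated; (ii) both your proof and the paper's leave the degenerate case $\Delta\vw_{+,r}^{(t)}=\vzero$ unaddressed, where both sides of the stated strict inequality vanish --- the inequality should really be non-strict, or one should note the update is nonzero whenever the statement is actually invoked.
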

\begin{proof}
    Conditioning on the high-probability events of Theorem \ref{thm: sgd, universal nonact properties} above, we know that for any neuron indexed $(+,r)$, at any time $t \le T_e$, its update takes the form 
    \begin{equation}
    \begin{aligned}
        \Delta \vw_{+,r}^{(t)} 
        = & \frac{\eta}{NP} \sum_{\vv\in\calU_{+,r}^{(0)}}\sum_{n=1}^N \mathbbm{1}\{|\calP(\mX_n^{(t)}; \vv)|>0\} [\mathbbm{1}\{y_n=+\}-\text{logit}_+^{(t)}(\mX_n^{(t)})] \\
        & \times  \sum_{p\in\calP(\mX_n^{(t)}; \vv)} \mathbbm{1}\{\langle \vw_{+,r}^{(t)}, \alpha_{n,p}^{(t)} \vv +\vzeta_{n,p}^{(t)}\rangle + b_{c,r}^{(t)} > 0\} \left(\alpha_{n,p}^{(t)} \vv +\vzeta_{n,p}^{(t)}\right),
    \end{aligned}
    \end{equation}

    It follows that, with probability at least $ 1-O\left( \frac{1}{\poly(d)}\right)$,
    \begin{equation}
    \begin{aligned}
        \left\vert \langle \Delta \vw_{+,r}^{(t)}, \vzeta^* \rangle \right\vert
        = & \Bigg\vert \frac{\eta}{NP} \sum_{\vv\in\calU_{+,r}^{(0)}}\sum_{n=1}^N \mathbbm{1}\{|\calP(\mX_n^{(t)}; \vv)|>0\} [\mathbbm{1}\{y_n=+\}-\text{logit}_+^{(t)}(\mX_n^{(t)})] \\
        & \times  \sum_{p\in\calP(\mX_n^{(t)}; \vv)} \mathbbm{1}\{\langle \vw_{+,r}^{(t)}, \alpha_{n,p}^{(t)} \vv +\vzeta_{n,p}^{(t)}\rangle + b_{c,r}^{(t)} > 0\} \langle \alpha_{n,p}^{(t)} \vv +\vzeta_{n,p}^{(t)}, \vzeta^* \rangle \Bigg\vert \\
        \le &  \frac{\eta}{NP} \sum_{\vv\in\calU_{+,r}^{(0)}}\sum_{n=1}^N \mathbbm{1}\{|\calP(\mX_n^{(t)}; \vv)|>0\} \left\vert\mathbbm{1}\{y_n=+\}-\text{logit}_+^{(t)}(\mX_n^{(t)})\right\vert \\
        & \times  \sum_{p\in\calP(\mX_n^{(t)}; \vv)} \mathbbm{1}\{\langle \vw_{+,r}^{(t)}, \alpha_{n,p}^{(t)} \vv +\vzeta_{n,p}^{(t)}\rangle + b_{c,r}^{(t)} > 0\} O\left(\frac{1}{\polyln(d)} \right)
    \end{aligned}
    \end{equation}

    On the other hand, 
    \begin{equation}
    \begin{aligned}
        \left\| \Delta \vw_{+,r}^{(t)} \right\|_2
        \ge & \Bigg\| \frac{\eta}{NP} \sum_{\vv\in\calU_{+,r}^{(0)}}\sum_{n=1}^N \mathbbm{1}\{|\calP(\mX_n^{(t)}; \vv)|>0\} [\mathbbm{1}\{y_n=+\}-\text{logit}_+^{(t)}(\mX_n^{(t)})] \\
        & \times  \sum_{p\in\calP(\mX_n^{(t)}; \vv)} \mathbbm{1}\{\langle \vw_{+,r}^{(t)}, \alpha_{n,p}^{(t)} \vv +\vzeta_{n,p}^{(t)}\rangle + b_{c,r}^{(t)} > 0\} \alpha_{n,p}^{(t)} \vv  \Bigg\|_2 \\
        & - \Bigg\| \frac{\eta}{NP} \sum_{\vv\in\calU_{+,r}^{(0)}}\sum_{n=1}^N \mathbbm{1}\{|\calP(\mX_n^{(t)}; \vv)|>0\} [\mathbbm{1}\{y_n=+\}-\text{logit}_+^{(t)}(\mX_n^{(t)})] \\
        & \times  \sum_{p\in\calP(\mX_n^{(t)}; \vv)} \mathbbm{1}\{\langle \vw_{+,r}^{(t)}, \alpha_{n,p}^{(t)} \vv +\vzeta_{n,p}^{(t)}\rangle + b_{c,r}^{(t)} > 0\} \vzeta_{n,p}^{(t)} \Bigg\|_2  \\
        \ge & \frac{\eta}{NP} \sum_{\vv\in\calU_{+,r}^{(0)}}\sum_{n=1}^N \mathbbm{1}\{|\calP(\mX_n^{(t)}; \vv)|>0\} \left\vert\mathbbm{1}\{y_n=+\}-\text{logit}_+^{(t)}(\mX_n^{(t)})\right\vert \\
        & \times  \sum_{p\in\calP(\mX_n^{(t)}; \vv)} \mathbbm{1}\{\langle \vw_{+,r}^{(t)}, \alpha_{n,p}^{(t)} \vv +\vzeta_{n,p}^{(t)}\rangle + b_{c,r}^{(t)} > 0\} \left(\sqrt{1 - \iota} - O\left(\frac{1}{\log^9(d)} \right) \right)
    \end{aligned}
    \end{equation}

    Clearly,
    \begin{equation}
        \left\| \Delta \vw_{+,r}^{(t)} \right\|_2 \ge \Omega\left(\polyln(d)\left\vert \langle \Delta \vw_{+,r}^{(t)}, \vzeta^* \rangle \right\vert \right).
    \end{equation}

    The conclusion follows.
\end{proof}

\begin{lemma} [Nonactivation invariance]
\label{lemma: sgd phase 1, nonact invar, t=1}
    Let the assumptions in Theorem \ref{prop: phase 1 sgd induction} hold. 
    
    Denote the set of features $\calC(\vv^*)=\{\vv_{+,c}\}_{c=1}^{k_+} \cup \{\vv_{-,c}\}_{c=1}^{k_-}\cup\{\vv_+,\vv_-\} - \{\vv^*\}$. If the update term for neuron $\vw_{+,r}^{(t)}$ can be written as follows
    \begin{equation}
    \begin{aligned}
        \Delta \vw_{+,r}^{(t)} 
        = & \frac{\eta}{NP} \sum_{\vv\in\calC(\vv^*)}\sum_{n=1}^N \mathbbm{1}\{|\calP(\mX_n^{(t)}; \vv)|>0\} [\mathbbm{1}\{y_n=+\}-\text{logit}_+^{(t)}(\mX_n^{(t)})] \\
        & \times  \sum_{p\in\calP(\mX_n^{(t)}; \vv)} \mathbbm{1}\{\langle \vw_{+,r}^{(t)}, \alpha_{n,p}^{(t)} \vv +\vzeta_{n,p}^{(t)}\rangle + b_{c,r}^{(t)} > 0\} \left(\alpha_{n,p}^{(t)} \vv +\vzeta_{n,p}^{(t)}\right),
    \end{aligned}
    \end{equation}
    then given any $\tau > t$, the following inequality holds with probability at least $ 1-O\left( \frac{NP}{\poly(d)}\right)$ for all $\vv^*$-dominated patch $\vx_{n,p}^{(\tau)}$:
    \begin{equation}
        \langle \Delta \vw_{+,r}^{(t)} , \vx_{n,p}^{(\tau)} \rangle + \Delta b_{+,r}^{(t)} < 0
    \end{equation}
\end{lemma}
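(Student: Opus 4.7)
The plan is to show that when $\Delta \vw_{+,r}^{(t)}$ receives contributions only from features in $\calC(\vv^*)$, both the feature-aligned and noise-generated parts of $\langle \Delta \vw_{+,r}^{(t)}, \vx_{n,p}^{(\tau)}\rangle$ are dominated by the negative bias update $\Delta b_{+,r}^{(t)} = -\|\Delta \vw_{+,r}^{(t)}\|_2/\log^5(d)$, with a polylogarithmic margin.

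First I would decompose $\Delta \vw_{+,r}^{(t)} = \vu + \boldsymbol{\xi}$, where $\vu$ collects the $\alpha_{n,p}^{(t)}\vv$ feature pieces (so $\vu \in \mathrm{span}(\calC(\vv^*))$) and $\boldsymbol{\xi}$ collects the $\vzeta_{n,p}^{(t)}$ noise pieces, both carrying the same indicator-and-logit prefactors $c_{n,p}$. Orthonormality of $\calV$ together with $\vv^* \notin \calC(\vv^*)$ gives $\langle \vu, \vv^*\rangle = 0$ exactly, so
\begin{equation*}
\langle \Delta \vw_{+,r}^{(t)}, \vx_{n,p}^{(\tau)}\rangle = \langle \vu, \vzeta_{n,p}^{(\tau)}\rangle + \alpha_{n,p}^{(\tau)}\langle \boldsymbol{\xi}, \vv^*\rangle + \langle \boldsymbol{\xi}, \vzeta_{n,p}^{(\tau)}\rangle,
\end{equation*}
and since $\tau > t$ the fresh-batch noise $\vzeta_{n,p}^{(\tau)}$ is independent of $\vu$ and $\boldsymbol{\xi}$.

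Next I would bound each piece via Gaussian/subgaussian concentration (Lemma \ref{lemma: independent gaussian vector inner product concentration}). With $\sigma_\zeta = 1/(\log^{10}(d)\sqrt{d})$, conditional Gaussianity yields $|\langle \vu, \vzeta_{n,p}^{(\tau)}\rangle| \le O(\sigma_\zeta\sqrt{d\log d})\|\vu\|_2 \le O(1/\log^{9}(d))\|\vu\|_2$ uniformly in $(n,p)$, with failure probability $O(NP/\poly(d))$. The one-shot scalar $\langle \boldsymbol{\xi}, \vv^*\rangle = \sum_{(n,p)} c_{n,p}\langle \vzeta_{n,p}^{(t)}, \vv^*\rangle$ is subgaussian with variance proxy $\sigma_\zeta^2 \sum c_{n,p}^2$, so $|\langle \boldsymbol{\xi}, \vv^*\rangle| \le O(\sigma_\zeta\sqrt{\log d})\sqrt{\sum c_{n,p}^2}$; this is in turn $\le O(1/\log^9(d))\|\vu\|_2$ once we observe that within each feature direction the cross-entropy prefactors share a consistent sign (a given $\vv \in \calC(\vv^*)$ is a feature of a single (super)class, so only samples with $y_n$ fixed contribute), which forces $\|\vu\|_2 \ge \Omega(\sqrt{\sum c_{n,p}^2})$. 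The double-noise piece $\langle \boldsymbol{\xi}, \vzeta_{n,p}^{(\tau)}\rangle$ is strictly smaller. Combined with $\|\Delta \vw_{+,r}^{(t)}\|_2 = (1-o(1))\|\vu\|_2$ (since $\|\boldsymbol{\xi}\|_2 \ll \|\vu\|_2$), this yields $|\langle \Delta \vw_{+,r}^{(t)}, \vx_{n,p}^{(\tau)}\rangle| \le O(1/\log^9(d))\|\Delta \vw_{+,r}^{(t)}\|_2$, which is dominated by $\Delta b_{+,r}^{(t)}$ with a $\log^4(d)$ margin.

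The main delicate step will be the bound on $\langle \boldsymbol{\xi}, \vv^*\rangle$: to extract a $\|\vu\|_2$ factor (as opposed to a looser norm) I need to match the noise-summand prefactors to the feature-summand prefactors term-by-term, and verify the same-sign structure of the cross-entropy weights within each feature direction. Once this bookkeeping is done, the remaining ingredients (orthogonality collapsing the feature cross-terms, independence of $\vzeta_{n,p}^{(\tau)}$ via $\tau > t$, and standard subgaussian tails) are routine, and the claimed $O(NP/\poly(d))$ failure probability follows from a single union bound over $(n,p)$ in the fresh batch.
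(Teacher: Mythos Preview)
Your decomposition $\Delta\vw_{+,r}^{(t)}=\vu+\boldsymbol{\xi}$ and the plan to beat both pieces by $\Delta b_{+,r}^{(t)}=-\|\Delta\vw_{+,r}^{(t)}\|_2/\log^5(d)$ is exactly the paper's route, and your observation that $\langle\vu,\vv^*\rangle=0$ by orthogonality and that the cross-entropy prefactors have a fixed sign within each feature direction are the right structural facts.

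There is, however, a real gap in the step ``$\langle\boldsymbol{\xi},\vv^*\rangle=\sum_{(n,p)}c_{n,p}\langle\vzeta_{n,p}^{(t)},\vv^*\rangle$ is subgaussian with variance proxy $\sigma_\zeta^2\sum c_{n,p}^2$''. The coefficients $c_{n,p}$ are \emph{not} independent of $\vzeta_{n,p}^{(t)}$: each $c_{n,p}$ carries the activation indicator $\mathbbm{1}\{\langle\vw_{+,r}^{(t)},\alpha_{n,p}^{(t)}\vv+\vzeta_{n,p}^{(t)}\rangle+b_{+,r}^{(t)}>0\}$ and the factor $\text{logit}_+^{(t)}(\mX_n^{(t)})$, both of which depend on $\vzeta_{n,p}^{(t)}$ (and, through the logit, on the other noise vectors in sample $n$ as well). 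So you cannot treat $\sum c_{n,p}\langle\vzeta_{n,p}^{(t)},\vv^*\rangle$ as a clean weighted sum of independent Gaussians, and the variance-proxy claim is unjustified as stated. The same issue would arise if you tried to argue $\|\boldsymbol{\xi}\|_2\ll\|\vu\|_2$ via concentration of $\|\sum c_{n,p}\vzeta_{n,p}^{(t)}\|_2$.

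The paper sidesteps this entirely by working per term: it bounds each of $\langle\alpha_{n,p}^{(t)}\vv,\vzeta_{n,p}^{(\tau)}\rangle$, $\langle\vzeta_{n,p}^{(t)},\alpha_{n,p}^{(\tau)}\vv^*\rangle$, $\langle\vzeta_{n,p}^{(t)},\vzeta_{n,p}^{(\tau)}\rangle$ and $\|\vzeta_{n,p}^{(t)}\|_2$ individually by $O(1/\log^9(d))$ (union bound over the $NP$ indices), and then applies the triangle inequality with the \emph{absolute values} of the prefactors. This yields $\langle\Delta\vw_{+,r}^{(t)},\vx_{n,p}^{(\tau)}\rangle\le O(1/\log^9(d))\cdot\frac{\eta}{NP}\sum_{\vv,n,p}|\mathbbm{1}\{y_n=+\}-\text{logit}_+^{(t)}|\cdot\mathbbm{1}\{\text{act}\}$, while the same combinatorial weight appears (with the opposite sign and a $1/\log^5(d)$ prefactor) in the lower bound for $|\Delta b_{+,r}^{(t)}|$ obtained via reverse triangle inequality and your same-sign observation. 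The two sides then compare directly, with no independence needed. Your proof becomes correct if you replace the aggregate subgaussian bound on $\langle\boldsymbol{\xi},\vv^*\rangle$ (and on $\|\boldsymbol{\xi}\|_2$) by this per-term argument.
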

\begin{proof}

Let us fix a neuron $\vw_{+,r}$ satisfying the update expression in the Lemma statement, and fix some $\tau > t$.

Firstly, the bias update for this neuron can be upper bounded via the reverse triangle inequality:
\begin{equation}
\begin{aligned}
    \Delta b_{+,r}^{(t)} 
    = & -\frac{\left\| \Delta \vw_{+,r}^{(t)} \right\|_2}{\log^5(d)} \\
    \le & -\frac{1}{\log^5(d)} \frac{\eta}{NP} \Bigg\|\sum_{\vv\in\calC(\vv^*)}\sum_{n=1}^N \mathbbm{1}\{|\calP(\mX_n^{(t)}; \vv)|>0\} [\mathbbm{1}\{y_n=+\}-\text{logit}_+^{(t)}(\mX_n^{(t)})] \\
    & \times  \sum_{p\in\calP(\mX_n^{(t)}; \vv)} \mathbbm{1}\{\langle \vw_{+,r}^{(t)}, \alpha_{n,p}^{(t)} \vv +\vzeta_{n,p}^{(t)}\rangle + b_{c,r}^{(t)} > 0\} \alpha_{n,p}^{(t)} \vv \Bigg\|_2 \\
    & + \frac{1}{\log^5(d)} \frac{\eta}{NP} \Bigg\|\sum_{\vv\in\calC(\vv^*)}\sum_{n=1}^N \mathbbm{1}\{|\calP(\mX_n^{(t)}; \vv)|>0\} [\mathbbm{1}\{y_n=+\}-\text{logit}_+^{(t)}(\mX_n^{(t)})] \\
    & \times  \sum_{p\in\calP(\mX_n^{(t)}; \vv)} \mathbbm{1}\{\langle \vw_{+,r}^{(t)}, \alpha_{n,p}^{(t)} \vv +\vzeta_{n,p}^{(t)}\rangle + b_{c,r}^{(t)} > 0\} \vzeta_{n,p}^{(t)} \Bigg\|_2 \\
\end{aligned}
\end{equation}

Let us further upper bound the two $\|\cdot\|_2$ terms separately. Firstly,
\begin{equation}
\begin{aligned}
    & \Bigg\|\sum_{\vv\in\calC(\vv^*)}\sum_{n=1}^N \mathbbm{1}\{|\calP(\mX_n^{(t)}; \vv)|>0\} [\mathbbm{1}\{y_n=+\}-\text{logit}_+^{(t)}(\mX_n^{(t)})] \\
    & \times \sum_{p\in\calP(\mX_n^{(t)}; \vv)} \mathbbm{1}\{\langle \vw_{+,r}^{(t)}, \alpha_{n,p}^{(t)} \vv +\vzeta_{n,p}^{(t)}\rangle + b_{c,r}^{(t)} > 0\} \alpha_{n,p}^{(t)} \vv \Bigg\|_2 \\
    = & \sum_{\vv\in\calC(\vv^*)} \Bigg\|\sum_{n=1}^N \mathbbm{1}\{|\calP(\mX_n^{(t)}; \vv)|>0\} [\mathbbm{1}\{y_n=+\}-\text{logit}_+^{(t)}(\mX_n^{(t)})] \\
    & \times \sum_{p\in\calP(\mX_n^{(t)}; \vv)} \mathbbm{1}\{\langle \vw_{+,r}^{(t)}, \alpha_{n,p}^{(t)} \vv +\vzeta_{n,p}^{(t)}\rangle + b_{c,r}^{(t)} > 0\} \alpha_{n,p}^{(t)} \vv \Bigg\|_2 \\
    = & \sum_{\vv\in\calC(\vv^*)} \sum_{n=1}^N \mathbbm{1}\{|\calP(\mX_n^{(t)}; \vv)|>0\} \left\vert\mathbbm{1}\{y_n=+\}-\text{logit}_+^{(t)}(\mX_n^{(t)})\right\vert \\
    & \times \sum_{p\in\calP(\mX_n^{(t)}; \vv)} \mathbbm{1}\{\langle \vw_{+,r}^{(t)}, \alpha_{n,p}^{(t)} \vv +\vzeta_{n,p}^{(t)}\rangle + b_{c,r}^{(t)} > 0\} \alpha_{n,p}^{(t)} \left\| \vv \right\|_2 \\
    \ge & \sum_{\vv\in\calC(\vv^*)} \sum_{n=1}^N \mathbbm{1}\{|\calP(\mX_n^{(t)}; \vv)|>0\} \left\vert\mathbbm{1}\{y_n=+\}-\text{logit}_+^{(t)}(\mX_n^{(t)})\right\vert \\
    & \times \sum_{p\in\calP(\mX_n^{(t)}; \vv)} \mathbbm{1}\{\langle \vw_{+,r}^{(t)}, \alpha_{n,p}^{(t)} \vv +\vzeta_{n,p}^{(t)}\rangle + b_{c,r}^{(t)} > 0\} \sqrt{1-\iota}
\end{aligned}
\end{equation}

Secondly, with probability at least $1-O\left(\frac{NP}{\poly(d)}\right)$,
\begin{equation}
\begin{aligned}
    & \Bigg\|\sum_{\vv\in\calC(\vv^*)}\sum_{n=1}^N \mathbbm{1}\{|\calP(\mX_n^{(t)}; \vv)|>0\} [\mathbbm{1}\{y_n=+\}-\text{logit}_+^{(t)}(\mX_n^{(t)})] \\
    & \times \sum_{p\in\calP(\mX_n^{(t)}; \vv)} \mathbbm{1}\{\langle \vw_{+,r}^{(t)}, \alpha_{n,p}^{(t)} \vv +\vzeta_{n,p}^{(t)}\rangle + b_{c,r}^{(t)} > 0\} \vzeta_{n,p}^{(t)}\Bigg\|_2 \\
    \le & \sum_{\vv\in\calC(\vv^*)}\sum_{n=1}^N \mathbbm{1}\{|\calP(\mX_n^{(t)}; \vv)|>0\} \left\vert\mathbbm{1}\{y_n=+\}-\text{logit}_+^{(t)}(\mX_n^{(t)})\right\vert \\
    & \times \sum_{p\in\calP(\mX_n^{(t)}; \vv)} \mathbbm{1}\{\langle \vw_{+,r}^{(t)}, \alpha_{n,p}^{(t)} \vv +\vzeta_{n,p}^{(t)}\rangle + b_{c,r}^{(t)} > 0\} \left\| \vzeta_{n,p}^{(t)}\right\|_2 \\
    \le & \sum_{\vv\in\calC(\vv^*)}\sum_{n=1}^N \mathbbm{1}\{|\calP(\mX_n^{(t)}; \vv)|>0\} \left\vert\mathbbm{1}\{y_n=+\}-\text{logit}_+^{(t)}(\mX_n^{(t)})\right\vert \\
    & \times \sum_{p\in\calP(\mX_n^{(0)}; \vv)} \mathbbm{1}\{\langle \vw_{+,r}^{(t)}, \alpha_{n,p}^{(t)} \vv +\vzeta_{n,p}^{(t)}\rangle + b_{c,r}^{(t)} > 0\} \frac{1}{\log^9(d)}
\end{aligned}
\end{equation}

Therefore, with probability at least $1-O\left(\frac{NP}{\poly(d)}\right)$, we can bound the update to the bias as follows:
\begin{equation}
\begin{aligned}
    &\Delta b_{+,r}^{(t)} \\
    \le & -\frac{1}{\log^5(d)} \frac{\eta}{NP}\left(\sqrt{1-\iota} - \frac{1}{\log^9(d)}\right) \\
    & \times \Bigg( \sum_{\vv\in\calC(\vv^*)}\sum_{n=1}^N \mathbbm{1}\{|\calP(\mX_n^{(t)}; \vv)|>0\} \left\vert\mathbbm{1}\{y_n=+\}-\text{logit}_+^{(t)}(\mX_n^{(t)})\right\vert \\
    & \times \sum_{p\in\calP(\mX_n^{(t)}; \vv)} \mathbbm{1}\{\langle \vw_{+,r}^{(t)}, \alpha_{n,p}^{(t)} \vv +\vzeta_{n,p}^{(t)}\rangle + b_{c,r}^{(t)} > 0\} \Bigg)
\end{aligned}
\end{equation}

Furthermore, with probability at least $1 - e^{-\Omega(d) + O(\log(d))} > 1 - O\left(\frac{NP}{\poly(d)}\right)$, the following holds for all $n,p$:
\begin{equation}
    \langle \alpha_{n,p}^{(t)} \vv, \vzeta_{n,p}^{(\tau)} \rangle,\; \langle \vzeta_{n,p}^{(t)}, \alpha_{n,p}^{(\tau)} \vv^*\rangle,\; \langle \vzeta_{n,p}^{(t)}, \vzeta_{n,p}^{(\tau)} \rangle < O\left( \frac{1}{\log^9(d)} \right).
\end{equation}

Combining the above derivations, they imply that with probability at least $1-O\left(\frac{NP}{\poly(d)}\right)$, for any $\vx_{n,p}^{(\tau)}$ dominated by $\vv^*$,
\begin{equation}
\begin{aligned}
    &\langle \Delta \vw_{+,r}^{(t)} , \vx_{n,p}^{(\tau)}\rangle + \Delta b_{+,r}^{(t)} \\
    = &\langle \Delta \vw_{+,r}^{(t)} , \alpha_{n,p}^{(\tau)}\vv^* + \vzeta_{n,p}^{(\tau)} \rangle + \Delta b_{+,r}^{(t)} \\
    = & \frac{\eta}{NP} \sum_{\vv\in\calC(\vv^*)}\sum_{n=1}^N \mathbbm{1}\{|\calP(\mX_n^{(t)}; \vv)|>0\} [\mathbbm{1}\{y_n=+\}-\text{logit}_+^{(t)}(\mX_n^{(t)})] \\
    & \times \sum_{p\in\calP(\mX_n^{(t)}; \vv)} \mathbbm{1}\{\langle \vw_{+,r}^{(t)}, \alpha_{n,p}^{(t)} \vv +\vzeta_{n,p}^{(t)}\rangle + b_{c,r}^{(t)} > 0\} \langle \alpha_{n,p}^{(t)} \vv +\vzeta_{n,p}^{(t)}, \alpha_{n,p}^{(\tau)}\vv^* + \vzeta_{n,p}^{(\tau)} \rangle + \Delta b_{+,r}^{(t)} \\
    = & \frac{\eta}{NP} \sum_{\vv\in\calC(\vv^*)}\sum_{n=1}^N \mathbbm{1}\{|\calP(\mX_n^{(t)}; \vv)|>0\} [\mathbbm{1}\{y_n=+\}-\text{logit}_+^{(t)}(\mX_n^{(t)})] \\
    &\times \sum_{p\in\calP(\mX_n^{(t)}; \vv)} \mathbbm{1}\{\langle \vw_{+,r}^{(t)}, \alpha_{n,p}^{(t)} \vv +\vzeta_{n,p}^{(t)}\rangle + b_{c,r}^{(t)} > 0\} \left(\langle \alpha_{n,p}^{(t)} \vv, \vzeta_{n,p}^{(\tau)} \rangle + \langle \vzeta_{n,p}^{(t)}, \alpha_{n,p}^{(\tau)} \vv^*\rangle + \langle \vzeta_{n,p}^{(t)}, \vzeta_{n,p}^{(\tau)} \rangle \right) \\
    & + \Delta b_{+,r}^{(t)} \\
    \le & \frac{\eta}{NP} \sum_{\vv\in\calC(\vv^*)}\sum_{n=1}^N \mathbbm{1}\{|\calP(\mX_n^{(t)}; \vv)|>0\} \left\vert\mathbbm{1}\{y_n=+\}-\text{logit}_+^{(t)}(\mX_n^{(t)})\right\vert \\
    &\times \sum_{p\in\calP(\mX_n^{(t)}; \vv)} \mathbbm{1}\{\langle \vw_{+,r}^{(t)}, \alpha_{n,p}^{(t)} \vv +\vzeta_{n,p}^{(t)}\rangle + b_{c,r}^{(t)} > 0\} \times O\left(\frac{1}{\log^9(d)} \right) + \Delta b_{+,r}^{(t)} \\
    \le & \frac{\eta}{NP} \left( O\left(\frac{1}{\log^9(d)} \right) -\frac{1}{\log^5(d)}\left(\sqrt{1-\iota} - \frac{1}{\log^9(d)}\right) \right) \\
    & \times \Bigg( \sum_{\vv\in\calC(\vv_+)}\sum_{n=1}^N \mathbbm{1}\{|\calP(\mX_n^{(t)}; \vv)|>0\} \left\vert\mathbbm{1}\{y_n=+\}-\text{logit}_+^{(t)}(\mX_n^{(t)})\right\vert  \\
    & \times \sum_{p\in\calP(\mX_n^{(t)}; \vv)} \mathbbm{1}\{\langle \vw_{+,r}^{(t)}, \alpha_{n,p}^{(t)} \vv +\vzeta_{n,p}^{(t)}\rangle + b_{c,r}^{(t)} > 0\} \Bigg)\\
    < & 0.
\end{aligned}
\end{equation}

This completes the proof.
\end{proof}

\begin{lemma} [Nonactivation on noise patches]
\label{lemma: sgd phase 1, nonact on noise}
    Let the assumptions in Theorem \ref{prop: phase 1 sgd induction} hold. 
    
    Denote the set of features $\calM=\{\vv_{+,c}\}_{c=1}^{k_+} \cup \{\vv_{-,c}\}_{c=1}^{k_-}\cup\{\vv_+,\vv_-\}$. If the update term for neuron $\vw_{+,r}^{(t)}$ can be written as follows
    \begin{equation}
    \begin{aligned}
        \Delta \vw_{+,r}^{(t)} 
        = & \frac{\eta}{NP} \sum_{\vv\in\calM}\sum_{n=1}^N \mathbbm{1}\{|\calP(\mX_n^{(t)}; \vv)|>0\} [\mathbbm{1}\{y_n=+\}-\text{logit}_+^{(t)}(\mX_n^{(t)})] \\
        & \times  \sum_{p\in\calP(\mX_n^{(t)}; \vv)} \mathbbm{1}\{\langle \vw_{+,r}^{(t)}, \alpha_{n,p}^{(t)} \vv +\vzeta_{n,p}^{(t)}\rangle + b_{c,r}^{(t)} > 0\} \left(\alpha_{n,p}^{(t)} \vv +\vzeta_{n,p}^{(t)}\right),
    \end{aligned}
    \end{equation}
    then
    \begin{equation}
    \begin{aligned}
        &\Delta b_{+,r}^{(t)} \\
        \le & -\frac{1}{\log^5(d)} \frac{\eta}{NP}\left(\sqrt{1-\iota} - \frac{1}{\log^9(d)}\right) \\
        & \times \Bigg( \sum_{\vv\in\calM}\sum_{n=1}^N \mathbbm{1}\{|\calP(\mX_n^{(t)}; \vv)|>0\} \left\vert\mathbbm{1}\{y_n=+\}-\text{logit}_+^{(t)}(\mX_n^{(t)})\right\vert \\
        & \times \sum_{p\in\calP(\mX_n^{(t)}; \vv)} \mathbbm{1}\{\langle \vw_{+,r}^{(t)}, \alpha_{n,p}^{(t)} \vv +\vzeta_{n,p}^{(t)}\rangle + b_{c,r}^{(t)} > 0\} \Bigg).
    \end{aligned}
    \end{equation}
    Moreover, for any $\tau > t$, the following inequality holds with probability at least $ 1-O\left( \frac{NP}{\poly(d)}\right)$ for all noise patches $\vx_{n,p}^{(\tau)} = \vzeta_{n,p}^{(\tau)}$:
    \begin{equation}
        \langle \Delta \vw_{+,r}^{(t)} , \vx_{n,p}^{(\tau)} \rangle + \Delta b_{+,r}^{(t)} < 0
    \end{equation}
\end{lemma}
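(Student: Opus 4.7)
The proof parallels the companion nonactivation-invariance lemma, and is in fact slightly simpler because the probe vector here is a pure noise patch with no dominant feature direction. I would carry out two main steps plus a combination step.

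First, to establish the bias inequality, I would use $\Delta b_{+,r}^{(t)} = -\|\Delta \vw_{+,r}^{(t)}\|_2/\log^5(d)$ and obtain a lower bound on $\|\Delta \vw_{+,r}^{(t)}\|_2$ via the reverse triangle inequality: split the given update expression into a feature part (coefficients along the orthonormal $\vv \in \calM$) plus a noise part (contributions along the $\vzeta_{n,p}^{(t)}$). The feature part has norm at least $\sqrt{1-\iota}$ times the weighted activation-indicator sum after moving $|\cdot|$ inside the outer per-sample coefficients $[\mathbbm{1}\{y_n=+\}-\text{logit}_+^{(t)}(\mX_n^{(t)})]$ — this manipulation is the same one already carried out in the companion lemma. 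The noise part has norm at most $O(1/\log^9(d))$ times the same activation-indicator sum, because with probability at least $1-O(NP/\poly(d))$ we have $\|\vzeta_{n,p}^{(t)}\|_2 \le O(\sigma_\zeta \sqrt{d}) \le O(1/\log^{10}(d))$ uniformly by Gaussian-norm concentration. Subtracting and dividing by $-\log^5(d)$ yields the claimed bias inequality.

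Second, to bound $\langle \Delta \vw_{+,r}^{(t)}, \vzeta_{n,p}^{(\tau)}\rangle$, I would exploit that $\vzeta_{n,p}^{(\tau)}$ is independent of every quantity appearing in $\Delta \vw_{+,r}^{(t)}$ since $\tau > t$ and minibatches are drawn independently across iterations. Expanding the inner product using the supplied form of $\Delta \vw_{+,r}^{(t)}$ produces a weighted sum of terms of the shape $\alpha_{n',p'}^{(t)}\langle \vv, \vzeta_{n,p}^{(\tau)}\rangle + \langle \vzeta_{n',p'}^{(t)}, \vzeta_{n,p}^{(\tau)}\rangle$. By the independent-Gaussian inner-product concentration lemma, each of these inner products is bounded uniformly by $O(1/\log^9(d))$ with total failure probability $O(NP/\poly(d))$ after a union bound over $(n,p,n',p')$. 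This yields an upper bound of $O(1/\log^9(d)) \cdot (\eta/NP) \cdot (\text{weighted activation-indicator sum})$.

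Combining the two steps, the negative $\Theta(1/\log^5(d))$-scale bias decrement (carrying the $\sqrt{1-\iota}$ factor) strictly dominates the positive $O(1/\log^9(d))$-scale noise contribution, since both are multiplied by the same $(\eta/NP)\cdot(\text{activation-indicator sum})$ factor. Hence $\langle \Delta \vw_{+,r}^{(t)}, \vzeta_{n,p}^{(\tau)}\rangle + \Delta b_{+,r}^{(t)} < 0$ as claimed (when the activation-indicator sum vanishes the update is zero and the subsequent invariance argument in Theorem on universal nonactivation properties simply propagates the previous iterate's inactivity, so no substantive case is lost). The only real subtlety is the signed-coefficient manipulation inside the feature-part norm lower bound, which is exactly the same step used in the companion nonactivation-invariance lemma and which I would cite rather than repeat.
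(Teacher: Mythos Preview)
Your proposal is correct and follows essentially the same approach as the paper: the paper likewise cites the companion nonactivation-invariance lemma for the bias bound (reverse triangle inequality split into feature and noise parts), then bounds each $\langle \alpha_{n,p}^{(t)}\vv, \vzeta_{n,p}^{(\tau)}\rangle$ and $\langle \vzeta_{n,p}^{(t)}, \vzeta_{n,p}^{(\tau)}\rangle$ term by $O(1/\log^9(d))$ and compares scales. The only cosmetic difference is that the paper carries the looser $1/\log^9(d)$ bound on the noise-norm piece rather than your $O(1/\log^{10}(d))$, which is immaterial.
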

\begin{proof}
Similar to the proof of Lemma \ref{lemma: sgd phase 1, nonact invar, t=1}, we can estimate the update to the bias term
\begin{equation}
\begin{aligned}
    &\Delta b_{+,r}^{(t)} \\
    \le & -\frac{1}{\log^5(d)} \frac{\eta}{NP}\left(\sqrt{1-\iota} - \frac{1}{\log^9(d)}\right) \\
    & \times \Bigg( \sum_{\vv\in\calM}\sum_{n=1}^N \mathbbm{1}\{|\calP(\mX_n^{(t)}; \vv)|>0\} \left\vert\mathbbm{1}\{y_n=+\}-\text{logit}_+^{(t)}(\mX_n^{(t)})\right\vert \\
    & \times \sum_{p\in\calP(\mX_n^{(t)}; \vv)} \mathbbm{1}\{\langle \vw_{+,r}^{(t)}, \alpha_{n,p}^{(t)} \vv +\vzeta_{n,p}^{(t)}\rangle + b_{c,r}^{(t)} > 0\} \Bigg)
\end{aligned}
\end{equation}

Then for any $\vx_{n,p}^{(\tau)} = \vzeta_{n,p}^{(\tau)}$ with $\tau > t$, with probability at least $1-O\left(\frac{mNP}{\poly(d)}\right)$,
\begin{equation}
\begin{aligned}
    &\langle \Delta \vw_{+,r}^{(t)} , \vx_{n,p}^{(\tau)}\rangle + \Delta b_{+,r}^{(t)} \\
    = &\langle \Delta \vw_{+,r}^{(t)}, \vzeta_{n,p}^{(\tau)} \rangle + \Delta b_{+,r}^{(t)} \\
    = & \frac{\eta}{NP} \sum_{\vv\in\calM}\sum_{n=1}^N \mathbbm{1}\{|\calP(\mX_n^{(t)}; \vv)|>0\} [\mathbbm{1}\{y_n=+\}-\text{logit}_+^{(t)}(\mX_n^{(t)})] \\
    & \times \sum_{p\in\calP(\mX_n^{(t)}; \vv)} \mathbbm{1}\{\langle \vw_{+,r}^{(t)}, \alpha_{n,p}^{(t)} \vv +\vzeta_{n,p}^{(t)}\rangle + b_{c,r}^{(t)} > 0\} \langle \alpha_{n,p}^{(t)} \vv +\vzeta_{n,p}^{(t)}, \vzeta_{n,p}^{(\tau)} \rangle + \Delta b_{+,r}^{(t)} \\
    = & \frac{\eta}{NP} \sum_{\vv\in\calM}\sum_{n=1}^N \mathbbm{1}\{|\calP(\mX_n^{(t)}; \vv)|>0\} [\mathbbm{1}\{y_n=+\}-\text{logit}_+^{(t)}(\mX_n^{(t)})] \\
    &\times \sum_{p\in\calP(\mX_n^{(t)}; \vv)} \mathbbm{1}\{\langle \vw_{+,r}^{(t)}, \alpha_{n,p}^{(t)} \vv +\vzeta_{n,p}^{(t)}\rangle + b_{c,r}^{(t)} > 0\} \left(\langle \alpha_{n,p}^{(t)} \vv, \vzeta_{n,p}^{(\tau)} \rangle + \langle \vzeta_{n,p}^{(t)}, \vzeta_{n,p}^{(\tau)} \rangle \right) + \Delta b_{+,r}^{(t)} \\
    \le & \frac{\eta}{NP} \sum_{\vv\in\calM}\sum_{n=1}^N \mathbbm{1}\{|\calP(\mX_n^{(t)}; \vv)|>0\} \left\vert\mathbbm{1}\{y_n=+\}-\text{logit}_+^{(t)}(\mX_n^{(t)})\right\vert \\
    &\times \sum_{p\in\calP(\mX_n^{(t)}; \vv)} \mathbbm{1}\{\langle \vw_{+,r}^{(t)}, \alpha_{n,p}^{(t)} \vv +\vzeta_{n,p}^{(t)}\rangle + b_{c,r}^{(t)} > 0\} \times O\left(\frac{1}{\log^9(d)} \right) + \Delta b_{+,r}^{(t)} \\
    \le & \frac{\eta}{NP} \left( O\left(\frac{1}{\log^9(d)} \right) -\frac{1}{\log^5(d)}\left(\sqrt{1-\iota} - \frac{1}{\log^9(d)}\right) \right) \\
    & \times \Bigg( \sum_{\vv\in\calM}\sum_{n=1}^N \mathbbm{1}\{|\calP(\mX_n^{(t)}; \vv)|>0\} \left\vert\mathbbm{1}\{y_n=+\}-\text{logit}_+^{(t)}(\mX_n^{(t)})\right\vert \\
    & \times \sum_{p\in\calP(\mX_n^{(t)}; \vv)} \mathbbm{1}\{\langle \vw_{+,r}^{(t)}, \alpha_{n,p}^{(t)} \vv +\vzeta_{n,p}^{(t)}\rangle + b_{c,r}^{(t)} > 0\} \Bigg) \\
    < & 0.
\end{aligned}
\end{equation}

\end{proof}

% ---------------------------------------------------------------------------------------------
\newpage

\section{Fine-grained Learning}
\label{section: appendix, finegrained}

This section treats the learning dynamics of using fine-grained labels to train the NN; the analysis will be much simpler since the technical analysis overlaps significantly with that in the previous sections.

The training procedure is exactly the same as in the coarse-grained training setting. We explicitly write them out here to avoid any possible confusion.

The learner for fine-grained classification is written as follows for $c\in[k_+]$:
\begin{equation}
    F_{+,c}(\mX) = \sum_{r=1}^{m_{+,c}} a_{+,c,r}\sum_{p=1}^P \sigma(\langle \vw_{+,c,r}, \vx_p\rangle + b_{+,c,r}), \;\; c\in[k_+]
\end{equation}
with frozen linear classifier weights $ a_{+,c,r} = 1$. Same definition applies to the $-$ classes.

The SGD dynamics induced by the training loss is now
\begin{equation}
\begin{aligned}
    \vw_{+, c,r}^{(t+1)} 
    = \vw_{+, c,r}^{(t)} + \eta \frac{1}{NP} \sum_{n=1}^N \Bigg( 
    & \mathbbm{1}\{y_n = (+,c)\}[1-\text{logit}_{+,c}^{(t)}(\mX_n^{(t)})]\sum_{p\in[P]}\sigma'(\langle \vw_{+,c,r}^{(t)}, \vx_{n,p}^{(t)} \rangle +b_{c,r}^{(t)}) \vx_{n,p}^{(t)} + \\
    & \mathbbm{1}\{y_n \neq (+,c)\} [-\text{logit}_{+,c}^{(t)}(\mX_n^{(t)}) ]\sum_{p\in[P]} \sigma'(\langle \vw_{+,c,r}^{(t)}, \vx_{n,p}^{(t)}  \rangle + b^{(t)}_{c,r}) \vx_{n,p}^{(t)}\Bigg)
\end{aligned}
\end{equation}

The bias is manually tuned according to the update rule
\begin{equation}
    b_{+, c,r}^{(t+1)} = b_{+, c,r}^{(t)} - \frac{\|\Delta \vw_{+,c,r}^{(t)}\|_2}{\log^5(d)}
\end{equation}

We assign $m_{+,c} = \Theta(d^{1+2c_0})$ neurons to each subclass $(+,c)$. For convenience, we write $m = d m_{+,c}$.

The initialization scheme is identical to the coarse-training case, except we choose a slightly less negative $b_{c,r}^{(0)} = - \sigma_0\sqrt{2 + 2c_0}\sqrt{\log(d)}$.

The parameter choices remain the same as before.

\subsection{Initialization geometry}
\begin{definition}
Define the following sets of interest of the hidden neurons:
\begin{enumerate}
    \item $\calU_{+,c,r}^{(0)} = \{\vv \in \calV: \langle \vw_{+,c,r}^{(0)}, \vv\rangle \ge \sigma_0 \sqrt{2 + 2c_0}\sqrt{\log(d) - \frac{1}{\log^5(d)}}\}$
    \item Given $\vv \in \calV$, $S^{*(0)}_{+,c}(\vv) \subseteq (+,c) \times [m_{+,c}]$ satisfies:
    \begin{enumerate}
        \item $\langle \vw_{+, c, r}^{(0)}, \vv \rangle \ge \sigma_0 \sqrt{2 + 2c_0} \sqrt{\log(d) + \frac{1}{\log^5(d)}}$
        \item $\forall \vv' \in \calV \text{ s.t. } \vv' \perp \vv, \, \langle \vw_{+,c,r}^{(0)}, \vv' \rangle < \sigma_0 \sqrt{2 + 2c_0} \sqrt{\log(d) - \frac{1}{\log^5(d)}}$ 
    \end{enumerate}
    \item Given $\vv \in \calV$, $S_{+,c}^{(0)}(\vv) \subseteq (+,c)\times [m_{+,c}]$ satisfies:
    \begin{enumerate}
        \item $\langle \vw_{+,c,r}^{(0)}, \vv \rangle \ge \sigma_0 \sqrt{2 + 2c_0} \sqrt{\log(d) - \frac{1}{\log^5(d)}}$
    \end{enumerate}
    \item For any $(+,c,r) \in S_{+, c, reg}^{*(0)} \subseteq (+,c) \times [m_{+,c}]$:
    \begin{enumerate}
        \item $\langle \vw_{+,c,r}^{(0)}, \vv \rangle \le \sigma_0 \sqrt{10} \sqrt{\log(d)} \; \forall \vv\in\calV$
        \item $\left\vert \calU_{+,c,r}^{(0)} \right\vert \le O(1)$
    \end{enumerate}
\end{enumerate}

The same definitions apply to the $-$-class neurons.
\end{definition}

\begin{prop}
\label{prop: init geometry, finegrained}
At $t=0$, for all $\vv \in \calD$, the following properties are true with probability at least $1- d^{-2}$ over the randomness of the initialized kernels:
\begin{enumerate}
    \item $|S_{+,c}^{*(0)}(\vv)|, |S_{+,c}^{(0)}(\vv)| = \Theta\left(\frac{1}{\sqrt{\log(d)}}\right) d^{c_0}$
    \item In particular, $\left\vert \frac{|S_{y}^{*(0)}(\vv)|}{|S_{y'}^{(0)}(\vv')|} - 1 \right\vert= O\left( \frac{1}{\log^5(d)}\right)$ and $\left\vert \frac{|S_{y}^{*(0)}(\vv)|}{|S_{y'}^{*(0)}(\vv')|} - 1 \right\vert= O\left( \frac{1}{\log^5(d)}\right)$ for any $y,y'\in\{(+,c)\}_{c=1}^{k_+}\cup \{(-,c)\}_{c=1}^{k_-}$ and common or fine-grained features $\vv, \vv'$.
    \item $S_{+,c,reg}^{(0)} = [m_{+,c}]$
\end{enumerate}
The same properties apply to the $-$-class neurons.
\end{prop}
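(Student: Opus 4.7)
The plan is to follow the same strategy as Proposition~\ref{prop: init geometry, coarse}, adapting the exponents for the two parameter changes: the activation-threshold coefficient drops from $\sqrt{4+2c_0}$ to $\sqrt{2+2c_0}$, and the neuron count per output node drops from $m=\Theta(d^{2+2c_0})$ to $m_{+,c}=\Theta(d^{1+2c_0})$. These two changes are compatible: the Gaussian tail probability at level $\sigma_0\sqrt{(2+2c_0)\log d}$ is $\Theta(d^{-1-c_0}/\sqrt{\log d})$, so $m_{+,c}$ times this probability again yields the target size $\Theta(d^{c_0}/\sqrt{\log d})$.

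First, by orthonormality of $\calV$, the projections $\{\langle\vw^{(0)}_{+,c,r},\vv'\rangle\}_{\vv'\in\calV}$ are i.i.d.\ $\calN(0,\sigma_0^2)$. Applying the two-sided Mills-ratio bound at the upper and lower thresholds, I would compute
\begin{equation*}
p_1 \coloneqq \mathbb{P}\!\left[\langle\vw^{(0)}_{+,c,r},\vv\rangle \ge \sigma_0\sqrt{2+2c_0}\sqrt{\log(d)+1/\log^5(d)}\right] = \Theta(1/\sqrt{\log d})\, d^{-1-c_0},
\end{equation*}
and analogously $p_2=\Theta(1/\sqrt{\log d})\,d^{-1-c_0}$ for the lower threshold. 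The membership event for $S^{*(0)}_{+,c}(\vv)$ then has probability $p_1(1-p_2)^{d-1} = \Theta(d^{-1-c_0}/\sqrt{\log d})$, since $(1-p_2)^{d-1} = 1 - \Theta(d^{-c_0}/\sqrt{\log d}) = 1-o(1)$. Hence $|S^{*(0)}_{+,c}(\vv)|$ is binomial with mean $\Theta(d^{c_0}/\sqrt{\log d})$, and a Chernoff bound with deviation $\delta=1/\log^5(d)$ remains exponentially sharp because $\delta^2\,\mathbb{E}|S^{*(0)}_{+,c}(\vv)|=\Theta(d^{c_0}/\log^{10.5} d)$ still diverges, so the count lies in $(1\pm O(1/\log^5 d))$ of its mean with failure probability $e^{-\Omega(d^{c_0}/\log^{10.5} d)}$; the same handles $|S^{(0)}_{+,c}(\vv)|$. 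A union bound over the $\le 2d^{1.4}$ pairs $(\vv,(y,c))$ preserves the guarantee.

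The two ratio bounds split into two pieces. For $|S^{*(0)}_y(\vv)|/|S^{*(0)}_{y'}(\vv')|$, the underlying probabilities are identical by symmetry, so Chernoff concentration alone suffices. For the mixed ratio $|S^{*(0)}_y(\vv)|/|S^{(0)}_{y'}(\vv')|$, I would estimate $|p_1-p_2|$ by integrating the standard-normal density over the slice of width $\Theta(1/\log^{5.5} d)$ between the two thresholds, where the local density is $\Theta(d^{-1-c_0})$; this gives $|p_1-p_2|/p_2 = \Theta(1/\log^5 d)$. Combined with the observation that $(1-p_2)^{d-1}=1-\Theta(d^{-c_0}/\sqrt{\log d})$ differs from $1$ by $o(1/\log^5 d)$ for sufficiently large $d$ (power beats polylog), the ratio of expected sizes is $1\pm O(1/\log^5 d)$, which the Chernoff concentration above transfers to the actual counts.

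Finally, for the regularity set $S^{(0)}_{+,c,reg}=[m_{+,c}]$: the first sub-condition $\langle\vw^{(0)}_{+,c,r},\vv\rangle \le \sigma_0\sqrt{10\log d}$ fails for a fixed (neuron, feature) pair with probability $O(d^{-5}/\sqrt{\log d})$, and a union bound over all $d\cdot m_{+,c}\cdot(k_++k_-)=O(d^{2.4+2c_0})$ such pairs yields failure $O(d^{-2.6+2c_0}/\sqrt{\log d})\ll d^{-2}$. For $|\calU^{(0)}_{+,c,r}|\le O(1)$, the probability that $K$ or more coordinates exceed the lower threshold is $\binom{d}{K}p_2^K = O(d^{-c_0 K}/\polyln(d))$, and choosing $K$ as a sufficiently large constant depending on $c_0$ (any $K>(3.4+2c_0)/c_0$ works) makes the union bound over all $m_{+,c}(k_++k_-)\le O(d^{1.4+2c_0})$ neurons $o(d^{-2})$. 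The only nontrivial place where this differs from the coarse proof is that the reduced neuron count forces the implicit constant in $|\calU^{(0)}_{+,c,r}|\le O(1)$ to depend on $c_0$; no step constitutes a genuine obstacle.
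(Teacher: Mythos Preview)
Your proposal is correct and follows exactly the approach the paper takes: the paper's own proof consists of a single sentence stating that the argument ``proceeds in virtually the same way as in the proof of Proposition~\ref{prop: init geometry, coarse},'' so your detailed adaptation of the exponents ($\sqrt{4+2c_0}\to\sqrt{2+2c_0}$, $d^{2+2c_0}\to d^{1+2c_0}$) is in fact more explicit than what the paper provides. Your observation that the implicit constant in $|\calU^{(0)}_{+,c,r}|\le O(1)$ must now depend on $c_0$ (rather than being the fixed value $4$ from the coarse case) is a genuine subtlety that the paper's one-line proof glosses over, but it does not affect the correctness of the statement since only an $O(1)$ bound is claimed.
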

\begin{proof}
This proof proceeds in virtually the same way as in the proof of Proposition \ref{prop: init geometry, coarse}, so we omit it here.
\end{proof}

\subsection{Poly-time properties}
\begin{theorem}
\label{thm: sgd fine-grained, universal nonact properties}
Fix any $t\in[0, T_e]$, assuming $T_e\in\poly(d)$.
\begin{enumerate}
    \item (Non-activation invariance) For any $\tau \ge t$, with probability at least $ 1-O\left( \frac{mk_+NPt}{\poly(d)}\right)$, for any feature $\vv \in \{\vv_{+,c}\}_{c=1}^{k_+} \cup \{\vv_{-,c}\}_{c=1}^{k_-}\cup\{\vv_+,\vv_-\}$, for every $t'\le t$, $(+,c,r)\notin S_{+,c}^{(0)}(\vv)$ and $\vv$-dominated patch sample $\vx_{n,p}^{(\tau)} = \alpha_{n,p}^{(\tau)}\vv + \vzeta_{n,p}^{(\tau)}$, the following holds:
    \begin{equation}
        \sigma\left(\langle \vw_{+,c,r}^{(t')}, \vx_{n,p}^{(\tau)} \rangle + b_{+,c,r}^{(t')}\right) = 0
    \end{equation}
    
    \item (Non-activation on noise patches) For any $\tau \ge t$, with probability at least $ 1-O\left( \frac{mNPt}{\poly(d)}\right)$, for every $c\in[k_+]$, $r\in[m]$ and noise patch $\vx_{n,p}^{(\tau)} = \vzeta_{n,p}^{(\tau)}$, the following holds: 
    \begin{equation}
        \sigma\left(\langle \vw_{+,c,r}^{(t)}, \vx_{n,p}^{(\tau)} \rangle + b_{+,c,r}^{(t)}\right) = 0
    \end{equation}

    \item (Off-diagonal nonpositive growth) Given fine-grained class $(+,c)$ and any $\tau \ge t$, with probability at least $ 1-O\left( \frac{mk_+NPt}{\poly(d)}\right)$, for any $t'\le t$, any feature $\vv \in \{\vv_{-,c}\}_{c=1}^{k_-}\cup\{\vv_-\}\cup \{\vv_{+,c'}\}_{c'\neq c}$, any neuron $\vw_{+,c,r}\in S^{(0)}_{+,c}(\vv)$ and any $\vv$-dominated patch $\vx_{n,p}^{(\tau)} = \alpha_{n,p}^{(\tau)}\vv + \vzeta_{n,p}^{(\tau)}$, $\sigma\left(\langle \vw_{+,c,r}^{(t')}, \vx_{n,p}^{(\tau)} \rangle + b_{+,c,r}^{(t')}\right) \le \sigma\left(\langle \vw_{+,c,r}^{(0)}, \vx_{n,p}^{(\tau)} \rangle + b_{+,c,r}^{(0)}\right)$.

\end{enumerate}
\end{theorem}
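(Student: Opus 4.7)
The plan is to mirror the proof of Theorem \ref{thm: sgd, universal nonact properties} exactly, performing a joint induction on $t$ over all three points simultaneously, and reducing each step to a fine-grained analog of Lemmas \ref{lemma: sgd phase 1, nonact invar, t=1} and \ref{lemma: sgd phase 1, nonact on noise}. The key observation is that almost nothing in the coarse-case proof used the fact that the label space is binary: it used only (i) the orthonormal-dictionary structure of features, (ii) Gaussian concentration bounds on $\langle \vw_{+,r}^{(0)}, \vzeta\rangle$ at initialization (which become $\langle \vw_{+,c,r}^{(0)}, \vzeta\rangle$ here and satisfy the same $O(\sigma_0/\log^9(d))$ bound), and (iii) the fact that the bias update rate $1/\log^5(d)$ dominates the $O(1/\log^9(d))$ noise-crosstalk created by any single SGD step. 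All three ingredients transfer verbatim; only the bias initialization constant shrinks from $\sqrt{4+2c_0}$ to $\sqrt{2+2c_0}$, which is precisely the constant appearing in the definitions of $S_{+,c}^{(0)}(\vv)$ and $S_{+,c}^{*(0)}(\vv)$, so the algebraic identity $a-b = \frac{a^2-b^2}{a+b}$ that drove the base-case threshold arguments remains valid with the new constants.

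Concretely, for the base case $t=0$: point 1 follows because for $(+,c,r)\notin S_{+,c}^{(0)}(\vv)$ we have $\langle \vw_{+,c,r}^{(0)}, \vv\rangle < \sigma_0\sqrt{2+2c_0}\sqrt{\log(d)-1/\log^5(d)}$, and combining this with the Gaussian tail bound on $\langle \vw_{+,c,r}^{(0)}, \vzeta_{n,p}^{(\tau)}\rangle$ and the base bias $-\sigma_0\sqrt{2+2c_0}\sqrt{\log(d)}$ yields a strictly negative preactivation by the same $a-b$ algebra used in Theorem \ref{thm: sgd, universal nonact properties}; point 2 follows from Lemma \ref{lemma: independent gaussian vector inner product concentration} applied to each of the $m$ neurons and each noise patch; point 3 is vacuous at $t=0$. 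For the inductive step, I would condition on the high-probability event of the hypothesis, which collapses every SGD update to a sum over feature patches indexed by $\vv\in\calU_{+,c,r}^{(0)}$, and then reapply the two existing crosstalk lemmas after restating them in the fine-grained notation (the logit $1-\logit_+^{(t)}$ gets replaced by $\mathbbm{1}\{y_n=(+,c)\} - \logit_{+,c}^{(t)}$ but its magnitude is still bounded by $1$, which is all the lemmas need). A union bound over $c\in[k_+]$ contributes the extra factor of $k_+$ in the failure-probability statements.

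The main obstacle is point 3, whose off-diagonal feature set is strictly larger than in the coarse case: it now includes the sibling subclass features $\{\vv_{+,c'}\}_{c'\ne c}$. One must check that samples belonging to the subclass $(+,c')$, which do contain $\vv_{+,c'}$-dominated patches, act as negatives for the neuron $\vw_{+,c,r}$ (since $y_n=(+,c')\neq(+,c)$ carries a negative loss derivative), so the off-diagonal argument proceeds identically; the crosstalk contribution to $\langle \Delta\vw_{+,c,r}^{(t)}, \sqrt{1\pm\iota}\vv_{+,c'}+\vzeta_{n,p}^{(\tau)}\rangle$ coming from other-feature patches is again $O(1/\log^9(d))$ per active term, while the bias drift is $-\Omega(1/\log^5(d))$ per active term, so the sum is strictly negative and $\sigma(\cdot)$ cannot increase from its value at $t=0$. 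The only genuinely new bookkeeping is the union bound over the larger off-diagonal family, which changes constants but not asymptotics; combining everything gives the claimed $1-O(mk_+NPt/\poly(d))$ failure probability inherited from the coarse argument.
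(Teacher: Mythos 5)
Your proposal matches the paper's proof in both structure and substance: the same joint induction over the three points, the same $a-b = \frac{a^2-b^2}{a+b}$ base-case threshold algebra with the shrunken constant $\sqrt{2+2c_0}$, the same comparison of $O(1/\log^9(d))$ per-active-term crosstalk against $-\Omega(1/\log^5(d))$ per-active-term bias drift, the same observation that $\left\vert\mathbbm{1}\{y_n=(+,c)\}-\logit_{+,c}^{(t)}\right\vert\le 1$ is all that is needed, and the same union bound over the enlarged off-diagonal family (now including sibling features $\{\vv_{+,c'}\}_{c'\neq c}$) that yields the extra $k_+$ factor. The only difference is presentational: the paper inlines the crosstalk estimates directly into the induction rather than restating Lemmas \ref{lemma: sgd phase 1, nonact invar, t=1} and \ref{lemma: sgd phase 1, nonact on noise} as separate fine-grained lemmas, which is a formatting choice rather than a different argument.
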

\begin{proof}
The proof of this theorem is similar to that of Theorem \ref{thm: sgd, universal nonact properties}, but with some subtle differences.

\textcolor{blue}{\textbf{Base case $t = 0$.}}

\textcolor{brown}{\textit{1. (Nonactivation invariance)}}

Choose any $\vv^*$ from the set $\{\vv_{+,c}\}_{c=1}^{k_+} \cup \{\vv_{-,c}\}_{c=1}^{k_-}\cup\{\vv_+,\vv_-\}$. We will work with neuron sets in the ``$+$'' class in this proof; the ``$-$''-class case can be handled in the same way.
    
First, given $\tau \ge 0$, we need to show that, for every $n$ such that $\vert \calP(\mX_n^{(\tau)}; \vv^*)\vert > 0$ and $p\in\calP(\mX_n^{(\tau)}; \vv^*)$, for every $(+,c,r)$ neuron index, 
\begin{equation}
    \langle \vw_{+,c,r}^{(0)}, \vv^* \rangle < \sigma_0 \sqrt{2 + 2c_0} \sqrt{\log(d) - \frac{1}{\log^5(d)}} \implies \sigma\left(\langle \vw_{+,c,r}^{(0)}, \vx_{n,p}^{(\tau)}\rangle + b_{+,c,r}^{(0)} \right) = 0
\end{equation}

This is indeed true. The following holds with probability at least $1 - O\left(\frac{mNP}{\poly(d)}\right)$ for all $(+,r)\notin S_+^{(0)}(\vv)$ and all such $\vx_{n,p}^{(\tau)}$:
\begin{equation}
\begin{aligned}
    & \langle \vw_{+,c,r}^{(0)}, \vx_{n,p}^{(\tau)}\rangle + b_{+,c,r}^{(0)} \\
    \le & \sigma_0 \sqrt{1+\iota} \sqrt{(2+2c_0)(\log(d) - 1/\log^5(d))} + O\left(\frac{\sigma_0}{\log^{9}(d)}\right) - \sqrt{2 + 2c_0}\sqrt{\log(d) } \sigma_0\\
    = & \sigma_0 \left(\frac{(2+2c_0)(1 + \iota)(\log(d) - 1/\log^5(d)) - (2+2c_0)\log(d)}{\sqrt{(2+2c_0)(\log(d) - 1/\log^5(d))} + \sqrt{4 + 2c_0}\sqrt{\log(d) }}   + O\left(\frac{1}{\log^{9}(d)}\right)\right) \\
    = & \sigma_0 \left(\frac{(2+2c_0)(\iota\log(d) - (1+\iota)/\log^5(d))}{\sqrt{(2+2c_0)(\log(d) - 1/\log^5(d))} + \sqrt{2 + 2c_0}\sqrt{\log(d) }}   + O\left(\frac{1}{\log^{9}(d)}\right)\right) \\
    < & 0,
\end{aligned}
\end{equation}

The first equality holds by utilizing the identity $a - b = \frac{a^2 - b^2}{a + b}$. As a consequence, $\sigma(\langle \vw_{+,c,r}^{(0)}, \vx_{n,p}^{(\tau)}\rangle + b_{+,r}^{(0)}) = 0$.

\textcolor{brown}{\textit{2. (Non-activation on noise patches)}}
Invoking Lemma \ref{lemma: independent gaussian vector inner product concentration}, for any $\tau \ge 0$, with probability at least $1-O\left(\frac{mNP}{\poly(d)}\right)$, we have for all possible choices of $r\in[m]$ and the noise patches $\vx_{n,p}^{(\tau)} = \vzeta_{n,p}^{(\tau)}$:
\begin{equation}
    \left\vert \langle \vw_{+,c,r}^{(0)}, \vzeta_{n,p}^{(\tau)} \rangle \right\vert \le O(\sigma_0\sigma_{\zeta} \sqrt{d\log(d)}) \le O\left(\frac{\sigma_0}{\log^{9}(d)}\right) \ll b_{+,r}^{(0)}.
\end{equation}

Therefore, no neuron can activate on the noise patches at time $t=0$.

\textcolor{brown}{\textit{3. (Off-diagonal nonpositive growth)}}
This point is trivially true at $t=0$.

\vspace{4ex}
\textcolor{blue}{\textbf{Inductive step}}: we assume the induction hypothesis for $t\in [0, T]$ (with $T < T_e$ of course), and prove the statements for $t = T+1$.

\textcolor{brown}{\textit{1. (Nonactivation invariance)}}

Again, choose any $\vv^*$ from the set $\{\vv_{+,c}\}_{c=1}^{k_+} \cup \{\vv_{-,c}\}_{c=1}^{k_-}\cup\{\vv_+,\vv_-\}$.

We need to prove that given $\tau \ge T+1$, with probability at least $ 1-O\left( \frac{mk_+NP(T+1)}{\poly(d)}\right)$, for every $t'\le T+1$, $(+,c,r)$ neuron index and $\vv^*$-dominated patch $\vx_{n,p}^{(\tau)}$,
\begin{equation}
    (+,c,r)\notin S_{+,c}^{(0)}(\vv^*) \implies \sigma\left(\langle \vw_{+,c,r}^{(t')}, \vx_{n,p}^{(\tau)}\rangle + b_{+,c,r}^{(t')} \right) = 0.
\end{equation}

By the induction hypothesis of point 1., with probability at least $ 1-O\left( \frac{mk_+NPT}{\poly(d)}\right)$, the following is already true on all the $\vv^*$-dominated patches at time $t' \le T$:
\begin{equation}
    (+,c,r)\notin S_{+,c}^{(0)}(\vv^*) \implies \sigma\left(\langle \vw_{+,c,r}^{(t')}, \vx_{n,p}^{(T)}\rangle + b_{+,c,r}^{(t')} \right) = 0.
\end{equation}
In particular, $\sigma\left(\langle \vw_{+,c,r}^{(T)}, \vx_{n,p}^{(T)}\rangle + b_{+,c,r}^{(T)} \right) = 0$.

In other words, no $(+,c,r)\notin S_{+,c}^{(0)}(\vv^*)$ can be updated on the $\vv^*$-dominated patches at time $t=T$. Furthermore, the induction hypothesis of point 2. also states that the network cannot activate on any noise patch $\vx_{n,p}^{(T)} = \vzeta_{n,p}^{(T)}$ with probability at least $ 1-O\left( \frac{mNPT}{\poly(d)}\right)$. Therefore, the neuron update for those $(+,c,r)\notin S_{+,c}^{(0)}(\vv^*)$ takes the form
\begin{equation}
\begin{aligned}
    \Delta \vw_{+,c,r}^{(T)} 
    = & \frac{\eta}{NP} \sum_{\vv\in\calC(\vv^*)}\sum_{n=1}^N \mathbbm{1}\{|\calP(\mX_n^{(T)}; \vv)|>0\} [\mathbbm{1}\{y_n=(+,c)\}-\text{logit}_{+,c}^{(T)}(\mX_n^{(T)})] \\
    & \times  \sum_{p\in\calP(\mX_n^{(T)}; \vv)} \mathbbm{1}\{\langle \vw_{+,c,r}^{(T)}, \alpha_{n,p}^{(T)} \vv +\vzeta_{n,p}^{(T)}\rangle + b_{+,c,r}^{(T)} > 0\} \left(\alpha_{n,p}^{(T)} \vv +\vzeta_{n,p}^{(T)}\right)
\end{aligned}
\end{equation}

Conditioning on this high-probability event, we have
\begin{equation}
\begin{aligned}
    \Delta b_{+,c,r}^{(t)} 
    = & -\frac{\left\| \Delta \vw_{+,c,r}^{(t)} \right\|_2}{\log^5(d)} \\
    \le & -\frac{1}{\log^5(d)} \frac{\eta}{NP} \Bigg\|\sum_{\vv\in\calC(\vv^*)}\sum_{n=1}^N \mathbbm{1}\{|\calP(\mX_n^{(t)}; \vv)|>0\} [\mathbbm{1}\{y_n=(+,c)\}-\text{logit}_{+,c}^{(t)}(\mX_n^{(t)})] \\
    & \times  \sum_{p\in\calP(\mX_n^{(t)}; \vv)} \mathbbm{1}\{\langle \vw_{+,c,r}^{(t)}, \alpha_{n,p}^{(t)} \vv +\vzeta_{n,p}^{(t)}\rangle + b_{+,c,r}^{(t)} > 0\} \alpha_{n,p}^{(t)} \vv \Bigg\|_2 \\
    & + \frac{1}{\log^5(d)} \frac{\eta}{NP} \Bigg\|\sum_{\vv\in\calC(\vv^*)}\sum_{n=1}^N \mathbbm{1}\{|\calP(\mX_n^{(t)}; \vv)|>0\} [\mathbbm{1}\{y_n=(+,c)\}-\text{logit}_{+,c}^{(t)}(\mX_n^{(t)})] \\
    & \times  \sum_{p\in\calP(\mX_n^{(t)}; \vv)} \mathbbm{1}\{\langle \vw_{+,c,r}^{(t)}, \alpha_{n,p}^{(t)} \vv +\vzeta_{n,p}^{(t)}\rangle + b_{+,c,r}^{(t)} > 0\} \vzeta_{n,p}^{(t)} \Bigg\|_2 \\
\end{aligned}
\end{equation}

Let us further upper bound the two $\|\cdot\|_2$ terms separately. Firstly,
\begin{equation}
\begin{aligned}
    & \Bigg\|\sum_{\vv\in\calC(\vv^*)}\sum_{n=1}^N \mathbbm{1}\{|\calP(\mX_n^{(t)}; \vv)|>0\} [\mathbbm{1}\{y_n=(+,c)\}-\text{logit}_{+,c}^{(t)}(\mX_n^{(t)})] \\
    & \times  \sum_{p\in\calP(\mX_n^{(t)}; \vv)} \mathbbm{1}\{\langle \vw_{+,c,r}^{(t)}, \alpha_{n,p}^{(t)} \vv +\vzeta_{n,p}^{(t)}\rangle + b_{+,c,r}^{(t)} > 0\} \alpha_{n,p}^{(t)}\vv \Bigg\|_2 \\
    = & \sum_{\vv\in\calC(\vv^*)}\sum_{n=1}^N \mathbbm{1}\{|\calP(\mX_n^{(t)}; \vv)|>0\} \left\vert \mathbbm{1}\{y_n=(+,c)\}-\text{logit}_{+,c}^{(t)}(\mX_n^{(t)})\right\vert \\
    & \times  \sum_{p\in\calP(\mX_n^{(t)}; \vv)} \mathbbm{1}\{\langle \vw_{+,c,r}^{(t)}, \alpha_{n,p}^{(t)} \vv +\vzeta_{n,p}^{(t)}\rangle + b_{+,c,r}^{(t)} > 0\} \alpha_{n,p}^{(t)} \left\| \vv \right\|_2 \\
    \ge & \sum_{\vv\in\calC(\vv^*)}\sum_{n=1}^N \mathbbm{1}\{|\calP(\mX_n^{(t)}; \vv)|>0\} \left\vert \mathbbm{1}\{y_n=(+,c)\}-\text{logit}_{+,c}^{(t)}(\mX_n^{(t)})\right\vert \\
    & \times  \sum_{p\in\calP(\mX_n^{(t)}; \vv)} \mathbbm{1}\{\langle \vw_{+,c,r}^{(t)}, \alpha_{n,p}^{(t)} \vv +\vzeta_{n,p}^{(t)}\rangle + b_{+,c,r}^{(t)} > 0\} \sqrt{1-\iota}
\end{aligned}
\end{equation}

For the second $\|\cdot\|_2$ term consisting purely of noise, note that since all the $\vzeta_{n,p}^{(t)}$'s are independent Gaussian random vectors, the standard deviation of the sum is in fact
\begin{equation}
\begin{aligned}
    & \Bigg\{\sum_{\vv\in\calC(\vv^*)}\sum_{n=1}^N \sum_{p\in\calP(\mX_n^{(t)}; \vv)} \mathbbm{1}\{|\calP(\mX_n^{(t)}; \vv)|>0\}   \mathbbm{1}\{\langle \vw_{+,c,r}^{(t)}, \alpha_{n,p}^{(t)} \vv +\vzeta_{n,p}^{(t)}\rangle + b_{+,c,r}^{(t)} > 0\} \\
    & \times [\mathbbm{1}\{y_n=(+,c)\}-\text{logit}_{+,c}^{(t)}(\mX_n^{(t)})]^2 \Bigg\}^{1/2} \sigma_{\zeta}.
\end{aligned}
\end{equation}
With the basic property that $\sqrt{\sum_{j}c_j^2} \le \sum_{j} |c_j|$ for any sequence of real numbers $c_1, c_2, ...$, we know this standard deviation can be upper bounded by
\begin{equation}
\begin{aligned}
    & \sum_{\vv\in\calC(\vv^*)}\sum_{n=1}^N \sum_{p\in\calP(\mX_n^{(t)}; \vv)} \mathbbm{1}\{|\calP(\mX_n^{(t)}; \vv)|>0\}   \mathbbm{1}\{\langle \vw_{+,c,r}^{(t)}, \alpha_{n,p}^{(t)} \vv +\vzeta_{n,p}^{(t)}\rangle + b_{+,c,r}^{(t)} > 0\} \\
    & \times \left\vert\mathbbm{1}\{y_n=(+,c)\}-\text{logit}_{+,c}^{(t)}(\mX_n^{(t)}) \right\vert \sigma_{\zeta}
\end{aligned}
\end{equation}

It follows that with probability at least $1-O\left(\frac{1}{\poly(d)}\right)$, 
\begin{equation}
\begin{aligned}
    & \Bigg\|\sum_{\vv\in\calC(\vv^*)}\sum_{n=1}^N \mathbbm{1}\{|\calP(\mX_n^{(t)}; \vv)|>0\} [\mathbbm{1}\{y_n=(+,c)\}-\text{logit}_{+,c}^{(t)}(\mX_n^{(t)})] \\
    & \times  \sum_{p\in\calP(\mX_n^{(t)}; \vv)} \mathbbm{1}\{\langle \vw_{+,c,r}^{(t)}, \alpha_{n,p}^{(t)} \vv +\vzeta_{n,p}^{(t)}\rangle + b_{+,c,r}^{(t)} > 0\} \vzeta_{n,p}^{(t)}\Bigg\|_2 \\
    \le & \sum_{\vv\in\calC(\vv^*)}\sum_{n=1}^N \mathbbm{1}\{|\calP(\mX_n^{(t)}; \vv)|>0\} \left\vert \mathbbm{1}\{y_n=(+,c)\}-\text{logit}_{+,c}^{(t)}(\mX_n^{(t)})\right\vert \\
    & \times  \sum_{p\in\calP(\mX_n^{(t)}; \vv)} \mathbbm{1}\{\langle \vw_{+,c,r}^{(t)}, \alpha_{n,p}^{(t)} \vv +\vzeta_{n,p}^{(t)}\rangle + b_{+,c,r}^{(t)} > 0\} \frac{1}{\log^9(d)}
\end{aligned}
\end{equation}

Therefore, we can upper bound the bias update as follows:
\begin{equation}
\begin{aligned}
    \Delta b_{+,c,r}^{(t)}
    \le & -\frac{1}{\log^5(d)} \frac{\eta}{NP}\left(\sqrt{1-\iota} - \frac{1}{\log^9(d)}\right) \\
    & \times \Bigg( \sum_{\vv\in\calC(\vv^*)}\sum_{n=1}^N \mathbbm{1}\{|\calP(\mX_n^{(t)}; \vv)|>0\} \left\vert \mathbbm{1}\{y_n=(+,c)\}-\text{logit}_{+,c}^{(t)}(\mX_n^{(t)})\right\vert \\
    & \times  \sum_{p\in\calP(\mX_n^{(t)}; \vv)} \mathbbm{1}\{\langle \vw_{+,c,r}^{(t)}, \alpha_{n,p}^{(t)} \vv +\vzeta_{n,p}^{(t)}\rangle + b_{+,c,r}^{(t)} > 0\} \Bigg)
\end{aligned}
\end{equation}

Furthermore, with probability at least $1 - O\left(\frac{NP}{\poly(d)}\right)$, the following holds for all $n,p$:
\begin{equation}
    \langle \alpha_{n,p}^{(t)} \vv, \vzeta_{n,p}^{(\tau)} \rangle,\; \langle \vzeta_{n,p}^{(t)}, \alpha_{n,p}^{(\tau)} \vv^*\rangle,\; \langle \vzeta_{n,p}^{(t)}, \vzeta_{n,p}^{(\tau)} \rangle < O\left( \frac{1}{\log^9(d)} \right).
\end{equation}

Combining the above derivations, they imply that with probability at least $1-O\left(\frac{NP}{\poly(d)}\right)$, for any $\vx_{n,p}^{(\tau)}$ dominated by $\vv^*$,
\begin{equation}
\begin{aligned}
    &\langle \Delta \vw_{+,c,r}^{(t)} , \vx_{n,p}^{(\tau)}\rangle + \Delta b_{+,c,r}^{(t)} \\
    = &\langle \Delta \vw_{+,c,r}^{(t)} , \alpha_{n,p}^{(\tau)}\vv^* + \vzeta_{n,p}^{(\tau)} \rangle + \Delta b_{+,c,r}^{(t)} \\
    = & \frac{\eta}{NP} \sum_{\vv\in\calC(\vv^*)}\sum_{n=1}^N \mathbbm{1}\{|\calP(\mX_n^{(t)}; \vv)|>0\} [\mathbbm{1}\{y_n=(+,c)\}-\text{logit}_{+,c}^{(t)}(\mX_n^{(t)})] \\
    & \times  \sum_{p\in\calP(\mX_n^{(t)}; \vv)} \mathbbm{1}\{\langle \vw_{+,c,r}^{(t)}, \alpha_{n,p}^{(t)} \vv +\vzeta_{n,p}^{(t)}\rangle + b_{+,c,r}^{(t)} > 0\}  \langle \alpha_{n,p}^{(t)} \vv +\vzeta_{n,p}^{(t)}, \alpha_{n,p}^{(\tau)}\vv^* + \vzeta_{n,p}^{(\tau)} \rangle + \Delta b_{+,c,r}^{(t)} \\
    = & \frac{\eta}{NP}\sum_{\vv\in\calC(\vv^*)}\sum_{n=1}^N \mathbbm{1}\{|\calP(\mX_n^{(t)}; \vv)|>0\} [\mathbbm{1}\{y_n=(+,c)\}-\text{logit}_{+,c}^{(t)}(\mX_n^{(t)})] \\
    & \times  \sum_{p\in\calP(\mX_n^{(t)}; \vv)} \mathbbm{1}\{\langle \vw_{+,c,r}^{(t)}, \alpha_{n,p}^{(t)} \vv +\vzeta_{n,p}^{(t)}\rangle + b_{+,c,r}^{(t)} > 0\}  \left(\langle \alpha_{n,p}^{(t)} \vv, \vzeta_{n,p}^{(\tau)} \rangle + \langle \vzeta_{n,p}^{(t)}, \alpha_{n,p}^{(\tau)} \vv^*\rangle + \langle \vzeta_{n,p}^{(t)}, \vzeta_{n,p}^{(\tau)} \rangle \right) \\
    &+ \Delta b_{+,c,r}^{(t)} \\
    \le & \frac{\eta}{NP} \sum_{\vv\in\calC(\vv^*)}\sum_{n=1}^N \mathbbm{1}\{|\calP(\mX_n^{(t)}; \vv)|>0\} \left\vert \mathbbm{1}\{y_n=(+,c)\}-\text{logit}_{+,c}^{(t)}(\mX_n^{(t)})\right\vert \\
    & \times  \sum_{p\in\calP(\mX_n^{(t)}; \vv)} \mathbbm{1}\{\langle \vw_{+,c,r}^{(t)}, \alpha_{n,p}^{(t)} \vv +\vzeta_{n,p}^{(t)}\rangle + b_{+,c,r}^{(t)} > 0\} \times O\left(\frac{1}{\log^9(d)} \right) + \Delta b_{+,c,r}^{(t)} \\
    \le & \frac{\eta}{NP} \left( O\left(\frac{1}{\log^9(d)} \right) -\frac{1}{\log^5(d)}\left(\sqrt{1-\iota} - \frac{1}{\log^9(d)}\right) \right) \\
    & \times \Bigg( \sum_{\vv\in\calC(\vv^*)}\sum_{n=1}^N \mathbbm{1}\{|\calP(\mX_n^{(t)}; \vv)|>0\} \left\vert \mathbbm{1}\{y_n=(+,c)\}-\text{logit}_{+,c}^{(t)}(\mX_n^{(t)})\right\vert \\
    & \times  \sum_{p\in\calP(\mX_n^{(t)}; \vv)} \mathbbm{1}\{\langle \vw_{+,c,r}^{(t)}, \alpha_{n,p}^{(t)} \vv +\vzeta_{n,p}^{(t)}\rangle + b_{+,c,r}^{(t)} > 0\} \Bigg)\\
    < & 0.
\end{aligned}
\end{equation}

Therefore, with probability at least $ 1-O\left( \frac{mNP}{\poly(d)}\right)$, the following holds for the relevant neurons and $\vv^*$-dominated patches:
\begin{equation}
    \langle \Delta \vw_{+,c,r}^{(T)} , \vx_{n,p}^{(\tau)} \rangle + \Delta b_{+,c,r}^{(T)} < 0.
\end{equation}

In conclusion, with $\tau \ge T+1$, with probability at least $ 1-O\left( \frac{mNP}{\poly(d)}\right)$, for every $(+,c,r)\notin S_{+,c}^{(0)}(\vv^*)$ and relevant $(n,p)$'s,
\begin{equation}
    \langle \vw_{+,c,r}^{(T)} + \Delta \vw_{+,c,r}^{(T)}, \vx_{n,p}^{(\tau)}\rangle + b_{+,c,r}^{(T)} + \Delta b_{+,c,r}^{(T)} = \langle \vw_{+,c,r}^{(T+1)}, \vx_{n,p}^{(\tau)}\rangle + b_{+,c,r}^{(T+1)} < 0,
\end{equation}
which leads to $\langle \vw_{+,c,r}^{(t')}, \vx_{n,p}^{(\tau)}\rangle + b_{+,c,r}^{(t')} < 0$ for all $t' \le T+1$ with probability at least $ 1-O\left( \frac{mk_+NP(T+1)}{\poly(d)}\right)$ (also by taking union bound over all the possible choices of $\vv^*$ at time $T+1$). This finishes the inductive step for point 1.

\textcolor{brown}{\textit{2. (Non-activation on noise patches)}}

The inductive step for this part is very similar to (and even simpler than) the inductive step of point 1, so we omit the calculations here.

\textcolor{brown}{\textit{3. (Off-diagonal nonpositive growth)}}
By the induction hypothesis's high-probability event, we already have that, given any fine-grained class $(+,c)$, $\tau \ge T+1$, for any feature $\vv^* \in \{\vv_{-,c}\}_{c=1}^{k_-}\cup\{\vv_-\}\cup \{\vv_{+,c'}\}_{c'\neq c}$ and any neuron $\vw_{+,c,r}$, $\sigma\left(\langle \vw_{+,c,r}^{(T)}, \vx_{n,p}^{(\tau)} \rangle + b_{+,c,r}^{(T)}\right) \le \sigma\left(\langle \vw_{+,c,r}^{(0)}, \vx_{n,p}^{(\tau)} \rangle + b_{+,c,r}^{(0)}\right)$. We just need to show that $\langle \Delta \vw_{+,c,r}^{(t)}, \vx_{n,p}^{(\tau)} \rangle + \Delta b_{+,r}^{(T)} \le 0$ to finish the proof; the rest proceeds in a similar fashion to the induction step of point 3 in the proof of Theorem \ref{thm: sgd, universal nonact properties}.

Similar to the induction step of point 1, denoting $\calM$ to be the set of all common and fine-grained features, the update expression of any neuron $(+,c,r)$ has to be
\begin{equation}
\begin{aligned}
    \Delta \vw_{+,c,r}^{(T)} 
    = & \frac{\eta}{NP} \sum_{\vv\in\calM}\sum_{n=1}^N \mathbbm{1}\{|\calP(\mX_n^{(T)}; \vv)|>0\} [\mathbbm{1}\{y_n=(+,c)\}-\text{logit}_{+,c}^{(T)}(\mX_n^{(T)})] \\
    & \times  \sum_{p\in\calP(\mX_n^{(T)}; \vv)} \mathbbm{1}\{\langle \vw_{+,c,r}^{(T)}, \alpha_{n,p}^{(T)} \vv +\vzeta_{n,p}^{(T)}\rangle + b_{+,c,r}^{(T)} > 0\} \left(\alpha_{n,p}^{(T)} \vv +\vzeta_{n,p}^{(T)}\right)
\end{aligned}
\end{equation}

Written more explicitly,
\begin{equation}
\begin{aligned}
    \Delta \vw_{+,c,r}^{(T)} 
    = & \frac{\eta}{NP} \sum_{\vv\in\calM - \{\vv^*\}}\sum_{n=1}^N \mathbbm{1}\{|\calP(\mX_n^{(T)}; \vv)|>0\} \mathbbm{1}\{y_n=(+,c)\} [1-\text{logit}_{+,c}^{(T)}(\mX_n^{(T)})] \\
    & \times  \sum_{p\in\calP(\mX_n^{(T)}; \vv)} \mathbbm{1}\{\langle \vw_{+,c,r}^{(T)}, \alpha_{n,p}^{(T)} \vv +\vzeta_{n,p}^{(T)}\rangle + b_{+,c,r}^{(T)} > 0\} \left(\alpha_{n,p}^{(T)} \vv +\vzeta_{n,p}^{(T)}\right) \\
    & - \frac{\eta}{NP} \sum_{n=1}^N \mathbbm{1}\{y_n\neq(+,c)\} \mathbbm{1}\{|\calP(\mX_n^{(T)}; \vv^*)|>0\} [\text{logit}_{+,c}^{(T)}(\mX_n^{(T)})] \\
    & \times  \sum_{p\in\calP(\mX_n^{(T)}; \vv^*)} \mathbbm{1}\{\langle \vw_{+,c,r}^{(T)}, \alpha_{n,p}^{(T)} \vv^* +\vzeta_{n,p}^{(T)}\rangle + b_{+,c,r}^{(T)} > 0\} \left(\alpha_{n,p}^{(T)} \vv^* +\vzeta_{n,p}^{(T)}\right)
\end{aligned}
\end{equation}

It follows that with probability at least $1-O\left(\frac{mNP}{\poly(d)}\right)$, for relevant $n,p,r$, we have
\begin{equation}
\begin{aligned}
    &\langle \Delta \vw_{+,c,r}^{(T)}, \alpha_{n,p}^{(\tau)}\vv^* + \vzeta_{n,p}^{(\tau)} \rangle \\
    < & \frac{\eta}{NP} \sum_{\vv\in\calM - \{\vv^*\}}\sum_{n=1}^N \mathbbm{1}\{|\calP(\mX_n^{(T)}; \vv)|>0\}\mathbbm{1}\{y_n=(+,c)\} [1-\text{logit}_{+,c}^{(T)}(\mX_n^{(T)})] \\
    & \times  \sum_{p\in\calP(\mX_n^{(T)}; \vv)} \mathbbm{1}\{\langle \vw_{+,c,r}^{(T)}, \alpha_{n,p}^{(T)} \vv +\vzeta_{n,p}^{(T)}\rangle + b_{+,c,r}^{(T)} > 0\} O\left( \frac{1}{\log^9(d)}\right)
\end{aligned}
\end{equation}

Furthermore, similar to the induction step of point 1, we can estimate the bias update as follows:
\begin{equation}
\begin{aligned}
    &\Delta b_{+,c,r}^{(t)} \\
    \le & -\Omega\left(\frac{1}{\log^5(d)} \right)\frac{\eta}{NP}  \Bigg( \sum_{\vv\in\calM}\sum_{n=1}^N \mathbbm{1}\{|\calP(\mX_n^{(t)}; \vv)|>0\} \left\vert \mathbbm{1}\{y_n=(+,c)\}-\text{logit}_{+,c}^{(t)}(\mX_n^{(t)})\right\vert \\
    & \times  \sum_{p\in\calP(\mX_n^{(t)}; \vv)} \mathbbm{1}\{\langle \vw_{+,c,r}^{(t)}, \alpha_{n,p}^{(t)} \vv +\vzeta_{n,p}^{(t)}\rangle + b_{+,c,r}^{(t)} > 0\} \Bigg)
\end{aligned}
\end{equation}

It follows that, indeed, $\langle \Delta \vw_{+,c,r}^{(T)}, \vx_{n,p}^{(\tau)} \rangle + \Delta b_{+,c,r}^{(T)} \le 0$, which completes the induction step of point 3.
\end{proof}

\subsection{Training}
Choose an arbitrary constant $B \in [\Omega(1),\log(3/2)]$.

\begin{definition}
Let $T_0(B) > 0$ be the first time that there exists some $\mX_n^{(t)}$ and $c$ such that $F_{y}^{(T_0(B))}(\mX_{n}^{(T_0(B))}) \ge B$ for any $n\in[N]$ and $y\in \{(+,c)\}_{c=1}^{k_+}\cup \{(-,c)\}_{c=1}^{k_-}$.

We write $T_0(B)$ as $T_0$ for simplicity of notation when the context is clear.
\end{definition}

\begin{lemma}
\label{lemma:finegrained sgd induction}
With probability at least $ 1-O\left( \frac{mk_+NPT_0}{\poly(d)}\right)$, the following holds for all $t \in [0, T_0)$:
\begin{enumerate}
    \item (On-diagonal common-feature neuron growth) For every $c\in[k_+]$, every $(+,c,r), (+,c,r') \in S_{+,c}^{*(0)}(\vv_+)$,
    \begin{equation}
        \vw_{+,c,r}^{(t)} - \vw_{+,c,r}^{(0)} = \vw_{+,c,r'}^{(t)} - \vw_{+,c,r'}^{(0)}
    \end{equation}

    Moreover,
    \begin{equation}
    \begin{aligned}
    \Delta \vw_{+,r}^{(t)} 
    = & [1/4, 2/3] \sqrt{1\pm\iota} \left(1 \pm s^{*-1/3}\right) \eta \frac{s^*}{2k_+P} \vv_{+}  + \Delta \vzeta^{(t)}_{+,r}
    \end{aligned}
    \end{equation}
    where $\Delta \vzeta^{(t)}_{+,c,r} \sim \calN(\vzero, \sigma_{\Delta \zeta_{+,c,r}}^{(t)2} \mI)$,  $\sigma_{\Delta\zeta_{+,c,r}}^{(t)} = \Theta(1) \times \eta \sigma_{\zeta} \frac{\sqrt{s^*}}{P\sqrt{2N}}$.

    The bias updates satisfy
    \begin{equation}
    \begin{aligned}
        \Delta b_{+,c,r}^{(t)} = -\Theta\left(\frac{\eta s^*}{k_+P\log^5(d)}\right).
    \end{aligned}
    \end{equation}

    Furthermore, every $(+,r) \in S_{+}^{*(0)}(\vv_+)$ activates on all the $\vv_+$-dominated patches at time $t$.
    
    \item (On-diagonal finegrained-feature neuron growth) For every $c\in[k_+]$ and every $(+,c,r), (+,c,r') \in S_{+,c}^{*(0)}(\vv_{+,c})$, 
    \begin{equation}
        \vw_{+,c,r}^{(t)} - \vw_{+,c,r}^{(0)} = \vw_{+,c,r'}^{(t)} - \vw_{+,c,r'}^{(0)}
    \end{equation}

    Moreover,
    \begin{equation}
    \begin{aligned}
    \Delta \vw_{+,c,r}^{(t)} 
    = & \left(1 \pm O\left(\frac{1}{k_+} \right)\right) \sqrt{1\pm\iota} \left(1 \pm s^{*-1/3}\right) \eta \frac{s^*}{2 k_+ P} \vv_{+,c}  + \Delta\vzeta^{(t)}_{+,r}
    \end{aligned}
    \end{equation}
    where $\vzeta^{(t)}_{+,c,r} \sim \calN(\vzero, \sigma_{\Delta\zeta_{+,cr}}^{(t)2} \mI)$, and $\sigma_{\Delta\zeta_{+,r}}^{(t)} = \left(1 \pm O\left(\frac{1}{k_+} \right)\right) \left(1 \pm s^{*-1/3}\right) \eta\sigma_{\zeta} \frac{\sqrt{s^*}}{P\sqrt{2Nk_+}}$.

    The bias updates satisfy
    \begin{equation}
    \begin{aligned}
        \Delta b_{+,c,r}^{(t)} = -\Theta\left(\frac{\eta s^*}{k_+P\log^5(d)}\right).
    \end{aligned}
    \end{equation}

    Furthermore, every $(+,c,r) \in S_{+,c}^{*(0)}(\vv_{+,c})$ activates on all the $\vv_+$-dominated patches at time $t$.
    
    \item The above results also hold with the ``$+$'' and ``$-$'' class signs flipped.
\end{enumerate}  
\end{lemma}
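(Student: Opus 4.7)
The proof mirrors the structure of Theorem \ref{prop: phase 1 sgd induction} (the analogous statement for coarse-grained training), so I will set it up as an induction on $t \in [0, T_0(B))$, conditioning at each step on the high-probability non-activation events supplied by Theorem \ref{thm: sgd fine-grained, universal nonact properties}. The base case $t=0$ follows essentially from the initialization geometry (Proposition \ref{prop: init geometry, finegrained}) together with concentration of patch counts, exactly as in the coarse phase-I base case: each lucky neuron in $S_{+,c}^{*(0)}(\vv)$ has inner product with its own feature exceeding $\sigma_0\sqrt{2+2c_0}\sqrt{\log(d)+1/\log^5(d)}$, which beats $|b_{+,c,r}^{(0)}|$ with margin; the non-activation results of Theorem \ref{thm: sgd fine-grained, universal nonact properties} kill all noise-patch and off-feature contributions, leaving only the clean feature contributions, which concentrate around their means by binomial concentration. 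For the inductive step I would show: (a) neurons in $S_{+,c}^{*(0)}(\vv)$ remain activated on $\vv$-dominated patches at step $T+1$ because the accumulated $\vv$-component of the weight dwarfs bias drift; (b) their accumulated updates remain equal across any two neurons in the same set (since they receive identical gradients); (c) the update at step $T+1$ takes the claimed feature-plus-Gaussian-noise form.

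\textbf{The key new ingredient compared to coarse training} is the bookkeeping of cross-subclass contributions on the common feature $\vv_+$. For a neuron $(+,c,r) \in S_{+,c}^{*(0)}(\vv_+)$ the update receives a positive pull from $(+,c)$-class samples with $\vv_+$ patches and a negative pull from $(+,c')$-class samples (for $c' \neq c$) whose $\vv_+$ patches also activate this neuron. Concretely, writing $\logit_{+,c}^{(t)} = \Theta(1/k_+)$ during $t < T_0(B)$ (since all pre-logits are in $[0,B]$ with $B \le \log(3/2)$), the net coefficient on $\vv_+$ is
\begin{equation}
\frac{1}{N}\Bigl(\tfrac{Ns^*}{2k_+}\bigl[1-\logit_{+,c}^{(t)}\bigr] \;-\; (k_+{-}1)\tfrac{Ns^*}{2k_+}\,\logit_{+,c}^{(t)}\Bigr) \;=\; \tfrac{s^*}{2k_+}\bigl[1 - k_+\logit_{+,c}^{(t)}\bigr],
\end{equation}
which I will show lies in $[1/4,\,2/3]\cdot s^*/(2k_+)$ for $t<T_0(B)$, using the uniform bound $\logit_{+,c}^{(t)} \in [\Omega(1/k_+),\, O(1/k_+)]$ derivable from $F_y^{(t)}(\mX) \le B$ and the balance of classes. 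For neurons in $S_{+,c}^{*(0)}(\vv_{+,c})$ only own-class samples contain the feature, so the coefficient is simply $[1-\logit_{+,c}^{(t)}]\cdot s^*/(2k_+) = (1\pm O(1/k_+))\, s^*/(2k_+)$, which accounts for the sharper constant in that case.

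\textbf{Gaussian noise and bias} are handled by a direct moment computation. The noise accumulated per step is a sum of $\Theta(s^*N/k_+)$ independent Gaussian patches with variance $\sigma_\zeta^2$, rescaled by $\eta/(NP)$; the resulting standard deviation is $\Theta(\eta \sigma_\zeta \sqrt{s^*/(Nk_+)}/P)$, giving $\sigma_{\Delta\zeta_{+,c,r}}^{(t)}$ as stated. For the bias, I apply the triangle inequality to $\|\Delta \vw_{+,c,r}^{(t)}\|_2$: the feature-component magnitude is $\Theta(\eta s^*/(k_+P))$ and the noise contribution is lower-order (by the parameter choices $P\sigma_\zeta \ge \omega(\polyln(d))$ and $s^* \in \polyln(d)$), so dividing by $\log^5(d)$ yields $\Delta b_{+,c,r}^{(t)} = -\Theta(\eta s^*/(k_+ P \log^5(d)))$.

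\textbf{Main obstacle.} The cleanest subtlety is verifying that neurons in $S_{+,c}^{*(0)}(\vv_+)$ continue to activate on $\vv_+$-patches of off-diagonal samples (from $(+,c')$ with $c'\neq c$) throughout phase I, because it is precisely those activations that produce the negative pull quantified above. I expect to bound the induced drift on $\langle \vw_{+,c,r}^{(t)}, \vv_+ \rangle$ from below by a multiple of $t \cdot \eta s^*/(k_+ P)$ and from above by a constant multiple of the same, while showing that noise and bias drift remain $o$ of this scale via Lemma \ref{lemma: independent gaussian vector inner product concentration} and the $\sigma_0$ choice. The comparison between positive and negative contributions requires a uniform two-sided control on $\logit_{+,c}^{(t)}$ across \emph{all} off-diagonal classes simultaneously, which I would obtain by noting that until $T_0(B)$ every pre-logit is in $[0,B]$, so each softmax weight satisfies $\logit_{+,c}^{(t)} \in [e^{-B}/(2k_+), e^{B}/(2k_+)]$, and $e^B < 3/2$ keeps the net coefficient strictly positive with the claimed range. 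Once this control is in place, the identity $\vw_{+,c,r}^{(t)}-\vw_{+,c,r}^{(0)} = \vw_{+,c,r'}^{(t)}-\vw_{+,c,r'}^{(0)}$ and the activation invariance follow by the same inductive pattern as in the coarse proof, mutatis mutandis, and the argument for the ``$-$'' classes is identical up to sign flips.
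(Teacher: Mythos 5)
Your proposal follows the paper's own proof essentially step for step: induction on $t$ conditioned on the non-activation events of Theorem~\ref{thm: sgd fine-grained, universal nonact properties}, two-sided control $\logit_{+,c}^{(t)}\in[1/(3k_+),\,3/(4k_+)]$ from $F_y^{(t)}\in[0,B]$ with $B\le\log(3/2)$, net coefficient $\tfrac{s^*}{2k_+}(1-k_+\logit_{+,c}^{(t)})\in[1/4,2/3]\cdot\tfrac{s^*}{2k_+}$ for the common-feature neurons, and bias controlled via $\|\Delta\vw\|_2/\log^5(d)$. Your explicit accounting of the positive pull from $(+,c)$-samples versus the negative pull from $(+,c')$-samples is exactly the quantity the paper asserts without spelling out, so this fills in a real (if small) gap in exposition; the activation-invariance obstacle you flag is precisely the point the paper verifies in its inductive step. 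One minor caveat: a careful variance computation for the common-feature noise (own-subclass contributions dominate) gives $\sigma_{\Delta\zeta_{+,c,r}}^{(t)}=\Theta\bigl(\eta\sigma_\zeta\sqrt{s^*}/(P\sqrt{2Nk_+})\bigr)$, which is what your moment computation produces; the lemma's part~1 statement omits the $1/\sqrt{k_+}$ factor, so your ``as stated'' is off by that factor relative to the printed lemma, though the discrepancy is harmless since the estimate is only used as an upper bound to show the noise is dominated by the feature component.
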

\begin{proof}
The proof of this theorem proceeds in a similar fashion to Theorem \ref{prop: phase 1 sgd induction}, with some variations for the common-feature neurons.

We shall prove the statements in this theorem via induction. We focus on the $+$-class neurons; $-$-class neurons' proofs are done in the same fashion.

First of all, relying on the (high-probability) event of Theorem \ref{thm: sgd fine-grained, universal nonact properties}, we know that we can simplify the update expressions for the neurons in $S_{+,c}^{*(0)}(\vv_{+,c})$ to the form
\begin{equation}
\begin{aligned}
    \Delta \vw_{+,c,r}^{(t)} 
    = & \frac{\eta}{NP} \sum_{n=1}^N \mathbbm{1}\{y_n=(+,c)\} [1 -\text{logit}_{+,c}^{(t)}(\mX_n^{(t)})] \\
    & \times  \sum_{p\in\calP(\mX_n^{(t)}; \vv_{+,c})} \mathbbm{1}\{\langle \vw_{+,c,r}^{(t)}, \alpha_{n,p}^{(t)} \vv_{+,c} +\vzeta_{n,p}^{(t)}\rangle + b_{+,c,r}^{(t)} > 0\} \left(\alpha_{n,p}^{(t)} \vv_{+,c} +\vzeta_{n,p}^{(t)}\right),
\end{aligned}
\end{equation}
and for the neurons in $S_{+,c}^{*(0)}(\vv_{+})$, the updates take the form
\begin{equation}
\begin{aligned}
    & \Delta \vw_{+,c,r}^{(t)} \\
    = & \frac{\eta}{NP} \sum_{n=1}^N \left\{\mathbbm{1}\{y_n=(+,c)\} [1 -\logit_{+,c}^{(t)}(\mX_n^{(t)})] + \sum_{c'\in[k_+]-\{c\}}  \mathbbm{1}\{y_n=(+,c')\} [-\text{logit}_{+,c}^{(t)}(\mX_n^{(t)})]\right\} \\
    & \times  \sum_{p\in\calP(\mX_n^{(t)}; \vv_+)} \mathbbm{1}\{\langle \vw_{+,c,r}^{(t)}, \alpha_{n,p}^{(t)} \vv_+ +\vzeta_{n,p}^{(t)}\rangle + b_{+,c,r}^{(t)} > 0\} \left(\alpha_{n,p}^{(t)} \vv_{+} +\vzeta_{n,p}^{(t)}\right).
\label{eq: fine training, common S^* update}
\end{aligned}
\end{equation}

By definition of $T_0$ and the fact that $B \le \log(3/2)$, for any $n\in[N]$ and $t < T_0$, we can write down a simple upper bound of $\logit_{+,c}^{(t)}(\mX_n^{(t)})$:
\begin{equation}
\begin{aligned}
    \text{logit}_{+,c}^{(t)}(\mX_{n}^{(t)}) 
    = & \frac{\exp(F_{+,c}(\mX_{n}^{(t)}))}{\sum_{c'=1}^{k_+} \exp(F_{+,c'}(\mX_{n}^{(t)})) + \sum_{c'=1}^{k_-} \exp(F_{-,c'}(\mX_{n}^{(t)}))} \\
    \le & \frac{\frac{3}{2}}{2k_+} = \frac{3}{4k_+},
\end{aligned}
\end{equation}
and we can lower bound it as follows
\begin{equation}
\begin{aligned}
    \text{logit}_{+,c}^{(t)}(\mX_{n}^{(t)}) 
    \ge & \frac{1}{2k_+ \times \frac{3}{2}} = \frac{1}{3k_+},
\end{aligned}
\end{equation}

The inductive proof for the fine-grained neurons $S_{+,c}^{*(0)}(\vv_{+,c})$ is almost identical to that in the proof of Theorem \ref{prop: phase 1 sgd induction}. The only notable difference here is that $[1 -\text{logit}_{+,c}^{(t)}(\mX_n^{(t)})]$ has the estimate $\left(1 \pm O\left(\frac{1}{k_+} \right)\right)$.

The inductive proof of the common-feature neurons $S_{+,c}^{*(0)}(\vv_{+})$ requires more care as its update expression \eqref{eq: fine training, common S^* update} is qualitatively different from the coarse-grained training case in Theorem \ref{prop: phase 1 sgd induction}, so we present the full proof here.

\textbf{Base case, $t = 0$}.

With probability at least $1-O\left(\frac{mNP}{\poly(d)}\right)$, for every $c\in[k_+]$ and every $(+,c,r)\in S_{+,c}^{*(0)}(\vv_{+})$,
\begin{equation}
\begin{aligned}
    &\langle \vw_{+,c,r}^{(0)}, \alpha_{n,p}^{(0)} \vv_{+} + \vzeta_{n,p}^{(0)} \rangle + b_{+,c,r}^{(0)} \\
    & \ge \sigma_0\left(\sqrt{(1 - \iota)(2 + 2c_0)(\log(d) + 1/\log^5(d))} -  \sqrt{(2 + 2c_0)\log(d)} - O\left(\frac{1}{\log^{9}(d)}\right) \right) \\
    & = \sigma_0\left(\frac{(1 - \iota)(2 + 2c_0)(\log(d) + 1/\log^5(d)) - (2 + 2c_0)\log(d)}{\sqrt{(1 - \iota)(2 + 2c_0)(\log(d) + 1/\log^5(d))} +  \sqrt{(2 + 2c_0)\log(d)}} - O\left(\frac{1}{\log^{9}(d)}\right) \right) \\
    & = \sigma_0\left(\frac{(2+2c_0)(-\iota\log(d) + (1-\iota)/\log^5(d))}{\sqrt{(1 - \iota)(2 + 2c_0)(\log(d) + 1/\log^5(d))} +  \sqrt{(2 + 2c_0)\log(d)}} - O\left(\frac{1}{\log^{9}(d)}\right) \right) \\
    & > 0.
\end{aligned}
\end{equation}

This means all the $\vv_+$-singleton neurons will be updated on all the $\vv_+$-dominated patches at time $t = 0$. Therefore, we can write update expression \eqref{eq: fine training, common S^* update} as follows
\begin{equation}
\begin{aligned}
    & \Delta \vw_{+,c,r}^{(0)} \\
    = & \frac{\eta}{NP} \sum_{n=1}^N \left\{\mathbbm{1}\{y_n=(+,c)\} [1 -\logit_{+,c}^{(0)}(\mX_n^{(0)})] + \sum_{c'\in[k_+]-\{c\}}  \mathbbm{1}\{y_n=(+,c')\} [-\text{logit}_{+,c}^{(0)}(\mX_n^{(0)})]\right\} \\
    & \times  \sum_{p\in\calP(\mX_n^{(0)}; \vv_+)} \left(\alpha_{n,p}^{(0)} \vv_{+} +\vzeta_{n,p}^{(0)}\right).
\end{aligned}
\end{equation}

By concentration of the binomial random variable, we know that with probability at least $1 - e^{-\Omega(\log^2(d))}$, for all $n$,
\begin{equation}
    \left\vert \calP(\mX_n^{(0)}; \vv_+) \right\vert = \left(1 \pm s^{*-1/3}\right)s^*.
\end{equation}

Now, with the estimates we derived for $\logit_{+,c}^{(t)}(\mX_n^{(t)})$ at the beginning of the proof and the independence of all the noise vectors $\vzeta_{n,p}^{(0)}$'s, we arrive at
\begin{equation}
\begin{aligned}
\Delta \vw_{+,r}^{(0)} 
= & [1/4, 2/3] \sqrt{1\pm\iota} \left(1 \pm s^{*-1/3}\right) \eta \frac{s^*}{2k_+P} \vv_{+}  + \Delta \vzeta^{(0)}_{+,r}
\end{aligned}
\end{equation}
where $\sigma_{\Delta\zeta_{+,c,r}}^{(0)} = \Theta(1) \times \eta \sigma_{\zeta} \frac{\sqrt{s^*}}{P\sqrt{2N}}$. 

Additionally, a byproduct of the above proof steps is that all the $S_{+,c}^{*(0)}(\vv_{+})$ neurons indeed activate on all the $\vv_+$-dominated patches at $t=0$ with high probability.

Now we examine the bias update. We first estimate $\left\|\Delta \vw_{+,c,r}^{(0)} \right\|_2$. With probability at least $1 - O\left(\frac{m}{\poly(d)} \right)$ the following upper bound holds for all neurons in $S_{+,c}^{*(0)}(\vv_{+})$:
\begin{equation}
\begin{aligned}
    \left\|\Delta \vw_{+,c,r}^{(0)} \right\|_2
    \le & O\left(\eta \frac{s^*}{k_+P}\right) \|\vv_{+}\|_2  + \left\|\Delta \vzeta^{(0)}_{+,r} \right\|_2 \\
    \le & O\left(\eta \frac{s^*}{k_+P}\right) + O\left( \eta \sigma_{\zeta} \frac{\sqrt{s^*}}{P\sqrt{N}} \sqrt{d}\right) \\
    \le & O\left(\eta \frac{s^*}{k_+P}\right),
\end{aligned}
\end{equation}
and the following lower bound holds (via the reverse triangle inequality):
\begin{equation}
\begin{aligned}
    \left\|\Delta \vw_{+,c,r}^{(0)} \right\|_2
    \ge & \Omega\left(\eta \frac{s^*}{k_+P}\right) \|\vv_{+}\|_2 - \left\|\Delta \vzeta^{(0)}_{+,r} \right\|_2 \\
    \ge & \Omega\left(\eta \frac{s^*}{k_+P}\right) - O\left( \eta \sigma_{\zeta} \frac{\sqrt{s^*}}{P\sqrt{N}} \sqrt{d}\right) \\
    \ge & \Omega\left(\eta \frac{s^*}{k_+P}\right),
\end{aligned}
\end{equation}

It follows that $\left\|\Delta \vw_{+,c,r}^{(0)} \right\|_2 = \Theta\left(\eta \frac{s^*}{k_+P}\right)$, which means
\begin{equation}
\begin{aligned}
    \Delta b_{+,c,r}^{(0)}
    = & -\frac{\left\|\Delta \vw_{+,c,r}^{(0)} \right\|_2}{\log^5(d)} = -\Theta\left(\frac{\eta s^*}{k_+P\log^5(d)}\right).
\end{aligned}
\end{equation}

This completes the proof of the base case.

\textbf{Induction step}. Assume statements for time $[0,t]$, prove for $t+1$.

First, by the induction hypothesis, we know that neurons in $S_{+,c}^{*(0)}(\vv_{+})$ must activate on all the $\vv_+$-dominated patches at time $t$. Therefore, we can write the update expression \eqref{eq: fine training, common S^* update} as follows:
\begin{equation}
\begin{aligned}
    & \Delta \vw_{+,c,r}^{(t)} \\
    = & \frac{\eta}{NP} \sum_{n=1}^N \left\{\mathbbm{1}\{y_n=(+,c)\} [1 -\logit_{+,c}^{(t)}(\mX_n^{(t)})] + \sum_{c'\in[k_+]-\{c\}}  \mathbbm{1}\{y_n=(+,c')\} [-\text{logit}_{+,c}^{(t)}(\mX_n^{(t)})]\right\} \\
    & \times  \sum_{p\in\calP(\mX_n^{(t)}; \vv_+)} \left(\alpha_{n,p}^{(t)} \vv_{+} +\vzeta_{n,p}^{(t)}\right).
\end{aligned}
\end{equation}

Following the same argument as in the base case, we have that with probability at least $1-O\left(\frac{mNP}{\poly(d)}\right)$, 
\begin{equation}
\begin{aligned}
\Delta \vw_{+,c,r}^{(t)} 
= & [1/4, 2/3] \sqrt{1\pm\iota} \left(1 \pm s^{*-1/3}\right) \eta \frac{s^*}{2k_+P} \vv_{+}  + \Delta \vzeta^{(t)}_{+,c,r},
\end{aligned}
\end{equation}
and $\sigma_{\Delta\zeta_{+,c,r}}^{(t)} = \Theta(1) \times \eta \sigma_{\zeta} \frac{\sqrt{s^*}}{P\sqrt{2N}}$.

Now we need to show that $\vw_{+,c,r}^{(t+1)}$ indeed activate on all the $\vv_+$-dominated patches at time $t+1$ with high probability.

So far, we know that for $\tau \in [0, t+1]$, 
\begin{equation}
\begin{aligned}
\Delta \vw_{+,c,r}^{(\tau)} 
= & [1/4, 2/3] \sqrt{1\pm\iota} \left(1 \pm s^{*-1/3}\right) \eta \frac{s^*}{2k_+P} \vv_{+}  + \Delta \vzeta^{(\tau)}_{+,c,r},
\end{aligned}
\end{equation}
and $\sigma_{\Delta\zeta_{+,c,r}}^{(\tau)} = \Theta(1) \times \eta \sigma_{\zeta} \frac{\sqrt{s^*}}{P\sqrt{2N}}$. It follows that
\begin{equation}
\begin{aligned}
\vw_{+,r}^{(t+1)} 
= & \vw_{+,c,r}^{(0)} + (t+1) [1/4, 2/3] \sqrt{1\pm\iota} \left(1 \pm s^{*-1/3}\right) \eta \frac{s^*}{2k_+P} \vv_{+}  + \vzeta^{(t+1)}_{+,c,r},
\end{aligned}
\end{equation}
where $\sigma_{\zeta_{+,c,r}}^{(t+1)} = \Theta(1)\times \sqrt{t+1}  \eta \sigma_{\zeta} \frac{\sqrt{s^*}}{P\sqrt{2N}}$.

The following holds with probability at least $1-O\left(\frac{mNP}{\poly(d)}\right)$ over all the $\vv_+$-dominated patches $\vx_{n,p}^{(t+1)} = \alpha_{n,p}^{(t+1)}\vv_+ + \vzeta_{n,p}^{(t+1)}$ (which are independent of $\vw_{+,r}^{(t+1)}$) and the $\vv_+$-singleton neurons:
\begin{equation}
\begin{aligned}
& \langle \vw_{+,c,r}^{(t+1)}, \alpha_{n,p}^{(t+1)}\vv_+ + \vzeta_{n,p}^{(t+1)} \rangle \\
= & \langle \vw_{+,c,r}^{(0)} \alpha_{n,p}^{(t+1)}\vv_+ + \vzeta_{n,p}^{(t+1)} \rangle + (t+1) [1/4, 2/3] (1\pm\iota) \left(1 \pm s^{*-1/3}\right) \left(1 \pm O\left( \frac{1}{\log^9(d)} \right)\right)\eta \frac{s^*}{2k_+P} \\
& + \langle \vzeta^{(t+1)}_{+,c,r}, \alpha_{n,p}^{(t+1)}\vv_+ + \vzeta_{n,p}^{(t+1)} \rangle
\end{aligned}
\end{equation}

Note that with probability at least $1-O\left(\frac{1}{\poly(d)}\right)$,
\begin{equation}
\begin{aligned}
    \langle \vzeta^{(t+1)}_{+,c,r}, \alpha_{n,p}^{(t+1)}\vv_+\rangle \le O(1)\times \sqrt{T}  \eta \sigma_{\zeta} \frac{\sqrt{s^*}}{P\sqrt{2N}} \sqrt{d\log(d)},
\end{aligned}
\end{equation}
and since $\sqrt{t+1} \le t+1$, $\sqrt{s^*} < s^*$, $\sigma_{\zeta}\sqrt{d\log(d)} < \frac{1}{\log^9(d)}$, and $N > dk_+$, we know that 
\begin{equation}
\begin{aligned}
    \langle \vzeta^{(t+1)}_{+,c,r}, \alpha_{n,p}^{(t+1)}\vv_+\rangle \le O\left(\frac{1}{d}\right)\times (t+1)  \eta \frac{s^*}{2 k_+ P}.
\end{aligned}
\end{equation}

Similarly, with probability at least $1-O\left(\frac{1}{\poly(d)}\right)$,
\begin{equation}
    \langle \vzeta^{(t+1)}_{+,c,r}, \alpha_{n,p}^{(t+1)}\vv_+ + \vzeta_{n,p}^{(t+1)} \rangle \le O(1)\times \sqrt{T}  \eta \sigma_{\zeta}^2 \frac{\sqrt{s^*}}{P\sqrt{2N}} \sqrt{d\log(d)} \le O\left(\frac{1}{d}\right)\times (t+1)  \eta \frac{s^*}{2 k_+ P}.
\end{equation}

It follows that with probability at least $1-O\left(\frac{mNP}{\poly(d)}\right)$,
\begin{equation}
\begin{aligned}
& \langle \vw_{+,c,r}^{(t+1)}, \alpha_{n,p}^{(t+1)}\vv_+ + \vzeta_{n,p}^{(t+1)} \rangle \\
\ge & \langle \vw_{+,c,r}^{(0)}, \alpha_{n,p}^{(t+1)}\vv_+ + \vzeta_{n,p}^{(t+1)} \rangle + \frac{1}{4} (t+1) (1-\iota) \left(1 - s^{*-1/3}\right) \left(1 - O\left( \frac{1}{\log^9(d)} \right)\right)\eta \frac{s^*}{2k_+P}.
\end{aligned}
\end{equation}

Next, let us estimate the bias updates for $\tau \in [0, t+1]$.

Estimating $\Delta b_{+,c,r}^{(t)}$ follows an almost identical argument as in the base case (with the only main difference being relying on Theorem \ref{thm: sgd fine-grained, universal nonact properties} for non-activation on non-$\vv_+$-dominated patches), so we skip its calculations. 

Therefore, $b_{+,c,r}^{(t+1)} = b_{+,c,r}^{(0)} + -\Theta\left(\frac{\eta s^* (t+1)}{k_+P\log^5(d)}\right)$. This means
\begin{equation}
\begin{aligned}
& \langle \vw_{+,c,r}^{(t+1)}, \alpha_{n,p}^{(t+1)}\vv_+ + \vzeta_{n,p}^{(t+1)} \rangle + b_{+,c,r}^{(t+1)}\\
\ge & \langle \vw_{+,c,r}^{(0)}, \alpha_{n,p}^{(t+1)}\vv_+ + \vzeta_{n,p}^{(t+1)} \rangle + b_{+,c,r}^{(0)} \\
& + \frac{1}{4} (t+1) (1-\iota) \left(1 - s^{*-1/3}\right) \left(1 - O\left( \frac{1}{\log^9(d)} \right)\right)\eta \frac{s^*}{2k_+P} - O\left(\frac{\eta s^* (t+1)}{k_+P\log^5(d)}\right) \\
> & 0.
\end{aligned}
\end{equation}

This completes the inductive step.
\end{proof}

\begin{corollary}
\label{coro: finegrained sgd, feature response theta(1)}
    At time $t=T_0$, $\frac{\eta s^*}{k_+ P} \times s^* \left\vert S_{+,c}^{*(0)}(\vv_+) \right\vert, \frac{\eta s^*}{k_+ P} \times s^* \left\vert S_{+,c}^{*(0)}(\vv_{+,c}) \right\vert = \Theta(1)$.
\end{corollary}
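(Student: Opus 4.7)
The plan is to unpack the definition of $T_{0}$ together with the per-step update estimates of Lemma~\ref{lemma:finegrained sgd induction}, and show that at time $T_{0}$ both the common-feature and the fine-grained-feature detector channels must carry comparable $\Theta(1)$ mass in order to produce the threshold-crossing logit value. Concretely, for each neuron $(+,c,r^{*})\in S_{+,c}^{*(0)}(\vv_{+})$ write $A_{+,c,r^{*}}^{*(t)}(\vv_{+})\coloneqq \langle \vw_{+,c,r^{*}}^{(t)},\vv_{+}\rangle$ and analogously for the fine-grained direction. The interpretation of the corollary's expression $\frac{\eta s^{*}}{k_{+}P}$ is the \emph{accumulated} displacement $A_{+,c,r^{*}}^{*(T_{0})}(\vv_{+}) = T_{0}\cdot \Theta\!\left(\frac{\eta s^{*}}{k_{+}P}\right)$, which follows directly from summing the one-step estimate in Lemma~\ref{lemma:finegrained sgd induction} over $t\in[0,T_{0})$; the same holds in the $\vv_{+,c}$ direction.

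First I would argue that the class $y$ and sample $n$ that trigger $T_{0}$ can be taken to match, i.e.\ $y=(+,c)$ and $\mX_{n}^{(T_{0})}$ truly belongs to subclass $(+,c)$. By Theorem~\ref{thm: sgd fine-grained, universal nonact properties}(1)(3), on any sample $\mX_{n'}$ whose true subclass is $(+,c)$ with $c\neq y$, only the common-feature detectors of head $y=(+,c')$ can activate, and the off-diagonal neurons' activation on the unrelated $\vv_{+,c}$ patches cannot exceed their initialization level; hence $F_{y}^{(T_{0})}(\mX_{n'}^{(T_{0})})$ is bounded by roughly half of $F_{(+,c)}^{(T_{0})}(\mX_{n}^{(T_{0})})$, so the head crossing the threshold $B$ first is the diagonal one.

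Next I would decompose $F_{+,c}^{(T_{0})}(\mX_{n}^{(T_{0})})$ following the same accounting as in Lemma~\ref{lemma: phase II, general} of the coarse-grained analysis, but now using Theorem~\ref{thm: sgd fine-grained, universal nonact properties}(1)(2) to kill off every contribution other than that of the detector sets $S_{+,c}^{*(0)}(\vv_{+})$ acting on common-feature patches and $S_{+,c}^{*(0)}(\vv_{+,c})$ acting on the fine-grained patches. The result is
\begin{equation}
F_{+,c}^{(T_{0})}(\mX_{n}^{(T_{0})})=\Bigl(1\pm o(1)\Bigr)s^{*}\!\left(A_{+,c,r^{*}}^{*(T_{0})}(\vv_{+})\,|S_{+,c}^{*(0)}(\vv_{+})|+A_{+,c,r^{*}}^{*(T_{0})}(\vv_{+,c})\,|S_{+,c}^{*(0)}(\vv_{+,c})|\right)=\Theta(1),
\end{equation}
where the last equality uses $B\in[\Omega(1),\log(3/2)]$ at $t=T_{0}$ and the fact that $t\mapsto F_{+,c}^{(t)}(\mX_{n}^{(t)})$ overshoots $B$ by at most one SGD step, which is $o(1)$ by the bound on $\Delta\vw^{(t)}$.

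Finally I would invoke Lemma~\ref{lemma:finegrained sgd induction} and Proposition~\ref{prop: init geometry, finegrained} to balance the two summands. The former gives $A_{+,c,r^{*}}^{*(T_{0})}(\vv_{+})/A_{+,c,r^{*}}^{*(T_{0})}(\vv_{+,c})=\Theta(1)$ (both grow at the same $\Theta(\eta s^{*}/(k_{+}P))$ rate each step, starting from comparable initial values $\Theta(\sigma_{0}\sqrt{\log d})$), and the latter gives $|S_{+,c}^{*(0)}(\vv_{+})|/|S_{+,c}^{*(0)}(\vv_{+,c})|=1\pm O(1/\log^{5}d)$. So each of the two summands is individually $\Theta(1)$, which is exactly the two claims of the corollary under the identification $\frac{\eta s^{*}}{k_{+}P}\leftrightarrow A_{+,c,r^{*}}^{*(T_{0})}$.

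The main obstacle is the first step: confirming that the triggering pair $(y,n)$ at time $T_{0}$ really is diagonal. A cleaner way that avoids case-splitting is to note that, by the symmetry of the per-step updates across classes established in Lemma~\ref{lemma:finegrained sgd induction}, the diagonal logits $F_{(+,c)}^{(t)}(\mX_{n}^{(t)})$ (with $\mX_{n}$ belonging to $(+,c)$) are increasing monotonically and dominate all off-diagonal ones uniformly over $t<T_{0}$; thus the infimum in the definition of $T_{0}$ is attained on a diagonal pair, which is all we need.
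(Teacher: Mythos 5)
Your proof is correct in essence and takes the same route the paper gestures at with its one-line ``directly follows from Lemma~\ref{lemma:finegrained sgd induction} and Theorem~\ref{thm: sgd fine-grained, universal nonact properties}.'' A couple of remarks.

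First, you were right to read the corollary's prefactor as the \emph{accumulated} quantity $A_{+,c,r^*}^{*(T_0)} = T_0\cdot\Theta(\eta s^*/(k_+P))$ rather than the literal per-step increment $\eta s^*/(k_+P)$: as written the statement is missing a factor of $T_0$ (with $\eta=\Theta(\sigma_0)\le 1/(d^3s^*\log d)$, $P\gg\sqrt d$ and $|S^{*(0)}|=\Theta(d^{c_0}/\sqrt{\log d})$, the literal expression is $o(1)$, not $\Theta(1)$), and the paper's own use of the corollary one theorem later writes the factor $\eta T_0 s^*/(2k_+P)$ explicitly. So your identification $\frac{\eta s^*}{k_+P}\leftrightarrow A^{*(T_0)}$ is exactly the intended reading and is worth flagging as a needed correction to the corollary's statement.

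Second, your opening case analysis on whether the pair $(y,n)$ that triggers $T_0$ is ``diagonal'' ($y=y_n$) is sound but not actually needed for the conclusion. Whatever pair crosses $B$ first, its logit at $T_0$ is $\Theta(1)$ and, by Theorem~\ref{thm: sgd fine-grained, universal nonact properties}(1)--(3), decomposes into $\Theta(1)$-comparable combinations of $s^*|S^{*(0)}(\vv_+)|\,A_{\vv_+}$ and $s^*|S^{*(0)}(\vv_{+,c})|\,A_{\vv_{+,c}}$ (the off-diagonal logits simply drop the second summand). Combined with the monotonicity of the logits, the $o(1)$ overshoot in one SGD step, and the $\Theta(1)$-comparability of $A_{\vv_+}$ and $A_{\vv_{+,c}}$ from Lemma~\ref{lemma:finegrained sgd induction} (rates $[1/4,2/3]$ vs.\ $1\pm O(1/k_+)$) together with the $1\pm O(1/\log^5 d)$ comparability of the detector-set sizes from Proposition~\ref{prop: init geometry, finegrained}, the two summands are pinned to $\Theta(1)$ regardless of which pair attains $B$. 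Your ``cleaner way'' in the last paragraph essentially makes this point; it could replace the factor-of-two argument in the first step and shorten the write-up. The remainder of the decomposition and the balancing argument match the paper's intended reasoning.
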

\begin{proof}
    Directly follows from Lemma \ref{lemma:finegrained sgd induction} and Theorem \ref{thm: sgd fine-grained, universal nonact properties}.
\end{proof}

\subsection{Model error after training}
\label{sec: appendix, finegrained trainining, end error}
In this subsection, we show the model's error after fine-grained training. We also discuss that finetuning the model further increases its feature extractor's response to the true features, so it is even more robust/generalizing in downstream classification tasks.

\begin{theorem}
    Define $\widehat{F}_+(\mX) = \max_{c\in[k_+]}F_{+,c}(\mX), \, \widehat{F}_-(\mX) = \max_{c\in[k_-]}F_{-,c}(\mX)$.

    With probability at least $1 - O\left( \frac{mk_+^2NPT_0}{\poly(d)}\right)$, the following events take place:
    \begin{enumerate}
        \item (Fine-grained easy \& hard sample test accuracies are nearly perfect) Given an easy or hard fine-grained test sample $(\mX, y)$ where $y \in \{(+,c)\}_{c=1}^{k_+}\cup\{(-,c)\}_{c=1}^{k_-}$, $\mathbb{P}\left[F_y^{(T_0)}(\mX) \le \max_{y'\neq y}F_{y'}^{(T_0)}(\mX)\right] \le o(1)$.
        \item (Coarse-grained easy \& hard sample test accuracy are nearly perfect) Given an easy or hard coarse-grained test sample $(\mX, y)$ where $y \in \{+1, -1\}$, $\mathbb{P}\left[\widehat{F}_y^{(T_0)}(\mX) \le \widehat{F}_{y'}^{(T_0)}(\mX)\right] \le o(1)$.
    \end{enumerate}
\end{theorem}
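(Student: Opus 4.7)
Both parts of the theorem reduce to two bounds on the post-training logit $F_{y,c}^{(T_0)}(\mX)$: an $\Omega(1)$ lower bound when $(y,c)$ equals the true fine-grained label of $\mX$, and either an $o(1)$ upper bound or an upper bound strictly smaller than the correct-label value otherwise. The fine-grained accuracy claim is then immediate, and the coarse-grained claim follows by taking the max: $\widehat{F}_{y_{\star}}(\mX) \geq F_{y_{\star},c_{\star}}^{(T_0)}(\mX) \geq \Omega(1)$ while $\widehat{F}_{y'}(\mX) = \max_{c'}F_{y',c'}^{(T_0)}(\mX) \leq o(1)$ for $y' \neq y_{\star}$.

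I would expand $F_{y,c}^{(T_0)}(\mX)$ patch-by-patch using Theorem~\ref{thm: sgd fine-grained, universal nonact properties} and Lemma~\ref{lemma:finegrained sgd induction}. The non-activation invariance restricts the neurons that can contribute on a $\vv$-dominated feature patch to the set $S_{y,c}^{(0)}(\vv)$, and the off-diagonal nonpositive-growth property caps those contributions by their tiny initial value whenever $\vv \notin \{\vv_y, \vv_{y,c}\}$. Noise patches $\vzeta_p$ are killed outright, and the single distracting $\vzeta^*$ patch is killed by a two-part argument: for detector neurons the cumulative bias $|b_{y,c,r}^{(T_0)}| = \Theta(A_{y,c,r^*}^{*(T_0)}/\log^5(d))$ swamps the $\vzeta^*$-activation $A_{y,c,r^*}^{*(T_0)} \cdot O(\sigma_{\zeta^*})$ because $\sigma_{\zeta^*} \leq O(1/\polyln(d))$ with $\polyln$-degree exceeding five, and for the untouched neurons their aggregate $\vzeta^*$ response concentrates at $O(m_{y,c}\sigma_0\sigma_{\zeta^*}\sqrt{d}) = o(1)$. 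What survives, up to an $o(1)$ error, is the response of $S_{y,c}^{*(0)}(\vv_y)$ on $\vv_y$-patches plus the response of $S_{y,c}^{*(0)}(\vv_{y,c})$ on $\vv_{y,c}$-patches, each $\Theta(1)$ by Corollary~\ref{coro: finegrained sgd, feature response theta(1)}.

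The lower bound $F_{y_{\star},c_{\star}}^{(T_0)}(\mX) \geq \Omega(1)$ then holds for both sample types, since easy samples contain both $\vv_{y_{\star}}$- and $\vv_{y_{\star},c_{\star}}$-patches while hard samples retain the $\vv_{y_{\star},c_{\star}}$-patches. For the upper bound on $F_{y',c'}^{(T_0)}(\mX)$ at a wrong label $(y',c') \neq (y_{\star},c_{\star})$ I split into two cases. If $y' \neq y_{\star}$, every feature in $\mX$ is off-diagonal for $(y',c')$, so its response collapses to $o(1)$; the $\vv_{-y_{\star}}$-dominated feature-noise patches of a hard sample add at most $s^{\dagger}\iota^{\dagger}_{\text{upper}} \cdot O(1) \leq O(1/\log(d))$ thanks to the parameter constraint $s^{\dagger}\iota^{\dagger}_{\text{upper}} \leq O(1/\log(d))$. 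If $y' = y_{\star}$ but $c' \neq c_{\star}$, the $\vv_{y_{\star},c_{\star}}$-patches are off-diagonal for $(y_{\star},c')$, so that response is missing from $F_{y_{\star},c'}$, yielding a margin $F_{y_{\star},c_{\star}} - F_{y_{\star},c'} \geq \Omega(1)$ on easy samples and $F_{y_{\star},c'} \leq o(1)$ on hard samples.

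The main obstacle is the same-superclass easy-sample case, because the common-feature response is a large shared term and the full $\Omega(1)$ margin must come from the subclass feature alone. I would close this by combining the near-symmetry $|S_{y,c}^{*(0)}(\vv_y)| = |S_{y,c''}^{*(0)}(\vv_y)|(1 \pm O(1/\log^5(d)))$ from Proposition~\ref{prop: init geometry, finegrained} with the observation that each SGD minibatch contains the same number of samples from each subclass of $y$, so the common-feature detector neurons across subclasses of $y$ receive updates of identical leading magnitude and $A_{y,c,r^*}^{*(T_0)} = A_{y,c'',r^*}^{*(T_0)}(1 \pm o(1))$. The remaining weight-noise and initial-inner-product terms are controlled by the same concentration arguments already deployed in the coarse-grained proofs.
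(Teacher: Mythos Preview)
Your overall decomposition is the same as the paper's: expand $F_{y,c}^{(T_0)}(\mX)$ patch-by-patch, invoke Theorem~\ref{thm: sgd fine-grained, universal nonact properties} to kill noise patches and cap off-diagonal feature patches at their initialization values, and use Lemma~\ref{lemma:finegrained sgd induction} together with Corollary~\ref{coro: finegrained sgd, feature response theta(1)} for the $\Theta(1)$ detector-neuron contribution. Your treatment of the opposite-superclass case, the hard-sample case, the feature-noise patches via $s^{\dagger}\iota^{\dagger}_{\text{upper}} \le O(1/\log(d))$, and the $\vzeta^*$ patch via a bias-domination argument (the fine-grained analogue of Corollary~\ref{coro: coarse training, bias upper bd}) all match the paper's route.

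The one substantive divergence is the same-superclass easy-sample case. You propose a symmetry argument: because each minibatch has equal subclass counts, the common-feature detector neurons across subclasses of $y$ grow at ``identical leading magnitude,'' so the $\Omega(1)$ margin comes purely from the subclass-feature term. The paper does \emph{not} do this. It instead reads off the explicit coefficient range $[1/4,\,2/3]$ for the common-feature update in Lemma~\ref{lemma:finegrained sgd induction} and uses the arithmetic $\tfrac{1}{4}+\bigl(1-O(1/k_+)\bigr) > \tfrac{2}{3}$: the true subclass's output is at least $\tfrac{1}{4}$ units of common response plus $\approx 1$ unit of subclass response, while any wrong same-superclass subclass gets at most $\tfrac{2}{3}$ units of common response (and only initialization-level subclass response). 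Combined with the $|S^{*(0)}|/|S^{(0)}| = 1 \pm O(1/\log^5(d))$ ratio from Proposition~\ref{prop: init geometry, finegrained}, this already yields the $\Omega(1)$ gap.

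Your symmetry route is plausible but is not a consequence of Lemma~\ref{lemma:finegrained sgd induction} as stated, which only provides the interval $[1/4,\,2/3]$. To make your claim $A_{y,c,r^*}^{*(T_0)} = A_{y,c'',r^*}^{*(T_0)}(1\pm o(1))$ rigorous you would need a separate inductive argument showing that the logits $\text{logit}_{+,c}^{(t)}(\mX_n^{(t)})$ remain symmetric across subclasses to $o(1)$ precision throughout training---equal minibatch counts alone do not give this, because the logits depend on all the (randomly initialized, independently evolving) outputs $F_{+,c''}^{(t)}$. The paper's numerical-constant argument sidesteps this entirely and is the more economical route given what Lemma~\ref{lemma:finegrained sgd induction} already delivers.
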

\begin{proof}
    \textbf{Probability of mistake on easy samples}.
    
    Without loss of generality, assume $\mX$ is a $(+,c)$-class easy sample.
    
    Conditioning on the events of Theorem \ref{thm: sgd fine-grained, universal nonact properties} and Lemma \ref{lemma:finegrained sgd induction}, we know that for all $c'\in[k_-]$,
    \begin{equation}
        F_{-,c'}^{(T_0)} \le O(m_{+,c'}\sigma_0\sqrt{\log(d)}) \le o(1),
    \end{equation}
    and for all $c'\in[k_+]-\{c\}$,
    \begin{equation}
    \begin{aligned}
        F_{+,c'}^{(T_0)} 
        \le & \sum_{p\in\calP(\mX;\vv_{+})}\sum_{(+,r)\in S_{+,c'}^{(0)}(\vv_+)} \sigma\left( \langle \vw_{+,r}^{(T_0)}, \alpha_{n,p}\vv_+ + \vzeta_{n,p} \rangle + b_{+,c',r}^{(T_0)} \right) + O(m_{+,c'}\sigma_0\sqrt{\log(d)}) \\
        \le & s^* \left\vert S_{+,c'}^{(0)}(\vv_+)\right\vert \frac{2}{3}(1 + \iota) \left(1 + s^{*-1/3}\right) \left(1 + \left(\frac{1}{\log^9(d)} \right)\right) \eta T_0 \frac{s^*}{2k_+P} 
    \end{aligned}
    \end{equation}
    moreover,
    \begin{equation}
    \begin{aligned}
        F_{+,c}^{(T_0)} 
        \ge & \sum_{p\in\calP(\mX;\vv_{+})}\sum_{(+,r)\in S_{+,c}^{*(0)}(\vv_+)} \sigma\left( \langle \vw_{+,c,r}^{(T_0)}, \alpha_{n,p}\vv_+ + \vzeta_{n,p} \rangle + b_{+,c,r}^{(T_0)} \right) \\
        & + \sum_{p\in\calP(\mX;\vv_{+,c})}\sum_{(+,r)\in S_{+,c}^{*(0)}(\vv_{+,c})} \sigma\left( \langle \vw_{+,c,r}^{(T_0)}, \alpha_{n,p}\vv_{+,c} + \vzeta_{n,p} \rangle + b_{+,c,r}^{(T_0)} \right) \\
        \ge & s^* \left\vert S_{+,c}^{*(0)}(\vv_{+})\right\vert \frac{1}{4}(1 - \iota) \left(1 - s^{*-1/3}\right) \left(1 - \left(\frac{1}{\log^5(d)} \right)\right) \eta T_0 \frac{s^*}{2k_+P}\\
        & + s^* \left\vert S_{+,c}^{*(0)}(\vv_{+,c})\right\vert \left(1 - O\left(\frac{1}{k_+}\right) \right)(1 - \iota) \left(1 - s^{*-1/3}\right) \left(1 - \left(\frac{1}{\log^5(d)} \right)\right) \eta T_0 \frac{s^*}{2k_+P}
    \end{aligned}
    \end{equation}

    Relying on Proposition \ref{prop: init geometry, finegrained}, we know $\left\vert S_{+,c'}^{(0)}(\vv_+)\right\vert = \left(1 \pm \left(\frac{1}{\log^5(d)} \right)\right) \left\vert S_{+,c}^{*(0)}(\vv_{+})\right\vert$ and $\left\vert S_{+,c}^{*(0)}(\vv_{+,c})\right\vert = \left(1 \pm \left(\frac{1}{\log^5(d)} \right)\right) \left\vert S_{+,c}^{*(0)}(\vv_{+})\right\vert$, therefore $F_{+,c}^{(T_0)}(\mX) > \max_{c'\neq c} F_{+,c'}^{(T_0)}(\mX)$ has to be true. With Corollary \ref{coro: finegrained sgd, feature response theta(1)}, we also have $F_{+,c}^{(T_0)}(\mX) \ge \Omega(1) > o(1) \ge \max_{c'\in[k_-]} F_{-,c'}^{(T_0)}(\mX)$. It follows that the probability of mistake on an easy test sample is indeed at most $o(1)$.

    \textbf{Probability of mistake on hard samples}.
     Without loss of generality, assume $\mX$ is a $(+,c)$-class hard sample.

     By Theorem \ref{thm: sgd fine-grained, universal nonact properties} (and its proof) and Lemma \ref{lemma:finegrained sgd induction}, we know that for any $c'\in[k_+]$, the neurons $\vw_{+,c',r}$ can only possibly receive update on $\vv$-dominated patches for $\vv\in\calU_{+,c',r}^{(0)}$, and the updates to the neurons take the feature-plus-Gaussian-noise form of $\sum_{\vv'\in\calU_{+,c',r}^{(0)}}c(\vv')\vv' + \Delta \vzeta_{+,c',r}^{(t)}$, with $c(\vv')\le \sqrt{1+\iota} \left(1 + s^{*-1/3}\right) \eta \frac{s^*}{2 k_+ P}$ if $\vv'$ is a fine-grained feature, or $c(\vv')\le \frac{2}{3}\sqrt{1+\iota} \left(1 + s^{*-1/3}\right) \eta \frac{s^*}{2 k_+ P} $ if $\vv'=\vv_+$ (because the $\vv'$ component of a $\vv'$-singleton neuron's update is already the maximum possible). Moreover, $\sigma_{\Delta\zeta_{+,c',r}}^{(t)} \le O\left(\eta \sigma_{\zeta} \frac{\sqrt{s^*}}{P\sqrt{2N}} \right)$.

     Relying on Theorem \ref{thm: sgd fine-grained, universal nonact properties}, Lemma \ref{lemma:finegrained sgd induction}, Corollary \ref{coro: finegrained sgd, feature response theta(1)} and previous observations, we have
     \begin{equation}
     \begin{aligned}
         F_{+,c}^{(T_0)}(\mX) 
         \ge & \sum_{p\in\calP(\mX;\vv_{+,c})}\sum_{(+,c,r)\in S_{+,c}^{*(0)}(\vv_{+,c})} \sigma\left( \langle \vw_{+,c,r}^{(T_0)}, \alpha_{n,p}\vv_{+,c} + \vzeta_{n,p} \rangle + b_{+,c,r}^{(T_0)} \right) \\
         \ge & s^* \left\vert S_{+,c}^{*(0)}(\vv_{+,c})\right\vert \left(1 - O\left(\frac{1}{k_+}\right) \right)(1 - \iota) \left(1 - s^{*-1/3}\right) \left(1 - O\left(\frac{1}{\log^5(d)} \right)\right) \eta T_0 \frac{s^*}{2k_+P} \\
         \ge & \Omega(1),
     \end{aligned}
     \end{equation}
     and for $c'\neq c$,
     \begin{equation}
     \begin{aligned}
         F_{+,c'}^{(T_0)}(\mX) 
         \le & \sum_{r=1}^{m_{+,c'}} \sigma\left( \langle \vw_{+,c',r}^{(T_0)}, \vzeta^* \rangle + b_{+,c',r}^{(T_0)} \right) \\
         & + \sum_{p\in\calP(\mX;\vv_{+,c})}\sum_{(+,c',r)\in S_{+,c'}^{(0)}(\vv_{+,c})} \sigma\left( \langle \vw_{+,c',r}^{(T_0)}, \alpha_{n,p}\vv_{+,c} + \vzeta_{n,p} \rangle + b_{+,c',r}^{(T_0)} \right) \\
         & + \sum_{p\in\calP(\mX;\vv_{-})}\sum_{(+,c',r)\in S_{+,c'}^{(0)}(\vv_{-})} \sigma\left( \langle \vw_{+,c',r}^{(T_0)}, \alpha_{n,p}^{\dagger}\vv_{-} + \vzeta_{n,p} \rangle + b_{+,c',r}^{(T_0)} \right) \\
         \le & O(1) \times \left(\sum_{(+,c',r)\in \calU_{+,c',r}^{(0)}} \langle \sum_{\tau=0}^{T_0-1} \Delta \vw_{+,c,'r}^{(\tau)}, \vzeta^* \rangle + \sum_{r\in[m_{+,c'}]} \langle \vw_{+,c,'r}^{(0)}, \vzeta^* \rangle \right)\\
         & + \sum_{p\in\calP(\mX;\vv_{+,c})}\sum_{(+,c',r)\in S_{+,c'}^{(0)}(\vv_{+,c})} \sigma\left( \langle \vw_{+,c',r}^{(0)}, \alpha_{n,p}\vv_{+,c} + \vzeta_{n,p} \rangle + b_{+,c',r}^{(0)} \right) \\
         & + \sum_{p\in\calP(\mX;\vv_{-})}\sum_{(+,c',r)\in S_{+,c'}^{(0)}(\vv_{-})} \sigma\left( \langle \vw_{+,c',r}^{(0)}, \alpha_{n,p}^{\dagger}\vv_{-} + \vzeta_{n,p} \rangle + b_{+,c',r}^{(0)} \right) \\
         \le & O\left(\frac{1}{\polyln(d)}\right).
     \end{aligned}
     \end{equation}

     Moreover, for any $c'\in[k_-]$, similar to before,
     \begin{equation}
     \begin{aligned}
         F_{-,c'}^{(T_0)}(\mX) 
         \le & \sum_{r=1}^{m_{-,c'}} \sigma\left( \langle \vw_{-,c',r}^{(T_0)}, \vzeta^* \rangle + b_{-,c',r}^{(T_0)} \right) \\
         & + \sum_{p\in\calP(\mX;\vv_{+,c})}\sum_{(-,c',r)\in S_{-,c'}^{(0)}(\vv_{+,c})} \sigma\left( \langle \vw_{-,c',r}^{(T_0)}, \alpha_{n,p}\vv_{+,c} + \vzeta_{n,p} \rangle + b_{-,c',r}^{(T_0)} \right) \\
         & + \sum_{p\in\calP(\mX;\vv_{-})}\sum_{(-,c',r)\in S_{-,c'}^{(0)}(\vv_{-})} \sigma\left( \langle \vw_{-,c',r}^{(T_0)}, \alpha_{n,p}^{\dagger}\vv_{-} + \vzeta_{n,p} \rangle + b_{-,c',r}^{(T_0)} \right) \\
         \le & O(1) \times \left( \sum_{(-,c',r)\in \calU_{-,c',r}^{(0)}} \langle \vw_{-,c,'r}^{(T_0)}, \vzeta^* \rangle  + \sum_{r\in[m_{-,c'}]}\langle \vw_{-,c,'r}^{(0)}, \vzeta^* \rangle \right) \\
         & + \sum_{p\in\calP(\mX;\vv_{+,c})}\sum_{(-,c',r)\in S_{-,c'}^{(0)}(\vv_{+,c})} \sigma\left( \langle \vw_{-,c',r}^{(0)}, \alpha_{n,p}\vv_{+,c} + \vzeta_{n,p} \rangle + b_{-,c',r}^{(0)} \right) \\
         & + O(1) \times s^\dagger \left\vert S_{-,c'}^{(0)}(\vv_{-}) \right\vert \times \left(\iota^{\dagger}_{upper} + O(\sigma_0\log(d)) \right) \\
         \le &  O\left( \frac{1}{\polyln(d)}\right) +  O\left( \sigma_0 \sqrt{\log(d)}\right) + O\left( \frac{1}{\log(d)}\right) \\
         \le & o(1).
     \end{aligned}
     \end{equation}

     Therefore, $F_{+,c}^{(T_0)}(\mX) > \max_{y\neq (+,c)} F_y^{(T_0)}(\mX)$, which means $\widehat{F}_+^{(T_0)}(\mX) > \widehat{F}_-^{(T_0)}(\mX)$ indeed.
\end{proof}
\begin{remark}

First of all, note that the feature extractor, after fine-grained training, is already well-performing, as it responds strongly ($\Omega(1)$ strength) to the true features, and very weakly ($o(1)$ strength) to any off-diagonal features and noise. In other words, we stop training when the margin is at least $\Omega(1)$, i.e. when we have $F_{y_n^{(T)}}^{(T)}(X_n^{(T)}) - \max_{y \neq y_n^{(T)}} F_{y}^{(T)}(X_n^{(T)}) \ge \Omega(1)$ for all $n$ at some $T \le \poly(d)$, and with high probability, we just need $T_0$ time to reach it. This can already help us explain the linear-probing result we saw on ImageNet21k in Appendix \ref{appendix: im21k cross-dataset}, since linear probing does not alter the the feature extractor after fine-grained pretraining (on ImageNet21k), it only retrains a new linear classifier on top of the feature extractor for classifying on the target ImageNet1k dataset.

At a high level, \textit{finetuning} $\widehat{F}$ can only further enhance the feature extractor's response to the features, therefore making the model even more robust for challenging downstream classification problems; it will not degrade the feature extractor's response to any true feature. A rigorous proof of this statement is almost a repetition of the proofs for fine-grained training, so we do not repeat them here. Intuitively speaking, we just need to note that the properties stated in Theorem \ref{thm: sgd fine-grained, universal nonact properties} will continue to hold during finetuning (as long as we stay in polynomial time), and with similar argument to those in the proof of Lemma \ref{lemma:finegrained sgd induction}, we note that the neurons responsible for detecting fine-grained features, i.e. the $S_{+,c}^{*(0)}(\vv_{+,c})$, will continue to only receive (positive) updates on the $\vv_{+,c}$-dominated patches of the following form:
\begin{equation}
\begin{aligned}
    \Delta \vw_{+,c,r}^{(t)} 
    = & \frac{\eta}{NP} \sum_{n=1}^N \mathbbm{1}\{y_n=(+,c)\} [1 -\text{logit}_{+}^{(t)}(\mX_n^{(t)})] \\
    & \times  \sum_{p\in\calP(\mX_n^{(t)}; \vv_{+,c})} \mathbbm{1}\{\langle \vw_{+,c,r}^{(t)}, \alpha_{n,p}^{(t)} \vv_{+,c} +\vzeta_{n,p}^{(t)}\rangle + b_{+,c,r}^{(t)} > 0\} \left(\alpha_{n,p}^{(t)} \vv_{+,c} +\vzeta_{n,p}^{(t)}\right),
\end{aligned}
\end{equation}
and similar update expression can be stated for the $S_{+,c}^{*(0)}(\vv_{+})$ neurons:
\begin{equation}
\begin{aligned}
    & \Delta \vw_{+,c,r}^{(t)} \\
    = & \frac{\eta}{NP} \sum_{n=1}^N \mathbbm{1}\{y_n=(+,c)\} [1 -\logit_{+}^{(t)}(\mX_n^{(t)})] \\
    & \times  \sum_{p\in\calP(\mX_n^{(t)}; \vv_+)} \mathbbm{1}\{\langle \vw_{+,c,r}^{(t)}, \alpha_{n,p}^{(t)} \vv_+ +\vzeta_{n,p}^{(t)}\rangle + b_{+,c,r}^{(t)} > 0\} \left(\alpha_{n,p}^{(t)} \vv_{+} +\vzeta_{n,p}^{(t)}\right).
\end{aligned}
\end{equation}
Indeed, these feature-detector neurons will continue growing in the direction of the features they are responsible for detecting instead of degrade in strength.
    
\end{remark}

% ----------------------------------------------------------------------------------
\newpage
\section{Probability Lemmas}
\label{section: appendix, prob lemmas}
\begin{lemma}[Laurent-Massart $\chi^2$ Concentration (\cite{laurentMassartChiSquared} Lemma 1)]
\label{lemma: laurent-massart}
Let $\vg \sim \calN(\vzero, \mI_d)$. For any vector $\va \in \mathbb{R}^{d}_{\ge 0}$, any $t>0$, the following concentration inequality holds:
\begin{align}
    \mathbb{P}\left[ \sum_{i=1}^d a_i g_i^2 \ge \|\va\|_1 + 2\|\va\|_2 \sqrt{t} + 2 \|\va\|_{\infty} t \right] \le e^{-t} 
\end{align}
\end{lemma}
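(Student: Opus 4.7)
The plan is to prove this tail bound via the Cram\'er--Chernoff method applied to the centered weighted chi-square $S \coloneqq \sum_{i=1}^d a_i(g_i^2-1)$. Since $\E[g_i^2]=1$, the event in the lemma is exactly $\{S \ge 2\|\va\|_2\sqrt{t} + 2\|\va\|_\infty t\}$, so I would work with the centered variable $S$ throughout. Two ingredients suffice: a sub-gamma moment-generating function bound for $S$ and the associated Bernstein-type inversion.

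For the MGF, a direct Gaussian integral gives $\E[e^{\lambda a_i(g_i^2-1)}] = e^{-\lambda a_i}(1-2\lambda a_i)^{-1/2}$ for $0 \le \lambda < 1/(2 a_i)$. Expanding $-\lambda a_i - \tfrac{1}{2}\log(1-2\lambda a_i) = \sum_{k\ge 2} \tfrac{(2\lambda a_i)^k}{2k}$ as a power series and comparing it termwise with $(\lambda a_i)^2 \sum_{k \ge 0}(2\lambda a_i)^k$ yields the clean bound $\log \E[e^{\lambda a_i(g_i^2-1)}] \le \tfrac{(\lambda a_i)^2}{1 - 2\lambda a_i}$ (the termwise inequality $2^{k-1}/k \le 2^{k-2}$ holds precisely for $k \ge 2$). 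Summing across the independent coordinates and enlarging the denominator via $a_i \le \|\va\|_\infty$ gives $\log \E[e^{\lambda S}] \le \tfrac{\lambda^2 \|\va\|_2^2}{1 - 2\lambda\|\va\|_\infty}$ on $\lambda \in [0, 1/(2\|\va\|_\infty))$. In the standard sub-gamma normalization $\psi(\lambda) \coloneqq \lambda^2 v/(2(1-c\lambda))$, this says $S$ is sub-gamma with variance factor $v = 2\|\va\|_2^2$ and scale $c = 2\|\va\|_\infty$.

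For the inversion, Chernoff's inequality gives $\mathbb{P}[S \ge s] \le \exp(-\psi^*(s))$, and a direct supremum computation (solve $\psi'(\lambda)=s$ via the substitution $u = 1-c\lambda$) produces the Fenchel conjugate $\psi^*(s) = (v/c^2)\bigl(1 + cs/v - \sqrt{1 + 2cs/v}\bigr)$. Substituting $s = \sqrt{2vt}+ct$, the completion-of-square identity $1 + 2cs/v = (1 + c\sqrt{2t/v})^2$ collapses the square root and gives $\psi^*(s) = (1/c)(s-\sqrt{2vt}) = t$ exactly. Plugging in $v = 2\|\va\|_2^2$ and $c = 2\|\va\|_\infty$ turns $\sqrt{2vt}+ct$ into $2\|\va\|_2\sqrt{t}+2\|\va\|_\infty t$, matching the lemma.

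The only step that requires genuine care is producing the sub-gamma MGF inequality with the correct constants: the naive $\log(1-x) \ge -x - x^2$ expansion gives the wrong leading coefficient in the denominator and would degrade the final ``$2\sqrt{t}$ plus $2t$'' form of the tail by constant factors, breaking the exact shape in the lemma. The termwise Taylor-series comparison above handles this cleanly, and once the correct $v$ and $c$ are identified the Fenchel inversion is essentially a one-line algebraic identity.
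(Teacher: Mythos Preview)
Your argument is correct and is essentially the original Laurent--Massart proof: center, bound the log-MGF by the sub-gamma cumulant $\lambda^2\|\va\|_2^2/(1-2\lambda\|\va\|_\infty)$ via the termwise Taylor comparison, and invert. Note, however, that the paper does not supply its own proof of this lemma; it simply cites it from \cite{laurentMassartChiSquared}, so there is nothing in the paper to compare your approach against.
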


\begin{lemma}
\label{lemma: gaussian vector norm concentration}
Let $\vg \sim \calN(\vzero, \sigma^2 \mI_d)$. Then,
\begin{equation}
    \mathbb{P}\left[ \|\vg\|_2^2 \ge 5 \sigma^2 d \right] \le e^{-d}
\end{equation}
\end{lemma}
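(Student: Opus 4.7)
The plan is to reduce this to a direct application of the Laurent--Massart bound stated just above (Lemma 8.1). First I would normalize the Gaussian: write $\vg = \sigma \vh$ where $\vh \sim \calN(\vzero, \mI_d)$, so that
\begin{equation}
\|\vg\|_2^2 \;=\; \sigma^2 \|\vh\|_2^2 \;=\; \sigma^2 \sum_{i=1}^d h_i^2,
\end{equation}
and the event $\{\|\vg\|_2^2 \ge 5\sigma^2 d\}$ becomes exactly $\{\sum_{i=1}^d h_i^2 \ge 5 d\}$. This reduction is purely algebraic and isolates the scale factor $\sigma$, so that the remaining question is a concentration statement about a standard $\chi^2_d$ random variable.

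Next I would invoke Lemma~\ref{lemma: laurent-massart} with the coefficient vector $\va = \vone \in \mathbb{R}^d$. For this choice, $\|\va\|_1 = d$, $\|\va\|_2 = \sqrt{d}$, and $\|\va\|_\infty = 1$, so the inequality reads
\begin{equation}
\mathbb{P}\!\left[\sum_{i=1}^d h_i^2 \;\ge\; d + 2\sqrt{d}\,\sqrt{t} + 2 t\right] \;\le\; e^{-t}
\end{equation}
for any $t > 0$. Picking $t = d$ collapses the right-hand side of the event to $d + 2d + 2d = 5d$, which matches the target threshold exactly. Combined with the normalization step, this yields $\mathbb{P}[\|\vg\|_2^2 \ge 5\sigma^2 d] \le e^{-d}$ as claimed.

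There is no real obstacle here: the entire argument is a one-line substitution into the Laurent--Massart tail bound, and the constant $5$ in the statement is precisely calibrated so that $t = d$ works. The only sanity check I would perform is verifying that the inequality $\|\va\|_1 + 2\|\va\|_2\sqrt{t} + 2\|\va\|_\infty t \le 5d$ at $t=d$ is tight enough (it is, with equality), so no slack is wasted.
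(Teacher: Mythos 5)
Your proof is correct and follows the paper's own argument exactly: normalize to a standard Gaussian, then apply Lemma~\ref{lemma: laurent-massart} with $\va = \vone$ and $t = d$ so that $d + 2d + 2d = 5d$. The only difference is that you make the scaling by $\sigma^2$ explicit, which the paper glosses over.
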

\begin{proof}
By Lemma \ref{lemma: laurent-massart}, setting $a_i = 1$ for all $i$ and $t = d$ yields
\begin{equation}
    \mathbb{P}\left[ \|\vg\|_2^2 \ge \sigma^2 d + 2 \sigma^2 d + 2 \sigma^2 d \right] \le e^{-d}
\end{equation}
\end{proof}

\begin{lemma} [\cite{pmlr-v162-shen22a}]
\label{lemma: independent gaussian vector inner product concentration}
Let $\vg_1 \sim \calN(\vzero, \sigma_1^2 \mI_d)$ and  $\vg_2 \sim \calN(\vzero, \sigma_2^2 \mI_d)$ be independent. Then, for any $\delta \in (0,1)$ and sufficiently large $d$, there exist constants $c_1, c_2$ such that
\begin{align}
    & \mathbb{P}\left[ \left\vert \langle \vg_1, \vg_2 \rangle \right\vert \le c_1\sigma_1\sigma_2 \sqrt{d\log(1/\delta)}  \right] \ge 1 - \delta \\
    & \mathbb{P}\left[ \langle \vg_1, \vg_2 \rangle \ge c_2\sigma_1\sigma_2 \sqrt{d} \right] \ge \frac{1}{4}
\end{align}
\end{lemma}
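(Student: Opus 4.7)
Both inequalities follow the same two-step template: condition on $\vg_2$, observe that $\langle \vg_1, \vg_2\rangle \mid \vg_2 \sim \calN(0,\sigma_1^2\|\vg_2\|_2^2)$ by rotational invariance of $\vg_1$, and then control the (random) conditional variance using the fact that $\|\vg_2\|_2^2/\sigma_2^2 \sim \chi^2_d$. The first inequality combines a Gaussian upper tail with the upper tail of $\|\vg_2\|_2^2$ from Lemma \ref{lemma: gaussian vector norm concentration}; the second uses the Gaussian lower tail together with a constant-probability lower bound on $\|\vg_2\|_2$.

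\textbf{Upper (sub-Gaussian) bound.} Working conditionally on $\vg_2$, a standard one-dimensional Gaussian tail bound gives
\begin{equation}
\mathbb{P}\bigl[\,|\langle \vg_1,\vg_2\rangle|\ge \sigma_1\|\vg_2\|_2\sqrt{2\log(4/\delta)}\,\bigm|\,\vg_2\bigr]\le \delta/2.
\end{equation}
By Lemma \ref{lemma: gaussian vector norm concentration}, $\|\vg_2\|_2\le \sqrt{5}\,\sigma_2\sqrt{d}$ with probability at least $1-e^{-d}\ge 1-\delta/2$ for $d$ large enough (alternatively, one can apply Lemma \ref{lemma: laurent-massart} with $a_i=1$ and $t=\log(2/\delta)$ to handle the regime $\log(1/\delta)\gtrsim d$ as well, at the cost of adjusting the constant). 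A union bound then yields
\begin{equation}
\mathbb{P}\bigl[\,|\langle \vg_1,\vg_2\rangle|\le c_1\sigma_1\sigma_2\sqrt{d\log(1/\delta)}\,\bigr]\ge 1-\delta,
\end{equation}
with $c_1=\sqrt{10}$ absorbing both constants.

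\textbf{Lower (anti-concentration) bound.} Here I want a single typical-scale event for $\|\vg_2\|_2$ and then a constant-probability Gaussian lower tail on top. Since $\mathbb{E}\|\vg_2\|_2^2=\sigma_2^2 d$ and $\mathrm{Var}(\|\vg_2\|_2^2)=2\sigma_2^4 d$, Chebyshev gives $\mathbb{P}[\|\vg_2\|_2^2\ge \sigma_2^2 d/2]\ge 1-O(1/d)\ge 3/4$ once $d$ is sufficiently large. On this event, conditioning on $\vg_2$,
\begin{equation}
\mathbb{P}\bigl[\langle \vg_1,\vg_2\rangle\ge c_2\sigma_1\sigma_2\sqrt{d}\,\bigm|\,\vg_2\bigr]
=\mathbb{P}\bigl[Z\ge c_2\sigma_2\sqrt{d}/\|\vg_2\|_2\bigr]
\ge \mathbb{P}[Z\ge c_2\sqrt{2}\,]
\end{equation}
for a standard normal $Z$. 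Choosing $c_2$ to be a sufficiently small absolute constant (e.g.\ $c_2 = 1/4$) makes the right-hand side at least $1/3$, and combining with the event for $\vg_2$ gives $\mathbb{P}[\langle \vg_1,\vg_2\rangle\ge c_2\sigma_1\sigma_2\sqrt{d}]\ge (3/4)(1/3)=1/4$.

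\textbf{Main obstacle.} There is no deep difficulty; the only care needed is bookkeeping the constants so that the same $c_1,c_2$ work across the whole range of $\delta\in(0,1)$ and all sufficiently large $d$. In particular, for the upper bound one should verify that the $\sqrt{d}$ contribution from the typical size of $\|\vg_2\|_2$ dominates the $\sqrt{\log(1/\delta)}$ contribution when $\log(1/\delta)\le d$ (covered by Lemma \ref{lemma: gaussian vector norm concentration}) and falls back on the sharper Laurent--Massart tail for $\log(1/\delta)\gg d$; a single constant $c_1$ then suffices after taking the larger of the two.
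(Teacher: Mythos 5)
The paper does not prove this lemma at all; it is imported verbatim from the cited reference \cite{pmlr-v162-shen22a}, so there is no in-paper proof to compare against. Your proof is the standard conditioning argument and is correct in substance: conditioning on $\vg_2$ makes $\langle \vg_1, \vg_2\rangle$ a centered Gaussian of variance $\sigma_1^2\|\vg_2\|_2^2$, and you then control $\|\vg_2\|_2$ via the $\chi^2$ tail (upper bound) or Chebyshev (constant-probability lower bound). The anti-concentration argument for the second display is clean and the composition $(3/4)\cdot(1/3)\ge 1/4$ via the conditional bound is done correctly.

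One place you should be slightly more careful about is the claim that $c_1=\sqrt{10}$ ``absorbs both constants.'' Your argument naturally produces $\sqrt{10}\,\sigma_1\sigma_2\sqrt{d\log(4/\delta)}$, and $\sqrt{\log(4/\delta)}$ is \emph{not} bounded by a universal multiple of $\sqrt{\log(1/\delta)}$ as $\delta\to 1$, so a fixed $c_1$ independent of $\delta$ does not follow from the union bound alone. For $\delta\le 1/2$ it does, with $c_1=\sqrt{30}$ say; for $\delta$ near $1$ the correct way to close the gap is an anti-concentration estimate (the density of $\langle\vg_1,\vg_2\rangle/(\sigma_1\sigma_2\sqrt{d})$ near zero is bounded below, so $\mathbb{P}[|\langle\vg_1,\vg_2\rangle|\le c_1\sigma_1\sigma_2\sqrt{d\log(1/\delta)}]\gtrsim c_1\sqrt{\log(1/\delta)}$, which exceeds $1-\delta\sim\log(1/\delta)$ once $\log(1/\delta)$ is small), not the concentration route you use. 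You flag the other corner case ($\log(1/\delta)\gg d$, handled by Laurent--Massart) but not this one. Since the lemma's quantifier order plausibly permits $c_1$ to depend on $\delta$, and the paper only invokes the lemma with $\delta=1/\poly(d)$, this is a cosmetic rather than substantive issue, but it is worth stating explicitly if you want a single universal constant.
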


\end{document}